\newcommand{\colt}[1]{\iftoggle{colt}{#1}{}}
\newcommand{\arxiv}[1]{\iftoggle{colt}{}{#1}}
\newcommand{\mylabel}[2]{#2\def\@currentlabel{#2}\label{#1}}
\definecolor{dgreen}{rgb}{0,0.5,0}
\setlist{nosep}
\newcommand{\nc}{\newcommand}
\nc{\noah}[1]{\ifnum\Comments=1\textcolor{purple}{[Noah: #1]}\fi}
\nc{\DMO}{\DeclareMathOperator}
\nc{\ReduceTree}{\texttt{ReduceTree}\xspace}
\nc{\PolyPriLearn}{\texttt{PolyPriLearn}\xspace}
\nc{\PPPLearn}{\texttt{PolyPriPropLearn}\xspace}
\nc{\GenericLearner}{\texttt{GenericLearner}\xspace}
\nc{\ReduceTreeReg}{\texttt{ReduceTreeReg}\xspace}
\nc{\RegLearn}{\texttt{RegLearn}\xspace}
\nc\coarse[2]{\{ {#1} \}_{#2}}
\nc\m[2]{m_{#1}(#2)}
\nc{\BR}{\mathbb{R}}
\nc{\BN}{\mathbb{N}}
\nc{\BZ}{\mathbb{Z}}
\nc{\bw}{\mathbf{w}}
\nc{\lng}{\langle}
\nc{\rng}{\rangle}
\DMO{\LFT}{left}
 \DMO{\RGT}{right}
\DMO{\Lap}{Lap}
\nc{\ep}{\varepsilon}
\nc{\ra}{\rightarrow}
\nc{\st}{\star}
\nc{\disc}[2]{\lfloor {#1} \rfloor_{#2}}
\nc{\MD}{\mathcal{D}}
\nc{\MM}{\mathcal{M}}
\nc{\MZ}{\mathcal{Z}}
\nc{\MU}{\mathcal{U}}
\nc{\MP}{\mathcal{P}}
\nc{\taumax}{(2+2\chi)(d+1)}
\nc{\rmax}{(d+1)}
\nc{\poly}{\mathrm{poly}}
\DMO{\treesum}{TreeSum}
\DMO{\lapsum}{LapSum}
\DMO{\checksum}{CheckSum}
\DMO{\height}{depth}
\DMO{\final}{final}
\DMO{\good}{good}
\nc{\MDts}{\MD_{\treesum}}
\nc{\MDls}{\MD_{\lapsum}}
\nc{\MDcs}{\MD_{\checksum}}
\nc{\MC}{\mathcal{C}}
\nc{\MT}{\mathcal{T}}
\nc{\MS}{\mathcal{S}}
\nc{\MX}{\mathcal{X}}
\nc{\MY}{\mathcal{Y}}
\nc{\MA}{\mathcal{A}}
\nc{\MB}{\mathcal{B}}
\nc{\MJ}{\mathcal{J}}
\nc{\MF}{\mathcal{F}}
\nc{\MG}{\mathcal{G}}
\nc{\hG}{\hat{\MG}}
\nc{\hg}{\hat{g}}
\nc{\MR}{\mathcal{R}}
\nc{\ML}{\mathcal{L}}
\nc{\MQ}{\mathcal{Q}}
\nc{\p}{\mathbb{P}}
\nc{\E}{\mathbb{E}}
\nc{\tablesize}{s}
\DMO{\Hist}{hist}
\nc{\hist}{\mathrm{hist}}
\nc{\soafilter}{\texttt{SOAFilter}\xspace}
\nc{\filterstep}{\texttt{FilterStep}\xspace}
\nc{\ba}{\mathbf{A}}
\nc{\bb}{\mathbf{B}}
\nc{\bx}{\mathbf{x}}
\nc{\by}{\mathbf{y}}
\nc{\bz}{\mathbf{z}}
\nc{\bs}{\mathbf{s}}
\nc{\bt}{\mathbf{t}}
\nc{\br}{\mathbf{r}}
\DMO{\sr}{sr}
\DMO{\Med}{Med}
\DMO{\Ber}{Ber}
\DMO{\Bin}{Bin}
\DMO{\Had}{Had}
\nc{\ME}{\mathcal{E}}
\DMO{\View}{View}
\nc{\B}{B}
\nc{\M}{M}
\nc{\ha}{\alpha}
\nc{\hk}{k}
\DMO{\pre}{pre}
\nc{\MH}{\mathcal{H}}
\DMO{\FO}{FO}
\DMO{\hb}{\beta}
\nc{\MW}{\mathcal{W}}
\nc{\MV}{\mathcal{V}}
\nc{\CS}{\mathscr{S}}
\nc{\CI}{\mathscr{I}}
\nc{\CQ}{\mathscr{Q}}
\nc{\CL}{\mathscr{L}}
\nc{\CM}{\mathscr{M}}
\nc{\CG}{\mathscr{G}}
\nc{\CR}{\mathscr{R}}
\nc{\irred}[3]{\CI_{{#1},{#2}}({#3})}
\DMO{\sfat}{sfat}
\DMO{\Ldim}{Ldim}
\DMO{\vc}{VCdim}
\DMO{\fat}{fat}
\nc{\K}{K}
\DMO{\ERR}{err}
\DMO{\NERR}{nerr}
\nc{\err}[2]{\ERR_{#1}({#2})}
\nc{\nerr}[2]{\NERR_{#1}({#2})}
\nc{\hMLp}{\hat\ML'}
\DMO{\soaa}{SOA}
\nc{\soa}[2]{\soaa_{#1}({#2})}
\nc{\soaf}[1]{\soaa_{#1}}
\nc{\gRes}[2]{\hat\MG({#1},{#2})}
\nc{\emp}{\hat P_{S_n}}
\DMO{\Red}{red}
\DMO{\Irred}{irred}
\nc{\Ired}{I^{\Red}}
\nc{\Iirred}{I^{\Irred}}
\DMO{\repp}{R}
\nc{\rep}[2]{R_{#1}({#2})}
\nc{\repf}[1]{\CR_{#1}}
\DMO{\REP}{rep}
\nc{\repl}[1]{\ML_{\REP}({#1})}
\nc{\mrepl}[2]{\ML_{#2}({#1})}
\renewcommand{\^}[1]{^{(#1)}}
\DeclareMathOperator*{\argmax}{arg\,max}
  \newtheorem{theorem}{Theorem}[section]
\newtheorem{corollary}[theorem]{Corollary}
\newtheorem{proposition}[theorem]{Proposition}
\newtheorem{lemma}[theorem]{Lemma}
\theoremstyle{definition}
\newtheorem{definition}{Definition}[section]
\newtheorem{claim}[theorem]{Claim}
  \newcommand{\citep}{\cite}
\newtheorem*{rep@theorem}{\rep@title}
\newcommand{\newreptheorem}[2]{%
\newenvironment{rep#1}[1]{%
 \def\rep@title{#2 \ref{##1}}%
 \begin{rep@theorem}}%
 {\end{rep@theorem}}}
\thanks{ Supported by a Fannie \& John Hertz Foundation Fellowship and an NSF Graduate Fellowship.} \Email{nzg@mit.edu}\\
\begin{document}

\maketitle

\begin{abstract}%
  Given a real-valued hypothesis class $\MH$, we investigate under what conditions there is a differentially private algorithm which learns an optimal hypothesis from $\MH$ given i.i.d.~data. Inspired by recent results for the related setting of binary classification \citep{alon_private_2019,bun_equivalence_2020}, where it was shown that \emph{online learnability} of a binary class is necessary and sufficient for its private learnability, \cite{jung_equivalence_2020} showed that in the setting of regression, online learnability of $\MH$ is \emph{necessary} for private learnability. Here online learnability of $\MH$ is characterized by the finiteness of its \emph{$\eta$-sequential fat shattering dimension, $\sfat_\eta(\MH)$}, for all $\eta > 0$. In terms of \emph{sufficient} conditions for private learnability, \cite{jung_equivalence_2020} showed that $\MH$ is privately learnable if $\lim_{\eta \downarrow 0} \sfat_\eta(\MH)$ is finite, which is a fairly restrictive condition. We show that under the relaxed condition $\liminf_{\eta \downarrow 0} \eta \cdot \sfat_\eta(\MH) = 0$, %
  $\MH$ is privately learnable, establishing the first nonparametric private learnability guarantee for classes $\MH$ with $\sfat_\eta(\MH)$ \emph{diverging} as $\eta \downarrow 0$. Our techniques involve a novel filtering procedure to output stable hypotheses for nonparametric function classes.
\end{abstract}

\colt{
\begin{keywords}%
  differential privacy, nonparametric regression, sequential fat-shattering dimension%
\end{keywords}
}

\section{Introduction}
In recent years there has been an increased focus on the importance of protecting the privacy of potentially sensitive users' data on which machine learning algorithms are trained \citep{kearns_ethical_2019,nissim_bridging_2016}. The model of \emph{differentially private learning}~\citep{dwork2006calibrating, dwork_algorithmic_2013,vadhan_complexity_2017} provides a way to formalize the accuracy-privacy tradeoffs encountered. The vast majority of work in this area focuses on the setting of private classification, namely where we must predict a $\{0,1\}$-valued label for each data point $x$  \citep{kasiviswanathan2008what,beimel_bounds_2014,bun_differentially_2015,feldman_sample_2015,beimel_private_2014,bun_composable_2018,beimel_characterizing_2019,alon_private_2019,kaplan_privately_2020,bun_equivalence_2020,neel_heuristics_2019,bun_computational_2020}. Many natural machine learning problems, however, in application domains ranging from ecology to medicine \citep{dua_2019}, %
are phrased more naturally as \emph{regression} problems, where for each data point $x$ we must predict a real-valued label. In this paper we study this problem of differentially private regression for nonparametric function classes.

In the setting of differentially private binary classification, a major recent development \citep{alon_private_2019,bun_equivalence_2020} is the result that a hypothesis class $\MF$ consisting of binary classifiers is learnable with approximate differential privacy (Definition \ref{def:dp}) if and only if it is \emph{online learnable}, which is known to hold in turn if and only if the \emph{Littlestone dimension} of $\MF$ is finite \citep{littlestone_learning_1987,ben-david_agnostic_2009}. Such an equivalence, however, remains open for the setting of differentially private regression (this question was asked in \cite{bun_equivalence_2020}). 
The combinatorial parameter characterizing online learnability for regression %
is the \emph{sequential fat-shattering dimension} \citep{rakhlin_sequential_2015} (Definition \ref{def:sfat-dim}), which may be viewed as a scale-sensitive analogue of the Littlestone dimension. In one direction, \cite{jung_equivalence_2020} recently showed that if a class $\MF$ consisting of bounded real-valued functions is privately learnable, then it is online learnable, i.e., the sequential fat-shattering dimension of $\MF$ is finite at all scales. The other direction, namely whether online learnability of $\MF$ in the regression setting implies private learnability, remains open. %

\subsection{Results}
\label{sec:results}
In this paper, we make progress towards the question of whether online learnability in the regression setting implies private learnability by exhibiting a sufficient condition for private learnability in terms of the growth of the sequential fat-shattering dimension of a class. For input space $\MX$, a class $\MH$ consisting of hypotheses $h : \MX \ra [-1,1]$, and $\eta > 0$, let $\sfat_\eta(\MH)$ denote the $\eta$-sequential fat-shattering dimension of $\MH$ (Definition \ref{def:sfat-dim}). As in \cite{jung_equivalence_2020,rakhlin_sequential_2015}, we work with the \emph{absolute loss} to measure the error of a hypothesis $h : \MX \ra [-1,1]$: for a  distribution $Q$ supported on $\MX \times [-1,1]$, write $\err{Q}{h} := \E_{(x,y) \sim Q} \left[ |h(x) - y|\right]$. 
Our main result is as follows:
\begin{theorem}[Private nonparametric regression; informal version of Theorem \ref{thm:reglearn}]
  \label{thm:reglearn-informal}
  Let $\MH$ be a class of hypotheses $h : \MX \ra [-1,1]$. For any $\ep, \delta, \eta \in (0,1)$, for some $n = \frac{2^{\tilde O(\sfat_\eta(\MH))}}{\ep \eta^4}$, there is an $(\ep, \delta)$-differentially private algorithm which, given $n$ i.i.d.~samples from any distribution $Q$ on $\MX \times [-1,1]$, with high probability outputs a hypothesis $\hat h : \MX \ra [-1,1]$ so that
  $$
\err{Q}{\hat h} \leq \inf_{h \in \MH} \err{Q}{h} + O \left( \eta \cdot \sfat_\eta(\MH) \right).
  $$
\end{theorem}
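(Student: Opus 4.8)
\textit{Proof strategy.} Write $d := \sfat_\eta(\MH)$. The plan is: (i) discretize to scale $\eta$, reducing to a class of finite Littlestone dimension $O(d)$; (ii) build a \emph{globally stable} agnostic learner for the discretized class by means of a filtering procedure (the crux); and (iii) privately aggregate many independent runs of it. For (i), given $h : \MX \ra [-1,1]$ let $\disc{h}{\eta}$ round each value of $h$ to the nearest point of a grid of spacing $\Theta(\eta)$ in $[-1,1]$, and set $\MH_\eta := \{ \disc{h}{\eta} : h \in \MH \}$, a class valued in a label alphabet of size $O(1/\eta)$. Since $\| h - \disc{h}{\eta} \|_\infty = O(\eta)$, we have $\err{Q}{\disc{h}{\eta}} \le \err{Q}{h} + O(\eta)$ for all $h$ and $Q$, so it suffices to privately output a hypothesis competing with $\inf_{g \in \MH_\eta} \err{Q}{g}$ up to additive $O(\eta d)$. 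A standard combinatorial argument --- passing from a multiclass mistake tree of $\MH_\eta$ to an $\eta$-fat-shattered tree of $\MH$ via midpoints of label pairs --- gives $\Ldim(\MH_\eta) = O(d)$, so $\soaf{\MH_\eta}$ makes $O(d)$ mistakes on any $\MH_\eta$-realizable sequence.

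\textit{The globally stable learner (Step (ii), the crux).} The plan is to construct an algorithm $\MA$ which takes $m = \poly(d, 1/\eta)$ i.i.d.\ samples and outputs $\hat g : \MX \ra [-1,1]$ with: \emph{(a) stability} --- there is a fixed hypothesis $g^{\st}$ depending only on $Q$ with $\p_{S \sim Q^m}[\MA(S) = g^{\st}] \ge \rho$ for $\rho = 2^{-\tilde O(d)}$; and \emph{(b) accuracy} --- with probability $\ge 1-\beta$, $\err{Q}{\MA(S)} \le \inf_{g \in \MH_\eta} \err{Q}{g} + O(\eta d)$. The learner runs on the empirical distribution $\emp$ and maintains a shrinking chain of version spaces $\MH_\eta = V_0 \supseteq V_1 \supseteq \cdots$ with the invariant $\Ldim(V_j) \le d - j$. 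At step $j$ the \soafilter/\filterstep subroutine searches for a \emph{reducing restriction}: a sampled point $x_i$ and a grid label $y$ within $O(\eta)$ of $y_i$ such that $V_{j+1} := \{ g \in V_j : g(x_i) = y \}$ has strictly smaller Littlestone dimension \emph{and} still contains a hypothesis whose empirical error exceeds $\min_{g \in V_j} \err{\emp}{g}$ by at most $O(\eta)$; the particular restriction is chosen by a \emph{stable} selection rule (a noisy arg-max / private histogram over the $O(m/\eta)$ candidate restrictions, implemented via the noisy tree-aggregation and above-threshold primitives developed here). If no reducing restriction exists, the procedure halts and returns a canonical element of $V_j$ (e.g.\ the $\soaf{V_j}$-based predictor on the sample). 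At most $d$ non-halting steps occur since each drops $\Ldim$; the $O(\eta)$ slack allowed at each step is the source of the final $O(\eta d)$ bias; the stable selection multiplies the probability of a fixed transcript by $2^{-\tilde O(1)}$ per step, so composing over $\le d$ steps yields $\rho = 2^{-\tilde O(d)}$; and a uniform-convergence argument at scale $\eta$ over the (at most $(m/\eta)^d$) reachable version spaces gives (b) for $m = \poly(d,1/\eta)$. The hard part --- the genuinely new ingredient --- is making Step (ii) coherent: showing that whenever the procedure has not halted there \emph{is} a dimension-dropping restriction at a \emph{sampled} point that preserves a near-optimal hypothesis, that it can nevertheless be selected in a \emph{stable} way even though $\MH_\eta$ is infinite (so there is no finite hypothesis list to take a mode over --- one must operate on the combinatorial tree structure witnessing $\sfat_\eta$), and that the terminal ``irreducible'' version space is genuinely near-optimal rather than merely dimension-zero.

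\textit{Private aggregation (Step (iii)) and assembly.} Partition the $n$ samples into $N = \tilde O\big(\log(1/\delta)/(\rho\ep)\big)$ disjoint blocks of size $m$, run $\MA$ on each to get $\hat g_1, \dots, \hat g_N$, and feed these to an $(\ep,\delta)$-differentially private stable-histogram (heavy-hitters) mechanism, whose error is independent of the number of candidate hypotheses; it returns, with high probability, some $\hat h$ output by $\gtrsim \rho N$ of the runs. By (a), $g^{\st}$ is such a heavy hitter with high probability, so the mechanism succeeds; by (b) and a union bound every hypothesis output even once satisfies $\err{Q}{\cdot} \le \inf_{g \in \MH_\eta} \err{Q}{g} + O(\eta d) \le \inf_{h \in \MH} \err{Q}{h} + O(\eta d) = \inf_{h \in \MH} \err{Q}{h} + O(\eta \cdot \sfat_\eta(\MH))$, so the returned $\hat h$ is good. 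Privacy: altering one input point changes exactly one block, hence one $\hat g_i$, and the stable-histogram step is $(\ep,\delta)$-DP under a single change to its input. The sample complexity is $n = Nm = 2^{\tilde O(d)} \cdot \poly(1/\eta) / \ep$ (up to $\mathrm{polylog}(1/\delta)$ factors), matching $\frac{2^{\tilde O(\sfat_\eta(\MH))}}{\ep \eta^4}$. As noted, essentially all the difficulty is concentrated in Step (ii).
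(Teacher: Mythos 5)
The proposal reuses the paper's high-level architecture --- discretize at scale $\eta$, build a globally stable inner learner for the discretized class, privately aggregate with a sparse/heavy-hitters mechanism --- so the shape is right, and you correctly locate the crux in Step (ii). But that step contains two genuine gaps, and they are precisely the ones the paper's new machinery (\filterstep/\soafilter) exists to close.

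First, $\Ldim(\MH_\eta) = O(d)$ is false, and the midpoint argument you sketch does not establish it: two hypotheses whose discretizations land in \emph{adjacent} grid cells can have arbitrarily close original values, so the midpoint gives zero separation in $\MH$. Concretely, take $\MX = \{x_1,\dots,x_n\}$ and let $\MH$ be the $2^n$ functions $h_b$ with $h_b(x_i)$ equal to a value just below or just above a single fixed grid boundary (the two values differing by $2\epsilon\eta$ for tiny $\epsilon$). Then $\sfat_\eta(\MH) = 0$, yet $\disc{\MH}{\eta} = \{1,2\}^\MX$ has multiclass Littlestone dimension $n$. This is why the paper works instead with $\sfat_2(\disc{\MH}{\eta}) \le \sfat_\eta(\MH)$ (Lemma \ref{lem:fat-disc}) --- one must demand the split labels differ by $\ge 2$ --- and must re-derive the whole irreducibility/SOA theory for this sequential-fat-shattering parameter rather than importing the classification results through $\Ldim$ as a black box. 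Your invariant $\Ldim(V_j) \le d-j$ is therefore not the right one to maintain.

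Second, and more fundamentally, your learner terminates by returning ``the $\soaf{V_j}$-based predictor,'' but for a multiclass/regression class the SOA value at each $x$ is determined only up to two consecutive grid labels (Lemma \ref{lem:consec-k}), and a small, data-driven change to the version space can shift $\soaf{V_j}$ by $\pm 1$ at many points simultaneously (Lemma \ref{lem:soa-stability} yields only $\|\soaf{\MH}-\soaf{\MG}\|_\infty\le 1$, not equality). So even with a perfectly stable reduction chain, the output $\hat g$ agrees with a fixed target only in $\ell_\infty$ up to $O(1)$ --- the paper's ``weak stability'' --- not exactly, and a sparse-selection/heavy-hitters aggregation cannot find a heavy hitter when every block produces a slightly different function on an infinite domain. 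The suggested fix of a ``noisy arg-max / private histogram / above-threshold primitive'' over $O(m/\eta)$ candidate restrictions conflates privacy mechanisms with global stability: randomizing over data-dependent candidates does not output a single \emph{fixed} hypothesis with probability $2^{-\tilde O(d)}$. The paper's resolution is the filtering idea: \filterstep precomputes a deterministic, data-independent family of canonical representative subclasses $\CL_t$, and \soafilter maps each $\ell_\infty$-approximately-correct $\hat g$ to a bounded-size subset of those representatives which is guaranteed to contain a fixed $\ML^\st$ depending only on the target (Lemma \ref{lem:soafilter-lstar}). That conversion from ``$\ell_\infty$-approximate agreement'' to ``exact membership of a fixed candidate in a bounded-size set'' is the genuinely new ingredient of the proof, and it is absent from your proposal.
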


As an immediate consequence, we obtain the following sufficient condition for \emph{private learnability} (Definition \ref{def:priv-learnability}) of a real-valued hypothesis class:
\begin{corollary}
  \label{cor:sfat-priv-learnability}
Suppose $\MH$ is a class of hypotheses $h : \MX \ra [-1,1]$ satisfying $\liminf_{\eta \downarrow 0} \eta \cdot \sfat_\eta(\MH) = 0$. Then $\MH$ is privately learnable.
\end{corollary}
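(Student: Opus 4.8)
The plan is to obtain the corollary directly from Theorem~\ref{thm:reglearn-informal} (equivalently, its formal version Theorem~\ref{thm:reglearn}) by choosing the scale $\eta$ well. Recall (Definition~\ref{def:priv-learnability}) that to show $\MH$ is privately learnable it suffices, given any target accuracy $\alpha \in (0,1)$, confidence $\beta \in (0,1)$, and privacy parameters $\ep,\delta \in (0,1)$, to produce a finite sample size $n$ and an $(\ep,\delta)$-differentially private algorithm that, on $n$ i.i.d.\ draws from an arbitrary distribution $Q$ on $\MX \times [-1,1]$, outputs $\hat h : \MX \ra [-1,1]$ with $\err{Q}{\hat h} \le \inf_{h\in\MH}\err{Q}{h} + \alpha$ with probability at least $1 - \beta$. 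Let $C$ denote the universal constant concealed by the $O(\cdot)$ in the conclusion of Theorem~\ref{thm:reglearn-informal}.

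First I would use the hypothesis $\liminf_{\eta\downarrow 0}\eta\cdot\sfat_\eta(\MH) = 0$: it yields a sequence $\eta_1 > \eta_2 > \cdots$ with $\eta_k \downarrow 0$ and $\eta_k\cdot\sfat_{\eta_k}(\MH) \to 0$, so we may fix one scale $\eta := \eta_k$ with $\eta < 1$ and $\eta\cdot\sfat_\eta(\MH) \le \alpha/C$. The key observation is that this choice automatically forces $\sfat_\eta(\MH) \le \alpha/(C\eta) < \infty$; hence $\MH$ has finite $\eta$-sequential fat-shattering dimension at this particular scale, and consequently the quantity $n = 2^{\tilde O(\sfat_\eta(\MH))}/(\ep\eta^4)$ furnished by Theorem~\ref{thm:reglearn-informal} is finite (it depends only on $\alpha,\ep$, and $\MH$, since $\eta$ is determined by $\alpha$ and $\MH$). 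Instantiating Theorem~\ref{thm:reglearn-informal} at this $\eta$ and the given $\ep,\delta$ produces an $(\ep,\delta)$-DP algorithm that, from $n$ i.i.d.\ samples, with high probability returns $\hat h$ with $\err{Q}{\hat h} \le \inf_{h\in\MH}\err{Q}{h} + C\eta\cdot\sfat_\eta(\MH) \le \inf_{h\in\MH}\err{Q}{h} + \alpha$, which is exactly the required accuracy bound.

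It remains only to boost the success probability from a constant to $1 - \beta$, and for this I would invoke the standard confidence-amplification reduction for private learners: run the above algorithm on $m = O(\log(1/\beta))$ disjoint sub-samples to get candidates $\hat h_1,\dots,\hat h_m$, then on a fresh holdout privately select the candidate of approximately smallest empirical absolute error (e.g.\ via the exponential mechanism, or by adding $\Lap$ noise to the $m$ empirical errors and taking a union bound); privacy composes across the disjoint pieces and the selection step, and rescaling $\ep,\delta$ by constants at the outset absorbs the loss. (If the formal Theorem~\ref{thm:reglearn} is already stated with an arbitrary failure-probability parameter folded into its sample complexity, this step is vacuous.) There is no genuine obstacle in this argument --- all the substance lives in Theorem~\ref{thm:reglearn-informal}. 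The only points requiring care are (i) that the $\liminf$ condition really does let the additive error $O(\eta\cdot\sfat_\eta(\MH))$ be made smaller than any prescribed $\alpha$, (ii) that at such a scale $\sfat_\eta(\MH)$ is necessarily finite so the sample complexity is finite, and (iii) the routine confidence boosting just described, which is the only mildly technical piece.
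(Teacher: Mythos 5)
Your proof is correct and takes essentially the same approach as the paper: choose a scale $\eta$ (guaranteed to exist by the $\liminf$ condition, which by monotonicity of $\eta\mapsto\sfat_\eta(\MH)$ also forces $\sfat_\eta(\MH)<\infty$ at every scale) so that $\eta\cdot\sfat_\eta(\MH)$ is below the target accuracy, then invoke Theorem~\ref{thm:reglearn-informal}. As you observe, the formal Theorem~\ref{thm:reglearn} already carries an arbitrary failure-probability parameter $\beta$ and a $\mathrm{polylog}(1/\delta)$ sample-complexity dependence, so the confidence-boosting step is indeed vacuous and the $\delta^{-o(1)}$ requirement in Definition~\ref{def:priv-learnability} is automatic.
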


Prior to our work, essentially the strongest private learnability guarantee for a nonparametric real-valued function class was \cite[Theorem 15]{jung_equivalence_2020}, which established that if the \emph{sequential pseudo-dimension} of a class $\MH$ is finite, then $\MH$ is privately learnable. However, the sequential pseudo-dimension of $\MH$ is lower-bounded by $\sfat_\eta(\MH)$ for all $\eta > 0$ (and in fact may be defined as $\lim_{\eta \downarrow 0} \sfat_\eta(\MH)$), and thus its boundedness implies that $\sfat_\eta(\MH)$ is bounded uniformly over $\eta > 0$. Thus Corollary \ref{cor:sfat-priv-learnability} is the first result to establish a private learnability result for a nonparametric family of classes $\MH$ with the property that $\sfat_\eta(\MH)$ can \emph{diverge} as $\eta \downarrow 0$. %
Even very simple function classes may have $\sfat_\eta(\MH)$ diverging as $\eta \downarrow 0$: for instance, the class of all single-dimensional linear functions $\MH = \{ x \mapsto ax + b : x,a,b \in \BR, \ \ |x| \leq 1, |a| \leq 1, |b| \leq 1\}$ satisfies $\sfat_\eta(\MH) = \Theta(\log(1/\eta))$. 

\paragraph{Techniques: new filtering procedure}
The proof of Theorem \ref{thm:reglearn-informal} proceeds in two stages. The first, fairly straightforward, step extends the algorithm \ReduceTree of \colt{\cite{ghazi_sample_2020}}\arxiv{\cite[Algorithm 1]{ghazi_sample_2020}} which was used to construct a private learner in the setting of binary classification for a class of finite Littlestone dimension; our analogue for regression is \ReduceTreeReg (Algorithm \ref{alg:reduce-tree}). From a technical standpoint, this involves extending the notion of \emph{irreducibility} to real-valued classes (Section \ref{sec:irreducibility}). However, unlike for the case of classification, \ReduceTreeReg alone is not sufficient for our purposes. In particular, \ReduceTreeReg leads, roughly speaking, to the following guarantee, which we informally call \dfn{weak stability}. Given any distribution $Q$ on $\MX \times [-1,1]$, there is a hypothesis $\sigma^\st : \MX \ra [-1,1]$ with low \arxiv{population }error on $Q$ so that given some number $n_0$ of i.i.d.~samples from $Q$, we can output a collection of hypotheses $\hat g_1, \ldots, \hat g_M$ so that for some $1 \leq j \leq M$ we have $\| \hat g_j - \sigma^\st \|_\infty \leq \eta$ with some not-to-small probability. Here $\eta > 0$ is a small value representing a lower bound on the desired error. In the setting of classification \cite{ghazi_sample_2020} showed the stronger guarantee (which we informally call \dfn{strong stability}) that $\hat g_j = \sigma^\st$ for some $j$. The guarantee of strong stability allowed them to perform multiple draws of $n_0$ samples and use a \emph{private sparse selection procedure} (an analogue of the stable histograms procedure of \cite{bun_simultaneous_2016} for the selection problem; see Section \ref{sec:sparse-selection}) to privately output a hypothesis with low population error.

The guarantee of weak stability is, however, insufficient to apply the sparse selection procedure. Thus we introduce a new procedure, called \soafilter (Algorithm \ref{alg:soafilter}) to upgrade the guarantee of weak stability provided by \ReduceTreeReg to one of strong stability; this is our main technical contribution. At a high level, \soafilter first ``filters out'' many candidate hypotheses $h : \MX \ra [-1,1]$ which are well-approximated by some hypothesis which is not filtered out (\filterstep, Algorithm \ref{alg:filterstep}). It then assigns each hypothesis $\hat g_j$, $1 \leq j \leq M$, as above, to some not-too-large collection of hypotheses which are not filtered out in a careful way that can ensure strong stability. Further details are provided in Section \ref{sec:soafilter}.
\subsection{Related work}
\paragraph{Differentially private regression} As discussed in the previous sections, the most closely related work to ours is \cite{jung_equivalence_2020}, which showed that finiteness of sequential pseudo-dimension (namely, $\lim_{\eta \downarrow 0} \sfat_\eta(\MH)$) is sufficient for private learnability. A number of other papers have studied special cases of regression: for instance, \cite{chaudhuri_privacy_2009} studied differentially private logistic regression, \cite{chaudhuri_differentially_2011,kifer_private_2012,bassily_private_2014} proved upper and lower bounds on the minimax rate of empirical misk minimization, which includes linear regression with general loss functions as a special case, \cite{wang_revisiting_2018} showed improved adaptive linear regression algorithms, \cite{cai_cost_2019} showed improved bounds on the minimax rate of linear regression with $\ell_2$ loss, \cite{bernstein_differentially_2019} studied differentially private Bayesian linear regression, and \cite{alabi_differentially_2020} studied differentially private linear regression in one dimension with the goal of optimizing performance on certain empirical datasets. Our work may be viewed as orthogonal to these papers, which study linear models in finite-dimensional spaces. While the growth condition $\lim_{\eta \downarrow 0} \eta \cdot \sfat_\eta(\MH) = 0$ is generally satisfied for such models,\footnote{For instance, if $\MX$ is the unit ball in $\BR^d$ with respect to the $\ell_2$ norm, and $\MH = \{ x \mapsto \lng w, x \rng : \| w \|_2 \leq 1 \}$, then $\sfat_\eta(\MH) \leq O(d \log 1/\eta)$ since $\MH$ has a pointwise (i.e., sup-norm) $\eta$-cover of size $O(1/\eta^d)$, i.e., pointwise metric entropy $O(d \log 1/\eta)$.} Theorem \ref{thm:reglearn-informal} does not improve upon any existing sample complexity bounds in these specialized settings (where in most cases optimal minimax rates are known). On the other hand, these existing works do not address the nonparametric setting where essentially no structure is imposed on the hypothesis class. %

\paragraph{Online learnability for nonparametric classes}
The sequential fat-shattering dimension was introduced by \cite{rakhlin_sequential_2015} and shown to characterize online learnability of a real-valued hypothesis class in \cite{rakhlin_online_2015}. It is a sequential analogue of the fat-shattering dimension, which was introduced in \cite{alon_scale_1997,kearns_efficient_1994} and was shown to characterize learnability in the i.i.d.~setting. A substantial amount of work has established bounds on the complexity of various learning tasks in terms of the fat-shattering dimension in the i.i.d.~setting (e.g., \cite{anthony_neural_2009,mendelson_rademacher_2002,bartlett_fat_1996}), and in terms of the sequential fat-shattering dimension and related complexity measures in the online setting (e.g., \cite{rakhlin_online_2014,rakhlin_equivalence_2017,foster_contextual_2018}). Our work begins such a study in the setting of differentially private learning (with i.i.d.~data).

\arxiv{\subsection{Overview of the paper}}
\colt{\paragraph{Overview of the paper}}
In Section \ref{sec:prelim} we give preliminaries. In Section \ref{sec:irreducibility} we introduce the notion of \emph{irreducibility} for the setting of regression. In Section \ref{sec:reducetree} we state the weak stability guarantee of the \ReduceTreeReg algorithm, which we then upgrade to one of strong stability in Section  \ref{sec:soafilter} using our ``filtering'' algorithm. Section \ref{sec:combine-informal} describes how to combine the components of the previous sections to prove Theorem \ref{thm:reglearn-informal}. Finally, we discuss some directions for future work in Section \ref{sec:conclusion}. Several lemma statements in the main body are stated informally; full and rigorous statements and proofs of all lemmas and theorems are given in the appendix.

\section{Preliminaries}
\label{sec:prelim}
\subsection{PAC learning \& discretization of hypothesis classes}
\label{sec:pac}
For a positive integer $K$, let $[K] := \{ 1, 2, \ldots, K\}$. 
Let $\MX$ denote an input space and $\MY$ denote an output space, which will always be a subset of the real line. We let $\MY^\MX$ denote the space of \dfn{hypotheses} on $\MX$, namely functions $h : \MX \ra \MY$. We are given a known \dfn{hypothesis class} $\MH \subset \MY^\MX$. For a distribution $Q$ on $\MX \times \MY$ and $h \in \MY^\MX$, let $\err{Q}{h} := \E_{(x,y) \sim Q} [|h(x) - y|]$ denote the \dfn{population error} of $h$.\footnote{Following \cite{jung_equivalence_2020,rakhlin_online_2015}, we work with the absolute loss; the results may readily be generalized to any other Lipschitz loss function.} A \dfn{dataset} $S_n \in (\MX \times \MY)^n$ is a tuple of $n$ elements of $\MX \times \MY$; for $Q$ as above, let $Q^n$ be the distribution of $S_n \in (\MX \times \MY)^n$ consisting of $n$ i.i.d.~draws from $Q$. For $(x,y) \in \MX \times \MY$, let $\delta_{(x,y)}$ denote the point measure at $(x,y)$, and for a dataset $S_n$ write $\hat Q_{S_n} := \frac 1n \sum_{i=1}^n \delta_{(x_i, y_i)}$ to denote the \dfn{empirical measure} for $S_n$. The \dfn{empirical error} of a hypothesis $h$ with respect to a dataset $S_n$ is defined to be $\err{\hat Q_{S_n}}{h}$. To avoid having to make technical measurability assumptions on $\MH, \MX$, we will assume throughout the paper that $\MH, \MX$ are countable (or finite).

Ultimately we aim to solve the following problem: for $\MY = [-1,1]$ and some small error $\eta_0$, find some $\hat h$ so that $\err{Q}{\hat h} \leq \inf_{h \in \MH} \left\{\err{Q}{h}\right\} + \eta_0$ given a sample $S_n \sim Q^n$. To streamline the analysis, though, we will often work with the \dfn{discretization} of the class $\MH$ \dfn{at scale $\eta$}, for some $\eta < \eta_0$: it is denoted $\disc{\MH}{\eta}$ and is obtained by dividing the interval $[-1,1]$ into $\lceil 2/\eta \rceil$ intervals each of length $2/\lceil 2/\eta \rceil \leq \eta$, and rounding $h(x)$, for each $h \in \MH, x \in \MX$, to the interval containing $h(x)$. A formal definition of $\disc{\MH}{\eta}$ is as follows: first, for a real number $y \in [-1,1]$, define $\disc{y}{\eta} \in [\lceil 2/\eta \rceil]$ as follows: %
\colt{
$\disc{y}{\eta} :=
 1 + \left\lfloor \frac{(y + 1)}{2} \cdot \lceil 2/\eta \rceil\right\rfloor$ for $y < 1$ and $\disc{y}{\eta} :=
  \lceil 2/\eta \rceil$ for $y = 1$.
  }
\arxiv{$$
\disc{y}{\eta} := \begin{cases}
 1 + \left\lfloor \frac{(y + 1)}{2} \cdot \lceil 2/\eta \rceil\right\rfloor &: y < 1 \\
  \lceil 2/\eta \rceil  &: y = 1 .
\end{cases}
$$}

Next, for $h \in \MH$, define $\disc{h}{\eta} \in [\lceil 2/\eta \rceil]^\MX$ by $\disc{h}{\eta}(x) = \disc{h(x)}{\eta}$, for $x \in \MX$. 
Then the discretization $\disc{\MH}{\eta} \subset {\{ 1,2, \ldots, \lceil 2/\eta \rceil \}}^\MX$ is defined as
$
\disc{\MH}{\eta} :=  \{ \disc{h}{\eta} : h \in \MH \}.
$
Moreover, the \dfn{discretization of a distribution $Q$} on $\MX \times [-1,1]$ \dfn{at scale $\eta$}, denoted $\disc{Q}{\eta}$, is defined to be the distribution of $(x, \disc{y}{\eta})$, where $(x,y) \sim Q$. In Appendix \ref{sec:uc-prelim-disc}, we show that for $h \in [-1,1]^\MX$, $\err{Q}{h}$ is roughly $\eta$ times  $\err{\disc{Q}{\eta}}{\disc{h}{\eta}}$, up to an additive error of $\pm O(\eta)$ (see (\ref{eq:disc-cont-rel})), and that we have the bound $\sfat_2(\disc{\MH}{\eta}) \leq \sfat_\eta(\MH)$ on the sequential fat-shattering dimension of $\disc{\MH}{\eta}$ at scale 2 (Lemma \ref{lem:fat-disc}). We will often write $\K := \lceil 2/\eta \rceil$ when considering the discretization of classes.

For any $h \in \BR^\MX$ write $\| h \|_\infty := \sup_{x \in \MX} |h(x)|$. 
\subsection{Differential privacy}
In this paper we study algorithms which satisfy \dfn{approximate differential privacy}, defined as follows:
\begin{definition}[Differential privacy, \cite{dwork2006calibrating}]
  \label{def:dp}
Fix sets $\MZ, \MW$, $n \in \BN$, $\ep, \delta \in (0,1)$, and suppose $\MW$ is countable. A randomized algorithm $A : \MZ^n \ra \MW$ is $(\ep, \delta)$-differentially private if the following holds: for any datasets $S_n, S_n' \in \MZ^n$ differing in a single example\footnote{Written out, we have $S_n = (z_1, \ldots, z_n)$ and $S_n' = (z_1, \ldots, z_{n-1}, z_n')$ for some $z_1, \ldots, z_n, z_n' \in \MZ$.} and for all subsets $\ME \subset \MW$, 
$
\Pr[A(S_n) \in \ME] \leq e^\ep \cdot \Pr[A(S_n') \in \ME] + \delta.
$
\end{definition}
Our goal is to solve the PAC learning problem (as introduced in Section \ref{sec:pac}) with an algorithm that is $(\ep, \delta)$-differentially private as a function of $S_n$. Typically in the differential privacy literature it is assumed that $\delta = n^{-\omega(1)}$. %
To this end, we make the following definition:
\begin{definition}[Private learnability]
  \label{def:priv-learnability}
  A class $\MH \subset [-1,1]^\MX$ is \dfn{privately (PAC) learnable} if for all $\ep, \delta, \eta, \beta \in (0,1)$, there is a bound $n = n_\MH(\ep, \delta, \eta, \beta)$ so that the following holds:
  \begin{itemize}
  \item There is an $(\ep, \delta)$-differentially private algorithm $A$ that takes as input a dataset $S_n \in (\MX \times [-1,1])^n$ and outputs some $A(S_n) \in [-1,1]^\MX$ so that: for any distribution $Q$ on $\MX \times [-1,1]$, with probability at least $1-\beta$ over $S_n \sim Q^n$, $\err{Q}{A(S_n)} \leq \inf_{h \in \MH} \left\{ \err{Q}{h} \right\} + \eta$. 
  \item For fixed $\ep, \eta, \beta$, the mapping $\delta\mapsto n_\MH(\ep, \delta, \eta, \beta)$ is $\delta^{-o(1)}$, i.e., for any constant $c > 0$ there is $\delta_0 > 0$ so that for $0 < \delta < \delta_0$ we have $n_\MH(\ep, \delta, \eta, \beta) \leq 1/\delta^c$. 
    \end{itemize}
  \end{definition}
  Our algorithms will satisfy the stronger guarantee that for fixed $\eta$ and $\MH$, the bound $n_\MH(\ep, \delta, \eta, \beta)$ grows polynomially in $1/\ep, \log(1/\delta), \log(1/\beta)$.

\subsection{Sequential fat-shattering dimension}
For a positive integer $\K$, we begin by defining $\K$-ary $\MX$-valued trees. For a positive integer $t$ and a sequence $k_1, k_2, \ldots, \in [\K]$, write $k_{1:t} = (k_1, \ldots, k_t)$. Let $k_{1:0}$ denote the empty sequence.
\begin{definition}[$\MX$-valued tree]
  \label{def:xv-tree}
  For $d, \K \in \BN$, a {\it $\K$-ary $\MX$-valued tree} of \dfn{depth} $d$ is a collection of partial functions $\bx_t : [\K]^{t-1} \ra \MX$, for $1 \leq t \leq d$, each with nonempty domain, so that for all $k_{1:t}$ that lie in the domain of $\bx_{t+1}$:
  \begin{enumerate}
  \item \label{it:parent} The sequence $k_{1:t-1}$ lies in the domain of $\bx_t$ (i.e., a node's parent is a node);
  \item \label{it:sibling} For all $k_t' \in [\K]$ the sequence $(k_1, \ldots, k_{t-1}, k_t')$ lies in the domain of $\bx_{t+1}$ (i.e., each non-root node has $\K-1$ siblings).
  \end{enumerate}
  We write $\bx := (\bx_1, \ldots, \bx_d)$. We say that the tree $\bx$ is \dfn{complete} if for each $t$ the domain of $\bx_t$ is all of $[\K]^{t-1}$. The tree $\bx$ is \dfn{binary} if it is 2-ary (i.e., $\K = 2$ in the above).

\end{definition}
  Associated with each sequence $k_{1:t} \in [\K]^t$ for which $k_{1:t-1}$ is in the domain of $\bx_t$, for some $1 \leq t \leq d$, is a \dfn{node} of the tree. We say that this node is a \dfn{leaf} if $k_{1:t}$ is not in the domain of $\bx_{t+1}$ (or if $t = d$). Moreover, for any non-leaf node associated with $k_{1:t} \in [\K]^t$, we say that it is \dfn{labeled} by the point $\bx_{t+1}(k_{1:t}) \in \MX$. For any such node $v$, the nodes associated with $(k_1, \ldots, k_t, k_{t+1}')$, for each choice of $k_{t+1}' \in [\K]$ are the \dfn{children} of $v$; we say that the coresponding edge between $v$ and each child is \dfn{labeled} by $k_{t+1}'$. Note that a node is a leaf if and only if it has no children. Note also that any non-leaf node has exactly $\K$ children.

  Fix $\alpha > 0$. A complete binary (i.e., 2-ary) $\MX$-valued tree $\bx$ of depth $d$ is \dfn{$\alpha$-shattered} by a class $\MF \subset \BR^\MX$ if there is a complete $\BR$-valued binary tree $\bs$ of depth $d$ so that for all $k_{1:d} \in \{1,2\}^{d}$, there is some $f \in \MF$ so that
  $
  \left(3 - 2k_t \right) \cdot (f(\bx_t(k_{1:t-1})) - \bs_t(k_{1:t-1})) \geq \alpha/2
$
for all $1 \leq t \leq d$.
The tree $\bs$ is called the \dfn{witness to shattering}. 
\begin{definition}[Sequential fat-shattering dimension]
  \label{def:sfat-dim}
The \dfn{$\alpha$-sequential fat shattering dimension} of a class $\MF$, denoted $\sfat_\alpha(\MF)$, is the greatest positive integer $d$ so that there is an $\MX$-valued binary tree of depth $d$ which is $\alpha$-shattered by $\MX$. As a convention, if $\MF$ is empty, we write $\sfat_\alpha(\MF) = -1$.
\end{definition}

\subsection{Sparse selection procedure}
\label{sec:sparse-selection}
A key building block in our private learning protocols is a differentially private algorithm for the following sparse selection problem from \cite{ghazi_differentially_2020}. For $m, s \in \BN$, the \dfn{$(m, s)$-sparse selection} problem is defined as follows: there is some (possibly infinite) universe $\MU$, and $m$ users. Each user $i \in [m]$ is given some set $\MS_i \subset \MU$ of size $|\MS_i| \leq s$. An algorithm is said to solve the $(m,s)$-sparse selection problem \dfn{with additive error $\eta > 0$} if, given as input the sets $\MS_1, \ldots, \MS_m$, it outputs some universe element $\hat u \in \MU$ so that $| \{ i : \hat u \in \MS_i \}| \geq \max_{u \in \MU} | \{ i : u \in \MS_i \}| - \eta$. We will use the following proposition, which shows that the sparse selection problem can be solved privately with error independent of the size of the universe $\MU$:
\begin{proposition}[\cite{ghazi_differentially_2020}, Lemma 36]
  \label{prop:sparse-selection}
For $\ep ,\delta, \beta \in (0,1)$, there is an $(\ep, \delta)$-differentially private algorithm that, given an input dataset to the $(m,s)$-sparse selection problem, outputs a universe element $\hat u$ such that with probability at least $1-\beta$, the (additive) error of $\hat u$ is $O\left( \frac{1}{\ep} \log \left( \frac{ms}{\ep \delta \beta} \right) \right)$. 
\end{proposition}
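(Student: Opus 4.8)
My plan is to exploit the \emph{sparsity} structure: although $\MU$ may be infinite, at most $ms$ distinct elements — the \emph{active} ones, i.e.\ those appearing in $\bigcup_{i\in[m]}\MS_i$ — can have positive count $c(u):=|\{i:u\in\MS_i\}|$, and an optimal $\hat u$ can always be taken among these (and is arbitrary if all counts are $0$). So morally this is a private selection problem over a universe of size $\le ms$, for which a report-noisy-max / exponential-mechanism type procedure would give additive error $O(\tfrac1\ep\log\tfrac{ms}{\beta})$. The essential difficulty — and the reason the exponential mechanism cannot be applied verbatim, since its normalizing constant diverges over an infinite universe where cofinitely many scores equal $0$ — is that the set of active elements is itself a function of the data and can change by up to $2s$ elements when one user's set is modified. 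Handling this data-dependent support, at the cost of the additive $\delta$ and the extra logarithmic factors in the threshold, will be the main obstacle.

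Concretely, I would run a \emph{thresholded report-noisy-max}: draw independent $\Lap(2/\ep)$ noises $Z_u$ for each active $u$, set a threshold $T:=\Theta\big(\tfrac1\ep\log\tfrac{ms}{\ep\delta\beta}\big)$, and output $\hat u\in\argmax_u\big(c(u)+Z_u\big)$ over the active elements if this maximum is at least $T$, and a fixed data-independent default element of $\MU$ otherwise. For privacy, I would compare neighbouring inputs differing in user $i$: since each user touches at most $s$ counts, the two active sets differ in at most $2s$ elements, and any element active in exactly one of the two inputs has count exactly $1$ there. Outputs active in \emph{both} inputs (including the default) are handled by the textbook report-noisy-max coupling — each count has sensitivity $1$, so each pairwise gap $c(u')-c(u)$ has sensitivity at most $2$, and the factor-$2$ in the Laplace scale yields the $e^\ep$ multiplicative bound. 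Outputs active in only \emph{one} input have zero probability on the other side, so I only need to bound the probability of selecting one: this requires its (count-$1$) noisy value to exceed $T$, and a union bound over the $\le s$ such elements gives probability $\le\delta$ once $T\gtrsim\tfrac1\ep\log\tfrac s\delta$. For accuracy, a union bound over the $\le ms$ active elements makes all $|Z_u|=O(\tfrac1\ep\log\tfrac{ms}{\beta})$ with probability $1-\beta$; then a short case analysis on whether the true maximum count $c^\st$ exceeds $T+O(\tfrac1\ep\log\tfrac{ms}{\beta})$ shows the additive error is $O(T)=O(\tfrac1\ep\log\tfrac{ms}{\ep\delta\beta})$ in either case — if it does, the top element clears the threshold and wins up to $O(\tfrac1\ep\log\tfrac{ms}{\beta})$ error; if it does not, every count, hence every error, is already $O(T)$.

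The part I expect to require the most care is the privacy case analysis around the data-dependent support: making rigorous that elements active in only one of two neighbouring inputs contribute only to the $\delta$ term — in particular handling the ``default'' output event, whose defining maximum ranges over two different noise-indexed active sets — and choosing the Laplace scale and threshold so that the report-noisy-max coupling and the union bound compose cleanly into an $(\ep,\delta)$ guarantee. The accuracy bound, by contrast, I expect to follow from routine Laplace tail estimates and the case split above.
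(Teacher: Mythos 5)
This proposition is not proved in the paper at all: it is an external result, cited verbatim from Ghazi, Kumar, and Manurangsi (2020), Lemma~36, and used as a black box. So there is no ``paper's own proof'' against which to compare your sketch; what follows is an assessment of the sketch on its own terms. Your plan — restrict attention to the at most $ms$ active elements, run a thresholded report-noisy-max with Laplace scale $2/\ep$, and output a fixed default below the threshold — is indeed the standard stable-histogram approach, and your accuracy case split is routine and correct.

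The one genuine gap, which you flag but do not close, is in the sentence ``Outputs active in both inputs (including the default) are handled by the textbook report-noisy-max coupling.'' The textbook RNM argument applies to a \emph{fixed} candidate set; here the sets over which the noisy max is taken are $A$ and $A'$ (the active sets of the two neighboring inputs), which differ, so the coupling is not immediate even for outputs $\hat u \in A_0 := A \cap A'$. The fix is to introduce an auxiliary mechanism $M^{\ast\ast}$ that adds noise to all of $A\cup A'$ (a fixed set once the pair $(D,D')$ is fixed) and takes the thresholded argmax there. Textbook RNM then gives $\Pr[M^{\ast\ast}(D)\in E_0]\le e^\ep\Pr[M^{\ast\ast}(D')\in E_0]$ for any $E_0\subset A_0\cup\{\text{default}\}$. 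One then checks: (i) $M(D)\neq M^{\ast\ast}(D)$ forces $M^{\ast\ast}(D)\in A'\setminus A$, hence some zero-count element has $Z_v\ge T$, an event of probability $\le \tfrac{s}{2}e^{-\ep T/2}$; and (ii) for $\hat u\in A_0$ or $\hat u=\text{default}$, $M^{\ast\ast}(D')=\hat u$ already implies $M(D')=\hat u$ (winning over the larger set $A\cup A'$ implies winning over $A'\subset A\cup A'$), so no extra loss appears on the $D'$ side. Combined with your bound on $\Pr[M(D)\in A\setminus A']$, this yields $(\ep,\delta)$-DP with $T=\Theta(\tfrac1\ep\log\tfrac{s}{\delta})$, as you claimed. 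In short: the approach is right and all the estimates you state are the right ones, but the privacy proof as written invokes a lemma (RNM over a fixed universe) in a setting where its hypothesis fails, and the auxiliary-mechanism coupling over $A\cup A'$ is the missing step that repairs it.
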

In our application of Proposition \ref{prop:sparse-selection}, the universe $\MU$ will be the set of hypotheses $[\K]^\MX$ and so the output of the sparse selection procedure will be a private hypothesis; see Section \ref{sec:combine-informal}.
\section{Irreducibility for real-valued classes}
\label{sec:irreducibility}
In this section we introduce the concept of \emph{irreducibility} in the context of regression, extending the work of \cite{ghazi_sample_2020}, which defined irreducibility for $\{0,1\}$-valued classes in the context of classification. Throughout this section, we will fix a positive integer $\K$ and an input space $\MX$, and consider a class $\MF \subset [\K]^\MX$ so that $\sfat_2(\MF)$ is finite. As discussed in Section \ref{sec:pac}, $\MF$ will arise in the proof of Theorem \ref{thm:reglearn-informal} as the $\eta$-discretization of a real-valued class $\MH \subset [-1,1]^\MX$, where $\K = \lceil 2/\eta \rceil$. %
We begin with the following definition which will simplify our notation.
\begin{definition}[Ancestor set, depth of a node]
  Let $\bx$ be a $\MX$-valued tree of depth $d$, and $v$ be a node of $\bx$ corresponding to the tuple $(k_1, \ldots, k_t) \in [\K]^t$. The \dfn{ancestor set} of $v$, denoted $\ba(v)$, is the subset of $\MX \times [\K]$ given by
\arxiv{$$
  \ba(v) := \{ (\bx_1, k_1), (\bx_2(k_1), k_2), \ldots, (\bx_t(k_{1:t-1}), k_t) \}.
  $$}
\colt{$
  \ba(v) := \{ (\bx_1, k_1), (\bx_2(k_1), k_2), \ldots, (\bx_t(k_{1:t-1}), k_t) \}.
$}
The integer $t$ is referred to as the \dfn{depth} of the node $v$ and is denoted as $t = \height(v)$. 
\end{definition}
In the context of the above definition, note that $t$ is an upper bound on the size of $\ba(v)$. It is possible that for some distinct $s,s'$ we could have $(\bx_s(k_{1:s-1}), k_s) = (\bx_{s'}(k_{1:s'-1}), k_{s'})$ and hence the size of $\ba(v)$ could be strictly less than $t$. Note that $\ba(v)$ depends on the tree $\bx$, though we do not explicitly notate this dependence since the tree $\bx$ will always be clear from the node $v$.

For any $x \in \MX, k \in [\K]$, set
$
\MF|_{(x,k)} := \{ f \in \MF : f(x) = k \}.
$
For a set $S = \{ (x_1, k_1), \ldots, (x_\ell, k_\ell) \}$, similarly set
$
\MF|_S := \bigcap_{i \in [\ell]} \MF|_{(x_i, k_i)} = \{ f \in \MF : f(x_i) = k_i \ \  \forall i \in [\ell]\}.
$
\begin{definition}[Irreducibility]
  \label{def:irreducibility}
  For an integer $\ell \geq 1$, a class $\MF \subset [\K]^\MX$ is \dfn{$\ell$-irreducible} if for any $K$-ary $\MX$-valued tree $\bx$ of depth at most $\ell$,
the tree $\bx$ has some leaf $v$ so that $\sfat_2(\MF|_{\ba(v)}) = \sfat_2(\MF)$.
\end{definition}
We say $\MF$ is \dfn{irreducible} if it is 1-irreducible. For convenience we will say that all classes are {\it 0-irreducible} (i.e., 0-irreducibility is vacuous); thus $\ell$-irreducibility makes sense for all non-negative integers $\ell$.
Note that $\ell$-irreducibility implies $\ell'$-irreducibility for $\ell' < \ell$. The following simple, though fundamental, lemma forms the basis of a number of the stability-type results we show:
\begin{lemma}
   \label{lem:consec-k}
  Suppose $\MG \subset [\K]^\MX$ is irreducible. Then there are at most 2 values of $k \in [\K]$ so that $\sfat_2(\MG|_{(x,k)}) = \sfat_2(\MG)$, and if there are 2 values, they differ by 1.
\end{lemma}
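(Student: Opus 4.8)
The plan is to reduce the whole statement to a single ``gluing'' claim: if $x \in \MX$ and $k, k' \in [\K]$ satisfy $k' \geq k+2$ and $\sfat_2(\MG|_{(x,k)}) = \sfat_2(\MG|_{(x,k')}) = \sfat_2(\MG) =: d$, then $\sfat_2(\MG) \geq d+1$ --- a contradiction. This single claim yields both conclusions of the lemma at once: among any three distinct elements of $[\K]$ there are two that differ by at least $2$, so at most two values of $k$ can have $\sfat_2(\MG|_{(x,k)}) = d$; and if there are exactly two such values, they cannot differ by $\geq 2$, hence differ by exactly $1$. Note that irreducibility of $\MG$ is not actually needed for this direction; it enters only to guarantee that \emph{at least} one such $k$ exists, which is precisely Definition~\ref{def:irreducibility} applied to depth-$1$ trees.

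To prove the gluing claim I would construct an explicit complete binary $\MX$-valued tree $\bx$ of depth $d+1$ together with a witness $\bs$ to $2$-shattering. Let $\bx^{L}$, with witness $\bs^{L}$, be a depth-$d$ binary tree $2$-shattered by $\MG|_{(x,k')}$, and let $\bx^{R}$, with witness $\bs^{R}$, be a depth-$d$ binary tree $2$-shattered by $\MG|_{(x,k)}$; these exist because both subclasses have sequential fat-shattering dimension $d$ (when $d = 0$ they are single leaves and only nonemptiness of the two subclasses is used, since a class of $\sfat_2$ equal to $0$ is nonempty). Define $\bx$ by labeling the root with $x$, hanging $\bx^{L}$ under the edge labeled $1$ and $\bx^{R}$ under the edge labeled $2$; define $\bs$ by setting $\bs_1 := (k+k')/2$ at the root and copying $\bs^{L}$ and $\bs^{R}$ into the two subtrees.

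It then remains to check path-by-path that $\bx$ is $2$-shattered by $\MG$. Fix $k_{1:d+1} \in \{1,2\}^{d+1}$. If $k_1 = 1$, take $f \in \MG|_{(x,k')}$ that witnesses $2$-shattering of $\bx^{L}$ along $k_{2:d+1}$: the shattering inequalities at depths $2,\dots,d+1$ hold by construction, and at the root $(3 - 2k_1)(f(x) - \bs_1) = k' - (k+k')/2 = (k'-k)/2 \geq 1$ since $k'-k \geq 2$. The case $k_1 = 2$ is symmetric, using $f \in \MG|_{(x,k)}$ and $\bs_1 - f(x) = (k'-k)/2 \geq 1$. Hence $\bx$ is $2$-shattered by $\MG$, so $\sfat_2(\MG) \geq d+1$, the desired contradiction. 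The only parts requiring any care are fixing the sign conventions in the definition of $\alpha$-shattering so that edge $1$ is genuinely the ``large $f(x)$'' side and edge $2$ the ``small $f(x)$'' side, and verifying that the glued object is a legal complete binary tree of depth $d+1$ with a legal $\BR$-valued witness tree; both are routine bookkeeping rather than a genuine obstacle.
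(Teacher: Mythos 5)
Your proof is correct and takes essentially the same route as the paper: both argue by contradiction, construct a depth-$(d+1)$ $2$-shattered tree by placing $x$ at the root, hanging trees shattered by $\MG|_{(x,k')}$ and $\MG|_{(x,k)}$ on edges $1$ and $2$ respectively (larger label on the ``high'' edge), and using the witness $(k+k')/2$ at the root to obtain the margin $(k'-k)/2 \geq 1$. Your remark that irreducibility is never invoked in this direction is also consistent with the paper's proof, which likewise does not use it.
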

Using Lemma \ref{lem:consec-k}, we next define the \emph{SOA hypothesis} associated to an irreducible hypothesis class $\MG \subset [\K]^\MX$, which assigns to each $x$ some element $k \in [\K]$ maximizing $\sfat_2(\MG|_{(x,k)})$. Such SOA hypotheses were crucial in the development of private learning algorithms for classification \citep{ghazi_sample_2020,bun_equivalence_2020}, and they will likewise play a major role in this paper. %

\begin{definition}[SOA hypothesis]
  \label{def:soa}
  Fix an irreducible class $\MG \subset [\K]^\MX$. Define $\soaf{\MG} \in [\K]^\MX$ as follows: for each $x \in \MX$, $\soa{\MG}{x}$ is equal to some $k \in [\K]$ so that $\sfat_2(\MG|_{(x,k)}) = \sfat_2(\MG)$. By Lemma \ref{lem:consec-k}, there are at most 2 such values of $k$. If there are 2 such values of $k$, i.e., there is some $k \in [\K-1]$ so that $\sfat_2(\MG|_{(x,k)}) = \sfat_2(\MG|_{(x,k+1)}) = \sfat_2(\MG)$, the tie is broken as follows:
  \begin{itemize}
  \item If there is some $\ell$ so that $\MG|_{(x,k)}$ is $\ell$-irreducible but $\MG|_{(x,k+1)}$ is not, then set $\soa{\MG}{x} = k$; vise versa, if $\MG|_{(x,k+1)}$ is $\ell$-irreducible but $\MG|_{(x,k)}$ is not, then $\soa{\MG}{x} = k+1$.
  \item If the previous item does not hold, then set $\soa{\MG}{x} = k$.
  \end{itemize}
\end{definition}

Lemma \ref{lem:soa-stability} below is similar to \cite[Lemma 4.3]{ghazi_sample_2020} proved in the setting of classification and is the basis for the ``weak stability'' results presented in Section \ref{sec:reducetree}. The key difference between Lemma \ref{lem:soa-stability} and \cite[Lemma 4.3]{ghazi_sample_2020} is that in the setting of classification, it can be established that $\soaf{\MH} = \soaf{\MG}$, whereas for the setting of regression we only get ``approximate equality'', i.e., $\| \soaf{\MH} - \soaf{\MG} \|_\infty \leq 1$.
\begin{lemma}
  \label{lem:soa-stability}
Suppose $\MH \subset \MG$, $\sfat_2(\MH) = \sfat_2(\MG)$, and that $\MH$ is irreducible. Then it holds that $\| \soaf{\MH} - \soaf{\MG} \|_\infty \leq 1$. %
\end{lemma}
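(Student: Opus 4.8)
The plan is to argue pointwise in $x \in \MX$ and reduce the whole statement to Lemma \ref{lem:consec-k} applied to $\MG$, sidestepping the tie-breaking rules of Definition \ref{def:soa} entirely. (We may assume $\MH$, and hence $\MG$, is nonempty, so that $\sfat_2 \ge 0$ and the SOA hypotheses are meaningfully defined; the degenerate empty case is excluded at the outset.) The one preliminary observation, used repeatedly, is that for every $x \in \MX$ and $k \in [\K]$ we have $\MH|_{(x,k)} \subseteq \MG|_{(x,k)} \subseteq \MG$, so monotonicity of the sequential fat-shattering dimension under inclusion (any $2$-shattered tree of a subclass, with the same witness, is $2$-shattered by the superclass) gives
$$
\sfat_2(\MH|_{(x,k)}) \;\le\; \sfat_2(\MG|_{(x,k)}) \;\le\; \sfat_2(\MG) \;=\; \sfat_2(\MH).
$$

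First I would check that $\MG$ is itself irreducible, which is needed merely for $\soaf{\MG}$ to be well-defined. Fix $x$. Applying irreducibility of $\MH$ to the depth-$1$ tree whose root is labeled $x$ produces some $k \in [\K]$ with $\sfat_2(\MH|_{(x,k)}) = \sfat_2(\MH) = \sfat_2(\MG)$; plugging this into the displayed chain forces $\sfat_2(\MG|_{(x,k)}) = \sfat_2(\MG)$. Since $x$ was arbitrary, $\MG$ is irreducible, so Lemma \ref{lem:consec-k} applies to $\MG$: for each $x$, the set $\{k \in [\K] : \sfat_2(\MG|_{(x,k)}) = \sfat_2(\MG)\}$ has at most two elements, and if it has two, they differ by $1$.

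Now fix $x$ and set $a := \soa{\MH}{x}$, $b := \soa{\MG}{x}$. By Definition \ref{def:soa}, $\sfat_2(\MH|_{(x,a)}) = \sfat_2(\MH)$ and $\sfat_2(\MG|_{(x,b)}) = \sfat_2(\MG)$, so in particular $b$ lies in the set above. Reusing the displayed chain with $k = a$, namely $\sfat_2(\MH) = \sfat_2(\MH|_{(x,a)}) \le \sfat_2(\MG|_{(x,a)}) \le \sfat_2(\MG) = \sfat_2(\MH)$, we get $\sfat_2(\MG|_{(x,a)}) = \sfat_2(\MG)$, so $a$ lies in the set as well. Hence $a,b$ are two elements of a set of size at most two whose elements differ by $1$, so $|a - b| \le 1$; taking the supremum over $x$ yields $\|\soaf{\MH} - \soaf{\MG}\|_\infty \le 1$. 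There is no real obstacle: the only substantive input is Lemma \ref{lem:consec-k}, already proved, and the tie-breaking in Definition \ref{def:soa} is irrelevant here because we only need each SOA value to be \emph{some} maximizer of $k \mapsto \sfat_2(\,\cdot\,|_{(x,k)})$, not a specific one. This is exactly the point where the regression argument weakens relative to classification: when $\K = 2$ the maximizer sets for $\MH$ and $\MG$ are subsets of $\{1,2\}$ and consistent tie-breaking forces $\soaf{\MH} = \soaf{\MG}$, whereas for general $\K$ they are only nested pairs of consecutive integers, giving approximate rather than exact equality.
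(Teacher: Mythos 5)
Your argument is correct and is essentially the same as the paper's proof: fix $x$, note that $\sfat_2(\MH|_{(x,\soa{\MH}{x})}) = \sfat_2(\MH) = \sfat_2(\MG)$ forces $\sfat_2(\MG|_{(x,\soa{\MH}{x})}) = \sfat_2(\MG)$ by monotonicity under restriction, and then invoke Lemma~\ref{lem:consec-k} to conclude $|\soa{\MH}{x} - \soa{\MG}{x}| \leq 1$. The only cosmetic difference is that you inline the observation that $\MG$ is itself irreducible (needed for $\soaf{\MG}$ to be defined), which the paper records separately as Lemma~\ref{lem:irred-hg}.
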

Following \cite{ghazi_sample_2020}, we say that $\MG \subset \MF$ is a \dfn{finite restriction subclass} if it holds that $\MG = \MF|_{(x_1, y_1), \ldots, (x_M, y_M)}$ for some $(x_1, y_1), \ldots, (x_M, y_M) \in \MX \times [\K]$. Note that if $\MX$ is countable, the set of finite restriction subclasses of $\MF$ is countable. (The set of all subclasses of $\MF$ may be uncountable; thus, by considering finite restriction subclasses we avoid having to deal with uncountable sets.)

\section{The \ReduceTreeReg algorithm: obtaining weak stability}
\label{sec:reducetree}
In this section we state the weak stability guarantee afforded by the algorithm \ReduceTreeReg (Algorithm \ref{alg:reduce-tree}). Overall the algorithm and its analysis is very similar to that of the \ReduceTree algorithm of \cite{ghazi_sample_2020}, so all details are given in the appendix. (Some modifications from \cite{ghazi_sample_2020} are necessary, though, for instance because a class with finite sequential fat-shattering dimension does not immediately give rise to one of comparable Littlestone dimension; thus we cannot use the results of \cite{ghazi_sample_2020} in a black-box manner.) As in Section \ref{sec:irreducibility} we work with the discretized problem: given $\MX, \K$ a class $\MF \subset [\K]^\MX$ with $d := \sfat_2(\MF) \ll \K$, $n \in \BN$, and a distribution $P$ on $\MX \times [\K]$, the algorithm \ReduceTreeReg receives a dataset $S_n \in (\MX \times [\K])^n$ drawn from $P^n$. It also takes as input a parameter $\alpha_1$, for which it is assumed that $\alpha_1 - 3d \geq \inf_{f \in \MF} \err{P}{f}$. The guarantee of \ReduceTreeReg is stated (informally) as follows:
\begin{lemma}[Weak stability; informal version of Lemmas \ref{lem:approx-stability} and \ref{lem:s-size}]
  \label{lem:weak-stability-informal}
  Suppose $\MF, P, \alpha_1$ are given as described above. Then there are $d+1$ hypotheses $\sigma_1^\st, \ldots, \sigma_{d+1}^\st : \MX \ra [\K]$, depending only on $\MF, P$,\footnote{Each of the hypotheses $\sigma_i^\st$  is of the form $\soaf{\MG}$ for some $\MG \subset \MF$ which is $\ell'$-irreducible for sufficiently large $\ell'$. } so that, for sufficiently large $n$, given as input a dataset $S_n \sim P^n$, \ReduceTreeReg outputs a set $\hat \MS \subset [\K]^\MX$ of size $|\hat \MS| \leq \K^{2^{\tilde O(d)}}$ so that:
  \begin{itemize}
  \item With high probability, for some $t \in [d+1]$ and $\hat g \in \hat \MS$, it holds that $\| \hat g - \sigma_t^\st \|_\infty \leq 5$.
  \item With high probability, all $\hat g \in \hat \MS$ satisfy $\err{P}{\hat g} \leq \alpha_1$.
  \end{itemize}
\end{lemma}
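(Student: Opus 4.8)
The plan is to mirror the analysis of \ReduceTree in \cite{ghazi_sample_2020}, carrying along the modifications forced by the real-valued (discretized) setting. Recall the structure of \ReduceTreeReg: given a current finite restriction subclass $\MG \subseteq \MF$ (starting from $\MG = \MF$), it either (i) certifies that $\MG$ is $\ell^\st$-irreducible for a suitably large threshold $\ell^\st$ (of order $\taumax$), in which case it adds $\soaf{\MG}$ to the output set and this branch of the recursion terminates; or (ii) finds that $\MG$ is not $\ell^\st$-irreducible, in which case there is a $\K$-ary $\MX$-valued reducing tree of depth at most $\taumax$ every leaf $v$ of which has $\sfat_2(\MG|_{\ba(v)}) < \sfat_2(\MG)$, and it recurses on $\MG|_{\ba(v)}$ for those leaves $v$ that still admit a hypothesis of small empirical error under $\emp$, after a further canonical restriction that makes the resulting class irreducible (and thus eligible for $\soaf{\cdot}$ at the next level). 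Since each recursion step strictly decreases $\sfat_2(\cdot)$ and $\sfat_2(\MF) = d$, the recursion tree has depth at most $d+1$, and all sample-dependence enters only through the estimates $\err{\emp}{f}$ that decide which leaves are ``alive'' (the values $\sfat_2(\cdot)$ of restriction subclasses are computed exactly, since $\MF$ is a known class).

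To define the targets, I would run an \emph{idealized} copy of this procedure in which every error is measured exactly against $P$ and all ties are broken by a fixed deterministic rule; this singles out, for each level $t \in \{1,\dots,d+1\}$, a canonical irreducible class $\MG_t \subseteq \MF$ of sequential fat-shattering dimension roughly $d+1-t$ (set $\MG_t$ to be empty by convention if no such class is reached), and I set $\sigma_t^\st := \soaf{\MG_t}$. Each $\sigma_t^\st$ is then of the promised form and depends only on $\MF$ and $P$. The heart of the argument is a coupling between the sample execution and this idealized one: I would show that, for $n$ sufficiently large, with high probability at every level $t$ reached by both executions, the sample class $\hat\MG_t$ and the idealized class $\MG_t$ share a common irreducible subclass of the same $\sfat_2$ (essentially $\hat\MG_t \cap \MG_t$), so that Lemma \ref{lem:soa-stability} applies on each side; the triangle inequality, together with an $O(1)$ slack from the discretized tie-breaking and from passing between $\emp$ and $P$, then yields $\| \soaf{\hat\MG_t} - \soaf{\MG_t} \|_\infty \le 5$, and $\soaf{\hat\MG_t} \in \hat\MS$. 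Because Lemma \ref{lem:soa-stability} gives only \emph{approximate} SOA equality in the regression setting, the sample execution may certify irreducibility — and thus terminate, or branch differently — at a different level than the idealized one; this is exactly why a target $\sigma_t^\st$ is retained for every $t$, and why the conclusion asserts closeness to \emph{some} $\sigma_t^\st$ rather than to a single canonical hypothesis (as in \cite{ghazi_sample_2020}).

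The remaining pieces are more routine. The coupling rests on a uniform-convergence bound: using $\sfat_2(\MF) = d$ (and $\sfat_2(\disc{\MH}{\eta}) \le \sfat_\eta(\MH)$ from Lemma \ref{lem:fat-disc} at the level of the original class), for $n$ polynomial in $1/\eta$ and $\sfat_\eta(\MH)$ one has $|\err{\emp}{f} - \err{P}{f}| \le 1$ uniformly over all $f \in \MF$, which makes the ``alive-leaf'' decisions of the two executions agree. The size bound $|\hat\MS| \le \K^{2^{\tilde O(d)}}$ follows because the recursion tree has depth at most $d+1$, branches at most $\K^{\taumax}$ ways per node (the number of leaves of a reducing tree), and contributes at most one recorded hypothesis per node. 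The error bound holds because every recorded hypothesis is $\soaf{\MG}$ for an alive restriction subclass $\MG \subseteq \MF$, and the standard SOA-error estimate (adapted to the $[\K]$-valued absolute loss, where a sup-norm gap of $c$ costs at most $c$ in error) gives $\err{P}{\soaf{\MG}} \le \inf_{f \in \MG}\err{P}{f} + O(\sfat_2(\MG)) \le \inf_{f \in \MF}\err{P}{f} + 3d \le \alpha_1$, using the hypothesis $\alpha_1 - 3d \ge \inf_{f \in \MF}\err{P}{f}$ and uniform convergence.

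The main obstacle is the coupling: keeping the sample and idealized executions in sync across all $d+1$ levels despite (a) the reductions being $P$-dependent and estimated only approximately, and (b) the loss of exact SOA equality, all while certifying that the accumulated sup-norm slack stays below the stated constant. One must verify that the common core $\hat\MG_t \cap \MG_t$ really retains the full $\sfat_2$ and is irreducible at each level (so Lemma \ref{lem:soa-stability} is applicable), and that a mismatch in where irreducibility is certified does not cascade into a mismatch of the \emph{classes} at later levels beyond what the $d+1$ targets can absorb. This is precisely where the regression setting departs from the classification analysis of \cite{ghazi_sample_2020}, and where the bulk of the work behind Lemmas \ref{lem:approx-stability} and \ref{lem:s-size} lies.
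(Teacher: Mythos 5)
Your high-level reading of \ReduceTreeReg is broadly right, but the strategy you outline for the first bullet — run an ``idealized'' copy of the algorithm against $P$ to define $\sigma_t^\st$, then couple the two executions level by level — is not what the paper does, and it runs into exactly the obstruction you yourself flag but do not resolve. The paper does \emph{not} define the targets $\sigma_t^\st$ by running anything: it defines a set $\MM_{\alpha,t}$ of finite witnesses $S$ of bounded size for which $\MF_{P,\alpha-\alpha_\Delta/3}|_S$ is $\ell_t$-irreducible and $\sfat$-stable across the window $[\alpha-\alpha_\Delta/3, \alpha+\alpha_\Delta/3]$, and sets $\sigma_{\alpha,t}^\st := \soaf{\MF_{P,\alpha}|_{S^\st}}$ for $S^\st$ maximizing $\sfat_2$ over $\MM_{\alpha,t}$ (equations (\ref{eq:define-m})--(\ref{eq:define-sigma-at})). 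The key mechanism is then Lemma \ref{lem:equal-to-soa}: it takes the \emph{single} sample-built tree $\hat\bx$ plus the $P$-defined seed $S^\st$, and walks a path down $\hat\bx$ steered by $\soaf{\MH|_{S^\st}}$; the fact that every leaf of $\hat\bx$ has $\sfat_2$ at most $q^\st$ (the maximum over $\MM$) is enough to make the path stabilize at a leaf whose SOA hypothesis is within $O(1)$ of $\sigma^\st$. There is no second execution and no intersection of classes.

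Your coupling plan has a concrete hole. Under the event $E_{\good}$ the uniform-convergence bound gives only one-sided sandwich inclusions of the form $\MF_{\emp,\alpha-2\alpha_0}\subset\MF_{P,\alpha-\alpha_0}\subset\MF_{\emp,\alpha}$ (see (\ref{eq:sandwich})), not equality, so the ``alive-leaf'' decisions and the reducing trees chosen in step \ref{it:decrease-ldim} genuinely can differ between the sample and idealized runs: both the ancestor sets $\ba(v)$ and the branching structure diverge, and after even one step the two ``current classes'' need not be nested at all. Your proposed common core $\hat\MG_t\cap\MG_t$ then has no reason to be irreducible, and the intersection of two classes with $\sfat_2=k$ can have much smaller (or even undefined) $\sfat_2$, so Lemma \ref{lem:soa-stability} (which requires a nested pair) does not apply. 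Defining $\sigma^\st$ as a \emph{maximizer} over $\MM_{\alpha,t}$ rather than as the output of a process is precisely what sidesteps this: the argument never needs the sample tree to track any particular idealized trajectory — it only needs the sample tree to be ``below'' $q^\st$, which is guaranteed by construction. Separately, for the second bullet, the inequality you invoke, $\err{P}{\soaf{\MG}}\leq\inf_{f\in\MG}\err{P}{f}+O(\sfat_2(\MG))$, is not a lemma available to you and is not what the paper uses; the paper instead builds an auxiliary countable class $\tilde\MF$ of all SOA hypotheses of $(d+1)$-irreducible finite-restriction subclasses, shows $\sfat_2(\tilde\MF)=d$ via Lemma \ref{lem:irred-sfat-bound}, and applies uniform convergence (Corollary \ref{cor:2fat-uc-disc}) to $\tilde\MF$ on a fresh sample, together with the observation that $\ell'$-irreducibility forces $\soaf{\MJ}$ to agree with some $f\in\MJ$ on any fixed sample of size $\leq\ell'$. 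Both of these gaps would need to be filled before your outline could be turned into a proof.
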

Note that Lemma \ref{lem:weak-stability-informal} only guarantees that $\| \hat g - \sigma_t^\st \|_\infty \leq 5$ with high probability, which we informally refer to as \emph{weak stability}; in order to apply Proposition \ref{prop:sparse-selection} to obtain a private learning algorithm, we would need that $\| \hat g - \sigma_t^\st \|_\infty = 0$ (which we refer to as \emph{strong stability}).  In the following section we discuss how to upgrade the guarantee of weak stability to one of strong stability.

\section{The algorithm \soafilter: from weak to strong stability}
\label{sec:soafilter}
In this section we introduce the algorithm \soafilter and state its main guarantee. As in Section \ref{sec:irreducibility}, we continue on working with the discretized version of the problem, i.e., $\MX, \K$ are fixed, $\MX$ is countable, and we are given some countable hypothesis class $\MF \subset [\K]^\MX$, known to the algorithm, distribution $P$ on $\MX \times [\K]$, unknown to the algorithm, and the goal is to find $f \in \MF$ minimizing $\err{P}{f}$. We will write $d := \sfat_2(\MF)$ throughout this section. The error bounds we establish in this section will grow as $O(d)$ (see, e.g., item \ref{it:informal-l-close} below); thus, if $\MF$ arises as a discretization $\MF = \disc{\MH}{\eta}$, in order to ensure the error in the non-discretized version of the problem, which is $O(d) / \K$, is small, we work in the regime $d \ll \K$. %
Recalling that $\K = \lceil 2/\eta \rceil$ for a discretization scale $\eta$ (Section \ref{sec:pac}) and so $d / \K = O(\eta \cdot d) \leq O(\eta \cdot \sfat_\eta(\MH))$ (Lemma \ref{lem:fat-disc}), the growth condition $O(\sfat_\eta(\MH) \cdot \eta) \ra 0$  arises as a sufficient condition for $d/\K \ra 0$. 

We address the following problem: suppose there is some class $\MG \subset \MF$ which is $\ell$-irreducible for some large $\ell \in \BN$, and for which $\err{P}{\soaf{\MG}}$ is known to be small. Unfortunately, the algorithm does not know $\soaf{\MG}$; instead, we only know of some procedure (formalized as part of \ReduceTreeReg described in Section \ref{sec:reducetree}) to produce, given i.i.d.~samples from $P$, a collection of hypotheses $\hat g_1, \hat g_2, \ldots, \hat g_M \in [\K]^\MX$, so that with some positive probability (lower bounded by $1/O(d)$) at least one such hypothesis $\hat g_i$ satisfies $\| \soaf{\MG} - \hat g_i \|_\infty \leq \chi$ for some small positive constant $\chi$.\footnote{We were able to establish such a guarantee for $\chi = 5$ in Section \ref{sec:reducetree} (see Lemma \ref{lem:weak-stability-informal}).} Recall that we call this guarantee \dfn{weak stability}. We can repeat this procedure many times with disjoint samples from $P$, thus generating many hypotheses $\hat g_i$ satisfying $\| \soaf{\MG} - \hat g_i \|_\infty \leq \chi$, with the goal of applying the sparse selection procedure of Proposition \ref{prop:sparse-selection}.  However, in order to do so, we would need that for a given draw of $(\hat g_1, \ldots, \hat g_M)$, some hypothesis $\hat g_i$ is {\it equal} to $\soaf{\MG}$ with positive probability, i.e., $\chi = 0$. Since we wish to avoid dependence on $|\MX|$ in our sample complexity bounds (e.g., if $\MX$ is infinite), given only the guarantee that $\| \soaf{\MG} - \hat g_i \|_\infty \leq \chi$ for some $\chi > 0$, %
it is nontrivial to privately output some hypothesis close to $\soaf{\MG}$. %

In this section we overcome this challenge as follows: given $\MG$ as above and $\hat g \in [\K]^\MX$ with $\| \soaf{\MG} - \hat g \|_\infty \leq \chi$, we introduce an algorithm, \soafilter (Algorithm \ref{alg:soafilter}), which outputs some set $\CR_{\hat g}$ consisting of many subclasses $\ML \subset [\K]^\MX$, of size bounded above as a function of $d$ and $\K$ (in particular, $|\CR_{\hat g}| \leq \K^{d^{O(d)}}$), so that the following two properties hold, which we refer to informally as \emph{strong stability} (see Lemma \ref{lem:soafilter-lstar} for a formal statement):
\begin{enumerate}
\item Each $\ML \in \CR_{\hat g}$ is irreducible and satisfies $\| \soaf{\ML} - \hat g \|_\infty \leq O(\chi \cdot d)$.\label{it:informal-l-close}
\item For some irreducible $\ML^\st \subset \MF$ depending only on $\MG$, we have $\ML^\st \in \CR_{\hat g}$.\label{it:informal-lstar-belongs}
\end{enumerate}
Given a collection of hypotheses $\hat g_1, \ldots, \hat g_M \in [\K]^\MX$ as above, if we run \soafilter on each of the hypotheses $\hat g_i$, then the set $\hat \CR := \CR_{\hat g_1} \cup \cdots \cup \CR_{\hat g_M}$ is of bounded size (namely, at most $M \cdot \K^{d^{O(d)}}$), and as long as $\| \soaf{\MG} - \hat g_i \|_\infty \leq \chi$ for some $i \in [M]$ we have that $\ML^\st \in \hat \CR$ (item \ref{it:informal-lstar-belongs}) and $\| \soaf{\ML^\st} - \soaf{\MG} \|_\infty \leq O(\chi \cdot d)$ (item \ref{it:informal-l-close}). These properties (in particular, that $\hat \CR$ contains \emph{exactly} the class $\ML^\st$) are sufficient to apply the sparse selection procedure of Proposition \ref{prop:sparse-selection}, and thus obtain a private learning algorithm for $\MF$.
In Section \ref{sec:filterstep}, we describe a subroutine of \soafilter, which we call \filterstep; we then describe \soafilter in Section \ref{sec:redtree-soafilter}. %

\subsection{\filterstep algorithm}
\label{sec:filterstep}
A challenge in achieving a strong stability guarantee as explained in the above paragraphs is that the class $\MF$ could consist of too many functions with small oscillatory behavior: in particular, suppose that $\MF = \{ f : f(x) \in \{1,2\} \ \forall x \in \MX \}$, so that $\sfat_2(\MF) = 0$. Suppose that $\soaf{\MG}$ and $\hat g$ are arbitrary functions taking values in  $\{1,2\}$; then $\| \hat g - \soaf{\MG} \|_\infty \leq 1$. Moreover, each irreducible subclass $\ML \subset \MF$ satisfies $\| \soaf{\ML} - \hat g \|_\infty \leq 1$. Since we aim to have $| \CR_{\hat g} | \leq \K^{d^{O(d)}}$, and yet the number of irreducible subclasses $\ML \subset \MF$ could be much larger than this quantity, we will have to narrow down the set of subclasses $\ML$ which can be added to $\CR_{\hat g}$; this is done in the algorithm \filterstep, which ``filters out'' many $\MH \subset \MF$, and assigns to each $\MH$ which is filtered out some $\ML \subset \MF$ which is not filtered out that is a good $\ell_\infty$ approximation of $\MH$. 

To describe the algorithm \filterstep, fix a class $\MF \subset [\K]^\MX$.
For $\ell \geq 0$ and $0 \leq b \leq d$, set
\colt{
  $$
\irred{\ell}{b}{\MF} := \left\{ \MH \subset \MF: \substack{\text{$\MH$ is a finite restriction subclass of $\MF$} \\ \text{which is $\ell$-irreducible, and $\sfat_2(\MH) = b$}} \right\}.
$$}
\arxiv{
    $$
\irred{\ell}{b}{\MF} := \left\{ \MH \subset \MF: \parbox{6.5cm}{\centering\text{$\MH$ is a finite restriction subclass of $\MF$} \\ \text{which is $\ell$-irreducible, and $\sfat_2(\MH) = b$}} \right\}.
$$}
\begin{algorithm}[!htp]
  \caption{\bf \filterstep}
  \label{alg:filterstep}
\KwIn{A class $\MF$ with $d := \sfat_2(\MF)$, and a sequence $(\ell_{r,t})_{r,t \geq 0}$ of positive integers that is non-decreasing in $r$, a parameter $r_{\max}$.}
\arxiv{\begin{enumerate}[leftmargin=14pt,rightmargin=20pt,itemsep=1pt,topsep=1.5pt]}
\colt{\begin{enumerate}[leftmargin=20pt,rightmargin=20pt,itemsep=1pt,topsep=1.5pt]}
\item For each $t \in \{0, 1, \ldots, d \}$, set $\CL_{t} \gets \emptyset$. %
\item For $0 \leq t \leq d$ and $0 \leq r \leq r_{\max}$, define $\CI_{r,t} := \irred{\ell_{r,t}}{d-t}{\MF}$. Also set $\CI_{r_{\max} + 1, t} := \emptyset$ for $0 \leq t \leq d$.
\item For $t \in \{0, 1, \ldots, d\}$:
  \begin{enumerate}
  \item For $r \in \{r_{\max}, r_{\max}-1, \ldots, 0\}$:
    \begin{enumerate}
      \item \label{it:for-mh} For each $\MH \in \CI_{r,t} \backslash \CI_{r+1,t}$: {\it (Since the sequence $\ell_{r,t}$ is non-decreasing in $r$, we have $\CI_{r+1,t} \subset \CI_{r,t}$ for all $r,t$. Note that this step makes sense since $\CI_{r,t}$ is countable; an arbitrary enumeration of $\CI_{r,t}$ may be used.)} %
  \begin{enumerate}
  \item \label{it:found-l} If there is some $\ML \in \CL_{d-t}$ and $\ba \subset \MX \times [\K]$ with $|\ba| \leq \ell_{r,t} - 1$ so that $\sfat_2(\MF|_{\ba}) = d-t$ and for all $(x,y) \in \ba$, $\soa{\ML}{x} = \soa{\MH}{x} = y$, then set $\repl{\MH} \gets \ML$. %
  \item \label{it:set-l} Else, add $\MH$ to $\CL_{d-t}$, and set $\repl{\MH} \gets \MH$.
  \end{enumerate}
\end{enumerate}
\end{enumerate}
\item Output the sets $\CL_t$, $0 \leq t \leq d$, as well as the mapping $\repl{\cdot}$. %
  \end{enumerate}
\end{algorithm}
The algorithm \filterstep is presented in Algorithm \ref{alg:filterstep}. For an input positive integer $r_{\max}$ and a sequence $(\ell_{r,t})_{r,t}$ defined for $0 \leq r \leq r_{\max}, 0 \leq t \leq d$, the algorithm defines a mapping $\repl{\cdot}$, which maps each $\MH \in \irred{\ell_{r,t}}{d-t}{\MF}$, for $0 \leq r \leq r_{\max}$ and $0 \leq t \leq d$, into some ``filtered set'' $\CL_{d-t}$. For $\MH \in \irred{\ell_{r,t}}{d-t}{\MF}$, the class $\repl{\MH}$ should be interpreted as a representative of $\MH$ which approximates it well, in the sense of the following lemma:
\begin{lemma}
  \label{lem:close-reps}
  Fix inputs $\MF, (\ell_{r,t})_{r,t \geq 0}, r_{\max}$ to \filterstep. For any $0 \leq r \leq r_{\max}, 0 \leq t \leq d$, and any $\MH \in \irred{\ell_{r,t}}{d-t}{\MF}$, we have that $\| \soaf{\MH} - \soaf{\repl{\MH}} \|_\infty \leq 1$.
\end{lemma}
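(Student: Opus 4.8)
If \filterstep set $\repl{\MH}=\MH$ in step~\ref{it:set-l}, the bound is trivial, so assume $\repl{\MH}=\ML$ was set in step~\ref{it:found-l}. Then there are $\ML\in\CL_{d-t}$ and $\ba\subseteq\MX\times[\K]$ with $|\ba|\le\ell_{r,t}-1$, $\sfat_2(\MF|_\ba)=d-t$, and $\soa{\ML}{x}=\soa{\MH}{x}=y$ for all $(x,y)\in\ba$. Since $\CL_{d-t}$ is enlarged only in step~\ref{it:set-l} and the sequence $\ell_{\cdot,t}$ is non-decreasing in $r$, both $\MH$ and $\ML$ are finite restriction subclasses of $\MF$ with sequential fat-shattering dimension $d-t$ that are $\ell_{r,t}$-irreducible.

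Fix $x\in\MX$; I want $|\soa{\MH}{x}-\soa{\ML}{x}|\le1$. The basic tool is the ``length-$1$-interval'' fact underlying Lemmas~\ref{lem:consec-k} and~\ref{lem:soa-stability}: for any class $\MG$ and point $x$, the set $V_\MG(x):=\{k\in[\K]:\sfat_2(\MG|_{(x,k)})=\sfat_2(\MG)\}$ of \emph{SOA values of $\MG$ at $x$} is contained in an interval of length $1$ (else, picking $k,k'\in V_\MG(x)$ with $k'\ge k+2$, the complete binary tree of depth $\sfat_2(\MG)+1$ rooted at $x$ with witness value $(k+k')/2$ and with full-depth $2$-shattering trees for $\MG|_{(x,k)},\MG|_{(x,k')}$ hanging off its two children is $2$-shattered by $\MG$, a contradiction). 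Moreover, if $\MG'\subseteq\MG$ with $\sfat_2(\MG')=\sfat_2(\MG)$, then $V_{\MG'}(x)\subseteq V_\MG(x)$, since for $k\in V_{\MG'}(x)$ the inclusion $\MG'|_{(x,k)}\subseteq\MG|_{(x,k)}$ gives $\sfat_2(\MG|_{(x,k)})\ge\sfat_2(\MG')=\sfat_2(\MG)$; in particular $\soaf{\MG'}(x)\in V_\MG(x)$. Thus it suffices to exhibit a class $\MG_x$ with $\sfat_2(\MG_x)=d-t$ for which $\soa{\MH}{x}\in V_{\MG_x}(x)$ and $\soa{\ML}{x}\in V_{\MG_x}(x)$; then $|\soa{\MH}{x}-\soa{\ML}{x}|\le1$, and since $x$ was arbitrary, $\|\soaf{\MH}-\soaf{\repl{\MH}}\|_\infty\le1$.

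The natural candidate is $\MG_x:=\MF|_\ba$ (recall $\sfat_2(\MF|_\ba)=d-t$). To get $\soa{\MH}{x}\in V_{\MF|_\ba}(x)$ it suffices to produce a subclass of $\MF|_{\ba\cup\{(x,\soa{\MH}{x})\}}$ of sequential fat-shattering dimension $d-t$ (the reverse inequality being automatic from $\MF|_{\ba\cup\{(x,\soa{\MH}{x})\}}\subseteq\MF|_\ba$). Here the remaining hypotheses enter: $\MH^\st:=\MH|_{(x,\soa{\MH}{x})}$ has $\sfat_2(\MH^\st)=d-t$ by definition of $\soaf{}$, and the tie-breaking bookkeeping of Definition~\ref{def:soa} shows $\MH^\st$ is $(\ell_{r,t}-1)$-irreducible; applying this irreducibility to the $\K$-ary tree of depth $|\ba|\le\ell_{r,t}-1$ that queries the points occurring in $\ba$ (in any fixed order) produces a leaf whose ancestor set $\ba^\st$ (an assignment to those points) satisfies $\sfat_2(\MH^\st|_{\ba^\st})=d-t$, hence $\sfat_2\bigl(\MF|_{\ba^\st\cup\{(x,\soa{\MH}{x})\}}\bigr)\ge d-t$. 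One then argues that $\ba^\st$ must in fact be $\ba$ (so that $\sfat_2(\MF|_{\ba\cup\{(x,\soa{\MH}{x})\}})\ge d-t$, as needed): each coordinate of $\ba^\st$ is an SOA value of a prefix-restriction of $\MH^\st\subseteq\MH$, hence — these prefix-restrictions all having $\sfat_2$ equal to $\sfat_2(\MH)$ — an SOA value of $\MH$ at that point, so $\ba^\st$ is coordinatewise within $1$ of the prescribed assignment $\ba$, and the condition $\sfat_2(\MF|_\ba)=d-t$ together with $\MH,\ML$ being finite restriction subclasses of $\MF$ is meant to rule out the ``drifted'' possibilities. The identical argument applies with $\ML$ in place of $\MH$.

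The crux, and the main obstacle, is exactly this last step — ruling out $\ba^\st\ne\ba$. Because a node may carry two SOA values, restricting along a prefix of $\ba$ can shift the SOA value at the next point by $\pm1$, so a priori $\ba^\st$ is only within $1$ of $\ba$, and indeed restricting $\MH$ along $\ba$ itself need not preserve the sequential fat-shattering dimension (it can even become empty, e.g.\ near dimension $0$). Controlling this drift — showing that the \emph{only} assignment to the points of $\ba$ consistent with $\MH$'s dimension \emph{and} with $\sfat_2(\MF|_\ba)=d-t$ is $\ba$, or more robustly replacing $\MF|_\ba$ by a suitable common restriction subclass of $\MF$ of dimension $d-t$ containing enough of $\MH$ and $\ML$ and handling the low-dimensional degenerate cases separately — is the technical heart of the lemma. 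I expect it to rest on a preparatory lemma about how $\sfat_2$ and irreducibility interact with restriction along SOA-consistent sets of size below the irreducibility threshold, possibly combined with the precise way $\repl{\cdot}$ is assigned relative to the enumeration order in \filterstep.
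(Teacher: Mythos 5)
Your high-level plan is right, and you even correctly guess at the end that the missing ingredient is a lemma about restricting along SOA-consistent sets below the irreducibility threshold — that lemma is exactly Lemma~\ref{lem:many-irred} in the paper, which the paper packages into Lemma~\ref{lem:reduced-equal}. But your execution overcomplicates the step where you need $\sfat_2\bigl(\MF|_{\ba\cup\{(x,\soa{\MH}{x})\}}\bigr)=d-t$, and the ``drift'' worry you run into is an artifact of that detour, not an obstacle inherent to the lemma.

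The direct route: the set $\ba\cup\{(x,\soa{\MH}{x})\}$ consists entirely of pairs $(x',y')$ with $y'=\soa{\MH}{x'}$ (that $\ba$ itself has this property is \emph{given} by the condition in step~\ref{it:found-l} of \filterstep — you are restricting along $\MH$'s own SOA values, not querying a generic tree). Its size is at most $|\ba|+1\le\ell_{r,t}$, and $\MH$ is $\ell_{r,t}$-irreducible. So Lemma~\ref{lem:many-irred} applies directly and yields $\sfat_2\bigl(\MH|_{\ba\cup\{(x,\soa{\MH}{x})\}}\bigr)=\sfat_2(\MH)=d-t$. Since $\MH|_{\ba\cup\{(x,\soa{\MH}{x})\}}\subset\MF|_{\ba\cup\{(x,\soa{\MH}{x})\}}\subset\MF|_\ba$ and $\sfat_2(\MF|_\ba)=d-t$, the desired equality $\sfat_2\bigl(\MF|_{\ba\cup\{(x,\soa{\MH}{x})\}}\bigr)=d-t$ is immediate. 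There is no tree to traverse, no ancestor set $\ba^\st$ that might differ from $\ba$, and hence no drift to control. The same argument with $\ML$ in place of $\MH$ (using that $\ML$, too, is $\ell_{r,t}$-irreducible since it was added to $\CL_{d-t}$ at an iteration with $r'\ge r$ and $\ell_{\cdot,t}$ is non-decreasing) gives $\sfat_2\bigl(\MF|_{\ba\cup\{(x,\soa{\ML}{x})\}}\bigr)=d-t$. To invoke your length-$1$-interval fact on $\MF|_\ba$ you also need $\MF|_\ba$ irreducible; this follows from Lemma~\ref{lem:irred-hg} because $\MH|_\ba\subset\MF|_\ba$ have equal $\sfat_2$ and Lemma~\ref{lem:many-irred} makes $\MH|_\ba$ $1$-irreducible. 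With irreducibility in hand, Lemma~\ref{lem:consec-k} gives $|\soa{\MH}{x}-\soa{\ML}{x}|\le1$, as you wanted. So the gap is concrete: you pivoted from ``restrict along $\ba$'' to ``query a tree over the points of $\ba$,'' and the latter genuinely does not pin down the labels; Lemma~\ref{lem:many-irred} is what lets you take the former route and avoid the problem entirely.
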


The algorithm \filterstep is designed so that its output sets $\CL_{d-t}$, $0 \leq t \leq d$, satisfy the following sparsity-type property:
\begin{lemma}
  \label{lem:at-most-one}
Fix inputs $\MF, (\ell_{r,t})_{r,t \geq 0}, r_{\max}$ to \filterstep. For any $0 \leq t \leq d$ and $0 \leq r \leq r_{\max}$, and any $\ba \subset \MX \times [\K]$ with $|\ba| \leq \ell_{r,t} - 1$ so that $\sfat_2(\MF|_{\ba}) = d-t$, there is at most one element $\ML \in \CL_{d-t} \cap \irred{\ell_{r,t}}{d-t}{\MF}$ so that for all $(x,y) \in \ba$, $\soa{\ML}{x} = y$.
\end{lemma}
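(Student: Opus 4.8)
The plan is to argue by contradiction: suppose $\ba \subset \MX \times [\K]$ with $|\ba| \leq \ell_{r,t} - 1$ and $\sfat_2(\MF|_{\ba}) = d - t$, and suppose there are two distinct elements $\ML, \ML' \in \CL_{d-t} \cap \irred{\ell_{r,t}}{d-t}{\MF}$ with $\soa{\ML}{x} = y$ and $\soa{\ML'}{x} = y$ for all $(x,y) \in \ba$. The heart of the matter is understanding the order in which $\ML$ and $\ML'$ were added to $\CL_{d-t}$ during the execution of \filterstep. Since both classes lie in $\irred{\ell_{r,t}}{d-t}{\MF} = \CI_{r,t}$, and the sets $\CI_{r,t}$ are nested decreasingly in $r$, each of $\ML, \ML'$ was added during the processing of $t$ (all elements of $\CL_{d-t}$ are added while the outer loop index is this particular $t$, by construction of step 3 and the fact that $\sfat_2 = d-t$ pins down which iteration) at some value $r' \geq r$ of the inner loop index. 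Say $\ML$ was added at inner-loop value $r_1$ and $\ML'$ at inner-loop value $r_2$, and WLOG $\ML$ was added no later than $\ML'$ (i.e., $r_1 \geq r_2$, since the inner loop runs $r$ downward from $r_{\max}$ to $0$; or if $r_1 = r_2$, then $\ML$ appears earlier in the fixed enumeration of $\CI_{r_1,t} \backslash \CI_{r_1+1,t}$).

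The key observation is then: at the moment $\ML'$ is considered in step \ref{it:for-mh} (with inner-loop index $r_2 \leq r_1 \leq$ current value, so in particular $\ell_{r_2,t} \geq \ell_{r,t}$ since the sequence is non-decreasing in... wait, we need it the other way). Let me restate: we have $r_2 \leq r$ is false in general — what we know is $\ML, \ML' \in \CI_{r,t}$, so they were added at inner-loop indices $r_1, r_2 \geq r$. The set $\ba$ satisfies $|\ba| \leq \ell_{r,t} - 1 \leq \ell_{r_2, t} - 1$ by monotonicity of $\ell_{\cdot, t}$ in its first index. Now when $\ML'$ was considered (at index $r_2$), the class $\ML$ was already in $\CL_{d-t}$ (it was added at index $r_1 \geq r_2$, strictly before or, if $r_1 = r_2$, earlier in the enumeration). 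The witnessing set $\ba$ has $|\ba| \leq \ell_{r_2,t} - 1$, we have $\sfat_2(\MF|_{\ba}) = d-t$, and for all $(x,y) \in \ba$ we have $\soa{\ML}{x} = \soa{\ML'}{x} = y$. But these are exactly the hypotheses of the test in step \ref{it:found-l}: the algorithm would have found $\ML \in \CL_{d-t}$ and $\ba$ satisfying all the required conditions, hence would have set $\repl{\ML'} \gets \ML$ via step \ref{it:found-l} rather than adding $\ML'$ to $\CL_{d-t}$ via step \ref{it:set-l} — contradicting that $\ML' \in \CL_{d-t}$.

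A few routine points need to be nailed down to make this airtight. First, I should verify that every element added to $\CL_{d-t}$ is indeed added during the outer-loop iteration with index exactly $t$: this holds because a class is only ever added to $\CL_{d-t'}$ where $d - t'$ equals its $\sfat_2$ value (step \ref{it:set-l} adds $\MH \in \CI_{r,t}$, which has $\sfat_2(\MH) = d-t$, to $\CL_{d-t}$), and the outer loop visits each $t$ exactly once. Second, I need the monotonicity direction: the problem statement gives that $(\ell_{r,t})$ is non-decreasing in $r$, so from $r_2 \geq r$ we get $\ell_{r_2,t} \geq \ell_{r,t}$, hence $|\ba| \leq \ell_{r,t} - 1 \leq \ell_{r_2,t} - 1$, which is what step \ref{it:found-l} requires at index $r_2$. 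Third, the case $r_1 = r_2$ with $\ML$ and $\ML'$ both in $\CI_{r_1,t}\backslash\CI_{r_1+1,t}$: here the fixed enumeration processes one before the other, and whichever is processed second sees the first already in $\CL_{d-t}$, so the same argument applies. I do not expect any real obstacle here — the lemma is essentially a direct unwinding of the greedy structure of \filterstep, where step \ref{it:found-l} is precisely engineered to prevent two "close" (in the $\ba$-sense) irreducible subclasses of the same $\sfat_2$ value from both surviving into $\CL_{d-t}$. The only mild care needed is bookkeeping the loop indices and the monotonicity of $\ell_{\cdot,t}$.
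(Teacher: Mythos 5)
Your proposal is correct and follows essentially the same route as the paper's proof: assume two distinct $\ML, \ML' \in \CL_{d-t} \cap \irred{\ell_{r,t}}{d-t}{\MF}$ agree with $\ba$, note (WLOG) that $\ML'$ is processed after $\ML$ at some inner-loop index $r_2 \geq r$, observe $|\ba| \leq \ell_{r,t} - 1 \leq \ell_{r_2,t} - 1$ by monotonicity, and conclude that step \ref{it:found-l} would have fired for $\ML'$, contradicting $\ML' \in \CL_{d-t}$. The extra bookkeeping you supply (that both additions happen during outer-loop index $t$, and that $r_1, r_2 \geq r$ follows from nestedness of the $\CI_{r',t}$) is just a more explicit unwinding of what the paper states tersely.
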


\subsection{Reducing trees and \soafilter}
\label{sec:redtree-soafilter}
In this section we describe the algorithm \soafilter in full; before doing so, we introduce the notion of \emph{reducing tree} in the following two definitions:
\begin{definition}[Augmented tree]
  \label{def:augmented}
  For $d \geq 1, \K \in \BN$, an \dfn{augmented $\K$-ary $\MX$-valued tree} of depth $d$ is defined exactly the same as a $K$-ary $\MX$-valued tree (Definition \ref{def:xv-tree}), with the exception that there is a unique value of $k_1 \in [\K]$ so that the sequence $(k_1)$ lies in the domain of $\bx_2$ (in particular, requirement \ref{it:sibling} in Definition \ref{def:xv-tree} is dropped for $t=1$). Moreover, the only node associated with a sequence of length 1 is the node associated with $(k_1)$. We will say that the augmented tree $\bx$ is \dfn{rooted by the pair $(\bx_1, k_1)$.}
\end{definition}
One should think of an augmented $\MX$-labeled tree $\bx$ of depth $d$ which is rooted by the pair $(x, k)$ as an $\MX$-labeled tree $\bx'$ of depth $d-1$ for which we created a new root labeled by $x$ and attached to it a single child (labeled by $k$), which is the root of the tree $\bx'$. (Note that we have $\bx_1 = x$ here.)

\begin{definition}[Reducing tree]
    \label{def:reducing-tree}
  Suppose $\MH \subset [\K]^\MX$, and let $d := \sfat_2(\MH)$. Fix an increasing sequence $(\ell_t)_{t \geq 0}$ of positive integers. Given a point $(x,y) \in \MX \times [\K]$ so that $\sfat_2(\MH|_{(x,y)}) < \sfat_2(\MH)$, we say that an augmented $\K$-ary $\MX$-labeled tree $\bx$ rooted by the pair $(x,y)$ is a \dfn{reducing tree for the pair $(x,y)$ and the sequence $(\ell_t)_{t \geq 0}$} if any leaf $v$ of the tree satisfies:
  \begin{itemize}
  \item $\MH|_{\ba(v)}$ is either empty or is $\ell_t$-irreducible, where $t := d - \sfat_2(\MH|_{\ba(v)})$. %
  \item $\height(v) \leq \sum_{t'=0}^{t-1} \ell_{t'}$. Moreover, for any $1 \leq \tilde t < t$, there is some node $v'$ which is an ancestor of $v$ so that  $\sfat_2(\MH|_{\ba(v')}) \leq d-\tilde t$ and $\height(v') \leq \sum_{t'=0}^{\tilde t-1} \ell_{t'}$. 
  \end{itemize}
\end{definition}
Lemma \ref{lem:make-tree} in the appendix shows that reducing trees exist.

The algorithm \soafilter is presented in Algorithm \ref{alg:soafilter}. It takes as input some hypothesis $\hat g : \MX \ra [\K]$ and a class $\MF \subset [\K]^\MX$, as well as parameters $\tau_{\max}, r_{\max} \in \BN$. Its output is a set $\repf{\hat g}$, consisting of sub-classes of $\MF$. The set $\repf{\hat g}$ should be interpreted as a set of ``representatives'' of $\hat g$ in the sense that for $\ML \in \repf{\hat g}$, under appropriate conditions, we will have that $\soaf{\ML}$ is a good $\ell_\infty$-approximation of $\hat g$ (i.e., $\| \soaf{\ML} - \hat g \|_\infty$ is small); see Lemma \ref{lem:soafilter-lstar} below. 

The algorithm $\soafilter$ proceeds as follows. It first runs the algorithm $\filterstep$ for the class $\MF$, which produces ``filtered sets'' $\CL_{d-t}, 0 \leq t \leq d$, of sub-classes of $\MF$; each element of $\CR_{\hat g}$ will belong to some set $\CL_{d-t}$. \soafilter then tries to find finite sets $\ba \subset \MX \times [\K]$ so that both (a) $\| \soaf{\MF|_{\ba}} - \hat g \|_\infty$ is small and (b) so that for some $\ML$ in one of the ``filtered sets'' $\CL_{d-t}$ produced by \filterstep, it holds that $\soa{\ML}{x} = y$ for each $(x,y) \in \ba$; such sets $\ML$ will be added to $\repf{\hat g}$ (step \ref{it:found-close}). The sets $\ba$ are built up gradually as follows: if some set $\ba$ in the process of being built up is so that $\| \soaf{\MF|_{\ba}} - \hat g \|_\infty$ is large, then we may choose some $x_\ba \in \MX$ so that $|\soa{\MH}{x_\ba} - \hat g(x_\ba)|$ is large (step \ref{it:choose-xs}). For $y$ not too far from $\hat g(x_\ba)$, it will follow that we can construct a reducing tree with respect to the class $\MF|_{\ba}$ at the point $(x_\ba, y)$ (step \ref{it:make-tree}). For some of the leaves $v$ of this reducing tree, we will then add $\ba(v)$ to $\ba$ to create a new set $\ba'$ (one for each such leaf $v$), and continue to process each of these new sets $\ba'$ (step \ref{it:bav-added}). Intuitively, adding $\ba(v)$ to $\ba$ ``restricts'' the class of functions $\MF|_{\ba}$ under consideration so that all functions in it (and therefore its SOA hypothesis $\soaf{\MF|_{\ba}}$) well-approximates $\hat g(x_\ba)$ at $x_\ba$. Since for all leaves $v$ of the reducing tree we must have that $\sfat_2(\MF|_{\ba(v) \cup \ba}) < \sfat_2(\MF|_{\ba})$, this process must eventually terminate.  We will show that some sequence of restrictions, corresponding to a choice of leaf of the reducing tree created at each step, will create some set $\ba$ with our desired properties (a) and (b) above. All rigorous details of the algorithm are presented in Algorithm \ref{alg:soafilter}.  %
\begin{algorithm}[ht]
  \caption{\bf \soafilter} \label{alg:soafilter}
  \KwIn{Class $\MF \subset [\K]^\MX$, $d := \sfat_2(\MF)$, sequence $(\ell_{r,t})_{r,t \geq 0}$, $r_{\max} \in \BN$, tolerance parameter $\tau_{\max} \in \BN$, $\chi \in \BN$, $\hg \in [\K]^\MX$. %
    It is assumed that $r_{\max}, \tau_{\max}$ are multiples of $d+1$; let $r_0 := r_{\max} / (d+1), \tau_0 := \tau_{\max} / (d+1)$. \colt{Initialize $\repf{\hg} \gets \emptyset$.}}
\arxiv{\begin{enumerate}[leftmargin=14pt,rightmargin=20pt,itemsep=1pt,topsep=1.5pt]}
\colt{\begin{enumerate}[leftmargin=20pt,rightmargin=20pt,itemsep=1pt,topsep=1.5pt]}
\item \label{it:call-filterstep} Run the algorithm \filterstep (Algorithm \ref{alg:filterstep}) with $\MF$, $(\ell_{r,t})_{r,t \geq 0}$, and $r_{\max}$ as input, and let the output sets be denoted $(\CL_t)_{0 \leq t \leq d}$.  
\arxiv{\item Set $\repf{\hg} \gets \emptyset$.}
\item For each $0 \leq s \leq d, 0 \leq j \leq d$, set $\CQ_{j,s} \gets \emptyset$. {\it ($\CQ_{j,s}$ will be a collection of finite subsets $\ba \subset \MX \times [\K]$ defined for each index pair $s,j$.)} %
\item Set $\CQ_{j,0} \gets \{ \emptyset\}$ for each $j$ (i.e., $\CQ_{j,0}$ has a single element, which is the empty set).\label{it:cq0}
    \item \label{it:for-j} For $j \in \{0, 1, \ldots, d\}$: \colt{let $r \gets r_{\max} - jr_0 - 1, \ \tau \gets j\tau_0 + 2 + \chi$:}
      \begin{enumerate}
        \arxiv{  \item Let $r \gets r_{\max} - jr_0 - 1, \ \tau \gets j\tau_0 + 2 + \chi$. %
          }
          \item For $s \in \{0, 1, \ldots, d \}$:\label{it:iterate-s}
      \begin{itemize}
  \item For each $\ba \in \CQ_{j,s}$, letting $\MH := \MF|_\ba$ :%
    \begin{enumerate}
    \item \label{it:check-empty} If $\MH$ is empty, continue on with the next $\ba \in \CQ_{j,s}$. 
    \item \label{it:found-close} If $\| \soaf{\MH} - \hg \|_\infty \leq \tau$: %
      \begin{itemize}
      \item If there is some $\ML \in \irred{\ell_{r,t}}{d-t}{\MF} \cap \CL_{d-t}$ so that for all $(x,y) \in \ba$, $\soa{\ML}{x} = y$, then add any such $\ML$ to $\repf{\hg}$.
      \item %
        Continue (i.e., go to step \ref{it:check-empty} with the next $\ba \in \CQ_{j,s}$). %
      \end{itemize}
    \item \label{it:choose-xs} Else, we have $\| \soaf{\MH} - \hg \|_\infty > \tau$; then choose some $x_{\ba} \in \MX$ so that $| \soa{\MH}{x_{\ba}} - \hg({x_{\ba}}) | \geq \tau + 1$.
    \item \label{it:choose-y} Let $k \gets \hg({x_{\ba}})$. For $y \in \{k-\tau+1 \vee 0, k-\tau+2 \vee 0, \ldots, k+\tau-1 \wedge \K\}$: %
      \begin{enumerate}
      \item \label{it:make-tree} Let $t_\ba := d - \sfat_2(\MH)$, and let $\bx\^{\MH, (x_{\ba},y)}$ be a reducing tree with respect to $\MH$ for the point $(x_{\ba},y)$ and the sequence $(\ell_{r,t + t_{\ba}})_{0 \leq t \leq d - t_{\ba}}$, as constructed per Lemma \ref{lem:make-tree}. %
        {\it (Note that the reducing tree is well-defined since $|k - \soa{\MH}{x_\ba}| \geq \tau+1$ and so any $y$ with $|y-k| \leq \tau-1$ must satisfy $\sfat_2(\MH|_{(x_\ba,y)}) < \sfat_2(\MH)$.)}
      \item \label{it:bav-added} For each leaf $v$ of the tree $\bx\^{\MH, (x_{\ba},y)}$, if it is the case that (a) $\MF|_{\ba \cup \ba(v)}$ is nonempty, and (b) for each $(x,y) \in \ba(v)$, $|\hg({x}) - y | \leq \tau-1$, then add $\ba \cup \ba(v)$ to $\CQ_{j,s+1}$. %
      \end{enumerate}
    \end{enumerate}
  \end{itemize}
    \end{enumerate}
    \item \label{it:remove-faraway-filter} Remove all $\ML \in \CR_{\hat g}$ from $\CR_{\hat g}$ with $\| \soaf{\ML} - \hat g \|_\infty > \taumax$\arxiv{.}\colt{, then output $\repf{\hg}$.}
   \arxiv{ \item Output $\repf{\hg}$.}
    \end{enumerate}
\end{algorithm}
Lemma \ref{lem:soafilter-lstar} provides the main guarantee for \soafilter.
\begin{lemma}[``Strong stability'']
  \label{lem:soafilter-lstar}
  Fix any positive integer $\bar\ell$. Suppose that $\MG \subset \MF$ is nonempty, $\hat g \in [\K]^\MX$, that %
  $ \| \soaf{\MG} - \hg \|_\infty \leq \chi$ for some $\chi > 0$, and that $\MG$ is $(\bar\ell \cdot (d+3)^d)$-irreducible. %
  Then there is some $\bar\ell$-irreducible $\ML^\st \subset \MF$, depending only on $\MG$, so that $\| \soaf{\ML^\st} - \soaf{\MG} \|_\infty \leq \taumax + 1$ and so that $\ML^\st \in \repf{\hg}$, where $\repf{\hg}$ is the output of \soafilter when given as inputs $\MF$, $\hat g$, $r_{\max} = \rmax, \ \tau_{\max} = \taumax$ and the sequence $\ell_{r,t} := \bar\ell \cdot (r+2)^t$ for $0 \leq r \leq \rmax$, $0 \leq t \leq d$.

  Moreover, all $\ML \in \CR_{\hat g}$ satisfy $\| \soaf{\ML} - \hat g \|_\infty \leq \taumax$ and are $\bar \ell$-irreducible.
\end{lemma}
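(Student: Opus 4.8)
The ``moreover'' clause is immediate: a class enters $\CR_{\hat g}$ only in step~\ref{it:found-close}, where it is drawn from $\irred{\ell_{r,t}}{d-t}{\MF}$ and hence is $\ell_{r,t}$-irreducible; since $\ell_{r,t}=\bar\ell(r+2)^t\ge\bar\ell$, it is $\bar\ell$-irreducible (using that $\ell$-irreducibility implies $\ell'$-irreducibility for $\ell'\le\ell$), and step~\ref{it:remove-faraway-filter} discards every surviving $\ML$ with $\|\soaf{\ML}-\hat g\|_\infty>\taumax$. It remains to exhibit $\ML^\st$, which we do by running a ``shadow'' copy of the inner loop of \soafilter$(\hat g)$ whose choices are dictated by $\MG$ rather than by $\hat g$. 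Starting from $\ba\^0:=\emptyset$, we build an increasing chain of finite restriction sets $\ba\^0\subseteq\cdots\subseteq\ba\^{m^\st}\subset\MX\times[\K]$, with each $\ba\^m$ landing at an iteration $(j,s)$ of \soafilter$(\hat g)$, maintaining: (1)~$\ba\^m$ is one of the sets \soafilter$(\hat g)$ places into $\CQ_{j,s}$, with $j$ nondecreasing in $m$; (2)~$\MF|_{\ba\^m}$ arises at a leaf of a reducing tree (so is suitably irreducible), while $\MG|_{\ba\^m}$ is nonempty and $\ell_{r,t}$-irreducible for the active $(r,t)$, and $\sfat_2(\MG|_{\ba\^m})$ drops by at most one each time $j$ increments ($j$ being morally the running count of such drops, so $j\le d$); (3)~every $(x,y)\in\ba\^m$ has $|\hat g(x)-y|\le\tau\^j-1$ and $\|\soaf{\MG|_{\ba\^m}}-\hat g\|_\infty\le\tau\^j-1$, where $\tau\^j:=j(2+2\chi)+2+\chi$ is the tolerance at outer iteration $j$ (the algorithm has $\tau_0=\taumax/(d+1)=2+2\chi$ and $r_0=\rmax/(d+1)=1$, so $r=d-j$, and $\tau\^d+\chi=\taumax$).

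To extend the chain, suppose \soafilter processing $\ba:=\ba\^m$ at $(j,s)$ does not halt there, i.e. $\|\soaf{\MF|_\ba}-\hat g\|_\infty>\tau\^j$, so it picks a bad coordinate $x_\ba$ in step~\ref{it:choose-xs}; set $y:=\soa{\MG|_\ba}{x_\ba}$. Invariant~(3) gives $|y-\hat g(x_\ba)|\le\tau\^j-1$, so $y$ lies in the range scanned in step~\ref{it:choose-y}, and step~\ref{it:make-tree} builds a reducing tree for $\MF|_\ba$ rooted at $(x_\ba,y)$. Restricting $\MG|_\ba$ at $(x_\ba,y)$ preserves its $\sfat_2$ (by the defining property of the SOA hypothesis) at the cost of one unit of irreducibility; then, using the irreducibility of $\MG|_{\ba\cup\{(x_\ba,y)\}}$, we select a single leaf $v$ of the reducing tree that lies in the ``$\hat g$-close'' part of the tree (every edge label along the root-to-$v$ path is within $\tau\^j-1$ of $\hat g$ at the corresponding parent) and keeps $\MG|_{\ba\cup\ba(v)}$ nonempty with $\sfat_2$ changing by at most one, so that $\ba\^{m+1}:=\ba\cup\ba(v)$ passes tests (a),(b) of step~\ref{it:bav-added} (incrementing $j$ exactly when this $\sfat_2$ drops). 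The irreducibility accounting is calibrated so that across the $\le d$ increments of $j$ and the reducing-tree steps in between (each of depth $\le\sum_t\ell_{r,t}$), the irreducibility of the tracked restriction of $\MG$ falls from $\bar\ell(d+3)^d$ precisely to $\bar\ell$ --- this is the role of the hypothesis that $\MG$ is $(\bar\ell(d+3)^d)$-irreducible. Since $\sfat_2(\MG|_{\ba\^m})$ strictly decreases at each increment of $j$, the chain terminates at some $\ba^\st:=\ba\^{m^\st}$, reached exactly when step~\ref{it:found-close} fires, so $\|\soaf{\MF|_{\ba^\st}}-\hat g\|_\infty\le\tau\^{j^\st}$ with $j^\st:=j(m^\st)\le d$.

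Finally, $\MF|_{\ba^\st}\in\irred{\ell_{r,t}}{d-t}{\MF}$ for $t:=d-\sfat_2(\MF|_{\ba^\st})$ (from invariant~(2) and the reducing-tree leaf property), so the \filterstep call in step~\ref{it:call-filterstep} assigns it a representative $\ML^\st\in\CL_{d-t}$; this $\ML^\st$ is $\bar\ell$-irreducible and, by Lemma~\ref{lem:close-reps}, satisfies $\|\soaf{\ML^\st}-\soaf{\MF|_{\ba^\st}}\|_\infty\le1$. By the definition of $\repl{\cdot}$ (steps~\ref{it:found-l}--\ref{it:set-l}) and the uniqueness guarantee of Lemma~\ref{lem:at-most-one}, $\ML^\st$ is a valid choice for the class sought in step~\ref{it:found-close} when $\ba^\st$ is processed, so $\ML^\st$ is added to $\CR_{\hat g}$. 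We then get $\|\soaf{\ML^\st}-\soaf{\MG}\|_\infty\le1+\tau\^{j^\st}+\chi\le1+\tau\^d+\chi=1+\taumax$, which is the claimed bound and also shows $\ML^\st$ survives step~\ref{it:remove-faraway-filter}; and $\ML^\st$ depends only on $\MG$ because the chain $\ba\^0\subseteq\cdots\subseteq\ba^\st$ was built solely from the SOA hypotheses of restrictions of $\MG$, so for every admissible $\hat g$ the selected restriction sets carry a common SOA pattern on a common constraint set of size $\le\ell_{r,t}-1$, and Lemma~\ref{lem:at-most-one} then pins down the same $\ML^\st\in\CL_{d-t}$ for all of them.

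\textbf{Main obstacle.} The crux is maintaining the bound $\|\soaf{\MG|_{\ba\^m}}-\hat g\|_\infty\le\tau\^j-1$ in invariant~(3): a single restriction can move a SOA hypothesis --- a global object --- by a lot, yet we need it close to $\hat g$ so that $y=\soa{\MG|_\ba}{x_\ba}$ stays in the ranges \soafilter scans and so that condition (b) of step~\ref{it:bav-added} can be met. Within a fixed level $j$, this is handled by Lemma~\ref{lem:soa-stability} (consecutive tracked restrictions are nested and have equal $\sfat_2$, so their SOA hypotheses differ by $\le1$); but across a level change, where $\sfat_2$ of the tracked restriction drops by one, one must exploit the fine structure of reducing trees (Definition~\ref{def:reducing-tree}) --- which forces the drop to occur at a shallow, controlled location --- to show the SOA hypothesis moves only $O(1)$, the additive slack $2+2\chi$ per level in $\tau\^j$ being exactly the budget for this. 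Synchronizing the shadow chain's indices $(j,s)$ with the algorithm's loops and matching the irreducibility accounting (one factor of $\approx d+3$ spent per level over $\le d$ levels) is the remaining, more routine, bookkeeping.
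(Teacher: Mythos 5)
Your proof of the ``moreover'' clause is correct (and slightly more explicit than the paper's one-liner), but the main argument has two genuine gaps. First, your shadow chain $\ba^{(0)}\subseteq\cdots\subseteq\ba^{(m^\st)}$ is allowed to span multiple outer iterations $j$ of \soafilter, with ``$j$ nondecreasing in $m$'' and ``$j$ incrementing exactly when $\sfat_2(\MG|_{\ba^{(m)}})$ drops.'' This is structurally impossible: step \ref{it:cq0} resets $\CQ_{j,0}\gets\{\emptyset\}$ at the start of \emph{every} $j$, so the collections $\CQ_{j,s}$ are built from scratch within each $j$, and the algorithm never carries a constraint set from iteration $j$ into iteration $j+1$. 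A set $\ba^{(m)}\in\CQ_{j,s}$ cannot spawn a strict superset $\ba^{(m+1)}\in\CQ_{j+1,s'}$. The paper's shadow run is instead executed entirely at a single fixed iteration $j=j^\st$, with the inner index $s$ (not $j$) advancing along the chain.

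Second, and more fundamentally, you are missing the mechanism that both selects this $j^\st$ and makes $\ML^\st$ depend only on $\MG$: a pigeonhole argument (Lemma \ref{lem:pigeonhole}) applied to $\mu(r,\tau):=\max\{\sfat_2(\MH):\MH\subset\MF\text{ is }\ell_{r,t}\text{-irreducible, finite restriction, }\|\soaf{\MH}-\soaf{\MG}\|_\infty\le\tau\}$, which produces a \emph{stable level} $(r^\st,\tau^\st)$ with $\mu(r^\st,\tau^\st)=\mu(r^\st-1,\tau^\st+2+2\chi)$. This picks a maximizer $\MH^\st$ and hence $\ML^\st:=\repl{\MH^\st}$ \emph{before any chain is built}, purely from $\MG$ and $\MF$. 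The chain is then constructed by following $\soaf{\ML^\st}$ (setting $y=\soa{\ML^\st}{x_{\hat\ba_s}}$ at each step), so that the terminal set $\hat\ba_{\hat s}$ satisfies $\soa{\ML^\st}{x}=y$ for all $(x,y)\in\hat\ba_{\hat s}$, and Lemma \ref{lem:at-most-one} then uniquely identifies $\ML^\st$ among the filtered classes. Your chain instead follows $\soaf{\MG|_\ba}$; since $x_\ba$ (step \ref{it:choose-xs}) depends on $\hat g$, the terminal set $\ba^\st$ is $\hat g$-dependent, and your closing appeal to Lemma \ref{lem:at-most-one} only gives uniqueness \emph{for that particular $\ba^\st$} --- it does not show different admissible $\hat g$'s lead to the same representative. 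Your ``main obstacle'' paragraph correctly flags that SOA hypotheses can jump when $\sfat_2$ drops (Lemma \ref{lem:soa-stability} requires equal $\sfat_2$), but the paper does not control this jump via reducing-tree structure as you conjecture; it sidesteps the issue entirely by tracking the \emph{fixed} $\soaf{\ML^\st}$, with the stability condition $\mu(r^\st,\tau^\st)=\mu(r^\st-1,\tau^\st+2+2\chi)$ supplying the matching upper and lower bounds on $\sfat_2(\MF|_{\hat\ba_{\hat s}})$ that force termination at dimension $\sfat_2(\ML^\st)$.
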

We provide a brief sketch of the proof of Lemma \ref{lem:soafilter-lstar}; the full proof is given in the appendix. The final statement of the lemma follows from step \ref{it:remove-faraway-filter} of \soafilter. To prove the remainder of the lemma, for $0 \leq \tau \leq \taumax$ and $2 \leq r \leq \rmax$, define
$ 
\mu(r,\tau) := \max_{(\MH, \ell) \in \CG_{r,\tau}} \left\{ \sfat_2(\MH) \right\},
$ 
where
\begin{equation}
  \CG_{r,\tau} := \left\{ (\MH, \ell_{r,t}) : \substack{\text{$\MH \subset \MF$ is $\ell_{r,t}$-irreducible and a finite restriction subclass of $\MF$, } \\ \text{ where $t = d - \sfat_2(\MH)$, %
      and $\| \soaf{\MH} - \soaf{\MG} \|_\infty \leq \tau$.}} \right\}.\nonumber
  \end{equation}
  Since $\MG$ is $\ell_{\rmax,d}$-irreducible, and for all $t,r$ we have $\ell_{r,t} \leq \ell_{\rmax,d}$, we have that $(\MG, \ell_{r,t}) \in \CG_{r,\tau}$ for $t = d - \sfat_2(\MG)$ and all $0 \leq r \leq \rmax, 0 \leq \tau \leq \taumax$, i.e., $\CG_{r,\tau}$ is nonempty and so $\mu(r,\tau)$ is well-defined. %
  It is straightforward to show, using that $\mu$ is non-decreasing in $\tau$ and non-increasing in $r$, that we can find some $r^\st, \tau^\st$ so that $\mu(r^\st, \tau^\st) = \mu(r^\st - 1, \tau^\st + 2 + 2\chi)$. Informally, this property of $r^\st, \tau^\st$ provides a source of ``stability'' which may be exploited to find some $\ML^\st$ and show that it satisfies the claimed properties in Lemma \ref{lem:soafilter-lstar}. 

  We next explain how $\ML^\st$ is defined: choose some $(\MH^\st, \ell^\st)$ which achieves the maximum in (\ref{eq:mrtau}) for $r=r^\st, \ \tau = \tau^\st$; letting $t^\st = d - \sfat_2(\MH^\st)$, we have $\ell^\st = \ell_{r^\st, t^\st}$. Let $\repl{\cdot}$ be the mapping defined as the output of \filterstep with the input class $\MF$, the sequence $(\ell_{r,t})_{0\leq r \leq r_{\max}, 0 \leq t \leq d}$, and $r_{\max} = d+1$ (these are the parameters used in Step \ref{it:call-filterstep} of \soafilter). Now set $\ML^\st = \repl{\MH^\st} \in \CL_{d-t^\st}\cap \irred{\ell_{r^\st, t^\st}}{d-t^\st}{\MF}$; this is well-defined since $\MH^\st \in \irred{\ell_{r^\st, t^\st}}{d - t^\st}{\MF}$. It can be shown that $\ML^\st$ satisfies the claimed properties of Lemma \ref{lem:soafilter-lstar}; full details are given in the appendix.

Finally, in Lemma \ref{lem:rep-size} we show that $|\CR_{\hat g}| \leq \K^{\bar \ell \cdot (d+4)^d}$ for the parameter settings in Lemma \ref{lem:soafilter-lstar}.

\section{Putting it all together with \RegLearn: on the proof of Theorem \ref{thm:reglearn-informal}}
\label{sec:combine-informal}
Theorem \ref{thm:reglearn-informal} may be obtained as a reasonably straightforward consequence of the results presented in the previous sections; the full algorithm (\RegLearn; Algorithm \ref{alg:reglearn}) is presented in the appendix. For positive integers $n_0, m$, we will draw $n := n_0 m$ samples $(x,y)$ from some distribution $P$ on $\MX \times [\K]$, and partition them into $m$ groups of $n_0$ samples. For $1 \leq j \leq m$, the $j$th group of $n_0$ samples will be fed to the algorithm \ReduceTreeReg, which outputs some $\{\hat g_1\^j, \ldots, \hat g_{M_j}\^j \}$ of candidate hypotheses, satisfying the weak stability guarrantee of Lemma \ref{lem:weak-stability-informal}. Then each of $\hat g_1\^j, \ldots, \hat g_{M_j}\^j$ will be fed to \soafilter, which produces an output set $\CR_{\hat g_i\^j}$ for each $1 \leq i \leq M_j$, consisting of hypotheses all of which have low population error. The combination of Lemma \ref{lem:weak-stability-informal} and the strong stability property of Lemma \ref{lem:soafilter-lstar} gives that there is some hypothesis $h^\st : \MX \ra [\K]$, depending only on $\MF, P$, so that with probability $1/O(d)$ over the $n_0$ samples, $h^\st \in \hat \CR\^j := \bigcup_{i=1}^{M_j} \CR_{\hat g_i\^j}$. We will also be able to bound $|\CR\^j|$ by $\K^{2^{\tilde O(d)}}$. Then we will apply Proposition \ref{prop:sparse-selection} with $m$ users whose sets are $\hat \CR\^1, \ldots, \hat \CR\^m$. 
By choosing the number of groups $m$ to be large enough, we may ensure that some $h^\st$ occurs in a number of groups greater than the additive error in Proposition \ref{prop:sparse-selection}, which ensures that the private sparse selection algorithm outputs some such $h^\st$ with high probability. Full details of the proof are presented in Appendix \ref{sec:reglearn}.
\section{Conclusion and future work}
\label{sec:conclusion}
In this paper we showed that the condition $\liminf_{\eta \downarrow 0} \eta \cdot \sfat_\eta(\MH) = 0$ is sufficient for the class $\MH \subset [-1,1]^\MX$ to be privately learnable. A natural question is whether this growth condition can be relaxed; it seems that new techniques will be required even to prove that all classes $\MH$ with $\eta \cdot \sfat_\eta(\MH) \leq 1$ for all $\eta > 0$ are privately learnable, if this is even true (such classes are all online learnable since $\sfat_\eta(\MH)$ is {necessarily }finite). An example of a natural hypothesis class for which our growth condition is not satisfied is infinite-dimensional $\ell_2$ regression: in particular, set $\MX = \ell_2^\infty = \{ (x_1, x_2, \ldots ) : \ x_i \in \BR, \sum_{i=1}^\infty x_i^2 \leq 1\}$ and $\MH = \ell_2^\infty = \{ (w_1, w_2, \ldots ) : \ w_i \in \BR, \sum_{i=1}^\infty w_i^2 \leq 1 \}$, and then for $h = (w_1, w_2, \ldots)$ and $x = (x_1, x_2, \ldots)$, define $h(x) := \lng w, x \rng = \sum_{i=1}^\infty w_i x_i$. It can be shown that $\sfat_\eta(\MH) \asymp 1/\eta^2 \gg 1/\eta$ as $\eta \ra 0$. 

Another interesting question is whether the sample complexity bound of Theorem \ref{thm:reglearn-informal} can be improved to one that is polynomial in $\sfat_\eta(\MH)$; for the setting of binary classification, it \emph{is} possible to obtain sample complexity bounds polynomial in the appropriate complexity parameter for online learnability, namely the Littlestone dimension \citep{ghazi_sample_2020}.

\colt{\acks{ I am grateful to Sasha Rakhlin and Roi Livni for helpful suggestions.}}
\arxiv{\section*{Acknowledgements}
  I am grateful to Sasha Rakhlin and Roi Livni for helpful suggestions.}

\arxiv{\bibliographystyle{alpha}}
\bibliography{privacy.bib}

\appendix

\section{Additional preliminaries}
In this section we introduce some additional preliminaries which will be useful in our proofs.
\subsection{Fat-shattering dimension and uniform convergence}
In this section we overview some uniform convergence properties of real-valued classes and their discretizations. For a class $\MH \subset \BR^\MX$ and $\eta > 0$, the \dfn{$\eta$-fat shattering dimension} of $\MH$ is defined as the largest positive integer $d$ so that there are $d$ points $x_1, \ldots, x_d \in \MX$ and real numbers $s_1, \ldots, s_d \in \BR$ so that for each $b = (b_1, \ldots, b_d) \in \{0,1\}^d$, there is a function $h \in \MH$ so that, for $1 \leq i \leq d$, $h(x_i) \geq s_i + \eta$ if $b_i = 1$, and $h(x_i) \leq s_i - \eta$ if $b_i = 0$.

We will use the following result showing that finiteness of the fat-shattering dimension of $\MH \subset [-1,1]^\MX$ implies that it exhibits uniform convergence.
\begin{theorem}[Uniform convergence; \cite{mendelson_entropy_2003}]
  \label{thm:fat-uc}
  There are constants $C_0 \geq 1$ and $0 < c_0 \leq 1$ so that the following holds. For any $\MH \subset [-1,1]^\MX$, any distribution $Q$ on $\MX \times [-1,1]$, and any $\gamma \in (0,1)$, it holds that
  \begin{align}
    \label{eq:dudley-uc}
\Pr_{S_n \sim Q^n} \left[ \sup_{h \in \MH} \left| \err{Q}{h} - \err{\hat Q_{S_n}}{h} \right| > C_0 \cdot \left( \inf_{\eta \geq 0} \left\{ \eta + \frac{1}{\sqrt{n}} \int_\eta^1 \sqrt{\fat_{c_0 \eta'}(\MH) \log(1/\eta')}\ d\eta' \right\}+ \sqrt{\frac{\log(1/\gamma)}{n}} \right) \right] \leq \gamma.
  \end{align}
\end{theorem}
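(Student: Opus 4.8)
The plan is to obtain Theorem~\ref{thm:fat-uc} from the combinatorial covering-number bound of \cite{mendelson_entropy_2003} through the standard symmetrization, contraction, and chaining pipeline, followed by a concentration step to pass from expectation to high probability. First I would reduce to a bound in expectation: for $h \in \MH$ and $(x,y) \in \MX \times [-1,1]$ one has $|h(x)-y| \in [0,2]$, so modifying a single coordinate of $S_n$ changes $\Psi(S_n) := \sup_{h \in \MH}|\err{Q}{h} - \err{\hat Q_{S_n}}{h}|$ by at most $2/n$. McDiarmid's bounded-differences inequality then gives that with probability at least $1-\gamma$, $\Psi(S_n) \le \E_{S_n \sim Q^n}\Psi(S_n) + \sqrt{2\log(1/\gamma)/n}$, which (up to the constant $C_0$) supplies the $\sqrt{\log(1/\gamma)/n}$ term in \eqref{eq:dudley-uc}. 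It therefore remains to bound $\E_{S_n}\Psi(S_n)$ by $C_0 \inf_{\eta \geq 0}\{\eta + \frac1{\sqrt n}\int_\eta^1 \sqrt{\fat_{c_0\eta'}(\MH)\log(1/\eta')}\, d\eta'\}$.

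Next I would apply the classical symmetrization inequality to bound $\E_{S_n}\Psi(S_n)$ by twice the expected Rademacher complexity of the loss class $\Phi := \{(x,y)\mapsto |h(x)-y| : h \in \MH\}$, i.e.\ by $\frac2n\,\E\,\E_\sigma \sup_{h \in \MH}\sum_{i=1}^n \sigma_i |h(x_i)-y_i|$ with $\sigma_i$ i.i.d.\ Rademacher signs. For each fixed $y_i$ the function $u \mapsto |u-y_i| - |y_i|$ is $1$-Lipschitz and vanishes at $u = 0$, while the remaining offset $|y_i|$ is independent of $h$ and hence contributes nothing after $\E_\sigma$; so Talagrand's contraction inequality reduces the task to bounding the empirical Rademacher complexity $\frac1n\E_\sigma \sup_{h\in\MH}\sum_{i=1}^n \sigma_i h(x_i)$ of $\MH$ itself, uniformly over $x_{1:n} \in \MX^n$.

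Now I would run Dudley's entropy chaining: for every $\eta \geq 0$,
\[
\frac1n\E_\sigma \sup_{h \in \MH}\sum_{i=1}^n \sigma_i h(x_i) \ \leq\ 4\eta + \frac{12}{\sqrt n}\int_\eta^1 \sqrt{\log \mathcal{N}_2\big(\MH;\, x_{1:n};\, \epsilon\big)}\; d\epsilon ,
\]
where $\mathcal{N}_2(\MH; x_{1:n}; \epsilon)$ is the $\epsilon$-covering number of $\{(h(x_1),\dots,h(x_n)) : h \in \MH\}$ in the empirical $\ell_2$ metric (the class lies in $[-1,1]^n$, so the integrand is supported on $(0,1]$). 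The essential, non-routine input — and the step I expect to be the main obstacle — is the scale-sensitive combinatorial estimate of \cite{mendelson_entropy_2003}: there are absolute constants $K \geq 1$ and $0 < c \leq 1$ with $\log \mathcal{N}_2(\MH; x_{1:n}; \epsilon) \leq K\cdot \fat_{c\epsilon}(\MH)\cdot \log(2/\epsilon)$ for all $\epsilon \in (0,1)$ and all $x_{1:n}$. Establishing this from scratch is the heart of the matter, since it is where the shattering-tree combinatorics genuinely enters; I would simply cite it.

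Finally I would assemble the pieces: substituting the covering-number estimate into the chaining bound gives, for every $\eta$, a bound $4\eta + \frac{12\sqrt K}{\sqrt n}\int_\eta^1 \sqrt{\fat_{c\epsilon}(\MH)\log(2/\epsilon)}\, d\epsilon$ on the empirical Rademacher complexity of $\MH$. The substitution $u = \epsilon/2$ turns $\log(2/\epsilon)$ into $\log(1/u)$ at the cost of doubling the fat-shattering scale and halving the limits (the upper limit can then be extended back to $1$ since $\log(1/u) \geq 0$ on $(0,1)$); setting $c_0 := 2c$, relabeling the free parameter $\eta$, and folding the factors of $2$ from symmetrization and contraction and all remaining numerical constants into $C_0$, one bounds $\E_{S_n}\Psi(S_n)$ by $C_0 \inf_{\eta \geq 0}\{\eta + \frac1{\sqrt n}\int_\eta^1 \sqrt{\fat_{c_0\eta'}(\MH)\log(1/\eta')}\, d\eta'\}$. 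Combining with the McDiarmid step of the first paragraph yields \eqref{eq:dudley-uc}.
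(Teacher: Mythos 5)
Your proposal is correct and follows essentially the same route the paper takes: the paper does not prove the theorem from scratch but notes that it follows from Rakhlin--Sridharan's Corollary~12.8 (itself a Dudley-chaining consequence of Mendelson's entropy bound) together with symmetrization and McDiarmid's inequality as in Bartlett--Mendelson's Theorem~8, which is exactly the McDiarmid $\to$ symmetrization $\to$ contraction $\to$ chaining $\to$ Mendelson pipeline you laid out. The only cosmetic point is that after the substitution $u=\epsilon/2$ one should set $c_0 := \min\{2c,1\}$ rather than $c_0 := 2c$ (using monotonicity of $\fat_\eta$ in $\eta$) to guarantee $c_0\le 1$, but this is absorbed into the constant bookkeeping.
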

The specific form of Theorem \ref{thm:fat-uc} may be derived from \cite[Corollary 12.8]{rakhlin_statistical_2014} (which is a corollary of \cite[Theorem 1]{mendelson_entropy_2003}) by applying the symmetrization lemma together with McDiarmid's inequality (see the proof of Theorem 8 in \cite{bartlett_rademacher_2003}). By upper bounding the integral in (\ref{eq:dudley-uc}) by $\sqrt{\fat_{c_0 \eta}(\MH) \log(1/\eta)}$ for some choice of $\eta \in (0,1)$, we obtain the following consequence, which only depends on the fat-shattering dimension of $\MH$ at a single scale $c_0\eta$, yet may be weaker than Theorem \ref{thm:fat-uc}.
\begin{corollary}[Uniform convergence, simplified]
  \label{cor:fat-uc}
  There are constants $C_0 \geq 1$ and $0 < c_0 \leq 1$ so that the following holds. For any $\MH \subset [-1,1]^\MX$, any distribution $Q$ on $\MX \times [-1,1]$, and any $\gamma \in (0,1/2), \eta \in (0,1/2)$, it holds that, for any $$n \geq C_0 \cdot \frac{ \fat_{c_0 \eta}(\MH) \log(1/\eta) + \log(1/\gamma)}{\eta^2},$$
  we have
  \begin{align}
    \label{eq:fat-uc}
\Pr_{S_n \sim Q^n} \left[ \sup_{h \in \MH} \left| \err{Q}{h} - \err{\hat Q_{S_n}}{h} \right| > \eta \right] \leq \gamma.
  \end{align}
\end{corollary}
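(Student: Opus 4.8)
The plan is to derive Corollary~\ref{cor:fat-uc} directly from Theorem~\ref{thm:fat-uc} by estimating the Dudley-type entropy integral at a single scale. Write $C_0', c_0'$ for the constants furnished by Theorem~\ref{thm:fat-uc}, and fix $\MH \subset [-1,1]^\MX$, a distribution $Q$, and $\gamma, \eta \in (0,1/2)$ as in the corollary. In the infimum over scales in (\ref{eq:dudley-uc}) I would plug in the specific value $\eta^\dagger := \eta/(3C_0') \in (0,1)$. Since the map $\eta' \mapsto \fat_{c_0'\eta'}(\MH)$ is non-increasing (a set fat-shattered at a wider margin is also fat-shattered at any smaller margin) and $\eta' \mapsto \log(1/\eta')$ is non-increasing as well, the integrand $\sqrt{\fat_{c_0'\eta'}(\MH)\log(1/\eta')}$ is bounded on $[\eta^\dagger, 1]$ by its value at $\eta^\dagger$; integrating over an interval of length at most one gives $\int_{\eta^\dagger}^1 \sqrt{\fat_{c_0'\eta'}(\MH)\log(1/\eta')}\,d\eta' \le \sqrt{\fat_{c_0'\eta^\dagger}(\MH)\log(1/\eta^\dagger)}$, so the infimum in (\ref{eq:dudley-uc}) is at most $\eta^\dagger + n^{-1/2}\sqrt{\fat_{c_0'\eta^\dagger}(\MH)\log(1/\eta^\dagger)}$.

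Substituting this into Theorem~\ref{thm:fat-uc}, it suffices to check that under the hypothesized lower bound on $n$ the threshold $C_0'\bigl(\eta^\dagger + n^{-1/2}\sqrt{\fat_{c_0'\eta^\dagger}(\MH)\log(1/\eta^\dagger)} + \sqrt{\log(1/\gamma)/n}\bigr)$ is at most $\eta$; once this is known, the event $\{\sup_{h\in\MH}|\err{Q}{h}-\err{\hat{Q}_{S_n}}{h}| > \eta\}$ is contained in the event of (\ref{eq:dudley-uc}), and (\ref{eq:fat-uc}) follows. By the choice of $\eta^\dagger$ the first term equals $\eta/3$; forcing each of the other two terms to be at most $\eta/3$ requires $n \ge 9(C_0')^2\fat_{c_0'\eta^\dagger}(\MH)\log(1/\eta^\dagger)/\eta^2$ and $n \ge 9(C_0')^2\log(1/\gamma)/\eta^2$. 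Finally I would repackage these conditions into the stated form: set $c_0 := c_0'/(3C_0')$, which lies in $(0,1)$ since $C_0'\ge 1$, so that $\fat_{c_0'\eta^\dagger}(\MH)=\fat_{c_0\eta}(\MH)$; and use $\log(1/\eta^\dagger) = \log(1/\eta)+\log(3C_0') \le \bigl(1+\log(3C_0')/\log 2\bigr)\log(1/\eta)$, which holds because $\eta < 1/2$. Absorbing all absolute constants (and a factor to keep the constant at least $1$) into a single $C_0 \ge 1$ yields that $n \ge C_0\cdot(\fat_{c_0\eta}(\MH)\log(1/\eta)+\log(1/\gamma))/\eta^2$ is sufficient.

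There is no genuine obstacle here: the entire content is the one-line monotonicity bound on the entropy integral, and everything else is bookkeeping to merge the several absolute constants into the $C_0$ and $c_0$ promised in the statement. The only mild care needed is to verify $\eta^\dagger \in (0,1)$ and $c_0 \in (0,1]$ so that Theorem~\ref{thm:fat-uc} genuinely applies and the resulting bound has precisely the advertised shape.
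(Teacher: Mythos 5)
Your argument is correct and is the same route the paper takes: bound the entropy integral in Theorem~\ref{thm:fat-uc} at the single scale $\eta^\dagger = \eta/(3C_0')$ using the monotonicity of the integrand, then absorb absolute factors and the $\log(3C_0')$ term into new constants $C_0, c_0$. The paper compresses all of this into a one-line remark preceding the corollary statement, so your writeup just fills in the bookkeeping the paper leaves implicit.
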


\subsection{Uniform convergence for discretized classes}
\label{sec:uc-prelim-disc}
Recall that we defined discretized classes and distributions in Section \ref{sec:pac}. In this section we state (straightforward) consequences of Corollary \ref{cor:fat-uc} for such discretized classes.

For $y,y' \in [-1,1]$, note that
$$
\frac{\lceil 2/\eta \rceil \cdot |y-y'|}{2} - 1 \leq \left| \disc{y}{\eta} - \disc{y'}{\eta} \right| \leq \frac{\lceil 2/\eta \rceil \cdot |y-y'|}{2} + 1,
$$
Therefore, for $\MH \subset [-1,1]^\MX$, a distribution $Q$ on $\MX \times [-1,1]$, and any $h \in \MH$, we have that
\begin{equation}
  \label{eq:disc-cont-rel}
\frac{\lceil 2/\eta \rceil \cdot \err{Q}{h}}{2} - 1 \leq  \err{\disc{Q}{\eta}}{\disc{h}{\eta}}  \leq \frac{\lceil 2/\eta \rceil \cdot \err{Q}{h}}{2} + 1 .
\end{equation}
Using (\ref{eq:disc-cont-rel}), we have the following corollary of Corollary \ref{cor:fat-uc} showing a uniform convergence result for the discretized class corresponding to a class of finite fat-shattering dimension.
\begin{corollary}
  \label{cor:fat-uc-disc}
  There are constants $C_0 \geq 1$ and $0 < c_0 \leq 1$ so that the following holds. For any $\MH \subset [-1,1]^\MX$, any distribution $Q$ on $\MX \times [-1,1]$, and any $\gamma \in (0,1/2), \alpha \in (0,1/2)$, it holds that, for any
  \begin{equation}
    \label{eq:fat-uc-disc-n-lb}
    n \geq C_0 \cdot \frac{ \fat_{c_0 \alpha}(\MH) \log(1/\alpha) + \log(1/\gamma)}{\alpha^2},
    \end{equation}
  we have
  \begin{align}
    \label{eq:fat-uc-disc}
\Pr_{S_n \sim Q^n} \left[ \sup_{h \in \MH} \left| \err{\disc{Q}{\alpha}}{\disc{h}{\alpha}} - \err{\disc{\hat Q_{S_n}}{\alpha}}{\disc{h}{\alpha}} \right| > 3 \right] \leq \gamma.
  \end{align}
\end{corollary}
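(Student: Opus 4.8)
The plan is to deduce Corollary~\ref{cor:fat-uc-disc} from the continuous uniform-convergence bound of Corollary~\ref{cor:fat-uc} by passing back and forth through the discretization comparison~(\ref{eq:disc-cont-rel}). The key observation is that~(\ref{eq:disc-cont-rel}) holds for \emph{any} distribution on $\MX \times [-1,1]$, and in particular for the empirical measure $\hat Q_{S_n}$; so both $\err{\disc{Q}{\alpha}}{\disc{h}{\alpha}}$ and $\err{\disc{\hat Q_{S_n}}{\alpha}}{\disc{h}{\alpha}}$ differ from $\frac{\lceil 2/\alpha\rceil}{2}$ times the corresponding continuous errors ($\err{Q}{h}$ and $\err{\hat Q_{S_n}}{h}$, respectively) by at most $1$.

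First I would introduce the auxiliary scale $\eta' := \frac{2}{\lceil 2/\alpha \rceil}$ and record the elementary facts $\frac{\alpha}{2} \le \eta' \le \alpha$ (from $\frac{2}{\alpha} \le \lceil 2/\alpha \rceil \le \frac{2}{\alpha} + 1 \le \frac{4}{\alpha}$, valid since $\alpha < 1/2$) and $\log(1/\eta') \le \log(2/\alpha) \le 2\log(1/\alpha)$. Applying~(\ref{eq:disc-cont-rel}) at scale $\alpha$ once to $Q$ and once to $\hat Q_{S_n}$ and subtracting, the triangle inequality gives, for every $h \in \MH$,
\[
\left| \err{\disc{Q}{\alpha}}{\disc{h}{\alpha}} - \err{\disc{\hat Q_{S_n}}{\alpha}}{\disc{h}{\alpha}} \right| \;\le\; 2 + \frac{\lceil 2/\alpha \rceil}{2}\,\bigl| \err{Q}{h} - \err{\hat Q_{S_n}}{h} \bigr|.
\]
Since $\frac{\lceil 2/\alpha\rceil}{2}\cdot \eta' = 1$ by the definition of $\eta'$, the right-hand side is at most $3$ on the event $\bigl\{ \sup_{h \in \MH} |\err{Q}{h} - \err{\hat Q_{S_n}}{h}| \le \eta' \bigr\}$.

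It remains to lower-bound the probability of that event via Corollary~\ref{cor:fat-uc} applied at the scale $\eta' \in (0,1/2)$, and to check that~(\ref{eq:fat-uc-disc-n-lb}) supplies enough samples. Corollary~\ref{cor:fat-uc}, with its constants $C_0', c_0'$, requires $n \ge C_0' \cdot \frac{\fat_{c_0' \eta'}(\MH)\log(1/\eta') + \log(1/\gamma)}{(\eta')^2}$. Taking $c_0 := c_0'/2$ gives $c_0' \eta' \ge c_0' \cdot \frac{\alpha}{2} = c_0 \alpha$, so monotonicity of the fat-shattering dimension in the scale yields $\fat_{c_0' \eta'}(\MH) \le \fat_{c_0\alpha}(\MH)$; combined with $(\eta')^2 \ge \alpha^2/4$ and $\log(1/\eta') \le 2\log(1/\alpha)$, this bounds the quantity above by $8 \cdot \frac{\fat_{c_0 \alpha}(\MH)\log(1/\alpha) + \log(1/\gamma)}{\alpha^2}$, so the choice $C_0 := 8C_0'$ makes~(\ref{eq:fat-uc-disc-n-lb}) sufficient. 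Hence, with probability at least $1 - \gamma$ the event above holds and the displayed inequality is at most $3$ for all $h \in \MH$ simultaneously, which is exactly~(\ref{eq:fat-uc-disc}).

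This is a routine transfer-of-concentration argument, and I do not expect a genuine obstacle; the only point requiring mild care is the bookkeeping between the two scales $\alpha$ and $\eta' = 2/\lceil 2/\alpha\rceil$ — specifically, ensuring that after invoking Corollary~\ref{cor:fat-uc} at the slightly smaller scale $\eta'$ the resulting fat-shattering term is still controlled by $\fat_{c_0\alpha}(\MH)$, which is arranged simply by taking $c_0$ a constant factor smaller than the constant supplied by Corollary~\ref{cor:fat-uc}.
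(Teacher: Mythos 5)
Your proof is correct and follows essentially the same route as the paper: apply the discretization comparison (\ref{eq:disc-cont-rel}) to both $Q$ and $\hat Q_{S_n}$, use the triangle inequality to reduce to the continuous uniform-deviation quantity, and invoke Corollary \ref{cor:fat-uc} at the scale $2/\lceil 2/\alpha\rceil = \Theta(\alpha)$. The only (cosmetic) difference is that you work directly with the absolute deviation and exploit that Corollary \ref{cor:fat-uc} is already two-sided, whereas the paper bounds the two one-sided deviations separately with a union bound at $\gamma/2$ each, which is slightly redundant.
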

\begin{proof}[Proof of Corollary \ref{cor:fat-uc-disc}]
  We first upper bound the probability that $\sup_{h \in \MH} \left\{\err{\disc{Q}{\alpha}}{\disc{h}{\alpha}} - \err{\disc{\hat Q_{S_n}}{\alpha}}{\disc{h}{\alpha}}\right\} > 3$. With probability at least $1-\gamma/2$ over $S_n \sim Q^n$, as long as $C_0$ and $c_0$ in (\ref{eq:fat-uc-disc-n-lb}) are sufficiently large and small, respectively, %
  \begin{align}
    & \sup_{h \in \MH}\left\{ \err{\disc{Q}{\alpha}}{\disc{h}{\alpha}} - \err{\disc{\hat Q_{S_n}}{\alpha}}{\disc{h}{\alpha}} \right\} \nonumber\\
    & \leq \sup_{h \in \MH} \left( \frac{\lceil 2/\alpha \rceil \cdot \err{Q}{h}}{2} + 1 \right) - \left(\frac{\lceil 2/\alpha \rceil \cdot \err{\hat Q_{S_n}}{h}}{2} - 1 \right) \label{eq:use-cont-disc}\\
    & = \frac{\lceil 2/\alpha\rceil}{2} \cdot \sup_{h \in \MH}( \err{Q}{h} - \err{\hat Q_{S_n}}{h} ) + 2 \nonumber\\
    & \leq \frac{\lceil 2/\alpha \rceil}{2} \cdot 2/\lceil 2/\alpha \rceil + 2 \label{eq:use-unif-conv}\\
    & = 3,
  \end{align}
  where (\ref{eq:use-cont-disc}) follows from (\ref{eq:disc-cont-rel}), and (\ref{eq:use-unif-conv}) follows from Corollary \ref{cor:fat-uc} with $\eta = 2/\lceil 2 / \alpha \rceil = \Theta(\alpha)$ (and holds with probability at least $1-\gamma/2$ over $S_n \sim Q^n$). The fact that $\sup_{h \in \MH} \left\{ - \left(\err{\disc{Q}{\alpha}}{\disc{h}{\alpha}} - \err{\disc{\hat Q_{S_n}}{\alpha}}{\disc{h}{\alpha}}\right) \right\} > 3$ with probability at least $1-\gamma/2$ is established similarly.
\end{proof}

The following result, also a consequence of Corollary \ref{cor:fat-uc}, is similar to Corollary \ref{cor:fat-uc-disc}, but it states the sample complexity bound in terms of the quantity $\sfat_2(\MF)$ of a discretized class $\MF$, at the expense of having a larger constant in (\ref{eq:unif-conv-2fat}) (not explicitly computed here; compare to (\ref{eq:fat-uc-disc})). Strictly speaking, Corollary \ref{cor:fat-uc-disc} is not necessary for our purposes, but we use it to improve certain constants in our bounds. %
\begin{corollary}
  \label{cor:2fat-uc-disc}
  There are constant $C_0, C_1 \geq 1$ so that the following holds. For any $\K \in \BN$, $\MF \subset [\K]^\MX$, any distribution $P$ on $\MX \times [\K]$, and any $\gamma \in (0, 1/2)$, it holds that,  for any
  $$
n \geq C_0 \K^2 \cdot \left( \fat_2(\MF) \log(\K) + \log(1/\gamma) \right),
$$
we have
\begin{equation}
  \label{eq:unif-conv-2fat}
\Pr_{S_n \sim P^n} \left[ \sup_{f \in \MF} \left| \err{P}{f} - \err{\emp}{f} \right| > C_1 \right] \leq \gamma.
\end{equation}
\end{corollary}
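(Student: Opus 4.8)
The plan is to deduce the corollary from Corollary \ref{cor:fat-uc} by a change of variables that turns the integer-valued class $\MF \subset [\K]^\MX$ into a real-valued class inside $[-1,1]^\MX$. Set $\MH := \{ f/\K : f \in \MF\} \subset [0,1]^\MX \subset [-1,1]^\MX$, and let $Q$ be the pushforward of $P$ under $(x,k) \mapsto (x, k/\K)$, a distribution on $\MX \times [-1,1]$. For $h = f/\K \in \MH$ one checks directly that $\err{Q}{h} = \frac1\K \err{P}{f}$; moreover, if $S_n \sim P^n$ is pushed forward coordinatewise to a sample $S_n' \sim Q^n$, then $\err{\hat Q_{S_n'}}{h} = \frac1\K \err{\emp}{f}$. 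Hence $\sup_{f \in \MF} \left| \err{P}{f} - \err{\emp}{f} \right| = \K \cdot \sup_{h \in \MH} \left| \err{Q}{h} - \err{\hat Q_{S_n'}}{h} \right|$, so it suffices to control the latter supremum at scale $\Theta(1/\K)$.

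Assuming first that $\K$ exceeds an absolute constant (say $\K > 4/c_0$, with $c_0$ the constant from Corollary \ref{cor:fat-uc}), I would apply Corollary \ref{cor:fat-uc} to $\MH, Q$ with scale parameter $\eta := 2/(c_0 \K) \in (0,1/2)$. The one step needing care is the scaling identity for the (non-sequential) fat-shattering dimension: rescaling witness values shows $\fat_{c_0\eta}(\MH) = \fat_{c_0\eta\K}(\MF) = \fat_2(\MF)$ for this $\eta$. Substituting $\eta = 2/(c_0\K)$ into the sample-size hypothesis of Corollary \ref{cor:fat-uc} gives the requirement $n \geq C_0 \cdot \frac{c_0^2 \K^2}{4}\left(\fat_2(\MF)\log(c_0\K/2) + \log(1/\gamma)\right)$, which is implied by $n \geq C_0' \K^2\left(\fat_2(\MF)\log\K + \log(1/\gamma)\right)$ for a suitable absolute constant $C_0'$ (using $0 \le \log(c_0\K/2) \le \log\K$ in this range of $\K$). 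Under this bound Corollary \ref{cor:fat-uc} yields $\Pr_{S_n}\left[\sup_{h \in \MH}|\err{Q}{h} - \err{\hat Q_{S_n'}}{h}| > \eta\right] \le \gamma$, and multiplying through by $\K$ via the identity above gives exactly (\ref{eq:unif-conv-2fat}) with $C_1 := \K\eta = 2/c_0$.

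It remains to handle the residual regime $\K \le 4/c_0$, which I would treat trivially rather than via uniform convergence: there $|f(x) - k| \le \K-1$ holds pointwise for $f(x), k \in [\K]$, hence deterministically $\sup_{f \in \MF}|\err{P}{f} - \err{\emp}{f}| \le \K - 1 < 4/c_0$, so enlarging $C_1$ to $4/c_0$ (still $\ge 1$, as is $C_0$ after enlargement) covers this case with no hypothesis on $n$. I do not expect a real obstacle: the content is essentially a rescaling of Corollary \ref{cor:fat-uc}. The only subtle point is to choose $\eta \asymp 1/\K$ so that both the fat-shattering dimension is evaluated at scale $2$ on $\MF$ --- so that $\fat_2(\MF)$ governs the bound (which, when $\MF$ arises as the $\eta$-discretization of a real-valued hypothesis class, is the point of this corollary, since $\fat_2(\MF)$ is then controlled by the complexity of that original class) --- and $\K\eta$ is an absolute constant; these two requirements are compatible precisely because discretization collapses $\K$ value bins to a single scale. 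The price, relative to Corollary \ref{cor:fat-uc-disc}, is the larger, unspecified constant $C_1$, as already noted in the text preceding the statement.
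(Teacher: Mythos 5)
Your proof is correct and takes essentially the same route as the paper: an affine rescaling of $\MF$ to a class inside $[-1,1]^\MX$ (the paper uses $\tilde f := \tfrac{2}{\K}f - 1$, you use $f/\K$; either works), an application of Corollary \ref{cor:fat-uc} at scale $\eta = \Theta(1/\K)$ so that $\fat_{c_0\eta}$ of the rescaled class equals $\fat_2(\MF)$, and then undoing the rescaling to obtain the stated constant $C_1$. The only substantive difference is that you explicitly dispose of the residual regime $\K \le 4/c_0$, where the constraint $\eta \in (0,1/2)$ from Corollary \ref{cor:fat-uc} would fail; the paper's proof implicitly needs $\K \gtrsim 1/c_0$ for the same reason but does not comment on it, so your treatment is a little more careful on that point.
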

\begin{proof}
  Define the class $\tilde \MF \subset [-1,1]^\MX$ as follows: for each $f \in \MF$, there is a function $\tilde f \in \tilde \MF$, defined as $\tilde f(x) := \frac{2}{\K} f(x) - 1$. Note that $\fat_2(\MF) = \fat_{1/\K}(\tilde \MF)$. Let $c_0, C_0$ be the constants of Corollary \ref{cor:fat-uc}. Using Corollary \ref{cor:fat-uc} with $\eta = \frac{1}{c_0 \K}$, we have that for $n \geq \frac{\K^2C_0}{c_0^2} \cdot \left( \fat_2(\MF) \log(C_0 \K) + \log(1/\gamma) \right)$, it holds that for any distribution $Q$ on $\MX \times [-1,1]$, $$\Pr_{S_n \sim Q^n} \left[ \sup_{\tilde f \in \tilde \MF} \left| \err{Q}{\tilde f} - \err{\hat Q_{S_n}}{\tilde f} \right| > \frac{1}{c_0 \K} \right] \leq \gamma.$$
  The claimed statement (\ref{eq:unif-conv-2fat}) follows by setting $C_1 = 1/(2c_0)$ and increasing $C_0$ by a sufficiently large amount. 
\end{proof}

We may upper bound the fat-shattering dimension and the sequential fat-shattering dimension of $\disc{\MH}{\eta}$ in terms of the corresponding quantities for $\MH$:
\begin{lemma}
  \label{lem:fat-disc}
Suppose $\MH \subset [-1,1]^\MX$, and $\eta > 0$. Then it holds that $\fat_2(\disc{\MH}{\eta}) \leq \fat_\eta(\MH)$, and $\sfat_2(\disc{\MH}{\eta}) \leq \sfat_\eta(\MH)$.
\end{lemma}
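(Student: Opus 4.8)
The plan is to prove the two inequalities by essentially the same argument: take a witness to shattering for the discretized class $\disc{\MH}{\eta}$ (at scale $2$) and use it to manufacture a witness to shattering for $\MH$ itself (at scale $\eta$). For the non-sequential statement $\fat_2(\disc{\MH}{\eta}) \leq \fat_\eta(\MH)$, suppose points $x_1, \ldots, x_d \in \MX$ are $2$-fat shattered by $\disc{\MH}{\eta}$ with integer witnesses $r_1, \ldots, r_d \in \BR$. Thus for every $b \in \{0,1\}^d$ there is $h \in \MH$ with $\disc{h(x_i)}{\eta} \geq r_i + 2$ when $b_i = 1$ and $\disc{h(x_i)}{\eta} \leq r_i - 2$ when $b_i = 0$. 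First I would translate the discrete inequality back to a real inequality on $h(x_i)$: recall $\disc{y}{\eta} = 1 + \lfloor \tfrac{(y+1)}{2}\lceil 2/\eta\rceil\rfloor$ (for $y < 1$), so each unit of $\disc{\cdot}{\eta}$ corresponds to a real-width interval of length $2/\lceil 2/\eta\rceil \leq \eta$; conversely a gap of $4$ units in $\disc{\cdot}{\eta}$-value between two reals forces a gap of at least $2\cdot\bigl(2/\lceil 2/\eta\rceil - (\text{rounding slack})\bigr)$ — more carefully, if $\disc{y}{\eta} \geq \disc{y'}{\eta} + 2$ then $y \geq y' + 2/\lceil 2/\eta\rceil > y' + 0$, but to get a clean $\eta$-margin on each side we use the full gap of $2$ between $r_i$ and each of $r_i \pm 2$. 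Define the real witness $s_i$ to be (say) the left endpoint of the $\disc{\cdot}{\eta}$-cell indexed by $r_i$, rescaled back to $[-1,1]$; then the condition $\disc{h(x_i)}{\eta} \geq r_i + 2$ places $h(x_i)$ at least one full cell (width $\geq 2/\lceil2/\eta\rceil$) above $s_i$, and $2/\lceil 2/\eta\rceil$ may be smaller than $\eta$, so I need the slack of $2$ cells, giving margin $\geq 2\cdot 2/\lceil 2/\eta\rceil$...

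Here is the subtlety and the one point I expect to need care: since $2/\lceil 2/\eta \rceil \le \eta$ but could be strictly less, a single cell of width is not enough to guarantee an $\eta$-margin. The scale-$2$ shattering of $\disc{\MH}{\eta}$ gives us a buffer: $\disc{h(x_i)}{\eta} \geq r_i + 2$ versus $\disc{h(x_i)}{\eta} \leq r_i - 2$ means the two cases are separated by $4$ cells of $\disc{\cdot}{\eta}$-value. Choosing $s_i$ to correspond to the real value at the \emph{midpoint} between cell $r_i$'s lower edge and cell $r_i$'s upper edge (i.e. the center of cell $r_i$), the ``$b_i=1$'' reals satisfy $h(x_i) \geq s_i + \tfrac{3}{2}\cdot\tfrac{2}{\lceil 2/\eta\rceil}$ and the ``$b_i = 0$'' reals satisfy $h(x_i) \leq s_i - \tfrac{3}{2}\cdot\tfrac{2}{\lceil 2/\eta\rceil}$; and since $\tfrac{2}{\lceil 2/\eta\rceil} \geq \tfrac{2}{2/\eta + 1} = \tfrac{2\eta}{2+\eta} \geq \tfrac{\eta}{2}$ for $\eta \le 1$, we get margin $\geq \tfrac{3}{2}\cdot\tfrac{\eta}{2} = \tfrac{3\eta}{4}$ — which is not quite $\eta$. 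To fix this cleanly, I would instead observe that scale-$2$ discrete shattering gives a separation of $4$ cells, hence a real separation of at least $4/\lceil 2/\eta\rceil$ between the two groups of reals, and $4/\lceil 2/\eta\rceil \geq 4 \cdot \tfrac{2\eta}{2+\eta}/2 \cdot \tfrac{1}{2}$... the honest computation is $4/\lceil 2/\eta\rceil \geq 4\eta/(2+\eta) \geq 2\eta$ for $\eta\le 2$, so placing $s_i$ at the midpoint of the two groups yields margin $\ge \eta$ on each side. (I suspect the intended definition of $\sfat$ with its factor of $\alpha/2$ is designed precisely so that scale $2$ corresponds to margin $1$ and this bookkeeping works out; the factor-of-$2$ slack in ``$\disc{\cdot}{\eta}$ at scale $2$'' absorbs the cell-width loss.)

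For the sequential statement $\sfat_2(\disc{\MH}{\eta}) \leq \sfat_\eta(\MH)$, I would run the identical construction tree-wise. Given an $\MX$-valued binary tree $\bx$ of depth $d$ that is $2$-shattered by $\disc{\MH}{\eta}$ with $\BR$-valued witness tree $\bs$, build a new witness tree $\bs'$ on the same underlying tree $\bx$ by applying, at each node, the same ``recenter and rescale back to $[-1,1]$'' map used above to $\bs_t(k_{1:t-1})$. Then for any root-to-leaf path $k_{1:d}$, the function $h\in\MH$ witnessing the discrete shattering condition along that path — i.e. $(3-2k_t)(\disc{h}{\eta}(\bx_t(k_{1:t-1})) - \bs_t(k_{1:t-1})) \geq 1$ — satisfies, by the same cell-width estimate, $(3-2k_t)(h(\bx_t(k_{1:t-1})) - \bs'_t(k_{1:t-1})) \geq \eta/2$, which is exactly the condition for $\bx$ to be $\eta$-shattered by $\MH$ with witness $\bs'$. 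Hence $d \leq \sfat_\eta(\MH)$. The only real work, as flagged, is the elementary interval arithmetic relating a $\disc{\cdot}{\eta}$-gap of $2$ (resp. $4$) to a real gap comparable to $\eta$; everything else is transcription. I do not expect any obstacle beyond getting the constants in that arithmetic to line up, and the generous factor-of-$2$ in the hypothesis scale makes that routine.
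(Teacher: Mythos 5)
Your approach is the same as the paper's: take a $2$-shattering of $\disc{\MH}{\eta}$, rescale the witness back to $[-1,1]$, and bound the resulting margins by cell-width arithmetic. You are right to flag that arithmetic as the only nontrivial point, and for the non-sequential inequality your bookkeeping eventually lands in the right regime (although the displayed chain $4/\lceil 2/\eta\rceil \ge 4\eta/(2+\eta) \ge 2\eta$ is wrong on its face, since $4/\lceil 2/\eta\rceil \le 4 \cdot \eta/2 = 2\eta$; the true gap is \emph{three} cell widths $= 6/\lceil 2/\eta\rceil$, because a $\disc{\cdot}{\eta}$-gap of $4$ yields a real separation of at least $(4-1)$ cells, and $3/\lceil 2/\eta\rceil \ge \eta$ does hold for $\eta \le 1$).

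The genuine gap is in the sequential case, where you assert the desired margin $\eta/2$ follows ``by the same cell-width estimate.'' It does not, because the paper uses mismatched scale conventions: $\fat_\eta$ demands margin $\eta$ while $\sfat_\alpha$ demands margin $\alpha/2$. Consequently $\sfat_2(\disc{\MH}{\eta})$ gives a discrete margin of $1$, hence an integer cell-gap of only $2$, hence a guaranteed real separation of only \emph{one} cell width $= 2/\lceil 2/\eta\rceil$; the midpoint recentering then yields margin $1/\lceil 2/\eta\rceil$, and $1/\lceil 2/\eta\rceil \ge \eta/2$ holds \emph{only} when $2/\eta$ is an integer (since $\lceil 2/\eta\rceil \ge 2/\eta$ with strict inequality otherwise). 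The inequality $\sfat_2(\disc{\MH}{\eta}) \le \sfat_\eta(\MH)$ as stated is in fact false by a constant factor: e.g.\ with $\eta = 0.3$ (so $\lceil 2/\eta\rceil = 7$, cell width $2/7$), take $\MH$ to consist of two constants $c_1 = -1/7 - \epsilon$, $c_2 = 1/7$ with $\epsilon$ small; then $\disc{c_1}{\eta} = 3$ and $\disc{c_2}{\eta} = 5$, so $\sfat_2(\disc{\MH}{\eta}) = 1$, but $c_2 - c_1 = 2/7 + \epsilon < 0.3 = \eta$, so $\sfat_\eta(\MH) = 0$. Note the paper's own proof shares exactly this defect: it asserts both that the rescaled left endpoint is within $\eta/2$ of $h(x)$ and that $2/\lceil 2/\eta\rceil \ge \eta$, and each of these is the reverse of the truth (the correct statements are $\le 2/\lceil 2/\eta\rceil \le \eta$ and $2/\lceil 2/\eta\rceil \le \eta$). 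What this argument actually establishes is $\sfat_2(\disc{\MH}{\eta}) \le \sfat_{\eta/2}(\MH)$, or one could redefine the discretization to use cells of width exactly $\eta$ rather than $2/\lceil 2/\eta\rceil$; either repair only changes constants downstream. So: same route as the paper, but the sequential claim is carried through on intuition rather than on the actual (tighter) $\sfat$ arithmetic, and the arithmetic does not close.
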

\begin{proof}
  This follows from the fact that for any $\disc{h}{\eta} \in \disc{\MH}{\eta}$ and any $s \in \BR$, if it holds that $|\disc{h}{\eta}(x) - s| \geq 1$, then since $$\left| \left(\frac{2(\disc{h}{\eta}(x) - 1)}{\lceil 2/\eta \rceil} - 1\right) -  h(x) \right| \leq \eta/2$$
  for all $x \in \MX$, and
  $$
\left| \left(\frac{2(\disc{h}{\eta}(x) - 1)}{\lceil 2/\eta \rceil} - 1\right)  - \left(\frac{2(s - 1)}{\lceil 2/\eta \rceil} - 1\right)\right| \geq \frac{2}{\lceil 2/\eta \rceil} \geq \eta,
$$
we must have that
$$
\left| \left(\frac{2(s - 1)}{\lceil 2/\eta \rceil} - 1\right)  - h(x) \right| \geq \eta/2.
$$
Thus, if we have a 2-shattered tree (or dataset) for the class $\disc{\MH}{\eta}$, we may obtain a corresponding $\eta$-shattered one for the class $\MH$ by replacing each witness $s$ with $\frac{2(s-1)}{\lceil 2/\eta \rceil} - 1$. 
\end{proof}

\subsection{Attaching a tree via a node}
The following definition will be useful when arguing about trees in the context of irrecucibility:
\begin{definition}[Attaching a tree via a node]
  \label{def:attach}
  Suppose that $\bx, \bx'$ are $\K$-ary $\MX$-valued trees of depths $d$ and $d' \geq 1$, respectively, and that $v$ is a leaf of $\bx$, corresponding to some tuple $(\bar k_1, \ldots, \bar k_{t_0}) \in [\K]^t$ (in particular, the depth of $v$ is $t_0$). We say that the tree $\bx''$ is obtained by \dfn{attaching the tree $\bx'$ to $\bx$ via the leaf $v$}, where $\bx''$ is the depth-$(d' + t_0)$ tree defined as follows: for all $1 \leq t \leq d' + t_0$, and $k_1, \ldots, k_{t-1} \in [\K]$,
  \begin{align*}
    \bx''_t(k_1, \ldots, k_{t-1}) = \begin{cases}
      \bx_t(k_1, \ldots, k_{t-1}) \quad & : \ t \leq t_0 \text{ or } (k_1, \ldots, k_{t-1}) \neq (\bar k_1, \ldots, \bar k_{t-1})  \\
      \bx'_{t-t_0}(k_{t_0+1}, k_{t_0+2}, \ldots, k_{t-1}) \quad & : \ t > t_0 \text{ and } (k_1, \ldots, k_{t-1}) = (\bar k_1, \ldots, \bar k_{t-1}).
    \end{cases}
  \end{align*}
  (If, in either case above, either $\bx_t(k_1, \ldots, k_{t-1})$ or $\bx'_{t-t_0}(k_{t_0+1}, \ldots, k_{t-1})$ is not defined, then $\bx_t''(k_1, \ldots, k_{t-1})$ is not defined, i.e., $(k_1, \ldots, k_{t-1})$ is not in the domain of $\bx_t''$.)
  
  In words, $\bx''$ is obtained as follows: the node $v$ is given the label of $\bx_1'$, and the sub-tree of $\bx''$ rooted at $v$ is identical to $\bx'$ (and otherwise is identical to $\bx$).
\end{definition}

\subsection{Laplace distribution}
For a positive real number $b > 0$, write $\Lap(b)$ to denote the random variable $X \in \BR$ with probability density function $\Pr[X =  x] = \frac{1}{2b} \exp(-|x|/b)$. A straightforward computation gives that for any $t > 0$, $\Pr[|X| \geq t \cdot b] = \exp(-t)$.

\section{Proofs for Section \ref{sec:irreducibility}: irreducibility}
This section presents basic properties of the notion of irreducibility from Definition \ref{def:irreducibility}. Some of the results are analogous to those in the setting for classification \citep{ghazi_sample_2020}; this is indicated where it is the case.
\subsection{Basic properties of irreducibility}
  \begin{replemma}{lem:consec-k}
  Suppose $\MG \subset [\K]^\MX$ is irreducible. Then there are at most 2 values of $k \in [\K]$ so that $\sfat_2(\MG|_{(x,k)}) = \sfat_2(\MG)$, and if there are 2 values, they differ by 1.
\end{replemma}
\begin{proof}
  Let $d:= \sfat_2(\MG)$, and suppose without loss of generality that $k > k'$. 
Suppose for the purpose of contradiction that for some $k, k' \in [\K]$ with $|k-k'| \geq 2$, we have $\sfat_2(\MG|_{(x,k)}) = \sfat_2(\MG|_{(x,k')}) = \sfat_2(\MG) = d$. Let $\bx, \by$ be complete binary trees of depth $d$ shattered by $\MG|_{(x,k)}, \MG|_{(x,k')}$, respectively, witnessed by trees $\bs, \bt$, respectively. We construct a tree $\bz$ of depth $d+1$ shattered by $\MG$, as follows: for any $k_2, \ldots, k_{d+1} \in \{1,2\}$, set $\bz_{t+1}(1, k_2, \ldots, k_{t}) = \bx_t(k_2, \ldots, k_t)$, $\bz_{t+1}(2, k_2, \ldots, k_t) = \by_t(k_2, \ldots, k_t)$ for $1 \leq t \leq d$ and $\bz_1 = x$. (In words, we are setting $\bx, \by$ to be the left and right subtrees of a node labeled by $x$.) We claim that $\bz$ is 2-shattered by $\MG$: indeed, a witness $\br$ may be defined as follows: define $\br_{t+1}(1, k_2, \ldots, k_t) = \bs_t(k_2, \ldots, k_t)$, $\br_{t+1}(2, k_2, \ldots, k_t) = \bt_t(k_2, \ldots, k_t)$ for $1 \leq t \leq d$, and $\br_1 = \frac{k + k'}{2}$. (In words, $\br$ is the tree rooted by a node labeled by $\frac{k+ k'}{2}$, whose left and right subtrees are given by $\bs, \bt$, respectively.) That $\br$ witnesses the shattering follows from the fact that $\bs, \bt$ are witnesses to the shattering of $\MG|_{(x,k)}, \MG|_{(x,k')}$ by $\bx, \by$, respectively, and the fact that for any $f \in \MG|_{(x,k)}, f' \in \MG|_{(x,k')}$, we have that $f(\bz_1) - \br_1 \geq 1$ and $-(f'(\bz_1) - \br_1) \geq 1$. 
\end{proof}

\begin{lemma}
  \label{lem:0dim-irred}
Suppose $\MG \subset [\K]^\MX$ has $\sfat_2(\MG) = 0$. Then $\MG$ is $\ell$-irreducible for all $\ell \in \BN$.
\end{lemma}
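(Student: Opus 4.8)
The plan is to unwind the definition of $\ell$-irreducibility (Definition~\ref{def:irreducibility}) and observe that when $\sfat_2(\MG) = 0$ the condition on leaves collapses to a triviality. Fix $\ell \in \BN$ and let $\bx$ be an arbitrary $\K$-ary $\MX$-valued tree of depth at most $\ell$; it suffices to exhibit a leaf $v$ of $\bx$ with $\sfat_2(\MG|_{\ba(v)}) = \sfat_2(\MG) = 0$. Since $\MG|_{\ba(v)} \subseteq \MG$ and $\sfat_2$ is monotone under inclusion, $\sfat_2(\MG|_{\ba(v)}) \leq 0$ holds for every node $v$; and the only class with $\sfat_2 < 0$ is the empty one (by the convention in Definition~\ref{def:sfat-dim}), so $\sfat_2(\MG|_{\ba(v)}) = 0$ is equivalent to $\MG|_{\ba(v)} \neq \emptyset$. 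Thus the entire statement reduces to: every $\K$-ary $\MX$-valued tree of finite depth has a leaf $v$ with $\MG|_{\ba(v)} \neq \emptyset$.

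To produce such a leaf, first note that $\MG$ is nonempty (as $\sfat_2(\MG) = 0 \neq -1$), so fix some $f \in \MG$. I would then trace $f$ down the tree: start at the root, and at each non-leaf node labeled by a point $x \in \MX$, descend to its child along the edge labeled $f(x) \in [\K]$. This child always exists because every non-leaf node of a $\K$-ary $\MX$-valued tree has all $\K$ children (requirement~\ref{it:sibling} of Definition~\ref{def:xv-tree}). Concretely, this is an induction on the partial sequence $k_{1:t}$ built so far: having reached the node associated with $k_{1:t}$, if it is a leaf we stop, and otherwise we set $k_{t+1} := f(\bx_{t+1}(k_{1:t}))$. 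Since the tree has depth at most $\ell < \infty$, the descent terminates at some leaf $v$, and by construction $\ba(v) = \{(x_1, f(x_1)), \ldots, (x_h, f(x_h))\}$, where $x_1, \ldots, x_h \in \MX$ are the labels of the non-leaf nodes traversed and $h = \height(v)$. Hence $f \in \MG|_{\ba(v)}$, so $\MG|_{\ba(v)} \neq \emptyset$. As $\bx$ and $\ell$ were arbitrary, $\MG$ is $\ell$-irreducible for all $\ell \in \BN$.

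There is no substantive obstacle here; the statement is essentially immediate from the definitions. The only points requiring a moment of care are (i) translating ``$\sfat_2(\MG|_{\ba(v)}) = \sfat_2(\MG)$'' into the nonemptiness condition, which hinges on using the $\sfat_2(\emptyset) = -1$ convention together with monotonicity, and (ii) checking that the ``trace $f$ down the tree'' procedure is well-defined, i.e.\ that a non-leaf node really does have a child along every edge label in $[\K]$ — which is exactly the sibling/child axiom in the definition of a $\K$-ary $\MX$-valued tree. (If one wanted to avoid even this, Lemma~\ref{lem:consec-k} already handles the depth-one case, and an induction on depth using restrictions $\MG|_{(x,k)} \subseteq \MG$ would give the general case; but the direct path-tracing argument above is cleaner.)
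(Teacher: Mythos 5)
Your proof is correct and follows essentially the same route as the paper's. The paper observes $\bigcup_{\text{leaves } v} \MG|_{\ba(v)} = \MG$ and concludes some leaf must have a nonempty restriction; your path-tracing of a fixed $f \in \MG$ down the tree is precisely the argument that establishes (the nontrivial inclusion of) that decomposition, so the two are the same idea at different levels of explicitness.
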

\begin{proof}
Let $\bx$ be a $K$-ary $\MX$-valued tree of depth $\ell$. Since $\bigcup_{\text{leaves } v \text{ of } \bx} \MG|_{\ba(v)} = \MG$, the tree $\bx$ must have some leaf $v$ so that $\MG|_{\ba(v)}$ is nonempty. For such $v$, we must have that $\sfat_2(\MG|_{\ba(v)}) \geq 0$, and since $\MG|_{\ba(v)} \subset \MF$, we have $\sfat_2(\MG|_{\ba(v)}) = 0$, as desired.
\end{proof}

\begin{lemma}
  \label{lem:lm1-irred}
Suppose  $\MG \subset [\K]^\MX$ is $\ell$-irreducible for $\ell \geq 1$. Then for any $x \in \MX$, there is some $k \in [\K]$ so that $\MG|_{(x,k)}$ is $(\ell-1)$-irreducible and $\sfat_2(\MG|_{(x,k)}) = \sfat_2(\MG)$.
\end{lemma}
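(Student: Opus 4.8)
The plan is to argue by contradiction, assembling a ``bad'' tree of depth at most $\ell$ for $\MG$ out of bad trees for the one-step restrictions $\MG|_{(x,k)}$. We may assume $\MG$ is nonempty (the empty case is trivial), so $d := \sfat_2(\MG) \geq 0$. Fix $x \in \MX$ and suppose, for contradiction, that for every $k \in [\K]$ it is \emph{not} the case that $\MG|_{(x,k)}$ is $(\ell-1)$-irreducible and $\sfat_2(\MG|_{(x,k)}) = d$. Since restricting a class never increases its sequential fat-shattering dimension, $\sfat_2(\MG|_{(x,k)}) \leq d$ for all $k$, so we may partition $[\K] = A \sqcup B$ where $A = \{ k : \sfat_2(\MG|_{(x,k)}) < d \}$ and $B = \{ k : \sfat_2(\MG|_{(x,k)}) = d \text{ and } \MG|_{(x,k)} \text{ is not } (\ell-1)\text{-irreducible}\}$.

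I would first dispose of the case $B = \emptyset$: then $A = [\K]$, and the depth-$1$ tree $\bx$ with root label $\bx_1 = x$ — whose leaves are the nodes $(1), \dots, (\K)$ with $\ba((k)) = \{(x,k)\}$, hence $\MG|_{\ba((k))} = \MG|_{(x,k)}$ — has every leaf $v$ satisfy $\sfat_2(\MG|_{\ba(v)}) < d = \sfat_2(\MG)$; since $\ell \geq 1$, this contradicts the $\ell$-irreducibility of $\MG$. So I may assume $B \neq \emptyset$. For each $k \in B$, since $0$-irreducibility is vacuous, the failure of $(\ell-1)$-irreducibility of $\MG|_{(x,k)}$ forces $\ell \geq 2$ and furnishes a $\K$-ary $\MX$-valued tree $\bx^{(k)}$ of some depth $\ell_k \in \{1, \dots, \ell-1\}$, every leaf $w$ of which satisfies $\sfat_2\left( (\MG|_{(x,k)})|_{\ba_{\bx^{(k)}}(w)} \right) < \sfat_2(\MG|_{(x,k)}) = d$ (here $\ba_{\bx^{(k)}}(w)$ denotes the ancestor set of $w$ in the tree $\bx^{(k)}$). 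For each $k \in A$, let $\bx^{(k)}$ be an arbitrary depth-$1$ tree (say with root label $x$); because $\sfat_2(\MG|_{(x,k)}) < d$ and restriction does not increase $\sfat_2$, every leaf $w$ of $\bx^{(k)}$ again satisfies $\sfat_2\left( (\MG|_{(x,k)})|_{\ba_{\bx^{(k)}}(w)} \right) < d$.

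Now I would form the tree $\bx$ with root label $x$ in which, for each $k \in [\K]$, the tree $\bx^{(k)}$ is attached at the $k$-th child of the root (Definition \ref{def:attach}, applied iteratively over $k$, which is legitimate since attaching below the $k$-th child leaves the other children as leaves). Every level-$1$ node of $\bx$ is then internal, so the sibling requirement of Definition \ref{def:xv-tree} holds at level $1$, and at deeper levels it is inherited from the trees $\bx^{(k)}$; thus $\bx$ is a genuine $\K$-ary $\MX$-valued tree, of depth $1 + \max_k \ell_k \leq 1 + (\ell-1) = \ell$. Each leaf $v$ of $\bx$ lies below a unique $k$ and corresponds to a leaf $w$ of $\bx^{(k)}$ with $\ba(v) = \{(x,k)\} \cup \ba_{\bx^{(k)}}(w)$, so $\MG|_{\ba(v)} = (\MG|_{(x,k)})|_{\ba_{\bx^{(k)}}(w)}$ has $\sfat_2(\MG|_{\ba(v)}) < d$. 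Hence $\bx$ is a depth-at-most-$\ell$ tree all of whose leaves $v$ satisfy $\sfat_2(\MG|_{\ba(v)}) < \sfat_2(\MG)$, contradicting the $\ell$-irreducibility of $\MG$. Therefore the supposition fails and some $k \in [\K]$ has the desired property.

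The main obstacle is getting the tree bookkeeping exactly right: the sibling constraint in Definition \ref{def:xv-tree} forbids leaving the ``good'' children ($k \in A$) as leaves while expanding the ``bad'' ones, which is why one must attach a trivial depth-$1$ subtree at each $k \in A$; the depth accounting then hinges on splitting cleanly into $B = \emptyset$ versus $B \neq \emptyset$ and on the observation that $B \neq \emptyset$ forces $\ell \geq 2$. The remaining ingredients — monotonicity of $\sfat_2$ under restriction and the decomposition of $\ba(v)$ along the attachment point — are routine.
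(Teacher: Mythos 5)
Your proof is correct, and it takes a genuinely different route from the paper's.

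The paper first disposes of $\ell = 1$ directly from the definition, and for $\ell \geq 2$ invokes Lemma~\ref{lem:consec-k} to distinguish two cases: a unique maximizer $k'$ of $\sfat_2(\MG|_{(x,k')})$ (handled by a \emph{direct} argument that sandwiches any depth-$(\ell-1)$ tree below a root labeled $x$ and uses uniqueness of $k'$ to force the first coordinate of the guaranteed good leaf), versus two adjacent maximizers $k_0, k_0+1$ (handled by a contradiction argument that grafts the two failure trees $\bx^{(k_0)}, \bx^{(k_0+1)}$ below a root labeled $x$). You instead run a single contradiction uniformly: partition $[\K]$ into $A$ (where $\sfat_2$ strictly drops) and $B$ (where $\sfat_2$ is preserved but $(\ell-1)$-irreducibility fails), note $B \neq \emptyset$ forces $\ell \geq 2$, obtain a failure tree of depth $\leq \ell - 1$ for each $k \in B$ and a trivial depth-$1$ tree for each $k \in A$, attach them all below a root labeled $x$, and observe every leaf of the resulting depth-$\leq \ell$ tree has strictly reduced $\sfat_2$. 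The bookkeeping you check — the ancestor-set decomposition $\ba(v) = \{(x,k)\} \cup \ba_{\bx^{(k)}}(w)$, monotonicity of $\sfat_2$ under restriction, and satisfaction of the sibling constraint because \emph{every} root-child receives a subtree — is exactly what is needed, and the depth bound $1 + \max_k \ell_k \leq \ell$ goes through because $\ell \geq 2$ makes $\max(1, \ell-1) = \ell-1$.

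Tradeoffs: your argument bypasses Lemma~\ref{lem:consec-k} entirely and needs no case split on how many $k$ preserve $\sfat_2$, so it is shorter and more self-contained; it also works without caring whether the preserving values of $k$ are adjacent. The paper's Case~1 is a direct construction that additionally pins down \emph{which} $k$ has the property (the unique maximizer), which is slightly more informative, whereas your proof only yields existence — but existence is all the lemma asserts.
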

\begin{proof}
The statement of the lemma follows immediately from Definition \ref{def:irreducibility} if $\ell = 1$, so we may assume from here on that $\ell \geq 2$. 
  
  Fix any $x \in \MX$. Our goal is to show that there is some $k \in [\K]$ so that the following holds: for any $\K$-ary $\MX$-valued tree $\bx$, of depth $\ell-1$, $\bx$ has some leaf $v$ so that $\sfat_2(\MG|_{\{(x,k)\} \cup \ba(v)}) = \sfat_2(\MG)$.  We now consider two cases:

  {\bf Case 1.} There is a unique $k' \in [\K]$ so that $\sfat_2(\MG|_{(x,k')}) = \sfat_2(\MG)$. In this case, we set $k = k'$. Now consider any $\K$-ary $\MX$-valued tree $\bx$ of depth $\ell-1$. Let $\tilde \bx$ be the tree of depth $\ell$ whose root is given by $x$ and so that each child of the root is a copy of the tree $\bx$; formally, for $k_1, \ldots, k_\ell \in [\K]$, $\tilde \bx_{t+1}(k_1, \ldots, k_t) = \bx_t(k_2, \ldots, k_t)$ for $1 \leq t \leq \ell-1$, and $\tilde \bx_1 = x$. The $\ell$-irreducibility of $\MG$ guarantees the existence of some tuple $k_1, \ldots, k_\ell$ so that $\sfat_2(\MG|_{(x, k_1), (\tilde \bx_2(k_1), k_2), \ldots, (\tilde \bx_\ell(k_{1:\ell-1}), k_\ell)}) = \sfat_2(\MG)$. Since for all $k' \neq k$, we have $\sfat_2(\MG|_{(x,k')}) < \sfat_2(\MG)$, it holds that $k_1 = k$. Letting $v$ be the leaf of $\bx$ associated to the tupe $(k_2, \ldots, k_\ell)$, we see that $\sfat_2(\MG|_{\{ (x,k) \} \cup \ba(v)}) = \sfat_2(\MG)$, as desired.

  {\bf Case 2.} For some $k_0 \in [\K]$, it holds that $\sfat_2(\MG|_{(x,k_0)}) = \sfat_2(\MG|_{(x,k_0+1)}) = \sfat_2(\MG)$, and for all $k' \neq k_0$, $\sfat_2(\MG|_{(x,k')}) < \sfat_2(\MG)$ (see Lemma \ref{lem:consec-k}). Suppose for the purpose of contradiction that there did not exist a choice of $k \in \{ k_0, k_0 + 1\}$ so that $\MG|_{(x,k)}$ is $(\ell-1)$-irreducible. Then for each $k \in \{ k_0, k_0 + 1\}$, there is some tree $\bx\^k$ of depth $\ell - 1$ so that for any choice of $k_2, \ldots,k_\ell \in [\K]$ we have $\sfat_2(\MG|_{(x,k), (\bx\^k_1, k_2), \ldots, (\bx\^k_{\ell-1}(k_{2:\ell-1}), k_{\ell-1})}) < \sfat_2(\MG|_{(x,k)}) = \sfat_2(\MG)$.

  Now let $\tilde \bx$ be the tree of depth $\ell$ whose root is given by $x$, so that the $k'$-th child of the root, for $k' \neq k_0+1$, is a copy of the tree $x\^{k_0}$, and so that the $(k_0 +1)$-th child of the root is a copy of the tree $x\^{k_0+1}$. (The $k'$-th children of the root for $k' \in \{k_0, k_0+1\}$ can in fact be arbitrary.) Formally, for $k_1, \ldots, k_\ell \in [\K]$, we have $\tilde \bx_1 = x$ and
  $$
  \tilde \bx_{t+1}(k_1, \ldots, k_t) = \begin{cases}
    \bx\^{k_0}(k_2, \ldots, k_t) \quad : \quad k_1 \neq k_0 + 1 \\
    \bx\^{k_0+1}(k_2, \ldots, k_t) \quad : \quad k_1 = k_0 + 1.
  \end{cases}
  $$
  Now consider any sequence $(k_1, \ldots, k_\ell)$, and let its associated leaf in $\tilde \bx$ be denoted $v$. If $k_1 \not \in \{k_0, k_0 + 1\}$, then $\sfat_2(\MG|_{\ba(v)}) \leq \sfat_2(\MG|_{(x,k_1)}) < \sfat_2(\MG)$. If $k_1 \in \{k_0, k_0 + 1 \}$, then $$\sfat_2(\MG|_{\ba(v)}) = \sfat_2(\MG|_{(x, k_1), (\bx_1\^{k_1}, k_2), \ldots, (\bx_{\ell-1}\^{k_1}(k_2 : k_{\ell-1}), k_\ell)}) < \sfat_2(\MG|_{(x,k_1)}) = \sfat_2(\MG).$$
  This contradicts the $\ell$-irreducibility of $\MG$, completing the proof.
\end{proof}

The following lemma is analogous to \cite[Lemma 4.2]{ghazi_sample_2020}:
\begin{lemma}
  \label{lem:irred-hg}
Suppose $\MH \subset \MG \subset [\K]^\MX$, and that $\sfat_2(\MG) = \sfat_2(\MH)$. If $\MH$ is $\ell$-irreducible, then so is $\MG$.
\end{lemma}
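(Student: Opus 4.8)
The plan is to unwind the definition of $\ell$-irreducibility directly and exploit monotonicity of $\sfat_2$ under class inclusion. Fix an arbitrary $\K$-ary $\MX$-valued tree $\bx$ of depth at most $\ell$; the goal is to exhibit a leaf $v$ of $\bx$ with $\sfat_2(\MG|_{\ba(v)}) = \sfat_2(\MG)$. Since $\MH$ is $\ell$-irreducible, applying its defining property (Definition \ref{def:irreducibility}) to this same tree $\bx$ yields a leaf $v$ with $\sfat_2(\MH|_{\ba(v)}) = \sfat_2(\MH)$. I would then show that this very same leaf $v$ works for $\MG$.

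It remains to transfer the equality from $\MH$ to $\MG$. The two elementary facts to invoke are: (i) $\MH \subset \MG$ implies $\MH|_{\ba(v)} \subset \MG|_{\ba(v)}$, since restricting a class by a fixed finite set of constraints $\ba(v) \subset \MX \times [\K]$ is inclusion-preserving; and (ii) $\sfat_2$ is monotone under inclusion, because any $2$-shattered $\MX$-valued tree together with its witness tree for a smaller class is automatically $2$-shattered by any larger class. Combining these with the hypothesis $\sfat_2(\MG) = \sfat_2(\MH)$ gives the chain
$$
\sfat_2(\MG) = \sfat_2(\MH) = \sfat_2(\MH|_{\ba(v)}) \le \sfat_2(\MG|_{\ba(v)}) \le \sfat_2(\MG),
$$
forcing $\sfat_2(\MG|_{\ba(v)}) = \sfat_2(\MG)$. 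Since $\bx$ was an arbitrary tree of depth at most $\ell$, this establishes that $\MG$ is $\ell$-irreducible.

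There is essentially no obstacle here; the proof is a short monotonicity argument. The only points requiring a moment's care are that the definition of $\ell$-irreducibility quantifies over trees of depth \emph{at most} $\ell$, so the identical tree $\bx$ may be passed to the irreducibility property of $\MH$ without modification, and that a leaf $v$ and its ancestor set $\ba(v)$ depend only on $\bx$, not on the class being restricted, so the leaf produced by applying irreducibility of $\MH$ is a legitimate choice of leaf for the conclusion about $\MG$.
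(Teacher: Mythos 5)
Your proof is correct and is essentially the same monotonicity argument as the paper's: both obtain a leaf from the $\ell$-irreducibility of $\MH$, apply $\sfat_2(\MH|_{\ba(v)}) \le \sfat_2(\MG|_{\ba(v)})$ (monotonicity under inclusion), and use $\sfat_2(\MH) = \sfat_2(\MG)$ together with $\MG|_{\ba(v)} \subset \MG$ to force equality. You present the chain slightly more explicitly and are a bit more careful about the ``depth at most $\ell$'' quantifier, but the underlying idea is identical.
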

\begin{proof}
  The $\ell$-irreducibility of $\MH$ implies that for any $\K$-ary $\MX$-valued tree $\bx$ of depth $\ell$, there is some choice of $k_1, \ldots, k_\ell \in [\K]$ so that
  $$
\sfat_2(\MG|_{(\bx_1, k_1), \ldots, (\bx_\ell(k_{1:\ell-1}), k_\ell)}) \geq \sfat_2(\MH|_{(\bx_1, k_1), \ldots, (\bx_\ell(k_{1:\ell-1}), k_\ell)}) = \sfat_2(\MH) = \sfat_2(\MG).
$$
But since $\MG|_{(\bx_1, k_1), \ldots, (\bx_\ell(k_{1:\ell-1}), k_\ell)} \subset \MG$, the inequality above must be an equality, and this ensures that $\MG$ is $\ell$-irreducible.
\end{proof}

\subsection{Properties of SOA hypotheses}
\begin{replemma}{lem:soa-stability}
Suppose $\MH \subset \MG$, $\sfat_2(\MH) = \sfat_2(\MG)$, and that $\MH$ is irreducible. Then  for all $x \in \MX$, $|\soa{\MH}{x} - \soa{\MG}{x}| \leq 1$. 
\end{replemma}
\begin{proof}
Fix any $x \in \MX$, and let $k := \soa{\MH}{x}$. Then $\sfat_2(\MG|_{(x,k)}) \geq \sfat_2(\MH|_{(x,k)}) = \sfat_2(\MH) = \sfat_2(\MG)$, and so $\sfat_2(\MG|_{(x,k)}) = \sfat_2(\MG)$. By Lemma \ref{lem:consec-k} and Definition \ref{def:soa}, we have that $\soa{\MG}{x} \in \{k-1, k, k+1\}$, as desired.
\end{proof}

\begin{lemma}
  \label{lem:many-irred}
  Suppose $\MG \subset [\K]^\MX$ is $\ell$-irreducible. Consider any $\ell' \leq \ell$, and any set $\ba \subset \MX \times [\K]$ of size $|\ba| \leq \ell'$, so that each $(x,y) \in \ba$ satisfies $y = \soa{\MG}{x}$. Then $\MG' := \MG|_\ba$
is $(\ell - \ell')$-irreducible and satisfies $\sfat_2(\MG') = \sfat_2(\MG)$. %
\end{lemma}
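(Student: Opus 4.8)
Since $\ell$-irreducibility implies $\ell'$-irreducibility for every $\ell' < \ell$ and since $|\ba| \le \ell'$, it suffices to treat the case $\ell' = |\ba| =: m$; the general statement then follows by monotonicity. I would argue by induction on $m$, the case $m=0$ being trivial since $\ba = \emptyset$ forces $\MG' = \MG$. The engine of the induction is a one-step version of the claim: if $\MG \subset [\K]^\MX$ is $\ell$-irreducible with $\ell \ge 1$ and $y = \soa{\MG}{x}$, then $\MG|_{(x,y)}$ is $(\ell-1)$-irreducible and $\sfat_2(\MG|_{(x,y)}) = \sfat_2(\MG)$.

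The dimension half of the one-step version is immediate from the defining property of the SOA hypothesis. For the irreducibility half, Lemma \ref{lem:lm1-irred} supplies \emph{some} $k \in [\K]$ for which $\MG|_{(x,k)}$ is $(\ell-1)$-irreducible and $\sfat_2(\MG|_{(x,k)}) = \sfat_2(\MG)$; in particular $k$ lies in the set $K$ of values $k'$ with $\sfat_2(\MG|_{(x,k')}) = \sfat_2(\MG)$, which by Lemma \ref{lem:consec-k} has at most two, consecutive, elements. If $k = y$ there is nothing further to do; otherwise $K = \{k,y\}$, and if $\MG|_{(x,y)}$ were not $(\ell-1)$-irreducible then the tie-breaking rule of Definition \ref{def:soa} would select $\soa{\MG}{x} = k \ne y$, contradicting $y = \soa{\MG}{x}$. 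Hence $\MG|_{(x,y)}$ is $(\ell-1)$-irreducible as well.

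For the inductive step, write $\ba = \ba_0 \cup \{(x^\st, y^\st)\}$ with $|\ba_0| = m-1$ and $y^\st = \soa{\MG}{x^\st}$. The naive plan --- apply the one-step version at $(x^\st,y^\st)$ to obtain an $(\ell-1)$-irreducible class $\MH := \MG|_{(x^\st,y^\st)}$ of the same dimension, then invoke the inductive hypothesis on $\MH$ and $\ba_0$ --- is blocked by the fact that the inductive hypothesis demands $y = \soa{\MH}{x}$ for each $(x,y) \in \ba_0$, whereas we only know $y = \soa{\MG}{x}$. This mismatch is, I expect, the main obstacle; I would resolve it by strengthening the induction to also carry the \emph{coherence} statement that $\soa{\MG|_\ba}{x'} = \soa{\MG}{x'}$ (and in particular that the two classes share the same set of maximizers at $x'$) for every $x'$ not occurring as a first coordinate of $\ba$. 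Given coherence at level $m-1$ we get $\soa{\MH}{x} = \soa{\MG}{x}$ for $(x,y) \in \ba_0$, so the inductive hypothesis applies to $\MH$ and closes the dimension and irreducibility parts; the remaining task is to propagate coherence through one further restriction, i.e. to establish ``$\soa{\MF|_{(x,y)}}{x'} = \soa{\MF}{x'}$ when $\MF$ is sufficiently irreducible and $y = \soa{\MF}{x}$''. The bulk of the work is exactly this one-variable coherence claim: one uses the one-step version together with Lemma \ref{lem:consec-k} and Lemma \ref{lem:soa-stability} (which already gives $\|\soaf{\MF|_{(x,y)}} - \soaf{\MF}\|_\infty \le 1$) to argue that restricting by $(x,y)$ removes no maximizer of $\MF$ at $x'$ and does not change which of the (at most two) maximizers the irreducibility-based tie-break selects. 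An alternative that avoids threading coherence through the induction is a direct tree-attachment argument --- assume $\MG|_\ba$ fails to be $(\ell-m)$-irreducible, attach a witnessing depth-$\le(\ell-m)$ ``bad'' tree below the depth-$m$ node of the complete layered tree that successively queries $x_1,\dots,x_m$ (Definition \ref{def:attach}), and contradict the $\ell$-irreducibility of $\MG$ --- but this meets the same essential difficulty, namely ruling out that $\ell$-irreducibility of $\MG$ is witnessed at a depth-$m$ leaf corresponding to non-SOA (though still maximizing) choices, so I would carry out the proof via the coherence route.
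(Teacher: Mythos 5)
Your one-step version coincides with the paper's base case ($\ell' = 1$) and is correct, and you have correctly identified that the inductive step cannot be closed naively. However, the \emph{coherence} statement you propose to thread through the induction is false, already for a single restriction. Take $\MX = \{b,c\}$, $\K = 2$, and $\MG = \{g_1, g_2\}$ where $g_1(b)=1,\ g_1(c)=2$ and $g_2(b)=2,\ g_2(c)=1$. Then $\sfat_2(\MG) = 0$, so by Lemma \ref{lem:0dim-irred} $\MG$ is $\ell$-irreducible for every $\ell$. At each of $b,c$ both labels $1$ and $2$ are maximizers and the corresponding restrictions are nonempty singletons, hence $\ell$-irreducible for every $\ell$, so the tie-break of Definition \ref{def:soa} defaults to the smaller label and $\soaf{\MG} \equiv 1$. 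But $\MG|_{(b,1)} = \{g_1\}$ and $\soa{\{g_1\}}{c} = g_1(c) = 2 \ne 1 = \soa{\MG}{c}$: a single SOA restriction changes the SOA value elsewhere. Your intended argument --- that restricting by an SOA pair removes no maximizer at other inputs and does not disturb the tie-break --- is exactly what fails here; Lemma \ref{lem:soa-stability} gives only $\|\cdot\|_\infty \le 1$, and that slack is real.

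The mismatch you flagged is in fact present in the paper's own proof and is not resolved there: the paper's inductive step applies the base case to $\MG|_{\tilde \ba}$ and $(x,y)$, which requires $y = \soa{\MG|_{\tilde \ba}}{x}$, whereas only $y = \soa{\MG}{x}$ is assumed, and the gap is not cosmetic. In the example above, with $\ell = \ell' = 2$ and $\ba = \{(b,1),(c,1)\}$, every pair in $\ba$ carries the $\MG$-SOA label, yet $\MG|_{\ba} = \emptyset$, so $\sfat_2(\MG|_{\ba}) = -1 \ne 0 = \sfat_2(\MG)$, contrary to the lemma's stated conclusion. What the one-step version and a trivial induction actually prove is an \emph{adaptive} variant: if $\ba = \{(x_1,y_1),\ldots,(x_m,y_m)\}$ with each $y_i = \soa{\MG|_{(x_1,y_1),\ldots,(x_{i-1},y_{i-1})}}{x_i}$ and $m \le \ell' \le \ell$, then $\MG|_{\ba}$ is $(\ell-\ell')$-irreducible with $\sfat_2(\MG|_{\ba}) = \sfat_2(\MG)$. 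So your instinct to distrust the naive induction was right, but the coherence route cannot repair it; the lemma (and its downstream invocations, e.g.\ in the proof of Lemma \ref{lem:equal-to-soa}, where all labels $k_t$ are taken as SOA values of the single fixed class $\MH|_{S^\st}$) appears to need the adaptive reformulation.
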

\begin{proof}
We first prove the statement for the case $\ell' = 1$. Consider some $(x,y) \in \MX \times [\K]$, so that $y = \soa{\MG}{x}$. By Definition \ref{def:soa}, for $\MG' := \MG|_{(x,y)}$, we have $\sfat_2(\MG') = \sfat_2(\MG)$. By Lemma \ref{lem:lm1-irred}, there is some $y' \in [\K]$ so that $\sfat_2(\MG|_{(x,y')}) = \sfat_2(\MG)$ and so that $\MG|_{(x,y')}$ is $(\ell-1)$-irreducible. By Definition \ref{def:soa} we must have $y \in \{y'-1,y', y'+1\}$ and $\MG|_{(x,y)}$ is $(\ell-1)$-irreducible as well.
  
We now prove the statement for general $\ell'$ by induction. Suppose the statement holds for some value $\ell' < \ell$. Consider some set $\ba \subset \MX \times [\K]$ of size $|\ba| = \ell'+1$, and write $\ba = \tilde \ba \cup \{(x,y) \}$, for $|\tilde \ba| = \ell'$ and some $(x,y) \in \MX \times [\K]$. By the inductive hypothesis we have that $\MG|_{\tilde \ba}$ is $(\ell - \ell')$-irreducible and satisfies $\sfat_2(\MG|_{\tilde \ba}) = \sfat_2(\MG)$. By the case $\ell' = 1$ proven above we have that $(\MG|_{\tilde \ba})|_{(x,y)} = \MG|_{\ba}$ is $(\ell - \ell' - 1)$-irreducible and satisfies $\sfat_2(\MG|_{\ba}) = \sfat_2(\MG|_{\tilde \ba}) = \sfat_2(\MG)$, as desired.
  \end{proof}

The below lemma is analogous to \cite[lemma 4.4]{ghazi_sample_2020}.
\begin{lemma}
  \label{lem:irred-sfat-bound}
  For a class $\MF \subset [\K]^\MX$ with $\sfat_2(\MF) = d$, set
  $$
\tilde \MF_{d+1} := \left\{ \soaf{\MG} \ : \ \MG \subset \MF,\ \text{$\MG$ is nonempty and $(d+1)$-irreducible} \right\}.
$$
Then $\sfat_2(\tilde \MF_{d+1}) = d$ as well.
\end{lemma}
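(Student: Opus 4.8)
The plan is to establish the two inequalities $\sfat_2(\tilde\MF_{d+1}) \geq d$ and $\sfat_2(\tilde\MF_{d+1}) \leq d$ separately, each by a short direct argument rather than an induction on $d$. For the lower bound, the key observation is that \emph{singleton classes are always admissible} in the definition of $\tilde\MF_{d+1}$. Indeed, for any $f \in \MF$, the class $\{f\}$ is nonempty and satisfies $\sfat_2(\{f\}) = 0$ (a single function cannot $2$-shatter even one point), so by Lemma \ref{lem:0dim-irred} it is $(d+1)$-irreducible. Moreover, for each $x \in \MX$ the only $k$ for which $\{f\}|_{(x,k)}$ is nonempty is $k = f(x)$, so by Definition \ref{def:soa} we have $\soaf{\{f\}} = f$. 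Hence $\MF \subseteq \tilde\MF_{d+1}$ as classes of functions, and therefore $\sfat_2(\tilde\MF_{d+1}) \geq \sfat_2(\MF) = d$.

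For the upper bound I would argue by contradiction. Suppose $\tilde\MF_{d+1}$ $2$-shatters some complete binary $\MX$-valued tree $\bx$ of depth $d+1$, witnessed by an $\BR$-valued tree $\bs$. For each leaf, indexed by $k_{1:d+1} \in \{1,2\}^{d+1}$, fix a nonempty $(d+1)$-irreducible class $\MG = \MG^{(k_{1:d+1})} \subseteq \MF$ whose SOA hypothesis realizes the shattering along that path, i.e.\ $(3-2k_t)(\soa{\MG}{\bx_t(k_{1:t-1})} - \bs_t(k_{1:t-1})) \geq 1$ for $1 \leq t \leq d+1$. Now let $\ba$ be the set of pairs $(\bx_t(k_{1:t-1}),\, \soa{\MG}{\bx_t(k_{1:t-1})})$ for $1 \leq t \leq d+1$; this is a well-defined set of at most $d+1$ pairs, each of the form $(x, \soa{\MG}{x})$ (a point repeating along the path carries the same SOA value, so there is no conflict). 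Applying Lemma \ref{lem:many-irred} with $\ell = d+1$ and $\ell' = |\ba| \leq d+1$ gives $\sfat_2(\MG|_{\ba}) = \sfat_2(\MG) \geq 0$, so $\MG|_{\ba}$ is nonempty; any $f \in \MG|_{\ba} \subseteq \MF$ then satisfies $f(\bx_t(k_{1:t-1})) = \soa{\MG}{\bx_t(k_{1:t-1})}$ for all $t$, hence realizes the shattering pattern along this path with the same witness $\bs$. Since this holds for every leaf, $\bx$ (of depth $d+1$) is $2$-shattered by $\MF$, contradicting $\sfat_2(\MF) = d$. Combining the two bounds yields the claim.

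Neither half involves a deep obstacle, and the plan above is close to a complete proof; the main points to get right are conceptual rather than computational. For the lower bound it is the realization that one need not produce a full-dimensional $(d+1)$-irreducible subclass (which in general does not exist) — singletons already suffice, since their SOA hypotheses recover $\MF$ exactly. For the upper bound, the thing to be careful about is that one should restrict $\MG$ by \emph{its own SOA values} along the path (not by the branch indices $k_t \in \{1,2\}$ of the shattering tree, which are not elements of $[\K]$), so that Lemma \ref{lem:many-irred} is applicable and the restricted class, although possibly of far smaller $\sfat_2$ than $\MF$ itself, remains nonempty and contains a function matching those SOA values. It is also worth recording explicitly why the parameter $d+1$ appears in the definition: a root-to-leaf path of a depth-$(d+1)$ tree has length $d+1$, so $\MG$ must survive up to $d+1$ successive SOA-restrictions without its $\sfat_2$ dropping, which is precisely the content of $(d+1)$-irreducibility via Lemma \ref{lem:many-irred}.
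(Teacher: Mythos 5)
Your proof is correct and follows essentially the same route as the paper: the lower bound via singleton classes $\{f\}$ (which are $(d+1)$-irreducible by Lemma \ref{lem:0dim-irred} and satisfy $\soaf{\{f\}} = f$, so $\MF \subseteq \tilde\MF_{d+1}$), and the upper bound by taking a putative depth-$(d+1)$ shattered tree, restricting each leaf's witnessing class $\MG$ by its own SOA values along the root-to-leaf path, invoking Lemma \ref{lem:many-irred} to keep the restriction nonempty, and extracting a function in $\MG \subset \MF$ realizing the same shattering pattern. Your side remarks (handling of repeated points along a path, and why $d+1$ is exactly the right irreducibility parameter) are sound elaborations of details the paper treats implicitly.
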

\begin{proof}
  Note that $\MF \subset \tilde \MF_{d+1}$, since for any $f \in \MF$, $\{ f \}$ is $\ell$-irreducible for all $\ell \in \BN$, and $\soaf{\{ f \}} = f$. Thus $\sfat_2(\tilde \MF_{d+1}) \geq d$. To see the upper bound on $\sfat_2(\tilde \MF_{d+1})$, suppose for the purpose of contradiction that $\tilde \MF_{d+1}$ shatters an $\MX$-valued binary tree $\bx$ of depth $d+1$. Let $\bs$ be a witness tree to this shattering. We will show that $\MF$ also shatters $\bx$ (witnessed by $\bs$), which leads to the desired contradiction.

  Fix any sequence $(k_1, \ldots, k_{d+1}) \in \{1,2\}^{d+1}$. Since $\bx$ is shattered by $\tilde \MF_{d+1}$, there must be some $\MG \subset \MF$ that is $(d+1)$-irreducible so that for $1 \leq t \leq d+1$,
  $$
(3 - 2k_t) \cdot (\soa{\MG}{\bx_t(k_{1:t-1})} - \bs_t(k_{1:t-1})) \geq 1.
$$
For $1 \leq t \leq d+1$, set $y_t := \soa{\MG}{\bx_t(k_{1:t-1})}$. Since $\MG$ is $(d+1)$-irreducible, by Lemma \ref{lem:many-irred}, we have that
$$
\sfat_2(\MG|_{(\bx_1, y_1), (\bx_2(k_1), y_2), \ldots, (\bx_{d+1}(k_{1:d}), y_{d+1})}) = \sfat_2(\MG) \geq 0.
$$
Thus there must be some $f \in \MG \subset \MF$ so that for $1 \leq t \leq d+1$, $f(\bx_t(k_{1:t-1})) = y_t$. Since the above argument holds for any choice of $(k_1, \ldots, k_{d+1}) \in \{1,2\}^{d+1}$, it follows that $\bx$ is shattered by $\MF$, witnessed by $\bs$. 
\end{proof}

\section{Proofs for the \ReduceTreeReg algorithm (Section \ref{sec:reducetree})}
In this section we introduce the \ReduceTreeReg algorithm reference in Section \ref{sec:reducetree} and state its main guarantee of weak stability reference in Lemma \ref{lem:weak-stability-informal} (the informal version of Lemmas \ref{lem:approx-stability} and Lemma \ref{lem:s-size}). The algorithm and its analysis is very similar to that in \cite{ghazi_sample_2020}; we provide all proofs for completeness, but indicate the corresponding results in \cite{ghazi_sample_2020} where appropriate.
  
Suppose $\MF \subset \BR^\MX$ (e.g., $\MF \subset [\K]^\MX$); for each $\alpha > 0$, and a distribution $P$ on $\MX \times \BR$, define
$$
\MF_{P,\alpha} := \{ f \in \MF : \err{P}{f} \leq \alpha \}.
$$

For a dataset $S_n \in (\MX \times \BR)^n$, note that, under the event $\sup_{f \in \MF} \left| \err{P}{f} - \err{\emp}{f} \right| \leq \alpha_0$, for each $\alpha \in [0,1]$ it holds that
\begin{align}
  \label{eq:sandwich}
\MF_{\emp,\alpha-2\alpha_0} \subset \MF_{P, \alpha-\alpha_0} \subset \MF_{\emp, \alpha}.
\end{align}

The below lemma is analogous to Lemma 4.7 of \cite{ghazi_sample_2020}; the proof is almost identical to that in \cite{ghazi_sample_2020}, but we provide it for completeness.
\begin{lemma}
  \label{lem:equal-to-soa}
  Fix some $\ell, \ell' \in \BN$ with $\ell > \ell'$ and hypothesis classes $\MH \subset \MG \subset [\K]^\MX$. Suppose we are given $S^\st \in (\MX \times [\K])^{\ell - \ell'}$ so that $\MH|_{S^\st}$ is $\ell$-irreducible, and that
  \begin{equation}
    \label{eq:gh-equal}
    \sfat_2(\MG|_{S^\st}) = \sfat_2(\MH|_{S^\st}) =: q^\st \geq 0.
  \end{equation}
  Suppose that $\bx$ is a $\K$-ary $\MX$-valued tree of depth at most $\ell - \ell'$, and that for all leaves $v$ of $\bx$, $\sfat_2(\MG|_{\ba(v)}) \leq q^\st$. Then there is some leaf $\hat v$ of $\bx$ so that $\|\soaf{\MJ|_{S^\st}} - \soaf{\MJ'|_{\ba(\hat v)}}\|_\infty \leq 4$ for all hypothesis classes $\MJ', \MJ$ satisfying $\MH \subset \MJ' \subset \MG$ and $\MH \subset \MJ \subset \MG$.

  Moveover, the leaf $\hat v$ satisfies:
  \begin{enumerate}
  \item \label{it:gh-q}$\sfat_2(\MG|_{\ba(\hat v)}) = \sfat_2(\MH|_{\ba(\hat v)}) = q^\st$.
  \item \label{it:h-irred} $\MH|_{\ba(\hat v)}$ is $\ell'$-irreducible.
  \end{enumerate}
\end{lemma}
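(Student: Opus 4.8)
The plan is to build the leaf $\hat v$ by tracing a single root-to-leaf path in $\bx$ along which irreducibility is preserved, and then to control every SOA hypothesis in the statement by repeated applications of Lemma~\ref{lem:soa-stability}. I would proceed in three steps.

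\emph{Step 1: construct $\hat v$.} Since $\MH|_{S^\st}$ is $\ell$-irreducible and $\ell \geq \ell'+1 \geq 1$, I would descend $\bx$ using Lemma~\ref{lem:lm1-irred}: starting at the root (labelled by $\bx_1$), with $(\MH|_{S^\st})|_{\ba(v_0)} = \MH|_{S^\st}$ being $\ell$-irreducible, pick $k_1 \in [\K]$ with $(\MH|_{S^\st})|_{(\bx_1,k_1)}$ being $(\ell-1)$-irreducible and of $\sfat_2$ still equal to $q^\st$, move to the child $v_1$ along edge $k_1$ (which exists, as every non-leaf node of a $\K$-ary tree has all $\K$ children), and repeat, dropping the irreducibility parameter by one at each step. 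A non-leaf node has depth at most $(\ell-\ell')-1$ (the tree has depth $\leq \ell - \ell'$), so at the node $v_{t-1}$ we are applying Lemma~\ref{lem:lm1-irred} to a class that is $(\ell-(t-1))$-irreducible with $\ell - (t-1) \geq \ell'+1 \geq 1$, hence the hypothesis of that lemma holds at every step. The path terminates at a leaf $\hat v$ with $\height(\hat v) \leq \ell - \ell'$, and at that point $\MH_{\hat v} := (\MH|_{S^\st})|_{\ba(\hat v)} = \MH|_{S^\st \cup \ba(\hat v)}$ is $(\ell - \height(\hat v))$-irreducible — hence at least $\ell'$-irreducible — and has $\sfat_2(\MH_{\hat v}) = q^\st$.

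\emph{Step 2: verify items \ref{it:gh-q} and \ref{it:h-irred}.} From $\MH_{\hat v} \subseteq \MH|_{\ba(\hat v)} \subseteq \MG|_{\ba(\hat v)}$, together with $\sfat_2(\MH_{\hat v}) = q^\st$ and the standing hypothesis $\sfat_2(\MG|_{\ba(\hat v)}) \leq q^\st$, all three $\sfat_2$'s equal $q^\st$, which is item~\ref{it:gh-q}. Since $\MH_{\hat v} \subseteq \MH|_{\ba(\hat v)}$ with equal $\sfat_2$ and $\MH_{\hat v}$ is ($\ell'\geq 1$)-irreducible, Lemma~\ref{lem:irred-hg} gives that $\MH|_{\ba(\hat v)}$ is $\ell'$-irreducible, which is item~\ref{it:h-irred}. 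The same sandwich argument and Lemma~\ref{lem:irred-hg} show that $\MJ|_{S^\st}, \MG|_{S^\st}, \MJ'|_{\ba(\hat v)}, \MG|_{\ba(\hat v)}$ are all irreducible with $\sfat_2 = q^\st$, so every SOA hypothesis appearing below is well-defined.

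\emph{Step 3: the $\ell_\infty$ bound.} Fix any $\MJ,\MJ'$ with $\MH \subseteq \MJ,\MJ' \subseteq \MG$. Applying Lemma~\ref{lem:soa-stability} with $\MH_{\hat v}$ as the irreducible subclass of $\MH|_{S^\st}$ and of $\MH|_{\ba(\hat v)}$ gives $\|\soaf{\MH_{\hat v}} - \soaf{\MH|_{S^\st}}\|_\infty \leq 1$ and $\|\soaf{\MH_{\hat v}} - \soaf{\MH|_{\ba(\hat v)}}\|_\infty \leq 1$, hence $\|\soaf{\MH|_{S^\st}} - \soaf{\MH|_{\ba(\hat v)}}\|_\infty \leq 2$. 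Applying Lemma~\ref{lem:soa-stability} with $\MH|_{S^\st}$ as the irreducible subclass of $\MJ|_{S^\st}$, and with $\MH|_{\ba(\hat v)}$ as the irreducible subclass of $\MJ'|_{\ba(\hat v)}$, gives $\|\soaf{\MH|_{S^\st}} - \soaf{\MJ|_{S^\st}}\|_\infty \leq 1$ and $\|\soaf{\MH|_{\ba(\hat v)}} - \soaf{\MJ'|_{\ba(\hat v)}}\|_\infty \leq 1$. The triangle inequality then yields $\|\soaf{\MJ|_{S^\st}} - \soaf{\MJ'|_{\ba(\hat v)}}\|_\infty \leq 1+2+1 = 4$, as required. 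The main obstacle is Step~1: one must maintain irreducibility while descending a possibly incomplete tree, and the value $k_t$ that preserves irreducibility at a given node (supplied by Lemma~\ref{lem:lm1-irred}) need not be the SOA value of the original class, so Lemma~\ref{lem:many-irred} cannot be invoked in one shot; the iteration of Lemma~\ref{lem:lm1-irred} is what makes this work.
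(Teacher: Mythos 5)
Your proof is correct, and it reaches the same leaf $\hat v$ and the same bound by a genuinely different route; let me compare the two and flag one misconception in your closing remark.

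For Step~1, the paper does not need Lemma~\ref{lem:lm1-irred} at all: it sets $k_t := \soa{\MH|_{S^\st}}{\bx_t(k_{1:t-1})}$ at \emph{every} level, using the SOA hypothesis of the \emph{fixed} class $\MH|_{S^\st}$ rather than of the successively restricted class. Then every pair in $\ba(\hat v)$ has the form $(x,\soa{\MH|_{S^\st}}{x})$, which is precisely the hypothesis of Lemma~\ref{lem:many-irred}, so a single application with $\MG = \MH|_{S^\st}$, $\ba = \ba(\hat v)$, and $|\ba(\hat v)| \leq \ell - \ell'$ gives that $\MH|_{S^\st \cup \ba(\hat v)}$ is $\ell'$-irreducible with $\sfat_2 = q^\st$. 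Your version instead re-applies Lemma~\ref{lem:lm1-irred} to the \emph{current} restriction $(\MH|_{S^\st})|_{\ba(v_{t-1})}$ and hand-tracks the drop $\ell \to \ell-1 \to \cdots \to \ell - T \geq \ell'$; this works, but it is an iterated argument where the paper's is one-shot. Your closing sentence, that ``Lemma~\ref{lem:many-irred} cannot be invoked in one shot,'' is therefore based on a misconception: the obstacle you describe disappears if you descend by the SOA value of the fixed class $\MH|_{S^\st}$ rather than by the output of Lemma~\ref{lem:lm1-irred}. Conversely, your Step~3 is cleaner than the paper's: by routing explicitly through $\soaf{\MH_{\hat v}}$ (which is irreducible and sandwiched inside both $\MH|_{S^\st}$ and $\MH|_{\ba(\hat v)}$ with equal $\sfat_2$), you obtain the bound $4$ from four applications of Lemma~\ref{lem:soa-stability} and the triangle inequality, whereas the paper introduces an auxiliary value $k(x)$ and essentially re-derives the content of Lemma~\ref{lem:soa-stability} inline via Lemma~\ref{lem:consec-k}. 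Both arguments implicitly require $\ell' \geq 1$ (so that $\MH_{\hat v}$ and $\MH|_{\ba(\hat v)}$ are at least $1$-irreducible); that is consistent with the lemma's standing assumption $\ell, \ell' \in \BN$ and how it is invoked downstream.
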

\begin{proof}
  The fact that $\MH|_{S^\st}$ is $\ell$-irreducible together with (\ref{eq:gh-equal}) and Lemma \ref{lem:irred-hg} gives that $\MG|_{S^\st}$ and $\MJ|_{S^\st}$ are $\ell$-irreducible for any $\MJ$ satisfying $\MH \subset \MJ \subset \MG$.

  We now define a leaf $\hat v$ of $\bx$ as follows: first choose $k_1 := \soa{\MH|_{S^\st}}{\bx_1}$, then for $t > 2$, if the node corresponding to the sequence $(k_1, \ldots, k_{t-1})$ is not a leaf of $\bx$, set $k_t := \soa{\MH|_{S^\st}}{\bx_t(k_{1:t-1})}$. This process will stop (i.e., the node corresponding to $(k_1, \ldots, k_t)$ will be a leaf for some $t$) after at most $\ell - \ell'$ steps (since $\height(\bx) \leq \ell - \ell'$), and we let the resulting leaf be $\hat v$. Since $|\ba(\hat v)| \leq \height(\bx) \leq \ell - \ell'$ and for each $(x,y) \in \ba(\hat v)$ we have $y = \soa{\MH|_{S^\st}}{x}$, by Lemma \ref{lem:many-irred}, it holds that $\MH|_{S^\st \cup \ba(\hat v)}$ is $\ell'$-irreducible and satisfies $\sfat_2(\MH|_{S^\st \cup \ba(\hat v)}) = \sfat_2(\MH|_{S^\st}) = q^\st$.  

  Next, using the assumption that $\sfat_2(\MH|_{S^\st}) = q^\st \geq \sfat_2(\MG|_{\ba(\hat v)})$ (as $\hat v$ is a leaf of $\bx$) together with the $\ell'$-irreducibility of $\MH|_{S^\st \cup \ba(\hat v)}$, we see that for any $\K$-ary $\MX$-valued tree $\by$ of depth at most $\ell'$, there is some leaf $u$ of $\by$ so that
  \begin{align}
    \sfat_2(\MH|_{\ba(\hat v) \cup \ba(u)}) & \geq \sfat_2(\MH|_{S^\st \cup \ba(\hat v) \cup \ba(u)}) \label{eq:remove-sst}\\
                                            & = \sfat_2(\MH|_{S^\st \cup \ba(\hat v)}) \nonumber\\
                                            & = \sfat_2(\MH|_{S^\st}) \label{eq:only-sst}\\
                                            &\geq \sfat_2(\MG|_{\ba(\hat v)}) \geq \sfat_2(\MH|_{\ba(\hat v)}).\label{eq:gh-av}
  \end{align}
  Since $\MH|_{\ba(\hat v) \cup \ba(u)} \subset \MH|_{\ba(\hat v)}$, it follows that the inequalities in (\ref{eq:remove-sst}) and (\ref{eq:gh-av}) are equalities. For any $x \in \MX$, interpret it as a depth-0 tree $\by$ whose root node is labeled by $x$, set $k(x) \in [\K]$ to be the value ensuring that (\ref{eq:remove-sst}) through (\ref{eq:gh-av}) holds. It then follows from Lemma \ref{lem:consec-k} that
  \begin{equation}
    \label{eq:hv-k-close}
    | \soa{\MH|_{\ba(\hat v)}}{x} - k(x)| \leq 1
  \end{equation}
  for all $x \in \MX$.

  From equalities (\ref{eq:remove-sst}) through (\ref{eq:only-sst}), we have that for all $x \in \MX$ (again letting the tree $\by$ be the depth-0 tree whose root is labeled by $x$), $\sfat_2(\MH|_{S^\st}) = \sfat_2(\MH|_{S^\st \cup \{ (x, k(x)) \}})$. Thus
  \begin{equation}
    \label{eq:hsst-k-close}
    | \soa{\MH|_{S^\st}}{x}  - k(x) | \leq 1
  \end{equation}
  for all $x \in \MX$.

  Since $\MH|_{S^\st}$ is irreducible, by Lemma \ref{lem:soa-stability}, we have that for all $x \in \MX$ and $\MJ$ satisfying $\MH \subset \MJ \subset \MG$,
  \begin{align}
    \label{eq:hj-sst-close}
| \soa{\MH|_{S^\st}}{x} - \soa{\MJ|_{S^\st}}{x} | \leq 1.
  \end{align}
  From (\ref{eq:hv-k-close}), (\ref{eq:hsst-k-close}), (\ref{eq:hj-sst-close}) and the triangle inequality we see that $\| \soaf{\MJ|_{S^\st}} - \soaf{\MH|_{\ba(\hat v)}} \|_\infty \leq 3$. This establishes the desired closeness of SOA hypotheses for $\MJ' = \MH$. Before establishing this for all $\MH'$ satisfying $\MH \subset \MJ' \subset \MG$, we first show items \ref{it:gh-q} and \ref{it:h-irred}. 

  Using (\ref{eq:only-sst}) and (\ref{eq:gh-av}) (which, as we argued above, are all equalities) gives that $\sfat_2(\MH|_{\ba(\hat v)}) = \sfat_2(\MG|_{\ba(\hat v)}) = q^\st$, establishing item \ref{it:gh-q}. Item \ref{it:h-irred} is a consequence of the fact that $\MH|_{S^\st \cup \ba(\hat v)}$ is $\ell'$-irreducible, $\sfat_2(\MH|_{S^\st \cup \ba(\hat v)}) = \sfat_2(\MH|_{\ba(\hat v)})$ (by (\ref{eq:remove-sst}) through (\ref{eq:gh-av})), and Lemma \ref{lem:irred-hg}.
  
  Items \ref{it:gh-q} and \ref{it:h-irred} together with Lemma \ref{lem:soa-stability} imply that for any hypothesis class $\MJ'$ satisfying $\MH \subset \MJ' \subset \MG$, we have that
  \begin{equation}
    \label{eq:jp-v-close}
\| \soaf{\MJ'|_{\ba(\hat v)}} - \soaf{\MH'|_{\ba(\hat v)}} \|_\infty \leq 1.
\end{equation}
Then (\ref{eq:hv-k-close}), (\ref{eq:hsst-k-close}), (\ref{eq:hj-sst-close}), and (\ref{eq:jp-v-close}) together with the triangle inequality give that $\| \soaf{\MJ'|_{\ba(\hat v)}} - \soaf{\MJ|_{S^\st}} \|_\infty \leq 4$ for all $\MJ', \MJ$ satisfying $\MH \subset \MJ' \subset \MG,\ \MH \subset \MJ \subset \MG$.
\end{proof}

\subsection{\ReduceTreeReg algorithm}
\begin{algorithm}[!htp]
  \caption{\bf \ReduceTreeReg}\label{alg:reduce-tree}
  \KwIn{Parameters $n, \ell' \in \BN$, $\alpha_\Delta, \alpha_1 \in \BR_+$. Distribution $\emp$ over $\MX$. Hypothesis class $\MF$, with $d := \sfat_2(\MF)$.}
 \arxiv{\begin{enumerate}[leftmargin=14pt,rightmargin=20pt,itemsep=1pt,topsep=1.5pt]}
 \colt{\begin{enumerate}[leftmargin=20pt,rightmargin=20pt,itemsep=1pt,topsep=1.5pt]}
\item Initialize a counter $t = 1$ ($t$ counts the depth of the tree constructed at each step of the algorithm).
\item For $1 < t \leq d+1$, set $\alpha_t := \alpha_1 - (t-1) \cdot \alpha_\Delta$. 
\item For $1 \leq t \leq d$, set $\ell_t := \ell' \cdot 2^{t}$.\label{it:define-kt}
\item Initialize $\hat \bx\^0 = \{ v_0 \}$ to be a tree with a single (unlabeled) leaf $v_0$. (In general $\hat \bx\^t$ will be the tree produced by the algorithm after step $t$ is completed.)
\item Initialize $\hat \ML_1 = \{ v_0 \}$. (In general $\hat \ML_t$ will be the set of leaves of the tree before step $t$ is started.)
\item For $t \in \{1,2, \ldots, d\}$:
  \begin{enumerate}
  \item For each leaf $v \in \hat \ML_{t}$ and $\alpha \geq 0$, set $\gRes{\alpha}{v} := \MF_{\emp, \alpha}|_{\ba(v)}$. (Note that since the only way the tree changes from round to round is by adding children to existing nodes, $\ba(v)$ will never change for a node $v$ that already exists.)
  \item \label{it:sup-ldim-alg} Let $\hat w_t^\st := \max_{v \in \hat \ML_{t}} \sfat_2(\gRes{\alpha_t}{v})$ be the maximum sequential fat-shattering dimension of any of the classes $\gRes{\alpha_t}{v}.$ 

    Also let $\hat \ML_{t}' := \{ v \in \hat \ML_{t} : \sfat_2(\gRes{\alpha_t}{v}) = \hat w_t^\st\}$.
  \item If $\hat w_t^\st < 0$, halt and output ERROR. {\it (We show that this never occurs under appropriate assumptions in Lemma \ref{lem:exists-irred-node}.)}\label{it:halt-error}
  \item \label{it:ldim-break-step} 
  If there is some $v \in \hat \ML_{t}'$ so that $\sfat_2(\hat{\MG}(\alpha_t-\alpha_\Delta, v)) = \sfat_2(\hat{\MG}(\alpha_t, v)) \geq 0$ and $\hat{\MG}(\alpha_t - \alpha_\Delta, v)$ is $\ell_t$-irreducible, then break out of the loop and go to step~\ref{it:tfinal-define}.
  \item \label{it:red-loop} Else, for each node $v \in \hat\ML_{t}'$:
    \begin{enumerate}
    \item If $\gRes{\alpha_t}{v}$ is empty or $\sfat_2(\hat{\MG}(\alpha_t-\alpha_\Delta, v)) < \sfat_2(\hat{\MG}(\alpha_t, v))$, move on to the next $v$.
    \item \label{it:decrease-ldim} Else, we must have that $\hat{\MG}(\alpha_t - \alpha_\Delta, v)$ is not $\ell_t$-irreducible. Let $ \ell_v$ be chosen as small as possible so that $\gRes{\alpha_t - \alpha_\Delta}{v}$ is not $\ell_v$-irreducible; then $ \ell_v \leq \ell_t$. %
      Then there is some $\K$-ary $\MX$-valued  tree $\bx$ of depth $ \ell_v$, so that for any choice of $k_1, \ldots, k_{ \ell_v} \in [\K]$, we have
      \begin{equation}
        \label{eq:split-reduce}
\sfat_2(\gRes{\alpha_t - \alpha_\Delta}{v}|_{(\bx_1, k_1), \ldots, (\bx_{ \ell_v}(k_{1: \ell_v-1}),k_{ \ell_v})}) < \sfat_2(\gRes{\alpha_t - \alpha_\Delta}{v}).
      \end{equation}
      
      \item Attach the tree $\bx$ to $\hat \bx\^{t-1}$ via the leaf $v$ (per Definition \ref{def:attach}).\label{it:attach-trees}
      \end{enumerate}
      \item Let the current tree (with the additions of the previous step) be denoted by $\hat \bx\^{t}$, and let $\hat \ML_{t+1}$ be the list of the leaves of $\hat \bx\^{t}$, i.e., the nodes which have not (yet) been assigned labels or children.
  \end{enumerate}
\item \label{it:tfinal-define} Let $t_{\final}$ be the final value of $t$ the algorithm {\it completed} the loop of step~\ref{it:red-loop} for before breaking out of the above loop (i.e., if the break at step~\ref{it:ldim-break-step} was taken at step $t$, then $t_{\final} = t-1$; if the break was never taken, then $t_{\final} = d$). Let $\hat w_{t_{\final}+1}^\st$ and $\hat \ML_{t_{\final}+1}'$ be defined as in Step~\ref{it:sup-ldim-alg}.
\item
Output the set $\hMLp := \hat \ML_{t_{\final}+1}'$ of leaves of the tree $\hat \bx\^{t_{\final}}$, and the tree $\hat \bx := \hat \bx\^{t_{\final}}$.
  Finally, output the set 
  \begin{equation}
    \label{eq:redtree-output}
\hat \MS := \{ \soaf{\gRes{\alpha_{t_{\final} + 1} - 2\alpha_\Delta/3}{v}} : \text{$v \in \hMLp$ and $\gRes{\alpha_{t_{\final}+1} - 2\alpha_\Delta/3}{v}$ is $\ell'$-irreducible \& nonempty} \}.
  \end{equation}
  \end{enumerate}
\end{algorithm}

Throughout this section we fix a positive integer $\K$, a distribution $P$ on $\MX \times [\K]$, a function class $\MF \subset [\K]^\MX$, and write $d := \sfat_2(\MF)$. The algorithm \ReduceTreeReg takes as input some parameters $k' \in \BN$, $\alpha_1, \alpha_\Delta > 0$, as well as some dataset $S_n \in (\MX \times [\K])^n$ consisting of $n$ samples $(x,k) \in \MX \times [\K]$, which is accessed through its empirical distribution $\emp$. Given these parameters, define the event $E_{\good}$ to be
\begin{equation}
  \label{eq:def-egood}
E_{\good} := \left\{ \sup_{f \in \MF} \left| \err{P}{f} - \err{\emp}{f} \right| \leq \frac{\alpha_\Delta}{6} \right\}.
\end{equation}
Though the algorithm \ReduceTreeReg is well-defined regardless of whether $E_{\good}$ holds, several of the lemmas in this section regarding correctness of \ReduceTreeReg will rely on $E_{\good}$ holding; in Section \ref{sec:reglearn} we will show that when the dataset $S_n$ is drawn according to an appropriate distribution, $E_{\good}$ will hold with high probability with respect to this draw. 

The below lemma is analogous to Lemma 5.1 of \cite{ghazi_sample_2020}.
\begin{lemma}
  \label{lem:exists-irred-node}
  Suppose the inputs $S_n, \alpha_1, \alpha_\Delta$ of \ReduceTreeReg are chosen so that $\MF_{\emp, \alpha_1 - (d+1) \cdot \alpha_\Delta}$ is nonempty. Then \ReduceTreeReg never halts and outputs ERROR at step \ref{it:halt-error}. Moreover, the set $\hat \ML'$ output by \ReduceTreeReg satisfies the following property: letting $t = t_{\final} + 1 \in [d+1]$, there is some leaf $v \in \hat \ML'$ so that $\sfat_2(\gRes{\alpha_t - \alpha_\Delta}{v}) = \sfat_2(\gRes{\alpha_t}{v}) \geq 0$ and $\gRes{\alpha_t - \alpha_\Delta}{v}$ is $\ell_t$-irreducible. 
\end{lemma}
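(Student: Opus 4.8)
The plan is to maintain two invariants of \ReduceTreeReg as the round counter $t$ increases: a \emph{covering} property of the current leaf set, and a strict decrease of the quantity $\hat w_t^\st$ at each round at which the reduction loop (step~\ref{it:red-loop}) is completed. First I would prove, by induction on $t$, that for every $\alpha \ge 0$ one has $\bigcup_{v \in \hat\ML_t} \MF_{\emp,\alpha}|_{\ba(v)} = \MF_{\emp,\alpha}$. The base case $t=1$ is immediate since $\hat\ML_1=\{v_0\}$ with $\ba(v_0)=\emptyset$. For the inductive step, note that $\hat\ML_{t+1}$ is obtained from $\hat\ML_t$ by keeping every leaf not in $\hat\ML_t'$, keeping every leaf of $\hat\ML_t'$ to which no subtree is attached, and, for each $v \in \hat\ML_t'$ to which a complete $\K$-ary subtree $\bx$ is attached in step~\ref{it:attach-trees}, replacing $v$ by the leaves of $\bx$. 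Attaching a subtree at one leaf leaves the ancestor sets of all other leaves unchanged (so $\gRes{\cdot}{v}$ is well-defined across rounds), and any $f \in \MF_{\emp,\alpha}|_{\ba(v)}$ routes from the root of $\bx$ to a leaf $w'$ by always following the edge labeled $f(\cdot)$, so $f \in \MF_{\emp,\alpha}|_{\ba(v)\cup\ba_\bx(w')} = \MF_{\emp,\alpha}|_{\ba(w)}$ for the corresponding leaf $w$ of the enlarged tree (here $\ba_\bx(w')$ is the ancestor set of $w'$ inside $\bx$, and I use that restriction composes, $\MF|_{S\cup S'}=(\MF|_S)|_{S'}$). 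This gives the covering invariant.

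The covering invariant immediately rules out the ERROR branch: for any round $1 \le t \le d$ reached without a prior break, $\alpha_t \ge \alpha_1 - (d-1)\alpha_\Delta > \alpha_1 - (d+1)\alpha_\Delta$, so $\MF_{\emp,\alpha_t} \supseteq \MF_{\emp,\alpha_1-(d+1)\alpha_\Delta}\ne\emptyset$; hence some leaf $v$ has $\gRes{\alpha_t}{v}$ nonempty, so $\hat w_t^\st \ge 0$ and step~\ref{it:halt-error} does not fire. For the decreasing invariant, whenever step~\ref{it:red-loop} is completed at round $t$ the break condition of step~\ref{it:ldim-break-step} has failed, so each $v \in \hat\ML_t'$ (all with $\sfat_2(\gRes{\alpha_t}{v}) = \hat w_t^\st \ge 0$) is in one of two cases: either $\sfat_2(\gRes{\alpha_t-\alpha_\Delta}{v}) < \hat w_t^\st$, so $v$ stays a leaf and $\gRes{\alpha_{t+1}}{v} = \gRes{\alpha_t-\alpha_\Delta}{v}$ (using $\alpha_{t+1}=\alpha_t-\alpha_\Delta$) has $\sfat_2 < \hat w_t^\st$; or $\sfat_2(\gRes{\alpha_t-\alpha_\Delta}{v}) = \hat w_t^\st$ and $\gRes{\alpha_t-\alpha_\Delta}{v}$ is not $\ell_t$-irreducible, so a complete subtree $\bx$ is attached for which every leaf $w'$ obeys (\ref{eq:split-reduce}), i.e.\ $\sfat_2(\gRes{\alpha_{t+1}}{w}) < \hat w_t^\st$ for the corresponding new leaf $w$. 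Leaves of $\hat\ML_{t+1}$ descending from $\hat\ML_t \setminus \hat\ML_t'$ already have $\sfat_2(\gRes{\alpha_t}{v}) < \hat w_t^\st$, hence $\sfat_2(\gRes{\alpha_{t+1}}{v}) < \hat w_t^\st$. So $\hat w_{t+1}^\st \le \hat w_t^\st - 1$; combined with $\hat w_1^\st = \sfat_2(\MF_{\emp,\alpha_1}) \le \sfat_2(\MF) = d$, if step~\ref{it:red-loop} is completed for all $t \le d$ then $\hat w_{d+1}^\st \le 0$, while the covering invariant gives $\hat w_{d+1}^\st \ge 0$, so $\hat w_{d+1}^\st = 0$.

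Finally, to establish the ``irreducible leaf'' conclusion, set $t := t_{\final}+1 \in [d+1]$ and split into two cases. If the break of step~\ref{it:ldim-break-step} was taken at a round $t \le d$, then $\hat\ML' = \hat\ML_t'$ and the break condition \emph{is} precisely the asserted statement, supplying the witness leaf $v$. If the break was never taken, then $t = d+1$ and $\hat w_{d+1}^\st = 0$; using the covering invariant with $\alpha = \alpha_1-(d+1)\alpha_\Delta = \alpha_{d+1}-\alpha_\Delta$, choose a leaf $v \in \hat\ML_{d+1}$ with $\gRes{\alpha_{d+1}-\alpha_\Delta}{v}$ nonempty. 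Then $0 \le \sfat_2(\gRes{\alpha_{d+1}-\alpha_\Delta}{v}) \le \sfat_2(\gRes{\alpha_{d+1}}{v}) \le \hat w_{d+1}^\st = 0$ forces both sequential fat-shattering dimensions to equal $0$, so $v \in \hat\ML_{d+1}'$, and $\gRes{\alpha_{d+1}-\alpha_\Delta}{v}$ is $\ell$-irreducible for every $\ell \in \BN$ by Lemma~\ref{lem:0dim-irred}, which in particular gives the required $\ell_t$-irreducibility.

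I expect the main obstacle to be purely organizational bookkeeping rather than mathematical: carefully verifying that $\hat\ML_{t+1}$ decomposes into exactly the three kinds of leaves described, that ancestor sets of untouched leaves persist across rounds so that $\gRes{\cdot}{v}$ remains meaningful throughout, and that restriction to ancestor sets composes as used — the genuine content (existence of a reducing subtree witnessing failure of $\ell_t$-irreducibility, and $0$-dimensional classes being irreducible at every level) being already provided by the algorithm's specification and Lemma~\ref{lem:0dim-irred}.
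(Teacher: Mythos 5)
Your proposal is correct and follows essentially the same strategy as the paper's proof: establish the covering identity $\MF_{\emp,\alpha}=\bigcup_{v\in\hat\ML_t}\gRes{\alpha}{v}$ to rule out ERROR and to supply the witness leaf when no break occurs, prove strict decrease $\hat w_{t+1}^\st<\hat w_t^\st$ via the same three-way case analysis on how leaves of $\hat\ML_{t+1}$ arise, deduce $\hat w_{d+1}^\st=0$, and invoke Lemma~\ref{lem:0dim-irred} in the no-break case. The only cosmetic difference is that you make the routing argument for the covering invariant and the final $v\in\hat\ML_{d+1}'$ step slightly more explicit than the paper does.
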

\begin{proof}
  If, for some $t$, the algorithm \ReduceTreeReg breaks at step \ref{it:ldim-break-step}, then the inclusion of the lemma is immediate: the condition to break in step \ref{it:ldim-break-step} gies that for some $v \in \hat \ML_t' = \hat \ML_{t_{\final}+1}'$, we have that $\sfat_2(\gRes{\alpha_t - \alpha_\Delta}{v}) = \sfat_2(\gRes{\alpha_t}{v}) \geq 0$ and $\gRes{\alpha_t - \alpha_\Delta}{v}$ is $\ell_t$-irreducible.

  Next we show that the algorithm never halts and outputs ERROR at step \ref{it:halt-error}. Note that for each $1 \leq t \leq d+1$, the tree $\hat \bx\^{t-1}$ has the property that each non-leaf node has exactly $\K$ children, one corresponding to each label in $[\K]$ (this is by Definition \ref{def:xv-tree}); thus, we have that, for each $t$, and each $\alpha \geq 0$,
  \begin{equation}
    \label{eq:leaves-decompose}
\MF_{\emp, \alpha} = \bigcup_{v \in \hat \ML_t} \MF_{\emp, \alpha}|_{\ba(v)} = \bigcup_{v \in \hat \ML_t} \gRes{\alpha}{v}.
\end{equation}
Since $\MF_{\emp, \alpha_1 - (d+1) \cdot \alpha_\Delta}$ is nonempty (by assumption), $\MF_{\emp, \alpha_t} \supset \MF_{\emp, \alpha_1 - (d+1) \cdot \alpha_\Delta}$ is nonempty for $1 \leq t \leq d+1$. Thus there is some $v \in \hat \ML_t$ so that $\gRes{\alpha_t}{v}$ is nonempty, i.e., $\hat w_t^\st = \max_{v \in \hat \ML_t} \sfat_2(\gRes{\alpha_t}{v}) \geq 0$. 

Otherwise, the algorithm performs a total of $d$ iterations. We claim that $\hat w_{d+1}^\st = 0$. We first show that for all $t \geq 1$, $\hat w_{t+1}^\st < \hat w_t^\st$. To see this, note that each leaf $v$ in $\hat \ML_{t+1}$ belongs to one of the following categories:
\begin{itemize}
\item $v \in \hat \ML_t \backslash \hat \ML_t'$. (This includes the case that $\gRes{\alpha_t}{v}$ is empty.) In this case, we have
  $$
\sfat_2(\gRes{\alpha_{t+1}}{v}) \leq \sfat_2(\gRes{\alpha_t}{v}) < \hat w_t^\st.
$$
\item $v \in \hat \ML_t'$ and $\sfat_2(\gRes{\alpha_t - \alpha_\Delta}{v}) < \sfat_2(\gRes{\alpha_t}{v})$. Using that $\alpha_{t+1} = \alpha_t - \alpha_\Delta$, we obtain
  $$
  \sfat_2(\gRes{\alpha_{t+1}}{v}) = \sfat_2(\gRes{\alpha_t - \alpha_\Delta}{v}) < \sfat_2(\gRes{\alpha_t}{v}) \leq \hat w_t^\st.
  $$
\item $v$ corresponds to some leaf $u$ of some $\K$-ary $\MX$-valued tree $\bx$ which is attached to $\hat \bx\^{t-1}$ via some leaf $v_0$ of $\hat \bx\^{t-1}$ (as constructed in steps \ref{it:decrease-ldim} and \ref{it:attach-trees} of the algorithm). Then $\ba(v) = \ba(v_0) \cup \ba(u)$, and so
  \begin{align*}
    \sfat_2(\gRes{\alpha_{t+1}}{v}) & \leq \sfat_2(\MF_{\emp, \alpha_t - \alpha_\Delta}|_{\ba(v)}) \\
                                    & =\sfat_2(\MF_{\emp, \alpha_t - \alpha_\Delta}|_{\ba(v_0) \cup \ba(u)}) \\
                                    &< \sfat_2(\MF_{\emp, \alpha_t - \alpha_\Delta}|_{\ba(v_0)}) \\
    & \leq \hat w_t^\st,
  \end{align*}
  where the strict inequality follows from (\ref{eq:split-reduce}) (the set $\{ (\bx_1, k_1), \ldots, (\bx_{\ell_v}(k_{1:\ell_v-1}), k_{\ell_v})\}$ is exactly $\ba(u)$), and the last inequality follows from the fact that $v_0 \in \hat \ML_t$. 
\end{itemize}
Thus all leaves $v$ in $\hat \ML_{t+1}$ satisfy $\sfat_2(\gRes{\alpha_{t+1}}{v}) < \hat w_t^\st$, i.e., $\hat w_{t+1}^\st < \hat w_t^\st$. 
Since $\hat w_1^\st \leq d$ as $\gRes{\alpha_t}{v} \subset \MF$, we obtain that $\hat w_{d+1}^\st \leq 0$. We have already shown that $\hat w_{d+1}^\st \geq 0$, and so $\hat w_{d+1}^\st = 0$.

By assumption, $\MF_{\emp, \alpha_{d+1} - \alpha_\Delta} = \MF_{\emp, \alpha_1 - (d+1) \cdot \alpha_\Delta}$ is nonempty, and therefore, by (\ref{eq:leaves-decompose}), and therefore, for some leaf $v \in \hat \ML_{d+1}' = \hat \ML'$, we have $\sfat_2(\gRes{\alpha_{d+1}}{v}) = \sfat_2(\gRes{\alpha_{d+1} - \alpha_\Delta}{v}) = 0$. Moreover, $\gRes{\alpha_{d+1} - \alpha_\Delta}{v}$ is $\ell_{d+1}$-irreducible since a class with sequential fat-shattering dimension 0 is $\ell$-irreducible for all $\ell \in \BN$ (Lemma \ref{lem:0dim-irred}).
\end{proof}

The below lemma is analogous to Lemma 5.2 of \cite{ghazi_sample_2020}.
\begin{lemma}
  \label{lem:depth-ub}
For all $t$ the tree $\hat \bx\^t$ of Algorithm \ref{alg:reduce-tree} has depth at most $\ell_{t+1} - \ell'$. In particular, the tree $\hat \bx$ has depth at most $\ell_{t_{\final} + 1} - \ell'$. 
\end{lemma}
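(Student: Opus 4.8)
The plan is a straightforward induction on $t$, tracking how much the depth of the tree can grow in a single iteration of the main loop of \ReduceTreeReg. Concretely, I would prove that $\height(\hat \bx\^t) \leq \sum_{s=1}^{t} \ell_s$ for every $t \geq 0$, where the empty sum gives $\height(\hat \bx\^0) = 0$ — which holds since $\hat \bx\^0$ is a single (unlabeled) leaf. I would then convert this into the stated bound by evaluating the geometric sum.

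For the inductive step, the key observation is that the only place the tree is modified during iteration $t$ is step~\ref{it:red-loop}: for each leaf $v \in \hat \ML_{t}'$, the algorithm attaches (per Definition~\ref{def:attach}) a $\K$-ary $\MX$-valued tree $\bx$ of depth $\ell_v$ via the leaf $v$, and step~\ref{it:decrease-ldim} guarantees $\ell_v \leq \ell_t$. Since the leaves being processed all lie in $\hat \ML_{t}$ — the set of leaves present \emph{before} iteration $t$ began — and since attaching a tree of depth at most $\ell_t$ via a leaf at depth at most $\height(\hat \bx\^{t-1})$ yields a tree of depth at most $\height(\hat \bx\^{t-1}) + \ell_t$, we obtain $\height(\hat \bx\^{t}) \leq \height(\hat \bx\^{t-1}) + \ell_t$. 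Combined with the inductive hypothesis this gives $\height(\hat \bx\^{t}) \leq \sum_{s=1}^{t} \ell_s$. The one bookkeeping point to get right here is that the leaves newly created by these attachments are placed in $\hat \ML_{t+1}$ and are not re-processed within iteration $t$, so no single root-to-leaf branch is extended more than once during a given iteration.

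Finally I would substitute $\ell_s = \ell' \cdot 2^s$ to get $\sum_{s=1}^{t} \ell_s = \ell'(2^{t+1} - 2) \leq \ell'(2^{t+1} - 1) = \ell_{t+1} - \ell'$, which is the claimed bound on $\height(\hat \bx\^t)$. For the ``in particular'' clause: once the loop breaks at step~\ref{it:ldim-break-step} (or after the iteration $t = d$ completes) no further nodes are attached, so $\hat \bx = \hat \bx\^{t_{\final}}$; moreover $t_{\final} \leq d$, so all the $\ell_s$ appearing are well-defined, and applying the bound with $t = t_{\final}$ finishes the proof. I do not anticipate a genuine obstacle here — the proof is essentially a geometric-series bookkeeping argument; the only subtlety is pinning down exactly which nodes get processed in which iteration and using that Definition~\ref{def:attach} increases the relevant depth additively by precisely the depth of the attached tree.
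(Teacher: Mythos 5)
Your proposal is correct and matches the paper's own proof: both perform induction on $t$ using the key bound $\height(\hat \bx\^t) \leq \height(\hat \bx\^{t-1}) + \ell_t$ from the attachment in step~\ref{it:attach-trees}, and differ only in whether the intermediate bound is phrased as $\sum_{s=1}^t \ell_s$ or directly as $\ell_{t+1} - \ell'$ (which are the same geometric sum). The final remark about $t_{\final}$ and the break point also mirrors the paper.
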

\begin{proof}
  We prove by induction that the depth of $\hat \bx\^t$, denoted $\height(\hat \bx\^t)$, satisfies $\height(\hat \bx\^t) \leq \ell_{t+1} - \ell' = \ell' \cdot 2^{t+1} - \ell'$. For the base case, note that $\height(\hat \bx\^0) = 0 < 2\ell' - \ell' = \ell' \cdot 2^t - \ell'$. For any $t > 0$, The only step of \ReduceTreeReg at which $\hat \bx\^{t-1}$ is modified (to produce $\hat \bx\^{t}$) is step \ref{it:decrease-ldim}, when some trees of depth at most $\ell_t$ are attached to $\hat \bx\^{t-1}$ via some leaves. Thus we have
  $$
\height(\hat \bx\^t) \leq \height(\hat \bx\^{t-1}) + \ell_t \leq \ell_t - \ell' + \ell_t = \ell_{t+1} - \ell'.
$$
\end{proof}

For each $\alpha > 0$ and $t \in [d+1]$, define the set:
\begin{equation}
  \label{eq:define-m}
\MM_{\alpha,t} := \left\{ S \in ( \MX \times [\K])^{\leq (\ell_t - \ell')} : \substack{
\text{$\MF_{P,\alpha-\alpha_\Delta/3}|_{S}$ is $\ell_t$-irreducible and nonempty,}  \\ 
\text{and $\sfat_2(\MF_{P,\alpha-\alpha_\Delta/3}|_{S}) = \sfat_2(\MF_{P,\alpha+\alpha_\Delta/3}|_{S})$}
}  \right\}.
\end{equation}
Notice that $\MM_{\alpha, t}$ depends on $\MF, P$. 
The below lemma is analogous to Lemma 5.3 of \cite{ghazi_sample_2020}.
\begin{lemma}
  \label{lem:m-nonempty}
Suppose that $E_{\good}$ holds. Then for $t = t_{\final} + 1$, the set $\MM_{\alpha_t - \alpha_\Delta/2, t}$ is nonempty.
\end{lemma}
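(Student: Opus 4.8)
The plan is to exhibit an explicit element of $\MM_{\alpha_t - \alpha_\Delta/2, t}$, namely $S = \ba(v)$ where $v$ is the leaf supplied by Lemma \ref{lem:exists-irred-node}. Write $t := t_{\final}+1 \in [d+1]$. Applying Lemma \ref{lem:exists-irred-node} (its non-emptiness hypothesis holding under $E_{\good}$ and the standing assumptions relating $\alpha_1,\alpha_\Delta$ to $\inf_{f\in\MF}\err{P}{f}$), we obtain a leaf $v \in \hat\ML'$ of the tree $\hat\bx = \hat\bx\^{t_{\final}}$ such that $\gRes{\alpha_t-\alpha_\Delta}{v} = \MF_{\emp,\alpha_t-\alpha_\Delta}|_{\ba(v)}$ is $\ell_t$-irreducible and nonempty, and $\sfat_2\big(\MF_{\emp,\alpha_t-\alpha_\Delta}|_{\ba(v)}\big) = \sfat_2\big(\MF_{\emp,\alpha_t}|_{\ba(v)}\big) \geq 0$. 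Since $v$ is a leaf of $\hat\bx$, which has depth at most $\ell_{t_{\final}+1}-\ell' = \ell_t - \ell'$ by Lemma \ref{lem:depth-ub}, we have $|\ba(v)| \leq \height(v) \leq \ell_t - \ell'$; listing the pairs along the root-to-$v$ path gives a tuple $S$ of length at most $\ell_t-\ell'$ whose underlying set is $\ba(v)$, so $S \in (\MX\times[\K])^{\leq(\ell_t-\ell')}$, and $\MF|_S = \MF|_{\ba(v)}$ for every restriction.

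Next I would sandwich the relevant population-error classes between $\MF_{\emp,\alpha_t-\alpha_\Delta}$ and $\MF_{\emp,\alpha_t}$. Applying (\ref{eq:sandwich}) with $\alpha_0 = \alpha_\Delta/6$ (valid under $E_{\good}$) twice, together with monotonicity of $\MF_{P,\cdot}$, yields the chain $\MF_{\emp,\alpha_t-\alpha_\Delta} \subseteq \MF_{P,\alpha_t-5\alpha_\Delta/6} \subseteq \MF_{P,\alpha_t-\alpha_\Delta/6} \subseteq \MF_{\emp,\alpha_t}$; restricting every term to $\ba(v)$ preserves these inclusions. Because the first and last restricted classes have equal $\sfat_2$ by the previous paragraph, monotonicity of $\sfat_2$ under inclusion forces all four restricted classes to share that common value of $\sfat_2$. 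Setting $\alpha := \alpha_t - \alpha_\Delta/2$, one checks $\alpha - \alpha_\Delta/3 = \alpha_t - 5\alpha_\Delta/6$ and $\alpha + \alpha_\Delta/3 = \alpha_t - \alpha_\Delta/6$, so the two interior classes are exactly $\MF_{P,\alpha-\alpha_\Delta/3}|_{\ba(v)}$ and $\MF_{P,\alpha+\alpha_\Delta/3}|_{\ba(v)}$, and they have equal $\sfat_2$ — this is the second defining condition of $\MM_{\alpha,t}$.

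Finally, for the first condition: $\MF_{P,\alpha-\alpha_\Delta/3}|_{\ba(v)} = \MF_{P,\alpha_t-5\alpha_\Delta/6}|_{\ba(v)}$ contains the nonempty, $\ell_t$-irreducible class $\MF_{\emp,\alpha_t-\alpha_\Delta}|_{\ba(v)}$ and has the same $\sfat_2$ as it, so it is nonempty and, by Lemma \ref{lem:irred-hg}, $\ell_t$-irreducible. Hence $S \in \MM_{\alpha_t-\alpha_\Delta/2,t}$, so $\MM_{\alpha_t-\alpha_\Delta/2,t} \neq \emptyset$. I do not anticipate a genuine obstacle; the only point requiring care is bookkeeping of the $\alpha_\Delta/6$ slack at each end of the sandwich against the $\alpha_\Delta/2$ offset in the target index and the $\pm\alpha_\Delta/3$ window in the definition of $\MM_{\alpha,t}$, and these match up exactly (indeed with equality) as noted above.
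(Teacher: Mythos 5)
Your proof is correct and follows essentially the same path as the paper's: take the leaf $v$ furnished by Lemma \ref{lem:exists-irred-node}, sandwich $\MF_{\emp,\alpha_t-\alpha_\Delta}|_{\ba(v)} \subseteq \MF_{P,\alpha_t-5\alpha_\Delta/6}|_{\ba(v)} \subseteq \MF_{P,\alpha_t-\alpha_\Delta/6}|_{\ba(v)} \subseteq \MF_{\emp,\alpha_t}|_{\ba(v)}$ via $E_{\good}$, transfer $\ell_t$-irreducibility and the matching $\sfat_2$ via Lemma \ref{lem:irred-hg} and monotonicity, and bound $|\ba(v)|$ with Lemma \ref{lem:depth-ub}. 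Your extra remarks — making the nonemptiness hypothesis of Lemma \ref{lem:exists-irred-node} explicit, and noting that $\ba(v)$ must be read as a tuple to match the definition of $\MM_{\alpha,t}$ — are small clarifications that the paper leaves implicit, but the substance is identical.
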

\begin{proof}
  Set $t = t_{\final} + 1$. 
  Let $v$ be a node in the set $\hat \ML'$ (so that $v$ is a leaf of $\hat \bx\^{t_{\final}} = \hat \bx\^{t-1}$) produced by \ReduceTreeReg as guaranteed by Lemma \ref{lem:exists-irred-node}, i.e., so that $\sfat_2(\gRes{\alpha_t - \alpha_\Delta}{v}) = \sfat_2(\gRes{\alpha_t}{v}) \geq 0$ and so that $\gRes{\alpha_t - \alpha_\Delta}{v}$ is $\ell_t$-irreducible. Since the event $E_{\good}$ holds,
  $$
\gRes{\alpha_t - \alpha_\Delta}{v} = \MF_{\emp, \alpha_t - \alpha_\Delta}|_{\ba(v)} \subset \MF_{P, \alpha_t - 5\alpha_\Delta/6}|_{\ba(v)} \subset \MF_{P, \alpha_t - \alpha_\Delta/6}|_{\ba(v)} \subset \MF_{\emp, \alpha_t}|_{\ba(v)} = \gRes{\alpha_t}{v}.
$$
It follows from Lemma \ref{lem:irred-hg} that $\MF_{P, \alpha_t - 5\alpha_\Delta/6}|_{\ba(v)}$ is $\ell_t$-irreducible and that $\sfat_2(\MF_{P, \alpha_t - \alpha_\Delta/6}|_{\ba(v)}) = \sfat_2(\MF_{P, \alpha_t - 5\alpha_\Delta/6}|_{\ba(v)}) \geq 0$. Since the depth of the tree $\hat \bx\^{t-1} = \hat \bx\^{t_{\final}}$ is at most $\ell_t - \ell'$ (Lemma \ref{lem:depth-ub}), it follows that the number of tuples in $\ba(v)$ is at most $\ell_t - \ell'$; thus $\ba(v) \in \MM_{\alpha_t - \alpha_\Delta/2,t}$. 
\end{proof}

For any $\alpha > 0, t \in [d+1]$ for which $\MM_{\alpha,t}$ is nonempty, define:
\begin{align}
  \label{eq:m-argmaxes}
S_{\alpha,t}^\st \in \argmax_{S \in \MM_{\alpha,t}} \left\{ \sfat_2(\MF_{P,\alpha}|_S) \right\} , \qquad q_{\alpha,t}^\st := \max_{S \in \MM_{\alpha,t}} \left\{ \sfat_2(\MF_{P,\alpha}|_S) \right\} \geq 0.
\end{align}
Also set
\begin{equation}
  \label{eq:define-sigma-at}
\sigma_{\alpha,t}^\st := \soaf{\MF_{P,\alpha}|_{S_{\alpha,t}^\st}}.
\end{equation}

The below lemma is analogous to Lemma 5.4 of \cite{ghazi_sample_2020}.
\begin{lemma}[``Weak stability'']
  \label{lem:approx-stability}
  Suppose that $E_{\good}$ holds and $\MF_{\emp, \alpha_1 - d \cdot \alpha_\Delta} = \MF_{\emp, \alpha_{d+1}}$ is nonempty. Then the following holds: for $t = t_{\final} + 1 \in [d+1]$ and some leaf $\hat v\in \hat \ML'$, we have $\| \sigma_{\alpha_t - \alpha_\Delta/2,t}^\st - \soaf{\gRes{\alpha_t - 2\alpha_\Delta/3}{\hat v}} \|_\infty \leq 5$. (In particular, for this $t$, $\sigma_{\alpha_t - \alpha_\Delta/2, t}^\st$ is well-defined, i.e., $\MM_{\alpha_t - \alpha_\Delta/2,t}$ is nonempty.)

  Moreover, $\gRes{\alpha_t - 2\alpha_\Delta/3}{\hat v}$ is $\ell'$-irreducible and nonempty, and $\sfat_2(\gRes{\alpha_t - 2\alpha_\Delta/3}{\hat v}) = q_{\alpha_t - \alpha_{\Delta}/2, t}^\st \geq 0$.
\end{lemma}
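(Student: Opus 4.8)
Write $t := t_{\final}+1 \in [d+1]$. The plan is to run everything through Lemma~\ref{lem:equal-to-soa}. First, invoking Lemma~\ref{lem:exists-irred-node} and Lemma~\ref{lem:m-nonempty}, we know that \ReduceTreeReg does not output ERROR, that there is a leaf $v_0 \in \hat\ML'$ with $\gRes{\alpha_t - \alpha_\Delta}{v_0}$ being $\ell_t$-irreducible and $\sfat_2(\gRes{\alpha_t - \alpha_\Delta}{v_0}) = \sfat_2(\gRes{\alpha_t}{v_0}) = \hat w_t^\st \geq 0$, and that $\MM_{\alpha_t - \alpha_\Delta/2, t}$ is nonempty, so that $S^\st := S_{\alpha_t - \alpha_\Delta/2, t}^\st$, $\sigma^\st := \sigma_{\alpha_t - \alpha_\Delta/2, t}^\st$ and $q^\st := q_{\alpha_t - \alpha_\Delta/2, t}^\st$ are well defined. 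I would then apply Lemma~\ref{lem:equal-to-soa} with $\ell := \ell_t$, the same $\ell'$, the sequence $S^\st$ (padded with one of its own entries to have length exactly $\ell_t - \ell'$, which changes no restriction), the classes $\MH := \MF_{P, \alpha_t - 5\alpha_\Delta/6} \subset \MG := \MF_{P, \alpha_t - \alpha_\Delta/6}$, and the tree $\bx := \hat\bx = \hat\bx\^{t_{\final}}$.

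Checking the hypotheses of Lemma~\ref{lem:equal-to-soa}: membership $S^\st \in \MM_{\alpha_t - \alpha_\Delta/2, t}$ says precisely that $\MH|_{S^\st} = \MF_{P, (\alpha_t - \alpha_\Delta/2) - \alpha_\Delta/3}|_{S^\st}$ is $\ell_t$-irreducible and nonempty and that $\sfat_2(\MH|_{S^\st}) = \sfat_2(\MF_{P, (\alpha_t - \alpha_\Delta/2) + \alpha_\Delta/3}|_{S^\st}) = \sfat_2(\MG|_{S^\st})$; since $\MH|_{S^\st} \subset \MF_{P, \alpha_t - \alpha_\Delta/2}|_{S^\st} \subset \MG|_{S^\st}$, all three of these dimensions equal $q^\st$. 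By Lemma~\ref{lem:depth-ub} the depth of $\hat\bx$ is at most $\ell_t - \ell'$. For the leaf condition, under $E_{\good}$ every $f \in \MF_{P, \alpha_t - \alpha_\Delta/6}$ has $\err{\emp}{f} \leq \err{P}{f} + \alpha_\Delta/6 \leq \alpha_t$, so $\MG|_{\ba(v)} \subset \MF_{\emp, \alpha_t}|_{\ba(v)} = \gRes{\alpha_t}{v}$ for each leaf $v$ of $\hat\bx$, giving $\sfat_2(\MG|_{\ba(v)}) \leq \sfat_2(\gRes{\alpha_t}{v}) \leq \hat w_t^\st$; and $\hat w_t^\st \leq q^\st$ because, repeating the computation in the proof of Lemma~\ref{lem:m-nonempty} for the leaf $v_0$, one obtains $\ba(v_0) \in \MM_{\alpha_t - \alpha_\Delta/2, t}$ with $\sfat_2(\MF_{P, \alpha_t - \alpha_\Delta/2}|_{\ba(v_0)}) = \hat w_t^\st$. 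Hence every leaf $v$ of $\hat\bx$ satisfies $\sfat_2(\MG|_{\ba(v)}) \leq q^\st$, as required.

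Lemma~\ref{lem:equal-to-soa} then produces a leaf $\hat v$ of $\hat\bx$ with $\sfat_2(\MG|_{\ba(\hat v)}) = \sfat_2(\MH|_{\ba(\hat v)}) = q^\st$, with $\MH|_{\ba(\hat v)}$ being $\ell'$-irreducible, and with $\| \soaf{\MJ|_{S^\st}} - \soaf{\MJ'|_{\ba(\hat v)}} \|_\infty \leq 4$ for all $\MJ, \MJ'$ sandwiched between $\MH$ and $\MG$. Two further uses of $E_{\good}$ give $\MH \subset \MF_{P, \alpha_t - \alpha_\Delta/2} \subset \MG$ (immediate) and $\MH \subset \MF_{\emp, \alpha_t - 2\alpha_\Delta/3} \subset \MG$ (from $\err{\emp}{f} \leq \err{P}{f} + \alpha_\Delta/6$ and $\err{P}{f} \leq \err{\emp}{f} + \alpha_\Delta/6$). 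Taking $\MJ := \MF_{P, \alpha_t - \alpha_\Delta/2}$ and $\MJ' := \MF_{\emp, \alpha_t - 2\alpha_\Delta/3}$ yields $\| \sigma^\st - \soaf{\gRes{\alpha_t - 2\alpha_\Delta/3}{\hat v}} \|_\infty \leq 4 \leq 5$. For the ``moreover'': $\MH|_{\ba(\hat v)} \subset \gRes{\alpha_t - 2\alpha_\Delta/3}{\hat v} \subset \MG|_{\ba(\hat v)}$, and the two ends have $\sfat_2 = q^\st$, so $\sfat_2(\gRes{\alpha_t - 2\alpha_\Delta/3}{\hat v}) = q^\st \geq 0$, it is nonempty, and it is $\ell'$-irreducible by Lemma~\ref{lem:irred-hg} applied to the $\ell'$-irreducible subclass $\MH|_{\ba(\hat v)}$. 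Finally, $\hat v \in \hat\ML'$: from $\MG|_{\ba(\hat v)} \subset \gRes{\alpha_t}{\hat v}$ we get $\sfat_2(\gRes{\alpha_t}{\hat v}) \geq q^\st \geq \hat w_t^\st$, while $\hat v$ being a leaf of $\hat\bx$ forces $\sfat_2(\gRes{\alpha_t}{\hat v}) \leq \hat w_t^\st$, so $\sfat_2(\gRes{\alpha_t}{\hat v}) = \hat w_t^\st$, i.e., $\hat v \in \hat\ML_t' = \hat\ML'$ (and incidentally $q^\st = \hat w_t^\st$).

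The main obstacle is the bookkeeping with $E_{\good}$: one must thread the four error radii $\alpha_t - 5\alpha_\Delta/6 < \alpha_t - 2\alpha_\Delta/3 < \alpha_t - \alpha_\Delta/2 < \alpha_t - \alpha_\Delta/6$ (three population, one empirical) so that simultaneously the definition of $\MM_{\alpha_t - \alpha_\Delta/2, t}$ lines up with the ``$\MH \subset \cdot \subset \MG$'' framing required by Lemma~\ref{lem:equal-to-soa}, the class $\gRes{\alpha_t - 2\alpha_\Delta/3}{\hat v}$ is squeezed between $\MH|_{\ba(\hat v)}$ and $\MG|_{\ba(\hat v)}$ (so it inherits both $\sfat_2 = q^\st$ and $\ell'$-irreducibility), and $q^\st \geq \hat w_t^\st$ (so $\hat\bx$ is an admissible tree input). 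Once these inclusions are in hand the statement follows directly from Lemma~\ref{lem:equal-to-soa}, which already contains the genuine SOA-stability argument, together with Lemmas~\ref{lem:depth-ub}, \ref{lem:exists-irred-node}, \ref{lem:m-nonempty}, and \ref{lem:irred-hg}.
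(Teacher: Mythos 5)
Your proof is correct and follows essentially the same route as the paper: the core reduction to Lemma~\ref{lem:equal-to-soa} with $\MH = \MF_{P,\alpha_t - 5\alpha_\Delta/6}$, $\MG = \MF_{P,\alpha_t - \alpha_\Delta/6}$, $\bx = \hat\bx$, and $S^\st = S^\st_{\alpha_t - \alpha_\Delta/2,t}$, the verification that every leaf $v$ has $\sfat_2(\MG|_{\ba(v)}) \le q^\st$ via the leaf $v_0$ from Lemma~\ref{lem:exists-irred-node} placed into $\MM_{\alpha_t-\alpha_\Delta/2,t}$, and the membership $\hat v \in \hat\ML'$ via the two-sided comparison with $\hat w_t^\st$ all mirror the paper's proof.

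The one genuine (small) difference is at the final step. The paper applies Lemma~\ref{lem:equal-to-soa} with $\MJ = \MJ' = \MF_{P,\alpha_t - \alpha_\Delta/2}$, obtaining a bound of $4$ on $\| \sigma^\st - \soaf{\MF_{P,\alpha_t-\alpha_\Delta/2}|_{\ba(\hat v)}} \|_\infty$, and then invokes Lemma~\ref{lem:soa-stability} together with the triangle inequality to transfer this bound to $\gRes{\alpha_t - 2\alpha_\Delta/3}{\hat v}$, incurring an extra $+1$ and arriving at $5$. You instead observe that $\MF_{\emp,\alpha_t - 2\alpha_\Delta/3}$ is itself sandwiched between $\MH$ and $\MG$ under $E_{\good}$, so you may take $\MJ' = \MF_{\emp,\alpha_t - 2\alpha_\Delta/3}$ directly in Lemma~\ref{lem:equal-to-soa}. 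This saves the extra triangle inequality and yields the sharper constant $4$ for the same quantity, while also getting the irreducibility and dimension claims for $\gRes{\alpha_t - 2\alpha_\Delta/3}{\hat v}$ slightly more directly (via Lemma~\ref{lem:irred-hg} applied to $\MH|_{\ba(\hat v)}$, as you do). Since the lemma only asks for $\le 5$, this is a cosmetic improvement, but the reorganization is clean and arguably more economical than the paper's. Your aside about padding $S^\st$ to length exactly $\ell_t - \ell'$ is harmless but also unnecessary: the proof of Lemma~\ref{lem:equal-to-soa} never uses $|S^\st|$ directly, only the depth bound on $\bx$.
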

\begin{proof}
  By Lemma \ref{lem:exists-irred-node}, for $t := t_{\final} + 1 \in [d+1]$, there is some leaf $v' \in \hat \ML'$ so that $\sfat_2(\gRes{\alpha_t - \alpha_\Delta}{v}) = \sfat_2(\gRes{\alpha_t}{v}) \geq 0$ and $\gRes{\alpha_t - \alpha_\Delta}{v}$ is $\ell_t$-irreducible. Since the event $E_{\good}$ holds, for each node $v$ of the tree $\hat \bx$ output by \ReduceTreeReg, we have that
    \begin{align}
  \MF_{\emp, \alpha_t - \alpha_\Delta}|_{\ba(v)} \subset
  \MF_{P,\alpha_t - 5 \alpha_\Delta/6}|_{\ba(v)} \subset
    \MF_{\emp, \alpha_t - 4\alpha_\Delta/6}|_{\ba(v)} \nonumber\\
    \label{eq:6-inclusions}
    \subset
    \MF_{P, \alpha_t - 3\alpha_\Delta/6}|_{\ba(v)} \subset
  \MF_{\emp, \alpha_t - 2\alpha_\Delta/6}|_{\ba(v)} \subset
  \MF_{P, \alpha_t - \alpha_\Delta/6}|_{\ba(v)} \subset \MF_{\emp, \alpha_t}|_{\ba(v)}.
  \end{align}
  Now we apply Lemma \ref{lem:equal-to-soa} with $\MJ = \MJ' = \MF_{P, \alpha_t - \alpha_\Delta/2}, \MH = \MF_{P, \alpha_t - 5\alpha_\Delta/6}, \MG = \MF_{P, \alpha_t - \alpha_\Delta/6}, \ell = \ell_t, \ell' = \ell'$, $\bx$ equal to the tree $\hat \bx = \hat \bx\^{t_{\final}}$ output by \ReduceTreeReg, and $S^\st = S^\st_{\alpha_t - \alpha_\Delta/2,t}$. Since $t = t_{\final} + 1$, Lemma \ref{lem:m-nonempty} guarantees that $S^\st_{\alpha_t - \alpha_\Delta/2,t}$ is well-defined (i.e., $\MM_{\alpha_t - \alpha_\Delta/2,t}$ is nonempty). We check that the preconditions of Lemma \ref{lem:equal-to-soa} hold: First, note that (\ref{eq:gh-equal}) holds by definition of $\MM_{\alpha_t - \alpha_\Delta/2,t}$ in (\ref{eq:define-m}) and since $S^\st \in \MM_{\alpha_t - \alpha_\Delta/2,t}$. Moreover, $\MH|_{S^\st} = \MF_{P, \alpha_t - \alpha_\Delta/2 - \alpha_\Delta/3}|_{S^\st}$ is $\ell_t$-irreducible, again by (\ref{eq:define-m}) and since $S^\st \in \MM_{\alpha_t - \alpha_\Delta/2, t}$. By definition of $q_{\alpha,t}^\st$ in (\ref{eq:m-argmaxes}), we have
  $$
q_{\alpha_t - \alpha_\Delta/2,t}^\st = \sfat_2(\MF_{P, \alpha_t - 5\alpha_\Delta/6}|_{S^\st}) = \sfat_2(\MF_{P, \alpha_t - \alpha_\Delta/6}|_{S^\st}).
$$
Lemma \ref{lem:depth-ub} establishes that the depth of $\hat \bx$ is at most $\ell_t - \ell'$, so $|\ba(v')| \leq \ell_t - \ell'$. Next, from the guarantee on $v'$ in Lemma \ref{lem:exists-irred-node} (i.e., that $\gRes{\alpha_t - \alpha_\Delta}{v'} = \MF_{\emp, \alpha_t - \alpha_\Delta}|_{\ba(v')}$ is $\ell_t$-irreducible), the fact that $\MF_{\emp, \alpha_t - \alpha_\Delta}|_{\ba(v')} \subset \MF_{P, \alpha_t - 5 \alpha_\Delta/6}|_{\ba(v')}$ (by (\ref{eq:6-inclusions})), and Lemma \ref{lem:irred-hg}, we have that $\MF_{P, \alpha_t - 5 \alpha_\Delta/6}|_{\ba(v')}$ is $\ell_t$-irreducible. (To apply Lemma \ref{lem:irred-hg} here, we need that $\sfat_2(\MF_{P, \alpha_t - 5\alpha_\Delta/6}|_{\ba(v')}) = \sfat_2(\MF_{\emp, \alpha_t - \alpha_\Delta}|_{\ba(v')})$, which follows from $\sfat_2(\MF_{\emp, \alpha_t - \alpha_\Delta}|_{\ba(v')}) = \sfat_2(\MF_{\emp, \alpha_t - \alpha_\Delta}|_{\ba(v')})$ and (\ref{eq:6-inclusions}).) Since also $\sfat_2(\MF_{P, \alpha_t - 5\alpha_\Delta/6}) = \sfat_2(\MF_{P, \alpha_t - \alpha_\Delta/6})$, we have that $\ba(v') \in \MM_{\alpha_t - \alpha_\Delta/2, t}$, so the definition of $q_{\alpha,t}^\st$ gives
$$
q_{\alpha_t - \alpha_\Delta/2,t}^\st \geq \sfat_2(\MF_{P, \alpha_t - \alpha_\Delta/2}|_{\ba(v')}).
$$
Moreover, for any other leaf $u$ of the tree $\hat \bx$, we have, by definition of $\hat \ML' = \hat \ML_{t_{\final} + 1}'$,
$$
\sfat_2(\MF_{P, \alpha_t - \alpha_\Delta/6}|_{\ba(u)}) \leq \sfat_2(\MF_{\emp, \alpha_t}|_{\ba(u)}) \leq \sfat_2(\MF_{\emp, \alpha_t}|_{\ba(v')}) = \sfat_2(\MF_{P, \alpha_t - \alpha_\Delta/2}|_{\ba(v')}) \leq q_{\alpha_t - \alpha_\Delta/2,t}^\st, 
$$
(The first inequality above holds due to (\ref{eq:6-inclusions}), the second inequality is due to the fact that $v' \in \hat \ML_{t_{\final}+1}'$ (see step \ref{it:sup-ldim-alg} of \ReduceTreeReg), and the equality holds due to (\ref{eq:6-inclusions})  and $\sfat_2(\MF_{\emp, \alpha_t - \alpha_\Delta}|_{\ba(v')}) = \sfat_2(\MF_{\emp, \alpha_t}|_{\ba(v')})$.) This completes the verification that all hypotheses of Lemma \ref{lem:equal-to-soa} hold. Then Lemma \ref{lem:equal-to-soa} with $\MJ' = \MJ = \MF_{P, \alpha_t - \alpha_\Delta/2}$, we get that for some leaf $\hat v$ of $\hat \bx$, we have
$$
\| \soaf{\MF_{P, \alpha_t - \alpha_\Delta/2}|_{S^\st}} - \soaf{\MF_{P, \alpha_t - \alpha_\Delta/2}|_{\ba(\hat v)}} \| = \|\sigma_{\alpha_t - \alpha_\Delta/2,t}^\st - \soaf{\MF_{P, \alpha_t - \alpha_\Delta/2}|_{\ba(\hat v)}} \| \leq 4.
$$
Moreover, item \ref{it:gh-q} of Lemma \ref{lem:equal-to-soa} gives that $\sfat_2(\MF_{P, \alpha_t - 5 \alpha_\Delta/6}|_{\ba(\hat v)}) = \sfat_2(\MF_{P, \alpha_t - \alpha_\Delta/6}|_{\ba(\hat v)}) = q_{\alpha_t - \alpha_\Delta/2,t}^\st$, and item \ref{it:h-irred} gives that $\MF_{P, \alpha_t - 5 \alpha_\Delta/6}|_{\ba(\hat v)}$ is $\ell'$-irreducible. From (\ref{eq:6-inclusions}), it follows that $\sfat_2(\MF_{\emp, \alpha_t - 4 \alpha_\Delta/6}|_{\ba(\hat v)}) = \sfat_2(\MF_{P, \alpha_t - \alpha_\Delta/2}|_{\ba(\hat v)}) = \sfat_2(\MF_{\emp, \alpha_t - 2 \alpha_\Delta/6}|_{\ba(\hat v)}) = q_{\alpha_t - \alpha_\Delta/2,t}^\st \geq 0$, and that $\MF_{\emp, \alpha_t - 4\alpha_\Delta/6}|_{\ba(\hat v)} = \gRes{\alpha_t - 2\alpha_\Delta/3}{\hat v}$ is $\ell'$-irreducible (from Lemma \ref{lem:irred-hg}). Then by (\ref{eq:6-inclusions}) and Lemma \ref{lem:soa-stability}, we have
\begin{align*}
  & \| \sigma_{\alpha_t - \alpha_\Delta/2,t}^\st - \soaf{\gRes{\alpha_t-2\alpha_\Delta/3}{\hat v}} \|_\infty \\
  & \leq \| \sigma_{\alpha_t - \alpha_\Delta/2,t}^\st - \soaf{\MF_{P, \alpha_t - \alpha_\Delta/2|_{\ba(\hat v)}}} \|_\infty + \| \soaf{\MF_{P, \alpha_t - \alpha_\Delta/2}}|_{\ba(\hat v)} - \soaf{\gRes{\alpha_t - 2\alpha_\Delta/3}{\hat v}} \|_\infty \leq 4 + 1 = 5.
\end{align*}
Finally, we check that $\hat v \in \hat \ML' = \hat \ML_{t}' = \hat \ML_{t_{\final}+1}'$, i.e., all leaves $u$ of the tree $\hat \bx$ satisfy $\sfat_2(\MF_{\emp, \alpha_t}|_{\ba(u)}) \leq \sfat_2(\MF_{\emp,\alpha_t}|_{\ba(\hat v)})$. This is a consequence of the fact that for all such $u$,
$$
\sfat_2(\MF_{\emp,\alpha_t}|_{\ba(\hat v)}) \geq \sfat_2(\MF_{\emp, \alpha_t - 2\alpha_\Delta/6}|_{\ba(\hat v)}) = q_{\alpha_t - \alpha_\Delta/2,t}^\st  \geq \sfat_2(\MF_{\emp, \alpha_t}|_{\ba(v')}) \geq \sfat_2(\MF_{\emp,\alpha_t}|_{\ba(u)}),
$$
since $v' \in \hat \ML'$ (by definition).
\end{proof}

\begin{lemma}
  \label{lem:s-size}
  The set $\hat \MS$ output by \ReduceTreeReg has size $| \hat \MS | \leq \K^{\ell' \cdot 2^{d+1}}$.
\end{lemma}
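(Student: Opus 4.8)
The plan is to bound $|\hat\MS|$ by the number of leaves of the tree $\hat\bx = \hat\bx\^{t_{\final}}$ output by \ReduceTreeReg, and then bound that number by a simple counting argument. First I would observe that by the definition of $\hat\MS$ in (\ref{eq:redtree-output}), every element of $\hat\MS$ has the form $\soaf{\gRes{\alpha_{t_{\final}+1}-2\alpha_\Delta/3}{v}}$ for some $v \in \hMLp$, so $|\hat\MS| \leq |\hMLp|$; and since $\hMLp = \hat\ML_{t_{\final}+1}'$ is by construction a set of leaves of $\hat\bx$, we have $|\hat\MS| \leq (\text{number of leaves of } \hat\bx)$.

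Next I would invoke Lemma \ref{lem:depth-ub}, which states that $\hat\bx$ has depth at most $\ell_{t_{\final}+1} - \ell'$. Since $t_{\final} \in \{0,1,\ldots,d\}$ (so $t_{\final}+1 \leq d+1$) and $\ell_t = \ell' \cdot 2^t$ (extended to $t=d+1$ exactly as the quantity $\ell_{d+1}$ is used in Lemmas \ref{lem:exists-irred-node} and \ref{lem:m-nonempty}), this depth is at most $\ell' \cdot 2^{d+1} - \ell' \leq \ell' \cdot 2^{d+1}$.

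Finally, since $\hat\bx$ is a $\K$-ary $\MX$-valued tree in the sense of Definition \ref{def:xv-tree}, every non-leaf node has exactly $\K$ children; a routine induction on the depth $D$ then shows that such a tree has at most $\K^D$ leaves. Taking $D \leq \ell' \cdot 2^{d+1}$ yields $|\hat\MS| \leq |\hMLp| \leq \K^{\ell' \cdot 2^{d+1}}$, as claimed.

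There is no real obstacle here: the whole argument is a counting bound. The only points requiring care are the bookkeeping around $t_{\final}$ (confirming $t_{\final} \leq d$ from its definition in step \ref{it:tfinal-define}, so that the depth bound applies with the largest possible value of $\ell_{t_{\final}+1}$ being $\ell' \cdot 2^{d+1}$) and verifying that the trees constructed by \ReduceTreeReg really do have the ``exactly $\K$ children per internal node'' property needed for the leaf count --- this holds because the tree starts as a single leaf and is only ever modified by attaching full $\K$-ary $\MX$-valued trees at leaves (step \ref{it:attach-trees}), which never creates a node with strictly between $1$ and $\K$ children.
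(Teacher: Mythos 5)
Your proof is correct and arrives at the same bound, but by a slightly different route than the paper's. The paper proves Lemma \ref{lem:s-size} by directly tracking the number of leaves of $\hat\bx\^t$: at each iteration, trees of depth at most $\ell_t$ are attached to some leaves, multiplying the leaf count by at most $\K^{\ell_t}$, giving at most $\K^{\ell_1 + \cdots + \ell_d} = \K^{\ell'(2^{d+1}-2)} \leq \K^{\ell' \cdot 2^{d+1}}$ leaves. You instead route through Lemma \ref{lem:depth-ub} (which the paper proves but does not invoke in this particular argument): the depth of $\hat\bx$ is at most $\ell_{t_{\final}+1} - \ell' \leq \ell' \cdot 2^{d+1}$, and a $\K$-ary tree of depth $D$ in which every internal node has exactly $\K$ children has at most $\K^D$ leaves. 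Both proofs ultimately rest on the same structural fact about \ReduceTreeReg (that only $\K$-ary trees of depth at most $\ell_t$ are grafted on at step $t$); yours has the modest advantage of reusing an already-stated lemma rather than re-deriving the inductive count, while the paper's direct product yields a marginally tighter exponent before both are rounded up to the stated $\ell' \cdot 2^{d+1}$. Your parenthetical care about extending $\ell_t$ to $t=d+1$ and about the $\K$-children property of the constructed trees is well placed and matches the paper's implicit usage.
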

\begin{proof}
  We show that for $t \in [d]$, the tree $\hat \bx\^t$ has at most $\prod_{t'=1}^t \K^{\ell_{t'}}$ leaves. This statement is a simple consequence of the fact that $\bx\^0$ has a single leaf, and the tree $\hat \bx\^t$ is formed by attaching a trees of depth at most $\ell_t$ to some of the leaves of $\hat \bx\^{t-1}$. Thus the number of leaves of $\hat \bx\^t$ is at most
  $$
\prod_{t'=1}^d \K^{\ell_{t'}} = \K^{\ell_1 + \cdots + \ell_d} \leq \K^{\ell' \cdot 2^{d+1}}.
  $$
\end{proof}

\section{Proofs for Section \ref{sec:soafilter}: the algorithm \soafilter}
In this section we give proofs for all results in Section \ref{sec:soafilter}, and state several additional lemmas which will be useful in our proofs. 
Throughout we suppose that we are given a hypothesis class $\MF \subset [\K]^\MX$ and write $d := \sfat_2(\MF)$.

\subsection{Existence of reducing trees}
Recall the definition of \emph{reducing tree} from Definition \ref{def:reducing-tree}. Lemma \ref{lem:make-tree} shows that such trees exist.
\begin{lemma}
  \label{lem:make-tree}
  For any class $\MH \subset [\K]^\MX$ with $d := \sfat_2(\MH)$, any sequence $(\ell_t)_{t \geq 0}$ of positive integers, and any $(x,y) \in \MX \times [\K]$ for which $\sfat_2(\MH|_{(x,y)}) < \sfat_2(\MH)$, there is a reducing tree $\bx$ (of depth at least 1) for the pair $(x,y)$, the sequence $(\ell_t)$, and the class $\MH$.

  Moreover, $\bx$ may be chosen so that for each $1 \leq t \leq d$, $\bx$ has at most $\K^{\sum_{t'=0}^{t-1} \ell_{t'}}$ %
  leaves $v$ so that $\sfat_2(\MH|_{\ba(v)}) = d-t$. %
\end{lemma}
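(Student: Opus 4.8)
Here is how I would approach it.

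\textbf{Plan.} I would build the reducing tree greedily, top-down, and then verify the defining properties along each root-to-leaf path. First I would form the augmented tree whose root is labelled $x$ and which has a single child $v\^{0}$ along the edge $y$; since $\ba(v\^{0}) = \{(x,y)\}$, the hypothesis gives $\sfat_2(\MH|_{\ba(v\^{0})}) < \sfat_2(\MH) = d$. Then I would grow the tree: at a current leaf $w$, put $q := \sfat_2(\MH|_{\ba(w)})$ and $t := d-q$; if $\MH|_{\ba(w)}$ is empty or $\ell_t$-irreducible, freeze $w$ as a leaf, and otherwise, negating Definition \ref{def:irreducibility}, pick a $\K$-ary $\MX$-valued tree $\by$ of depth at most $\ell_t$ all of whose leaves $u$ satisfy $\sfat_2(\MH|_{\ba(w)\cup\ba(u)}) < q$, attach $\by$ to $w$ via Definition \ref{def:attach}, and recurse on the new leaves. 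Because $q=0$ forces $\ell$-irreducibility for every $\ell$ (Lemma \ref{lem:0dim-irred}), each attachment strictly lowers the dimension of the restricted class at every new leaf, so along any path there are at most $d$ attachments; hence the process terminates and $\bx$ is a finite augmented $\K$-ary $\MX$-valued tree rooted by $(x,y)$, of depth $\ge 1$ (attaching preserves the property that every internal node below $v\^{0}$ has exactly $\K$ children).

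\textbf{Leaf conditions.} The first bullet of Definition \ref{def:reducing-tree} holds by the freezing rule. For the height bullet, I would fix a leaf $v$, set $t$ via $\sfat_2(\MH|_{\ba(v)}) = d-t$ (so $t \ge 1$, since every class below $v\^{0}$ has dimension $\le d-1$), and consider the ancestors $v\^{0}, v\^{1}, \dots, v\^{p} = v$ of $v$ at which attachments were performed, $v\^{s+1}$ being the leaf on the path to $v$ of the tree attached at $v\^{s}$. Writing $q\^{s} := \sfat_2(\MH|_{\ba(v\^{s})})$ and $t\^{s} := d - q\^{s}$, I would have $\height(v\^{0})=1$, $\height(v\^{s+1}) \le \height(v\^{s}) + \ell_{t\^{s}}$ for $s<p$, the strict chain $t\^{0} < t\^{1} < \cdots < t\^{p} = t$ (each attachment drops the dimension), and $t\^{0} \ge 1$.

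\textbf{Height estimates.} For $1 \le \tilde t \le t$, let $s^{\st}$ be the least index with $t\^{s^{\st}} \ge \tilde t$; then $v' := v\^{s^{\st}}$ lies on the path from the root to $v$, has $\sfat_2(\MH|_{\ba(v')}) = d - t\^{s^{\st}} \le d-\tilde t$, and, since $t\^{0} < \cdots < t\^{s^{\st}-1}$ are distinct integers in $\{1,\dots,\tilde t-1\}$ and $(\ell_t)_{t\ge0}$ is non-decreasing,
\[
\height(v') \;\le\; 1 + \sum_{s < s^{\st}} \ell_{t\^{s}} \;\le\; 1 + \sum_{j=1}^{\tilde t-1} \ell_j \;\le\; \sum_{j=0}^{\tilde t-1} \ell_j ,
\]
using $\ell_0 \ge 1$ in the last step. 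Taking $\tilde t = t$ forces $s^{\st} = p$, $v' = v$, which gives $\height(v)\le\sum_{t'=0}^{t-1}\ell_{t'}$; taking each $1 \le \tilde t < t$ gives the required intermediate ancestor (reading ``ancestor'' reflexively, to cover the case $s^{\st}=p$). This establishes that $\bx$ is a reducing tree for $(x,y)$ and $(\ell_t)$.

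\textbf{Leaf count, and the hard part.} For the ``moreover'' clause, by the above every leaf $v$ with $\sfat_2(\MH|_{\ba(v)}) = d-t$ satisfies $\height(v) \le D_t := \sum_{t'=0}^{t-1}\ell_{t'}$; since distinct leaves of a tree have no ancestor--descendant relation, padding each such leaf's root-path (a string over $[\K]$ of length $\le D_t$) with the symbol $1$ to length exactly $D_t$ injects these leaves into $[\K]^{D_t}$, so there are at most $\K^{D_t} = \K^{\sum_{t'=0}^{t-1}\ell_{t'}}$ of them. The step I expect to be the main obstacle is precisely the chain of inequalities for $\height(v')$: getting the bound to come out to exactly $\sum_{t'=0}^{t-1}\ell_{t'}$ (rather than something looser) requires simultaneously using the strict monotonicity of the attachment levels $t\^{s}$, the fact that those before $s^{\st}$ are capped by $\tilde t-1$, the monotonicity of $(\ell_t)$, and the inequality $\ell_0 \ge 1$ needed to absorb the depth-$1$ edge of the augmented root.
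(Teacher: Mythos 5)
Your proof is correct and follows essentially the same approach as the paper: grow the tree greedily by attaching a dimension-dropping $\K$-ary tree (obtained by negating $\ell_t$-irreducibility) at every leaf that is neither empty nor $\ell_t$-irreducible, terminate via the strict drop in $\sfat_2$ along each path, and bound heights by unrolling the attachment recurrence using that the levels $t\^{0} < t\^{1} < \cdots$ are distinct integers starting from $t\^{0}\ge 1$. One small remark: your second inequality $\sum_{s<s^{\st}}\ell_{t\^{s}}\le\sum_{j=1}^{\tilde t-1}\ell_j$ does not actually need monotonicity of $(\ell_t)$ — since $t\^{0},\dots,t\^{s^{\st}-1}$ are distinct elements of $\{1,\dots,\tilde t-1\}$ and every $\ell_j$ is positive, the left side is simply a sub-sum of the right; the paper's proof likewise avoids using monotonicity (it bounds $\sum_{t'=0}^{\tilde t}\ell_{t'}\le\sum_{t'=0}^{t-1}\ell_{t'}$ via containment of index sets).
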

\begin{proof}
  We define a sequence $\bx\^0, \bx\^1, \ldots$ of augmented $\MX$-labeled trees. We begin by defining the tree $\bx\^0$, which is of depth 1 and consists of a root, labeled by $x$, together with a single child  (which is its only leaf), for which the edge to the root is labeled by $y$. Now, suppose we are given the tree $\bx\^s$, for some $s \geq 0$. To define the tree $\bx\^{s+1}$, we begin with the tree $\bx\^s$, and then add some subtrees below some of the leaves of $\bx\^s$; we will say that each node of $\bx\^s$ {\it corresponds} to its copy in this copy of $\bx\^s$ in $\bx\^{s+1}$, as well as to its copies in $\bx\^{s+2}, \bx\^{s+3}, \ldots$. %
  In particular, for each leaf $v$ of $\bx\^s$:
  \begin{itemize}
  \item If $\MH|_{\ba(v)}$ is empty or $\ell_t$-irreducible, where $t = d - \sfat_2(\MH|_{\ba(v)})$, we move onto the next leaf.
  \item Otherwise, by the definition of irreducibility, there is some $\K$-ary $\MX$-valued tree $\bx'$ of depth at most $\ell_t$ (again, with $t = d - \sfat_2(\MH|_{\ba(v)})$) so that for each leaf $v'$ of $\bx'$, it holds that $\sfat_2(\MH|_{\ba(v) \cup \ba(v')}) < \sfat_2(\MH|_{\ba(v)})$. Then we attach $\bx'$ to $\bx$ via the leaf $v$, i.e., we label the leaf $v$ with $\bx_1'$ and add a copy of the tree $\bx'$ to $\bx$ rooted at the leaf $v$ (Definition \ref{def:attach}).
  \end{itemize}
  We claim that $\bx\^d = \bx\^{d-1}$, namely that for any leaf $v$ of $\bx\^{d-1}$, we have that $\MH|_{\ba(v)}$ is $\ell_t$-irreducible, where $t = d - \sfat_2(\MH|_{\ba(v)})$. To do this, we introduce the following notation: for $s \geq 0$, let $\MB\^s$ denote the set of leaves of $\bx\^s$ so that  $\MH|_{\ba(v)}$ is not empty or $\ell_t$-irreducible for $t = d - \sfat_2(\MH|_{\ba(v)})$. We now prove the following claim:
  \begin{claim}
    \label{clm:lt-irred}
    For $0 \leq s \leq d-1$, for each leaf $v \in \MB\^s$, %
    $\sfat_2(\MH|_{\ba(v)}) \leq d - s - 1$.
\end{claim}
\begin{proof}[Proof of Claim \ref{clm:lt-irred}]
We use induction on $s$. The base case $s = 0$ is immediate since the tree $\bx\^0$ has a single leaf $v$ which satisfies $\ba(v) = \{ (x,y) \}$, and $\sfat_2(\MH|_{(x,y)}) < \sfat_2(\MH) = d$ is assumed.

  To establish the inductive step, note that any leaf $v \in \MB\^{s+1}$ does not correspond to a leaf $v'$ of $\bx\^s$. Rather, there is some leaf $\tilde v$ of $\bx\^s$ and some tree $\bx'$, as well as some leaf $\tilde v'$ of $\bx'$ so that $v$ is the leaf $\tilde v'$ attached to $\bx\^s$ via $\tilde v$. In particular, we have $\ba(v) = \ba(\tilde v) \cup \ba(\tilde v')$ and $\sfat_2(\MH|_{\ba(\tilde v) \cup \ba(\tilde v')}) < \sfat_2(\MH|_{\ba(\tilde v)})$. By the inductive hypothesis, $\sfat_2(\MH|_{\ba(\tilde v)}) \leq d - s - 1$, and so $\sfat_2(\MH|_{\ba(v)}) \leq d - s - 2$, completing the inductive step.
\end{proof}
We now set $\bx = \bx\^{d-1}$. It follows from Claim \ref{clm:lt-irred} that for all leaves $v$ of $\bx$, either $v \not \in \MB\^s$, in which case $\MH|_{\ba(v)}$ is empty or $\ell_t$-irreducible for $t = d - \sfat_2(\MH|_{\ba(v)})$, or $\sfat_2(\MH|_{\ba(v)}) \leq 0$, i.e., $\MH|_{\ba(v)}$ is empty or $\ell$-irreducible for all $\ell \in \BN$ (Lemma \ref{lem:0dim-irred}).

To establish that $\bx$ is a reducing tree, we need to establish the second item in Definition \ref{def:reducing-tree} regarding $\height(v)$ for leaves $v$ of $\bx$. To do so, we establish the following claim:
\begin{claim}
  \label{clm:bs-depth}
Fix any $0 \leq s \leq d-1$. For each leaf $v$ of $\bx\^s$, letting $t = d - \sfat_2(\MH|_{\ba(v)})$, we have that $\height(v) \leq \sum_{t'=0}^{t-1} \ell_{t'}$.
\end{claim}
\begin{proof}
  We establish the claim using induction on $s$. For the base case $s = 0$, the only leaf $v$ of $\bx\^0$ satisfies $\height(v) = 1$, which is bounded above by $\sum_{t'=0}^{t-1} \ell_{t'}$ (Note that we have $t = d - \sfat_2(\MH|_{\ba(v)}) \geq 1$ here.)

  To establish the inductive step, consider any leaf $v$ of $\bx\^{s+1}$ for some $0 \leq s \leq d-2$, and let $t = d - \sfat_2(\MH|_{\ba(v)})$. If $v$ corresponds to some leaf $v'$ of $\bx\^s$ then certainly $\height(v) \leq \sum_{t'=0}^{t-1} \ell_{t'}$, by the inductive hypothesis. Otherwise (as in the proof of Claim \ref{clm:lt-irred}), there is some leaf $\tilde v$ of $\bx\^s$, some tree $\bx'$ of depth at most $\ell_{\tilde t}$ (where $\tilde t := d - \sfat_2(\MH|_{\ba(\tilde v)})$), as well as some leaf $\tilde v'$ of $\bx'$, so that $v$ is the leaf $\tilde v'$ attached to $\bx\^s$ via $\tilde v$. Moreover, it holds that $\sfat_2(\MH|_{\ba(v)}) = d -t < \sfat_2(\MH|_{\ba(\tilde v)}) = d - \tilde t$, i.e., $t > \tilde t$. It follows that
  \begin{equation}
    \label{eq:depthv-indstep}
\height(v) \leq \height(\tilde v) + \ell_{\tilde t} \leq \sum_{t'=0}^{\tilde t - 1} \ell_{t'} + \ell_{\tilde t} \leq \sum_{t'=0}^{t - 1} \ell_{t'},
\end{equation}
as desired.
\end{proof}
Applying Claim \ref{clm:bs-depth} for $s = d-1$, we get that for each leaf $v$ of $\bx$, $\height(v) \leq \sum_{t'=0}^{t-1} \ell_{t'}$ for $t = d - \sfat_2(\MH|_{\ba(v)})$. Moreover, again fixing a leaf $v$ of $\bx$, let $s_v$ denote the minimum value of $s' \geq 0$ so that $v$ corresponds to a leaf $v'$ in $\bx\^{s'}$. For each $0 \leq s' < s_v$, there is a unique leaf $w_{s'}$ of $\bx\^{s'}$ (in fact, $w_{s'} \in \MB\^{s'}$) so that $w_{s'}$ is an ancestor of the leaf $v'$ in $\bx\^{s'}$. Also let $w_{s_v} = v$. For any given $1 \leq \tilde t < t$, choose $s' \leq s_v$ as small as possible so that $\sfat_2(\MH|_{\ba(w_{s'})}) \leq d - \tilde t$. We must have $\sfat_2(\MH|_{\ba(w_{s'-1})}) > d - \tilde t$, and so, letting $\hat t := d - \sfat_2(\MH|_{\ba(w_{s'-1})})$, (similarly to (\ref{eq:depthv-indstep})) it follows that
$$
\height(w_{s'}) \leq \height(w_{s'-1}) + \ell_{\hat t} \leq \sum_{t'=0}^{\hat t - 1} \ell_{t'} + \ell_{\hat t} \leq \sum_{t'=0}^{\tilde t - 1} \ell_{t'},
$$
which completes the verification that $\bx$ is a reducing tree.

To establish the last claim of the lemma, note that Claim \ref{clm:bs-depth} with $s = d-1$ implies that to specify a leaf $v$ of $\bx$ with $\sfat_2(\MH|_{\ba(v)}) = d-t$, we need to specify a sequence of at most $\sum_{t'=0}^{t-1} \ell_{t'}$ integers in $[\K]$ (as the tree $\bx$ is $\K$-ary). Moreover, the set of such sequences, taken over all leaves $v$ with $\sfat_2(\MH|_{\ba(v)}) = d-t$, must be prefix-free (as a leaf cannot be an ancestor of another leaf). Thus the number of leaves $v$ with $\sfat_2(\MH|_{\ba(v)}) = d-t$ is at most $\K^{\sum_{t'=0}^{t-1} \ell_{t'}}$.
\end{proof}

\subsection{Proofs for the \filterstep algorithm}
\begin{lemma}
  \label{lem:reduced-equal}
  Suppose $\MF \subset [\K]^\MX$ and $\ba \subset \MX \times [\K]$ is a subset of $\MX \times [\K]$ of size at most $\ell - 1$, for some positive integer $\ell$. Suppose $\MG, \MG' \subset \MF$ are $\ell$-irreducible and satisfy, for each $(x,y) \in \ba$, $\soa{\MG}{x} = \soa{\MG'}{x} = y$. If also $\sfat_2(\MG) = \sfat_2(\MG') = \sfat_2(\MF|_{\ba})$, then
  \begin{equation}
    \label{eq:linf-1}
\left\| \soaf{\MG'}- \soaf{\MG} \right\|_\infty \leq 1.
\end{equation}
\end{lemma}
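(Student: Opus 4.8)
The plan is to reduce everything to a statement about the single common superclass $\MF|_\ba$ and then invoke Lemma~\ref{lem:consec-k}. First I would observe that $\MF|_\ba$ is itself irreducible. Indeed, since $\MG$ is $\ell$-irreducible, $|\ba| \le \ell - 1$, and every $(x,y) \in \ba$ satisfies $y = \soa{\MG}{x}$, Lemma~\ref{lem:many-irred} (applied to $\MG$ with restriction set $\ba$) shows that $\MG|_\ba$ is $(\ell - |\ba|)$-irreducible, in particular $1$-irreducible, and that $\sfat_2(\MG|_\ba) = \sfat_2(\MG)$. Combined with $\MG|_\ba \subseteq \MF|_\ba$ and the hypothesis $\sfat_2(\MG) = \sfat_2(\MF|_\ba)$, Lemma~\ref{lem:irred-hg} yields that $\MF|_\ba$ is irreducible. (If $\MF|_\ba$ is empty then by the dimension hypotheses $\MG$ and $\MG'$ are empty and there is nothing to prove, so we may assume $\MF|_\ba$ is nonempty.)

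Next, fix an arbitrary $x \in \MX$ and set $k := \soa{\MG}{x}$ and $k' := \soa{\MG'}{x}$; the goal is $|k - k'| \le 1$. Consider the restriction set $\ba \cup \{(x,k)\}$, which has size at most $\ell$, and note that every one of its elements $(x',y')$ satisfies $y' = \soa{\MG}{x'}$ --- this holds for elements of $\ba$ by hypothesis and for $(x,k)$ by the choice of $k$. Applying Lemma~\ref{lem:many-irred} to $\MG$ with this set gives $\sfat_2\big(\MG|_{\ba \cup \{(x,k)\}}\big) = \sfat_2(\MG) = \sfat_2(\MF|_\ba)$. Since $\MG|_{\ba \cup \{(x,k)\}} \subseteq (\MF|_\ba)|_{(x,k)} \subseteq \MF|_\ba$, this forces $\sfat_2\big( (\MF|_\ba)|_{(x,k)} \big) = \sfat_2(\MF|_\ba)$. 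Running the identical argument with $\MG'$ and $k'$ in place of $\MG$ and $k$ gives $\sfat_2\big( (\MF|_\ba)|_{(x,k')} \big) = \sfat_2(\MF|_\ba)$ as well.

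Finally I would apply Lemma~\ref{lem:consec-k} to the irreducible class $\MF|_\ba$ at the point $x$: there are at most two values $m \in [\K]$ with $\sfat_2\big( (\MF|_\ba)|_{(x,m)} \big) = \sfat_2(\MF|_\ba)$, and when there are two they differ by $1$. Both $k$ and $k'$ are such values by the previous paragraph, so $|k - k'| \le 1$; since $x$ was arbitrary, $\left\| \soaf{\MG'} - \soaf{\MG} \right\|_\infty \le 1$, as claimed.

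The main obstacle is conceptual rather than computational: one must resist trying to compare $\soaf{\MG}$ and $\soaf{\MG'}$ directly, which via two applications of Lemma~\ref{lem:soa-stability} through $\MF|_\ba$ would only yield a bound of $2$, and instead recognize that both SOA values lie inside the set of at most two consecutive ``maximizing labels'' of the common superclass $\MF|_\ba$. The point requiring care is the repeated use of Lemma~\ref{lem:many-irred} to certify that restricting a class by its own SOA values preserves the sequential fat-shattering dimension, which is exactly what transfers the dimension equalities from $\MG$ (resp.\ $\MG'$) down to $(\MF|_\ba)|_{(x,k)}$ (resp.\ $(\MF|_\ba)|_{(x,k')}$).
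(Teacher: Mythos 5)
Your proof is correct and takes essentially the same route as the paper's: use Lemma~\ref{lem:many-irred} to transfer the dimension equalities down to $(\MF|_\ba)|_{(x,k)}$ and $(\MF|_\ba)|_{(x,k')}$, then apply Lemma~\ref{lem:consec-k} to the common superclass $\MF|_\ba$. You are somewhat more explicit than the paper, in particular in verifying via Lemma~\ref{lem:irred-hg} that $\MF|_\ba$ is itself irreducible before invoking Lemma~\ref{lem:consec-k}, a prerequisite the paper's proof leaves implicit.
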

\begin{proof}
  By Lemma \ref{lem:many-irred} applied to the classes $\MG, \MG'$, it holds that $\MG|_{\ba}$ and $\MG'|_{\ba}$ are 1-irreducible and satisfy $\sfat_2(\MG|_{\ba}) = \sfat_2(\MG'|_{\ba}) = \sfat_2(\MG) = \sfat_2(\MG') = \sfat_2(\MF|_{\ba})$. 

  If there were some $x \in \MX$ together with $k,k' \in [\K]$ so that $|k-k'| \geq 2$ so that
  $$
\sfat_2(\MG|_{\ba \cup \{ (x,k)\}}) = \sfat_2(\MG), \qquad \sfat_2(\MG'|_{\ba \cup \{(x,k')\}}) = \sfat_2(\MG'), %
$$
and since $\MG, \MG' \subset \MF$, we would have that %
$$
\sfat_2(\MF|_{\ba \cup \{ (x,k)\}}) = \sfat_2(\MF|_{\ba \cup \{(x,k')\}}) = \sfat_2(\MF|_{\ba}),
$$
which is a contradiction to Lemma \ref{lem:consec-k}.
\end{proof}

  Lemma \ref{lem:close-reps} uses Lemma \ref{lem:reduced-equal} to show that any class $\MH$ belonging to one of the sets $\irred{\ell_{r,t}}{d-t}{\MF}$ constructed in \filterstep is close in $\ell_\infty$ norm to its representative $\repl{\MH}$.
\begin{replemma}{lem:close-reps}
  Fix inputs $\MF, (\ell_{r,t})_{r,t \geq 0}, r_{\max}$ to \filterstep. For any $0 \leq r \leq r_{\max}, 0 \leq t \leq d$, and any $\MH \in \irred{\ell_{r,t}}{d-t}{\MF}$, we have that $\| \soaf{\MH} - \soaf{\repl{\MH}} \|_\infty \leq 1$.
\end{replemma}
\begin{proof}[Proof of Lemma \ref{lem:close-reps}]
  Fix some $\MH \in \irred{\ell_{r,t}}{d-t}{\MF}$ so that either $r = r_{\max}$ or $\MH \not \in \irred{\ell_{r+1,t}}{d-t}{\MF}$, and recall that $d-t = \sfat_2(\MH)$. If, in the iteration of the for loop in step \ref{it:for-mh} when the given $\MH$ is considered (which corresponds to the value $r$), the branch in step \ref{it:set-l} is taken, then we have $\| \soaf{\MH} - \soaf{\repl{\MH}} \|_\infty = 0$. The nontrivial case is that the branch in step \ref{it:found-l} is taken: in this case, choose $\ML \in \CL_{d-t}$ and $\ba \subset \MX \times [\K]$ so that $|\ba| \leq \ell_{r,t} - 1$, $\sfat_2(\MF|_{\ba}) = d-t$ and so that for all $(x,y) \in \ba$, $\soa{\ML}{x} = \soa{\MH}{x} = y$. 

  Certainly $\MH$ is $\ell_{r,t}$-irreducible. The same holds for $\ML$, since the only classes that have been added to $\CL_{d-t}$ at the time when $\MH$ is reached in step \ref{it:for-mh} must belong to $\irred{\ell_{r',t}}{d-t}{\MF}$ for some $r' \geq r$, and for all $r' \geq r$, we have $\ell_{r',t} \geq \ell_{r,t}$. We also have $\sfat_2(\ML) = \sfat_2(\MH) = d-t$ since this is the case for all elements of $\CL_{d-t}$.

By Lemma \ref{lem:reduced-equal} with $\MG = \MH, \MG' = \ML, \ell = \ell_{r,t}$, it follows that since $\ell_{r,t} -1\geq |\ba|$, we have that
$$
\| \soaf{\MH} - \soaf{\ML} \|_\infty \leq 1,
$$
as desired.
\end{proof}

\begin{replemma}{lem:at-most-one}
Fix inputs $\MF, (\ell_{r,t})_{r,t \geq 0}, r_{\max}$ to \filterstep. For any $0 \leq t \leq d$ and $0 \leq r \leq r_{\max}$, and any $\ba \subset \MX \times [\K]$ with $|\ba| \leq \ell_{r,t} - 1$ so that $\sfat_2(\MF|_{\ba}) = d-t$, there is at most one element $\ML \in \CL_{d-t} \cap \irred{\ell_{r,t}}{d-t}{\MF}$ so that for all $(x,y) \in \ba$, $\soa{\ML}{x} = y$.
\end{replemma}
\begin{proof}[Proof of Lemma \ref{lem:at-most-one}]
Suppose for the purpose of contradiction there were two distinct $\ML, \ML' \in \CL_{d-t} \cap \irred{\ell_{r,t}}{d-t}{\MF}$ so that for all $(x,y) \in \ba$, $\soa{\ML}{x} = \soa{\ML'}{x} = y$. By construction all elements of $\CL_{d-t}$ are elements of $\irred{\ell_{r,t}}{d-t}{\MF}$ for some $r$. Suppose (without loss of generality) that $\ML'$ is considered after $\ML$ in the for loop in step \ref{it:for-mh} of \filterstep. Since $|\ba| \leq \ell_{r', t}- 1$ for all $r' \geq r$, when $\ML'$ is considered in the for loop in step \ref{it:for-mh} of \filterstep, we would not add $\ML'$ to $\CL_{d-t}$ and could instead set $\repl{\ML'} \gets \ML$.
\end{proof}

\subsection{Proofs for the \soafilter algorithm}
\begin{lemma}
  \label{lem:m-irred}
  Fix $\MF \subset [\K]^\MX$, and in the context of the algorithm \soafilter, consider any $0 \leq j \leq d$ and $1 \leq s \leq d$, and set $r = r_{\max} - jr_0 - 1$. For any $\ba \in \CQ_{j,s}$, letting $t := d - \sfat_2(\MF|_\ba)$, it holds that $\MF|_\ba$ is $\ell_{r,t}$-irreducible.
\end{lemma}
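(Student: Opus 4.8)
The plan is to trace how an arbitrary element of $\CQ_{j,s}$ (for $s\geq 1$) gets created during the run of \soafilter, and then read off the claimed irreducibility directly from the defining property of reducing trees (Definition~\ref{def:reducing-tree}); no induction on $s$ will be required.

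First I would observe, by inspection of the loop in step~\ref{it:for-j}, that for $s\geq 1$ the set $\CQ_{j,s}$ is initialized to $\emptyset$ and is modified only during the iteration of that loop with outer index $j$, throughout which the value $r = r_{\max} - jr_0 - 1$ stays fixed; moreover every element added to $\CQ_{j,s}$ is produced in step~\ref{it:bav-added} while some $\ba_0 \in \CQ_{j,s-1}$ is being processed. Hence any $\ba \in \CQ_{j,s}$ has the form $\ba = \ba_0 \cup \ba(v)$, where $\ba_0 \in \CQ_{j,s-1}$, $v$ is a leaf of the reducing tree $\bx\^{\MH, (x_{\ba_0}, y)}$ constructed in step~\ref{it:make-tree} with $\MH := \MF|_{\ba_0}$ and some value $y$ in the range iterated over in step~\ref{it:choose-y}, and, in addition, condition~(a) of step~\ref{it:bav-added} held, i.e.\ $\MF|_{\ba} = \MH|_{\ba(v)}$ is nonempty. (In particular $\MH$ was nonempty, so the reducing-tree construction of Lemma~\ref{lem:make-tree} was legitimately invoked.)

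Next I would unwind the indices. Writing $d_\MH := \sfat_2(\MH)$ and $t_{\ba_0} := d - d_\MH$ (which is exactly the shift with which step~\ref{it:make-tree} feeds the sequence $(\ell_{r,\,t+t_{\ba_0}})_{0\leq t\leq d_\MH}$ into Lemma~\ref{lem:make-tree}), Definition~\ref{def:reducing-tree} applied to the leaf $v$ of this reducing tree for $\MH$ gives that $\MH|_{\ba(v)}$ is either empty or $\ell_{r,\,t''+t_{\ba_0}}$-irreducible, where $t'' := d_\MH - \sfat_2(\MH|_{\ba(v)})$. Since $\MH|_{\ba(v)} = \MF|_{\ba}$ is nonempty by condition~(a), the empty case is excluded, so $\MF|_{\ba}$ is $\ell_{r,\,t''+t_{\ba_0}}$-irreducible; and since $t'' + t_{\ba_0} = \big(d_\MH - \sfat_2(\MF|_{\ba})\big) + \big(d - d_\MH\big) = d - \sfat_2(\MF|_{\ba}) = t$, this is exactly $\ell_{r,t}$-irreducibility, as desired.

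The one point requiring care — and the closest thing to an obstacle — is keeping the two layers of shifting straight: the index $t''$ appearing in the reducing-tree guarantee is measured relative to the sequential fat-shattering dimension $d_\MH$ of the restricted class $\MF|_{\ba_0}$, whereas the index $t$ in the lemma is measured relative to the global depth $d = \sfat_2(\MF)$, and the shift $t_{\ba_0} = d - d_\MH$ built into the sequence passed in step~\ref{it:make-tree} is precisely what reconciles them.
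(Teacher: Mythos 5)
Your proof is correct and follows essentially the same route as the paper's: trace an element $\ba \in \CQ_{j,s}$ back to the reducing tree generated in step~\ref{it:make-tree} for some $\ba_0 \in \CQ_{j,s-1}$, invoke Definition~\ref{def:reducing-tree} for the leaf $v$ with $\ba = \ba_0 \cup \ba(v)$, and reconcile the two index conventions via the shift $t_{\ba_0} = d - \sfat_2(\MF|_{\ba_0})$. The only difference is cosmetic — you spell out the index bookkeeping a bit more explicitly.
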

\begin{proof}
Given $\ba \in \CQ_{j,s}$, let $\MH := \MF|_{\ba}$. There is some $\ba' \in \CQ_{j,s-1}$ so that, letting $\MH' := \MF|_{\ba'}$, there is some $y \in [\K]$ and leaf $v$ of the tree $\bx\^{\MH', (x_{\ba'},y)}$ so that $\ba = \ba' \cup \ba(v)$ (see step \ref{it:bav-added} of \soafilter). Let $t' := d - \sfat_2(\MH')$. Since the tree $\bx\^{\MH',(x_{\ba'},y)}$ is a reducing tree with respect to $\MH'$ for the  pair $(x_{\ba'}, y)$ and the sequence $(\ell_{r,t+t'})_{0 \leq t \leq d-t'}$, we have that $\MH'|_{\ba(v)} = \MF|_\ba$ is $\ell_{r,(\sfat_2(\MH') - \sfat_2(\MH))+t'}$-irreducible, i.e., $\ell_{r,t}$-irreducible (see Definition \ref{def:reducing-tree}).
\end{proof}

\begin{lemma}
  \label{lem:a-size}
  Fix $\MF \subset [\K]^\MX$, and in the context of the algorithm \soafilter consider any $0 \leq j \leq d$ and $1 \leq s \leq d$, and let $r = r_{\max} - jr_0 - 1$. Then the following statements hold: %
  \begin{enumerate}
  \item \label{it:leaf-bound} For any $\ba \in \CQ_{j,s}$, let $t := d - \sfat_2(\MF|_{\ba})$; then $|\ba| \leq \sum_{t'=0}^{t-1} \ell_{r,t'}$.%
  \item \label{it:node-bound} For any $\ba \in \CQ_{j,s}$, let $\MH := \MF|_\ba$, $t := d-  \sfat_2(\MH)$, and consider any of the reducing trees $\bx\^{\MH, (x_{\ba}, y)}$ constructed in step \ref{it:make-tree} of \soafilter, and any leaf $v$ of $\bx\^{\MH, (x_{\ba},y)}$. Then for any $t < \tilde t \leq d - \sfat_2(\MH|_{\ba(v)})$, there is some node $v'$ of $\bx\^{\MH, (x_{\ba},y)}$ which is an ancestor of $v$ (or is $v$ itself) and so that $\sfat_2(\MH|_{\ba(v')}) \leq d-\tilde t$ and $|\ba \cup \ba(v')| \leq \sum_{t'=0}^{\tilde t - 1} \ell_{r,t'}$. 
  \end{enumerate}
\end{lemma}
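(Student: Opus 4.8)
The plan is to prove part \ref{it:leaf-bound} by induction on $s$, and then deduce part \ref{it:node-bound} from it. I fix $j$ and write $r := r_{\max} - jr_0 - 1$ throughout; the relevant point is that this $r$, all the sets $\CQ_{j,0},\CQ_{j,1},\ldots$, and all the reducing trees below are exactly the objects produced during the single $j$-th pass of the loop in step \ref{it:for-j} of \soafilter, so the same $r$ is used everywhere. Two preliminary observations I would record: every $\ba$ occurring in some $\CQ_{j,s}$ has $\MF|_\ba$ nonempty (trivially for $\CQ_{j,0}=\{\emptyset\}$, and by condition (a) of step \ref{it:bav-added} for $s\geq1$), so $t:=d-\sfat_2(\MF|_\ba)\in\{0,\ldots,d\}$ is well-defined; and $|\ba(v)|\leq\height(v)$ for every node $v$ of every tree, directly from the definition of the ancestor set.

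For part \ref{it:leaf-bound}: the base case $s=0$ is seeded by the single element $\ba=\emptyset$ of $\CQ_{j,0}$, for which $t=d-\sfat_2(\MF)=0$ and $|\emptyset|=0$ equals the empty sum $\sum_{i=0}^{-1}\ell_{r,i}$ (the lemma only claims $s\geq1$, so this just starts the induction). For the step, take $\ba\in\CQ_{j,s}$, $s\geq1$; by step \ref{it:bav-added} there are $\ba'\in\CQ_{j,s-1}$, a value $y$, and a leaf $v$ of the reducing tree $\bx\^{\MH',(x_{\ba'},y)}$ with $\ba=\ba'\cup\ba(v)$, where $\MH':=\MF|_{\ba'}$ and $t':=d-\sfat_2(\MH')$. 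By step \ref{it:make-tree} this tree is a reducing tree with respect to $\MH'$ for the sequence $(\ell_{r,\,i+t'})_{i\geq0}$, and since $\MF|_\ba=\MH'|_{\ba(v)}$ the index ``$t$'' that Definition \ref{def:reducing-tree} attaches to the leaf $v$ equals $\sfat_2(\MH')-\sfat_2(\MH'|_{\ba(v)})=(d-t')-(d-t)=t-t'$. Hence the height bound of Definition \ref{def:reducing-tree} gives $\height(v)\leq\sum_{i=0}^{(t-t')-1}\ell_{r,\,i+t'}=\sum_{i=t'}^{t-1}\ell_{r,i}$, and combining with the inductive hypothesis $|\ba'|\leq\sum_{i=0}^{t'-1}\ell_{r,i}$ I obtain $|\ba|\leq|\ba'|+|\ba(v)|\leq\sum_{i=0}^{t'-1}\ell_{r,i}+\sum_{i=t'}^{t-1}\ell_{r,i}=\sum_{i=0}^{t-1}\ell_{r,i}$. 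The crucial point is that the height partial sum for $v$ starts at index $t'$ exactly where the inductive-hypothesis partial sum for $\ba'$ stops.

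For part \ref{it:node-bound}: fix $\ba\in\CQ_{j,s}$, $\MH:=\MF|_\ba$, $t:=d-\sfat_2(\MH)$, a reducing tree $\bx\^{\MH,(x_{\ba},y)}$ as in step \ref{it:make-tree} (built with respect to $\MH$ for the sequence $(\ell_{r,\,i+t})_{i\geq0}$), and a leaf $v$. Since $(x_{\ba},y)\in\ba(v)$ and $\sfat_2(\MH|_{(x_{\ba},y)})<\sfat_2(\MH)$ (flagged in step \ref{it:make-tree}), every leaf has $\sfat_2(\MH|_{\ba(v)})<\sfat_2(\MH)$, so the asserted range $t<\tilde t\leq d-\sfat_2(\MH|_{\ba(v)})$ is nonempty. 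Given such a $\tilde t$, set $\tilde u:=\tilde t-t$ and let $t_\star:=\sfat_2(\MH)-\sfat_2(\MH|_{\ba(v)})$ be the index ``$t$'' of $v$ in Definition \ref{def:reducing-tree}; the hypothesis on $\tilde t$ is exactly $1\leq\tilde u\leq t_\star$. If $\tilde u<t_\star$, the ``moreover'' clause of Definition \ref{def:reducing-tree} yields an ancestor $v'$ of $v$ with $\sfat_2(\MH|_{\ba(v')})\leq\sfat_2(\MH)-\tilde u=d-\tilde t$ and $\height(v')\leq\sum_{i=0}^{\tilde u-1}\ell_{r,\,i+t}=\sum_{i=t}^{\tilde t-1}\ell_{r,i}$; if $\tilde u=t_\star$, I would instead take $v':=v$, which has $\sfat_2(\MH|_{\ba(v')})=d-\tilde t$ and, by the plain height bound, $\height(v')\leq\sum_{i=0}^{t_\star-1}\ell_{r,\,i+t}=\sum_{i=t}^{\tilde t-1}\ell_{r,i}$. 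Either way, using part \ref{it:leaf-bound} for this same $\ba$ and $|\ba(v')|\leq\height(v')$, I conclude $|\ba\cup\ba(v')|\leq|\ba|+|\ba(v')|\leq\sum_{i=0}^{t-1}\ell_{r,i}+\sum_{i=t}^{\tilde t-1}\ell_{r,i}=\sum_{i=0}^{\tilde t-1}\ell_{r,i}$.

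The whole argument is routine bookkeeping against Definition \ref{def:reducing-tree}, so there is no deep obstacle; the main difficulty, such as it is, lies in two places. First, the index shift: the reducing-tree sequences used in \soafilter are $(\ell_{r,\,\cdot+t})$ rather than $(\ell_{r,\,\cdot})$, which is precisely what makes the partial sums split at $t'$ (resp.\ $t$) and telescope with the inductive-hypothesis bound, so one must track the shifts carefully. Second, the boundary case $\tilde u=t_\star$ in part \ref{it:node-bound}, where one must use $v'=v$ rather than a strict ancestor, because the ``moreover'' clause of Definition \ref{def:reducing-tree} is stated only for indices strictly below $t_\star$; one then falls back on the plain height bound for $v$ itself.
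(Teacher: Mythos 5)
Your proof is correct and follows essentially the same approach as the paper's: induction for part \ref{it:leaf-bound}, using the shift in the reducing-tree sequence $(\ell_{r,\,\cdot+t'})$ to telescope partial sums, and then invoking part \ref{it:leaf-bound} together with Definition \ref{def:reducing-tree} for part \ref{it:node-bound}. The only (immaterial) difference is that you induct on $s$ where the paper inducts on $t = d - \sfat_2(\MF|_\ba)$; both are valid, and the paper likewise handles the boundary case $\tilde t = d - \sfat_2(\MH|_{\ba(v)})$ by taking $v' = v$.
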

\begin{proof}
  Fix any $j$, let $r = r_{max} - jr_0 - 1$, and write $\CQ_j := \bigcup_{0 \leq s \leq d} \CQ_{j,s}$. 
  We begin with the proof of item \ref{it:leaf-bound}, which we establish via induction on $t$; the base case $t=0$ is immediate since the only element $\ba \in \CQ_j$ with $\sfat_2(\MF|_\ba) = d$ is $\ba = \emptyset$. %
  Suppose the statement of the lemma holds for all $\ba \in \CQ_j$ with $\sfat_2(\MF|_\ba) > d-t_0$, for any $t_0 \geq 0$. Now, for any $0 \leq s \leq d$, fix any $\ba \in \CQ_{j,s}$ with $\sfat_2(\MF|_\ba) = d-t_0$. By construction of $\CQ_{j,s}$, there is some $\ba' \in \CQ_{j,s-1}$, together with some $(x_{\ba'},y) \in \MX \times [\K]$, so that the following holds. Let us set $\MH:= \MF|_\ba, \ \MH' := \MF|_{\ba'}$ and $t_0' := d - \sfat_2(\MH') < t_0$; then for some leaf $v$ of the reducing tree $\bx\^{\MH',(x_{\ba'},y)}$ (which is defined with respect to the sequence $(\ell_{r,t_0' + t'})_{0 \leq t' \leq d - t_0'}$), we have that $\MH = \MH'|_{\ba(v)}$. By definition of a reducing tree, we have that
  $$
|\ba(v)| \leq \sum_{q = 0}^{(d-t_0') - (d-t_0) - 1} \ell_{r,q + t_0'} = \sum_{t' = t_0'}^{t_0-1} \ell_{r,t'}.
$$
By the inductive hypothesis, %
it holds that $|\ba'| \leq \sum_{t'=0}^{t_0' - 1} \ell_{r,t'}$. Then %
$$
| \ba' \cup \ba(v)| \leq \sum_{t'=0}^{t_0- 1} \ell_{r,t'},
$$
which establishes part \ref{it:leaf-bound}. 

Next we establish part \ref{it:node-bound}. Fix $\ba, \bx\^{\MH, (x_{\ba},y)}, v$ as in the statement of the lemma, and consider any $t < \tilde t \leq d - \sfat_2(\MH|_{\ba(v)})$. %
By the definition of a reducing tree there is some node $v'$ of $\bx\^{\MH, (x_{\ba},y)}$ which is an ancestor of $v$ so that $\sfat_2(\MH|_{\ba(v')}) \leq d - \tilde t$ and so that $\height(v') \leq \sum_{t'=0}^{\tilde t - t- 1} \ell_{r,t' + t}$. (If $\tilde t = d - \sfat_2(\MH|_{\ba(v)})$ we may just choose $v' = v$.) Using part \ref{it:leaf-bound}, we obtain that
$$
|\ba \cup \ba(v')| \leq \sum_{t'=0}^{t-1} \ell_{r,t'} + \sum_{t'=0}^{\tilde t - t - 1} \ell_{r,t'+t} = \sum_{t'=0}^{\tilde t - 1} \ell_{r,t'}.
$$
\end{proof}

Finally we are ready to establish the main strong stability result of \soafilter.
\begin{replemma}{lem:soafilter-lstar}%
  Fix any positive integer $\bar\ell$. Suppose that $\MG \subset \MF$ is nonempty, $\hat g \in [\K]^\MX$, that %
  $ \| \soaf{\MG} - \hg \|_\infty \leq \chi$ for some $\chi > 0$, and that $\MG$ is $(\bar\ell \cdot (d+3)^d)$-irreducible. %
  Then there is some $\bar\ell$-irreducible $\ML^\st \subset \MF$, depending only on $\MG$, so that $\| \soaf{\ML^\st} - \soaf{\MG} \|_\infty \leq \taumax + 1$ and so that $\ML^\st \in \repf{\hg}$, where $\repf{\hg}$ is the output of \soafilter when given as inputs $\MF$, $\hat g$, $r_{\max} = \rmax, \ \tau_{\max} = \taumax$ and the sequence $\ell_{r,t} := \bar\ell \cdot (r+2)^t$ for $0 \leq r \leq \rmax$, $0 \leq t \leq d$.

  Moreover, all $\ML \in \CR_{\hat g}$ satisfy $\| \soaf{\ML} - \hat g \|_\infty \leq \taumax$ and are $\bar \ell$-irreducible.
\end{replemma}
\begin{proof}
The final statement of the lemma follows from step \ref{it:remove-faraway-filter} of \soafilter. 
  
We proceed to prove the remainder of the lemma. For $0 \leq \tau \leq \taumax$ and $2 \leq r \leq \rmax$, define
  \begin{equation}
    \label{eq:mrtau}
\mu(r,\tau) := \max_{(\MH, \ell) \in \CG_{r,\tau}} \left\{ \sfat_2(\MH) \right\},
\end{equation}
where
\begin{equation}
  \label{eq:giant-max}
  \CG_{r,\tau} := \left\{ (\MH, \ell_{r,t}) : \parbox{10.3cm}{\centering\text{$\MH \subset \MF$ is $\ell_{r,t}$-irreducible and a finite restriction subclass of $\MF$, } \\ \text{ where $t = d - \sfat_2(\MH)$, %
      and $\| \soaf{\MH} - \soaf{\MG} \|_\infty \leq \tau$.}} \right\}.
  \end{equation}
  Since $\MG$ is $\ell_{\rmax,d}$-irreducible, and for all $t,r$ we have $\ell_{r,t} \leq \ell_{\rmax,d}$, we have that $(\MG, \ell_{r,t}) \in \CG_{r,\tau}$ for $t = d - \sfat_2(\MG)$ and all $0 \leq r \leq \rmax, 0 \leq \tau \leq \taumax$, i.e., $\CG_{r,\tau}$ is nonempty and so $\mu(r,\tau)$ is well-defined. Thus, for all $r,\tau$ in this range, it holds that for fixed $r$, $\tau \mapsto \mu(r,\tau)$ is a non-decreasing function of $\tau$, and for fixed $\tau$, $r\mapsto \mu(r,\tau)$ is a non-increasing function of $r$ (since for any $t$, $r \mapsto \ell_{r,t}$ is an increasing function). By Lemma \ref{lem:pigeonhole}, there is some $r^\st, \tau^\st$ with $r^\st = \rmax - j^\st,\ \tau^\st =  (2+ 2\chi)j^\st$ for some $0 \leq j^\st \leq d$, %
  so that $\mu(r^\st, \tau^\st) = \mu(r^\st - 1, \tau^\st + 2 + 2\chi)$. 

  Now choose some $(\MH^\st, \ell^\st)$ which achieves the maximum in (\ref{eq:mrtau}) for $r=r^\st, \ \tau = \tau^\st$; letting $t^\st = d - \sfat_2(\MH^\st)$, we have that $\ell^\st = \ell_{r^\st, t^\st}$. Let $\repl{\cdot}$ be the mapping defined as the output of \filterstep with the input class $\MF$, the sequence $(\ell_{r,t})_{0\leq r \leq r_{\max}, 0 \leq t \leq d}$, and $r_{\max} = d+1$ (these are exactly the parameters used in Step \ref{it:call-filterstep} of \soafilter). Now set $\ML^\st = \repl{\MH^\st} \in \CL_{d-t^\st}\cap \irred{\ell_{r^\st, t^\st}}{d-t^\st}{\MF}$; note that this is well-defined since $\MH^\st \in \irred{\ell_{r^\st, t^\st}}{d - t^\st}{\MF}$. %

  By definition of $\MH^\st$ we have that
\begin{equation}
\| \soaf{\MH^\st} - \soaf{\MG} \|_\infty \leq \tau^\st.\nonumber
\end{equation}
By Lemma \ref{lem:close-reps}, the fact that $\| \soaf{\MG} - \hg \|_\infty \leq \chi$ (by assumption), and the triangle inequality, it follows that
\begin{equation}
  \label{eq:taust-p2}
\| \soaf{\ML^\st} - \hg \|_\infty \leq \tau^\st + 1 + \chi.
\end{equation}
Next consider the execution of \soafilter (Algorithm \ref{alg:soafilter}) in the iteration of the for loop in line \ref{it:for-j} corresponding to $j=j^\st$ (and with input $\hg$ and $\ell_{r,t}, r_{\max}, \tau_{\max}$ as in the lemma statement; note that we have $\tau_0 = 2 + 2\chi, r_0 = 1$ in the context of \soafilter). In particular, in this iteration of the loop we have $\tau = \tau^\st + 2 + \chi, r = r^\st -1$. We define a particular sequence $\hat \ba_0\in \CQ_{j^\st,0}, \hat \ba_1 \in \CQ_{j^\st,1}, \ldots,  \hat \ba_{\hat s} \in \CQ_{j^\st,\hat s}$, for some $\hat s \leq d+1$ (to be defined below). First set $\hat \ba_0 = \emptyset$. Given the choice of $\hat \ba_s \in \CQ_{j^\st, s}$, for any $s \geq 0$, define $\hat \ba_{s+1}$ as follows: consider the iteration of the for loop over $\CQ_{j^\st,s}$ (i.e., the bullet point in step \ref{it:iterate-s}) for which $\ba = \hat \ba_s \in \CQ_{j^\st,s}$. If it holds that $\| \soaf{\MF|_{\hat \ba_s}} - \hg \|_\infty \leq \tau^\st + 2 + \chi$, meaning that the branch in step \ref{it:found-close} is taken, then set $\hat s = s$ (in which case $\hat \ba_{s+1}$ is not defined). 
Otherwise, on step \ref{it:choose-y} on the iteration of the for loop corresponding to $\ba = \hat \ba_s$, choose $y  = \soa{\ML^\st}{x_{\hat \ba_s}}$ (which is of distance at most $\tau^\st + 1 + \chi = (\tau^\st + 2 + \chi) - 1$ from $\hg({x_{\hat \ba_s}})$). Then let $v$ be the unique leaf of the reducing tree $\bx\^{\MF|_{\hat \ba_s}, (x_{\hat \ba_s},y)}$ corresponding to $\soaf{\ML^\st}$ in the sense that for all $(x', y') \in \ba(v)$, $\soa{\ML^\st}{x'} = y'$. Now set $\hat \ba_{s+1} := \hat \ba_s \cup {\ba(v)} \in \CQ_{j^\st, s+1}$ (again we use that for each such pair $(x',y')$, $|\hg({x'}) - y'| \leq (\tau^\st +2 +\chi)-1$). Notice that the definition of $\hat \ba_{s+1}$ from $\hat \ba_s$ above relies on the fact that $\MF|_{\hat \ba_s}$ is nonempty for each $s$; we will establish that this is case below, which will show that the $\hat \ba_s$ are well-defined for $0 \leq s \leq \hat s$. %
Finally, if there is no $0 \leq s \leq d$ so that $\| \soaf{\MF|_{\hat \ba_s}} - \hg \|_\infty \leq \tau^\st + 2 + \chi$, then define $\hat s = d+1$ (we will show that this will not be the case).

We claim that (a) for each $0 \leq s \leq \hat s$, all $\hat \ba_s \in \CQ_{j^\st, s}$ are well-defined, (b) $\hat s \leq d$, and (c) $\sfat_2(\MF|_{\hat \ba_{\hat s}}) = d-t^\st$. (Recall that $d-t^\st = \sfat_2(\ML^\st) = \sfat_2(\MH^\st) = \mu(r^\st, \tau^\st) = \mu(r^\st - 1, \tau^\st +4)$.) We show this in several steps:
\begin{itemize}
\item We begin by showing that for all $s \leq \min\{\hat s, d\}$, it holds that $\sfat_2(\MF|_{\hat \ba_{s}}) \geq \sfat_2(\ML^\st)$. This immediately implies that $\hat \ba_s$ is well-defined for all $0 \leq s \leq \min\{ \hat s, d \}$, since the fact that $\sfat_2( \MF|_{\hat \ba_s}) \geq 0$ implies that $\MF|_{\hat \ba_s}$ is nonempty. %
  Suppose that this is not the case; then choose $s < \hat s$ as large as possible so that $\sfat_2(\MF|_{\hat \ba_s}) \geq \sfat_2(\ML^\st)$ (in particular, $\hat \ba_s$ is well-defined and $\MF|_{\hat \ba_s}$ is nonempty). Let $y = \soa{\ML^\st}{x_{\hat\ba_s}}$. Let $v$ be the unique leaf of the tree $\bx\^{\MF|_{\hat \ba_s}, (x_{\hat \ba_s},y)}$ corresponding to $\soaf{\ML^\st}$ in the sense that for all $(x',y') \in \ba(v)$, we have $\soa{\ML^\st}{x'} = y'$. By definition of $s$ and of $\hat \ba_{s+1}$ we must have that $\sfat_2(\MF|_{\hat \ba_s \cup \ba(v)}) < \sfat_2(\ML^\st) \leq \sfat_2(\MF|_{\hat \ba_s})$. By part \ref{it:node-bound} of Lemma \ref{lem:a-size} with $\tilde t = t^\st + 1 = \sfat_2(\ML^\st) + 1$ and $\ba = \hat \ba_s$, there is some node $v'$ of the tree $\bx\^{\MF|_{\hat \ba_s}, (x_{\hat \ba_s},y)}$ which is an ancestor of $v$ and satisfies $\sfat_2(\ML^\st|_{\hat \ba\cup \ba(v')}) \leq \sfat_2(\MF|_{\hat \ba\cup\ba(v')}) < \sfat_2(\ML^\st)$ as well as $|\hat \ba_s \cup \ba(v')| \leq \sum_{t'=0}^{t^\st} \ell_{r^\st - 1, t'}$.
  Now notice that for each pair $(x', y') \in \hat \ba_s \cup \ba(v')$, we have that $\soa{\ML^\st}{x'} = y'$ by construction. 
  But since $\ML^\st$ is $\ell_{r^\st, t^\st}$-irreducible, this is a contradiction in light of Lemma \ref{lem:many-irred} and the fact that
  $$
\sum_{t'=0}^{t^\st} \ell_{r^\st-1, t'} \leq \ell_{r^\st, t^\st}
$$
for all possible $r^\st \geq 1, t^\st \geq 0$ for our choice of $\ell_{r,t} = \bar\ell \cdot (r+2)^t$.
\item Next we show that $\hat s \leq d$ (which implies that $\| \soaf{\MF|_{\hat\ba_{\hat s}}} - \hg \|_\infty \leq \tau^\st + 2 + \chi$). To do this we note that since the tree $\bx\^{\MF|_{\hat \ba_s}, (x_{\hat \ba_s}, y)}$ used to define $\hat \ba_{s+1}$ from $\hat \ba_s$ is a reducing tree for the class $\MF|_{\hat \ba_s}$, we must have that $\sfat_2(\MF|_{\hat \ba_{s+1}}) < \sfat_2(\MF|_{\hat \ba_s})$, and so for $s \leq \hat s$, $\sfat_2(\MF|_{\hat \ba_s}) \leq d-s$. If it is not the case that $\hat s \leq d$ (i.e., $\hat s = d+1$), then by the previous item for $s = d$, we have that $0 \geq \sfat_2(\MF|_{\hat \ba_d}) \geq \sfat_2(\ML^\st)$,  which implies that $\sfat_2(\MF|_{\hat \ba_d}) = \sfat_2(\ML^\st) = 0$ since $\ML^\st$ is nonempty. In particular, by Lemma \ref{lem:0dim-irred}, $\ML^\st, \MF|_{\hat \ba_d}$ are $\ell$-irreducible for all $\ell \in \BN$. By Lemma \ref{lem:reduced-equal} with $\ba = \hat \ba_d$, $\MG = \ML^\st, \MG' = \MF|_{\hat \ba_d}$, since for all $(x',y') \in \hat \ba_d$, we have $\soa{\ML^\st}{x'} = \soa{\MF|_{\hat \ba_d}}{x'} = y'$, it follows that $\| \soaf{\MF|_{\hat \ba_d}} - \soaf{\ML^\st} \|_\infty \leq 1$. Together with the triangle inequality and (\ref{eq:taust-p2}), this gives $\| \soaf{\MF|_{\hat \ba_d}} - \hat g \|_\infty \leq \tau^\st + 2 + \chi$. But this means that in step \ref{it:found-close} of \soafilter, it holds that $\| \soaf{\MF|_{\hat \ba_d}} - \hat g \|_\infty \leq \tau = \tau^\st + 2 + \chi$, and thus the branch in that step is taken, i.e., we set $\hat s = d$. This shows it cannot be the case that $\hat s = d+1$, as desired.
\item Finally we show that $\sfat_2(\MF|_{\hat \ba_{\hat s}}) \leq \mu(r^\st - 1, \tau^\st +2 + 2\chi)$. By definition of $\mu(\cdot, \cdot)$ it suffices to show that $\MF|_{\hat \ba_{\hat s}} \in \CG_{r^\st-1, \tau^\st+2 + 2\chi}$. By the definition of $\hat s$ and the fact that $\hat s \leq d$, we have that $\| \soaf{\MF|_{\hat \ba_{\hat s}}} - \hg \|_\infty \leq \tau^\st + 2 + \chi$, and thus $\| \soaf{\MF|_{\hat \ba_{\hat s}}} - \soaf{\MG} \|_\infty \leq \tau^\st + 2 + 2\chi$. By Lemma \ref{lem:m-irred}, we have that $\MF|_{\hat \ba_{\hat s}}$ is $\ell_{r^\st-1,t}$-irreducible for $t = d - \sfat_2(\MF|_{\hat \ba_{\hat s}})$. Hence $\MF|_{\hat \ba_{\hat s}} \in \CG_{r^\st-1,\tau^\st+2 + 2\chi}$, and thus $\mu(r^\st-1,\tau^\st+2 + 2\chi) \geq \sfat_2(\MF|_{\hat \ba_{\hat s}})$.
  \item From the first and second items above it follows that $\sfat_2(\MF|_{\hat \ba_{\hat s}}) \geq \sfat_2(\ML^\st)$, and the third item above shows that $\sfat_2(\MF|_{\hat \ba_{\hat s}}) \leq \mu(r^\st - 1, \tau^\st + 2 + 2\chi) = \mu(r^\st, \tau^\st) = \sfat_2(\ML^\st)$. Thus $\sfat_2(\MF|_{\hat \ba_{\hat s}}) = \sfat_2(\ML^\st) = d-t^\st$. 
\end{itemize}
Part \ref{it:leaf-bound} of Lemma \ref{lem:a-size} gives that $|\hat \ba_{\hat s}| \leq \sum_{t'=0}^{t^\st-1} \ell_{r^\st-1,t'} < \ell_{r^\st-1,t^\st} < \ell_{r^\st, t^\st}$. By Lemma \ref{lem:at-most-one}, there is at most one choice of $\ML \in \CL_{d- t^\st}$ so that $\ML$ is $\ell_{r^\st,t^\st}$-irreducible and for each $(x,y) \in \hat \ba_{\hat s}$, $\soa{\ML}{x} = y$. Notice that $\ML^\st$ is one such choice of $\ML$. Thus $\ML^\st$ must be added to $\repf{\hg}$ in step \ref{it:found-close} of \soafilter when $\hat \ba_{\hat s}$ is considered in the for loop.

By Lemma \ref{lem:reduced-equal} with $\ba = \hat \ba_{\hat s}$, $\MG = \ML^\st, \MG' = \MF|_{\hat \ba_{\hat s}}$, since $|\hat \ba_{\hat s}| < \ell_{r^\st - 1, t^\st} $, $\sfat_2(\ML^\st) = \sfat_2(\MF|_{\hat \ba_{\hat s}})$, $\ML^\st$ and $\MF|_{\hat \ba_{\hat s}}$ are both $\ell_{r^\st - 1, t^\st}$-irreducible, and for all $(x,y) \in \hat \ba_{\hat s}$, $\soa{\ML^\st}{x} = \soa{\MF|_{\hat \ba_{\hat s}}}{x} = y$, we have that $\| \soaf{\ML^\st} - \soaf{\MF|_{\hat \ba_{\hat s}}} \|_\infty \leq 1$. Together with $ \| \soaf{\MF|_{\hat \ba_{\hat s}}} - \soaf{\MG} \|_\infty \leq \tau^\st + 2 + 2\chi$ and $\tau^\st \leq (2+ 2\chi)d$, we get that $\| \soaf{\ML^\st} - \soaf{\MG} \|_\infty \leq \taumax + 1$. Moreover, since $\| \soaf{\MF|_{\hat \ba_{\hat s}}} - \hat g \|_\infty \leq \tau^\st + 2 + \chi \leq \taumax - 1$, we have that $\soaf{\ML^\st}$ is not eliminated from $\CR_{\hat g}$ in step \ref{it:remove-faraway-filter} of \soafilter.
\end{proof}

\begin{lemma}
  \label{lem:rep-size}
  In the algorithm \soafilter, we have the following upper bound on the size of the output set $\repf{\hg}$:
  $$
|\repf{\hg}| \leq \sum_{r=0}^{r_{\max}} \K^{\sum_{t'=0}^{d-1} \ell_{r,t'}}.
$$
In particular, for the choice $r_{\max} = \rmax$ and $\ell_{r,t} = \bar\ell \cdot (r+2)^t$ (for any $\bar\ell \in \BN$), we get
$$
|\repf{\hg}| \leq \K^{\bar\ell \cdot (d+4)^{d}}.
$$
\end{lemma}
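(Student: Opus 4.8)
The plan is to bound, for each iteration $j\in\{0,1,\ldots,d\}$ of the outer loop of \soafilter\ (so that $r:=r_{\max}-jr_0-1$ is fixed throughout the iteration), the number of classes that get added to $\repf{\hg}$, and then sum over $j$. Every class $\ML$ added in step \ref{it:found-close} is attached to the set $\ba\in\bigcup_s\CQ_{j,s}$ that is being processed at that moment, and by Lemma \ref{lem:at-most-one} this $\ML$ is uniquely determined by $\ba$ (once we check $|\ba|\le\ell_{r,t_\ba}-1$ with $t_\ba:=d-\sfat_2(\MF|_\ba)$, which I verify below). So it suffices to count the sets $\ba$ that occur during iteration $j$ and trigger an addition in step \ref{it:found-close}.

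To count these, I would make the choices in steps \ref{it:choose-xs} and \ref{it:make-tree} deterministic (so $x_\ba$ and the reducing tree $\bx\^{\MF|_\ba,(x_\ba,y)}$ are functions of $\ba$ alone), and for every $\ba$ occurring in some $\CQ_{j,s}$ fix one ``parent'' $\ba'\in\CQ_{j,s-1}$ together with the pair $(y,v)$ from step \ref{it:bav-added} for which $\ba=\ba'\cup\ba(v)$. This organizes the occurring $\ba$'s into a forest rooted at $\emptyset\in\CQ_{j,0}$, in which the root-to-$\ba$ path $\emptyset=\ba_0,\ba_1,\ldots,\ba_s=\ba$ is encoded by concatenating the node-identifiers of the chosen leaves $v_1,\ldots,v_s$ of the successive reducing trees. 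Since condition (a) of step \ref{it:bav-added} forces $\MF|_{\ba_i}$ to be nonempty, every edge label on the path to $v_i$ (including the value $y_i$) is realized by some hypothesis and hence lies in $[\K]$, so each node-identifier is a string over $[\K]$ and the reducing trees are $\K$-ary; moreover the concatenation determines $\ba$, since at each stage one reads down a determined reducing tree until hitting a leaf. The length of this encoding is $\sum_{i=1}^s\height(v_i)$.

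Next I would bound this length. Writing $\tau_i:=d-\sfat_2(\MF|_{\ba_i})$, the reducing tree used at step $i$ is taken with respect to the sequence $(\ell_{r,t+\tau_{i-1}})_t$, so Definition \ref{def:reducing-tree} gives $\height(v_i)\le\sum_{t'=\tau_{i-1}}^{\tau_i-1}\ell_{r,t'}$; as $\tau_0=0$ and the $\tau_i$ strictly increase, these telescope to $\sum_{i=1}^s\height(v_i)\le\sum_{t'=0}^{\tau_s-1}\ell_{r,t'}\le\sum_{t'=0}^{d-1}\ell_{r,t'}=:L_r$ (in particular $|\ba|\le\sum_i|\ba(v_i)|\le L_r\le\ell_{r,t_\ba}-1$ for $\ell_{r,t}=\bar\ell(r+2)^t$, as needed for Lemma \ref{lem:at-most-one}, by a short geometric-series estimate). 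If $\ba$ triggers an addition it is processed no further, so it is a leaf of the forest; hence the encodings of all such $\ba$ form a prefix-free family of strings over a $\K$-letter alphabet of length at most $L_r$, and Kraft's inequality bounds their number by $\K^{L_r}$. Thus iteration $j$ adds at most $\K^{\sum_{t'=0}^{d-1}\ell_{r,t'}}$ classes to $\repf{\hg}$.

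Summing over $j\in\{0,\ldots,d\}$ and using that the $d+1$ values $r=r_{\max}-jr_0-1$ are distinct and lie in $\{0,\ldots,r_{\max}\}$ (since $r_0=r_{\max}/(d+1)\ge1$) yields $|\repf{\hg}|\le\sum_{r=0}^{r_{\max}}\K^{\sum_{t'=0}^{d-1}\ell_{r,t'}}$. Specializing $r_{\max}=\rmax$ and $\ell_{r,t}=\bar\ell(r+2)^t$, the largest of the $d+2$ summands is the $r=d+1$ term $\K^{\bar\ell((d+3)^d-1)/(d+2)}$, and since $\K\ge2$ one can absorb the factor $d+2$ into the exponent and check $d+2+\bar\ell\tfrac{(d+3)^d-1}{d+2}\le\bar\ell(d+4)^d$ for all $d\ge1$, giving $|\repf{\hg}|\le\K^{\bar\ell(d+4)^d}$. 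The main obstacle is the middle step: setting up the encoding so that the alphabet stays of size $\K$ and the length stays $\sum_{t'=0}^{d-1}\ell_{r,t'}$ — i.e., establishing the telescoping depth bound from Definition \ref{def:reducing-tree} and the prefix-freeness of the encodings — after which the count is immediate from Kraft's inequality and the rest is bookkeeping.
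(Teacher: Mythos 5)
Your argument is essentially the paper's own: for each iteration $j$ you organize the sets $\ba$ visited during the loop into a $\K$-ary tree rooted at $\emptyset$, bound its depth via the telescoping height bounds of Definition \ref{def:reducing-tree}, and count the leaves. Framing this as a prefix-free code plus Kraft's inequality is cosmetically different from the paper's explicit accounting tree $T$, but the substance — one $\ML$ per $\ba$, depth $\sum_i \height(v_i)\le\sum_{t'=0}^{d-1}\ell_{r,t'}$ via telescoping, summing over the $d+2$ values of $r$ — is the same, and the paper achieves the depth bound by citing Lemma~\ref{lem:a-size}(\ref{it:leaf-bound}) rather than re-deriving it inline as you do. Your decoding argument (first symbol of each block is $y_i$, which fixes the reducing tree, which fixes the end of the block) is correct because the reducing tree is an \emph{augmented} tree whose root has a unique child labeled $y$.

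There is one genuine slip, though it turns out to be harmless. You write $|\ba|\le\sum_i|\ba(v_i)|\le L_r\le\ell_{r,t_\ba}-1$ with $L_r:=\sum_{t'=0}^{d-1}\ell_{r,t'}$, and invoke a ``short geometric-series estimate.'' The last inequality $L_r\le\ell_{r,t_\ba}-1$ is false whenever $t_\ba<d$: indeed $L_r\ge\ell_{r,d-1}=\bar\ell(r+2)^{d-1}\ge\bar\ell(r+2)^{t_\ba}=\ell_{r,t_\ba}$. The correct chain is the sharper telescoped bound $|\ba|\le\sum_{t'=0}^{t_\ba-1}\ell_{r,t'}$ (what Lemma~\ref{lem:a-size}(\ref{it:leaf-bound}) actually gives, using $\tau_s=t_\ba$), and then $\sum_{t'=0}^{t_\ba-1}\bar\ell(r+2)^{t'}=\bar\ell\cdot\frac{(r+2)^{t_\ba}-1}{r+1}\le\bar\ell\bigl((r+2)^{t_\ba}-1\bigr)\le\ell_{r,t_\ba}-1$. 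However, you only used $|\ba|\le\ell_{r,t_\ba}-1$ to invoke Lemma~\ref{lem:at-most-one} and conclude that $\ML$ is uniquely determined by $\ba$; that invocation is not needed for the counting, since step~\ref{it:found-close} explicitly adds at most one $\ML$ per $\ba$ regardless of uniqueness. So the lemma still follows, but you should either drop the appeal to Lemma~\ref{lem:at-most-one} or replace the broken inequality chain with the correct one above.
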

\begin{proof}
  Fix any $0 \leq j \leq d$ considered in the for loop on step \ref{it:for-j} of \soafilter. Let $\tau = j\tau_0 + 3, r = r_{\max} - jr_0 - 1$. For accounting purposes, we define the following tree $T$ whose non-leaf nodes are labeled by elements of $\MX$ (the tree $T$ does {\it not} satisfy the requirements of Definitions \ref{def:xv-tree} or \ref{def:augmented}). The root of the tree $T$ is labeled by the point $x_{\emptyset}$ defined in step \ref{it:choose-xs} of \soafilter corresponding to $\emptyset \in \CQ_{j,0}$ (in the event that this step is never reached, then at most a single element is added to $\repf{\hg}$ in \soafilter for the value of $j$ under consideration). We will call some of the nodes of $T$ {\it special}; the root is special. Each special node of $T$ is labeled by some $x_\ba$ corresponding to the execution of step \ref{it:choose-xs} in \soafilter for some $0 \leq s \leq d$ and $\ba \in \CQ_{j,s}$. For each special node $u$ we define its descendents inductively as follows. %
  The (immediate) children of $u$ in $T$ are defined as follows: $u$ has at most $2\tau - 1 \wedge \K$ children, corresponding to each of the elements $y$ of $\{ k -\tau+1 \vee 0, \ldots, k + \tau - 1 \wedge \K \}$, where $k = \hg(x_\ba)$. Each such child corresponding to some such $y$ is labeled by the unique child of the root of the reducing tree $\bx\^{\MF|_\ba, (x_\ba, y)}$. Then we append the reducing tree $\bx\^{\MF|_\ba , (x_\ba, y)}$ (except its root) to $T$ via this child. The leaves of a reducing tree are not labeled by elements of $\MX$, but we label some leaves $v$ of $\bx\^{\MF|_\ba, (x_\ba, y)}$ as follows. For any leaf $v$ of $\bx\^{\MF|_\ba, (x_\ba, y)}$, if $\ba \cup \ba(v)$ is not added to $\CQ_{j,s+1}$ in step \ref{it:bav-added}, then $v$ (viewed as a node of $T$) is defined to be a leaf of $T$, in which case we do not assign it a label. Otherwise, we have that $\ba':= \ba \cup \ba(v) \in \CQ_{j,s+1}$; if either of the branches in steps \ref{it:check-empty} or \ref{it:found-close} are taken when $\ba'$ is considered in the for loop (for the value $s+1$), then $v$ has no children in the tree $T$ (i.e., is a leaf of $T$) and again is assigned no label. Otherwise, $v$ is labeled by the element $x_{\ba'}$ defined in step \ref{it:choose-xs}, in which case we say that $v$ is special and we repeat the process described above with $\ba'$ replacing $\ba$. Notice that the construction of $T$ maintains the following property: for each special node $v$ of $T$ which is labeled by $x_{\ba}$, the ancestor set of $v$ in $T$ is exactly $\ba$. 

By construction of $T$, each element added to $\repf{\hg}$ in step \ref{it:found-close} of \soafilter for the value of $j$ under consideration corresponds to a distinct leaf of the tree $T$, whose ancestor set is given by some $\ba \in \CQ_{j,s}$ for some $0 \leq s \leq d$. So it suffices to bound the number of such leaves of $T$. %
Note that each node of $T$ has at most $\K$ children; indeed, the special nodes of $T$ have at most $2\tau - 1 \wedge \K \leq \K$ children, and the remaining nodes are identified with nodes of various $\K$-ary reducing trees. 
Moreover, the depth (i.e., distance to the root) of any leaf of $T$ whose ancestor set is given by some $\ba \in \CQ_{j,s}$ for some $s \leq d$ is at most $|\ba| \leq \sum_{t'=0}^{d-1} \ell_{r,t'}$, by part \ref{it:leaf-bound} of Lemma \ref{lem:a-size}. Thus the number of leaves of $T$ is at most $\K^{\sum_{t'=0}^{d-1} \ell_{r,t'}}$. Hence
  $$
|\repf{\hg}| \leq \sum_{r=0}^{r_{\max}} \K^{\sum_{t'=0}^{d-1} \ell_{r,t'}},
$$
and for the choice $\ell_{r,t} = \bar\ell \cdot (r+2)^t$ and $r_{\max} = \rmax$, this number is at most
$$
\sum_{r=0}^{d+1} \K^{\bar\ell \cdot (r+2)^{d}} \leq \K^{\bar\ell \cdot (d+4)^{d}}.
$$
\if 0
  For $0 \leq \lambda \leq d$, define $a_\lambda$ to be the maximum number of leaves in the subtree rooted by any special node $v$ of $T$ corresponding to some $x_\ba$ for some $\ba \in \CQ_{j,s}$ ($0 \leq s \leq d$) so that $\sfat_2(\MF|_\ba) = \lambda$. Note that $a_0 \leq \K$, since any reducing tree $\bx\^{\MF|_\ba, (x_\ba, y)}$ as constructed in step \ref{it:make-tree} of \soafilter for some $\ba$ for which $\sfat_2(\MF|_\ba) = 0$ must satisfy that for all of its leaves $v$, $\sfat_2(\MF|_{\ba\cup \ba(v)}) = -1$, i.e., $\MF|_{\ba\cup \ba(v)}$ is empty. Moreover, the guarantee of Lemma \ref{lem:make-tree} gives that for any $\lambda > 0$,
  $$
a_\lambda \leq \K \cdot \sum_{t'=1}^\lambda \prod_{t''=1}^{t'} (\K \cdot 2^{\ell_{r,t'' + (d-\lambda)}+1}) \cdot a_{\lambda -t'}.
$$
(The leading factor of $K$ is due to the fact that each $\ba$ is associated with at most $\K$ reducing trees $\bx\^{\MF|_\ba, (x_\ba, y)}$, corresponding to the at most $\K$ values of $y \in \{k-\tau+1\vee 0, \ldots, k+\tau-1 \wedge \K\}$ in step \ref{it:choose-y} of \soafilter.)
For the given choice of $\ell_{r,t} = \bar\ell \cdot (r+2)^t$, we obtain that
\begin{align}
  a_\lambda & \leq \sum_{t'=1}^\lambda \K^{t'+1} \cdot 2^{t' + \bar\ell (r+2)^{d-\lambda+1} + \bar\ell (r+2)^{d-\lambda+2} + \cdots + \bar\ell (r+2)^{d-\lambda + t'}} \cdot a_{\lambda -t'}  \nonumber\\
            & \leq \sum_{t'=1}^\lambda (2\K)^{t'+1} \cdot 2^{\bar\ell (r+2)^{d-\lambda+t'+1}} \cdot a_{\lambda -t'}.  \label{eq:al-ub}
\end{align}
It readily follows by induction that $a_\lambda \leq (2\K)^{d^2} \cdot 2^{d\bar\ell \cdot (r+2)^{d+1}}$. Indeed, if this holds for all $\lambda' < \lambda$ then by (\ref{eq:al-ub}) we obtain
\begin{align}
  a_\lambda & \leq \sum_{t'=1}^\lambda (2\K)^{t' + 1 + (\lambda - t')^2} \cdot 2^{\bar\ell (r+2)^{d-\lambda + t'+1} + d\bar\ell (r+2)^{\lambda - t' + 1}} \nonumber\\
  & \leq 
\end{align}
\fi
\end{proof}

\begin{lemma}
  \label{lem:pigeonhole}
  Fix positive integers $A,B,d$, and let $\mu : \{ 0, 1, \ldots, A(d+1)\} \times \{ 0, 1, \ldots, B(d+1) \} \ra \BZ$ be a function so that $0 \leq \mu(a,b) \leq d$ for all $a,b$ and so that for each fixed $b$, $a \mapsto \mu(a,b)$ is non-decreasing and for each fixed $a$, $b \mapsto \mu(a,b)$ is non-decreasing. Then there is some $0 \leq i \leq d$ so that the pair $(a,b) := (Ai, Bi)$ satisfies $\mu(a,b) = \mu(a+A,b+B)$.
\end{lemma}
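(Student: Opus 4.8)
The plan is to reduce this two-dimensional statement to a one-dimensional pigeonhole argument carried out along the ``diagonal'' points $(Ai, Bi)$. Define the integer sequence $c_i := \mu(Ai, Bi)$ for $i = 0, 1, \ldots, d+1$; this is well-defined because $(Ai, Bi) \in \{0, \ldots, A(d+1)\} \times \{0, \ldots, B(d+1)\}$ whenever $i \le d+1$, and by hypothesis $0 \le c_i \le d$ for every such $i$. So we have $d+2$ terms, each lying in the $(d+1)$-element set $\{0, 1, \ldots, d\}$.

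First I would check that $(c_i)_{i=0}^{d+1}$ is non-decreasing. Indeed, applying monotonicity in the first coordinate with the second coordinate fixed at $B(i+1)$, and then monotonicity in the second coordinate with the first coordinate fixed at $Ai$, gives $c_{i+1} = \mu\big(A(i+1), B(i+1)\big) \ge \mu\big(Ai, B(i+1)\big) \ge \mu(Ai, Bi) = c_i$. Now the conclusion is immediate from counting: if every consecutive gap $c_{i+1} - c_i$ were at least $1$, we would get $c_{d+1} - c_0 \ge d+1$, contradicting $0 \le c_0 \le c_{d+1} \le d$. Hence there is some $0 \le i \le d$ with $c_i = c_{i+1}$, i.e., $\mu(Ai, Bi) = \mu\big(A(i+1), B(i+1)\big) = \mu(Ai + A, Bi + B)$, which is exactly the claim with $(a,b) := (Ai, Bi)$.

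I do not expect a genuine obstacle here; the only subtlety is bookkeeping the monotonicity hypotheses against the way the lemma is invoked. In the application inside the proof of Lemma \ref{lem:soafilter-lstar}, the relevant $\mu(r,\tau)$ is \emph{non-increasing} in $r$ and non-decreasing in $\tau$, so one applies the present lemma after the substitution $r = \rmax - a$ (equivalently, reindexing via $r^\st = \rmax - j^\st$), with $b = \tau$, $A = r_0 = 1$, and $B = \tau_0 = 2 + 2\chi$; this is why the clean normalized form to prove is the one where $\mu$ is non-decreasing in both arguments. The argument itself needs nothing beyond monotonicity and the bound $0 \le \mu \le d$, so no result from earlier in the paper is required.
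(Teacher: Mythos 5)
Your argument is the same as the paper's: restrict $\mu$ to the diagonal points $(Ai,Bi)$ for $i=0,\ldots,d+1$, note the resulting sequence is non-decreasing by the two monotonicity hypotheses, and observe that $d+2$ integers in $\{0,\ldots,d\}$ cannot be strictly increasing, so some consecutive pair must coincide. The only difference is that you spell out the two-step monotonicity chain explicitly and add a remark on how the lemma is invoked, both of which are fine but not new content.
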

\begin{proof}
Consider the $d+1$ pairs $(0,0), (A,B), (2A,2B), \ldots, (A(d+1), B(d+1))$. If for each $0 \leq i \leq d+1$, $\mu(Ai,Bi) \neq \mu(A(i+1), B(i+1))$, then we have $0 \leq \mu(0,0) < \mu(A,B) < \cdots < \mu(A(d+1),B(d+1)) \leq d$, which is impossible since $\mu(a,b)$ is an integer for all $a,b$ in the domain of $\mu$.
\end{proof}

\section{\RegLearn: Private learning algorithm for regression}
\label{sec:reglearn}
In this section we combine the procedures described in the previous sections to produce an algorithm for privately learning a real-valued hypothesis class. At a high level, our algorithm \RegLearn (Algorithm \ref{alg:reglearn}) proceeds as follows: given a class $\MH \subset [-1,1]^\MX$ and samples from a distribution $Q$ supported on $\MX \times [-1,1]$, it first  discretizes $\MH$ as described in Section \ref{sec:pac}: to avoid confusion with notation in other sections, we denote the discretization parameter as $\bar \eta > 0$. In particular, we set $\MF := \disc{\MH}{\bar\eta} \subset [\K]^\MX$ (with $\K = \lceil 2/\bar\eta \rceil$) and $P := \disc{Q}{\bar\eta}$, so that $P$ is a distribution over $\MX \times [\K]$. We then use Algorithm \ref{alg:reduce-tree} applied to the class $\MF$ to learn a hypothesis $\hat g \in [\K]^\MX$ with low population error with respect to $\disc{Q}{\bar\eta}$ and which satisfies the ``weak stability'' guarantee of Lemma \ref{lem:approx-stability}. Using Algorithm \ref{alg:soafilter} we then produce a set of hypotheses $\repf{\hat g}$, satisfying the ``strong stability'' guarantee of Lemma \ref{lem:soafilter-lstar}. Repeating this procedure sufficiently many times using independent datasets drawn from the distribution $Q$ and using the sparse selection procedure of Proposition \ref{prop:sparse-selection}, we may finally produce a regressor in $[-1,1]^\MX$ which is differentially private.

\begin{algorithm}[!htp]
  \caption{\bf \RegLearn}\label{alg:reglearn}
  \KwIn{Parameters $\ep, \delta, \bar \eta, \beta \in (0,1)$, irreducibility parameter $\bar \ell \in \BN$, i.i.d. samples $(x,y) \in \MX \times [-1,1]$ from a distribution $Q$, hypothesis class $\MH\ \subset [-1,1]^\MX$.}
  \arxiv{  \begin{enumerate}[leftmargin=14pt,rightmargin=20pt,itemsep=1pt,topsep=1.5pt]}
    \colt{ \begin{enumerate}[leftmargin=20pt,rightmargin=20pt,itemsep=1pt,topsep=1.5pt]}
  \item Set $\MF := \disc{\MH}{\bar \eta}$, and write $\K := \lceil 2/\bar \eta \rceil$, so that $\MF \subset [\K]^\MX$.

    \label{it:define-params} Set $m \gets \frac{C\bar \ell (2\sfat_2(\MF)+6)^{\sfat_2(\MF)+4} \log^2 \left(\frac{1}{\ep \delta \beta \bar \eta}\right)}{\ep \bar \eta^2}$, $n_0 \gets C_0 \cdot \frac{\fat_{c_0 \bar \eta}(\MH) \log(1/\bar \eta) + \log(4m/\beta)}{\bar \eta^2}$, $n \gets n_0m$, $\alpha_\Delta \gets 18$, where $C_0, c_0$ are the constants of Corollary \ref{cor:fat-uc-disc}, and $C > 0$ is a sufficiently large constant.

    Also set $\ell' \gets \max \left\{ \bar \ell \cdot (d+3)^d, C_0 \K^2(d \log \K + 1) \right\} $, where $C_0$ is the constant of Corollary \ref{cor:2fat-uc-disc}.
  
  \item \label{it:define-alpha1} %
Let $n_1 = \frac{C_0 \cdot \fat_{c_0 \bar \eta}(\MH) \log(1/\bar \eta) + \log(8/\beta)}{\ep \bar \eta^2}$, where $C_0, c_0$ are the constants of Corollary \ref{cor:fat-uc-disc}. Set $T_{n_1} \sim Q^{n_1}$ to be an independent sample from the distribution $Q$ of size $n_1$. Set
    $$
\hat \eta := \inf_{f \in \MF} \left\{ \err{\disc{\hat Q_{T_{n_1}}}{\bar \eta}}{f} \right\} + \Lap\left( \frac{2\K}{\ep n_1} \right).
$$
to be the sum of the smallest achievable empirical error on $T_{n_1}$ and a Laplace random variable with scale $2\K/(\ep n_1)$. {\it ($\hat \eta$ is a private estimate of the optimal error achievable by a classifier in $\MF$, which is neededd to apply \ReduceTreeReg.)}

Then set $\alpha_1 :=  \hat \eta  + \alpha_\Delta / 2 + d \cdot \alpha_\Delta$. 
  \item For $1 \leq j \leq m$:\label{it:do-reducetree}
    \begin{enumerate}
      \item Let $S_{n_0} \sim Q^{n_0}$ be an independent sample from the distribution $Q$.
      \item Run the algorithm \ReduceTreeReg with the class $\MF$, distribution $\disc{\hat Q_{S_{n_0}}}{\bar \eta}$, $n = n_0$ and the parameters $\alpha_1, \alpha_\Delta, \ell'$ defined in steps \ref{it:define-params} and \ref{it:define-alpha1}.

        Let its output set $\hat \MS$  (defined in (\ref{eq:redtree-output})) be denoted by $\hat \CS\^j$. 
      \end{enumerate}
    \item For $1 \leq j \leq m$:
      \begin{enumerate}
      \item Set $\CR\^j \gets \emptyset$.  {\it ($\CR\^j$ will hold hypotheses of the form $g : \MX \ra [\K]$.)}
    \item For each hypothesis $\hat g \in \hat \CS\^j$, apply the algorithm $\soafilter$ to the hypothesis $\hat g : \MX \ra [\K]$, with the other inputs as follows:
      the hypothesis class is $\MF$, the sequence $\ell_{r,t}$ is given by $\bar \ell \cdot (r+2)^t$, parameters $\tau_{\max} = 12 \cdot (\sfat_2(\MF) + 1),\ r_{\max} = \sfat_2(\MF) + 1$.

    \item  Denote the output set of \soafilter by $\repf{\hat g}$; for each $\ML \in \repf{\hat g}$, add  $\soaf{\ML}$ to the set $\hat \CR\^j$.\label{it:soafilter-rg-output}
    \end{enumerate}
  \item \label{it:define-hath} Run the $(\ep, \delta)$-differentially private $(m, s)$-sparse selection protocol of Proposition \ref{prop:sparse-selection} with sparsity $s = \K^{C \bar \ell (2\cdot \sfat_2(\MF) + 6)^{\sfat_2(\MF) + 2} \K^2 \cdot \sfat_2(\MF) \log \K}$ on the sets $\hat \CR\^1, \ldots, \hat \CR\^m$; the universe $\MU$ for the sparse selection protocol is equal to the set of all $\soaf{\ML}$, for $\ML \subset \MF$ irreducible. Denote its output by $\soaf{\hat \ML} : \MX \ra [\K]$, for some $\hat \ML \subset \MF$. Output the class $\hat \ML$, as well as the function $\hat h : \MX \ra [-1,1]$, defined by
    $$
\hat h(x) := -1 + \frac{2}{\K} \cdot (\soa{\hat \ML}{x} - 1).
    $$
\end{enumerate}
\end{algorithm}

The below theorem states the main guarantee for the algorithm \RegLearn:
\begin{theorem}
  \label{thm:reglearn}
There are constants $c_0 \leq 1, C \geq 1, C_1 \geq 1$ so that the following holds.\footnote{In particular, $c_0$ is the corresponding constant of Corollary \ref{cor:fat-uc-disc} and $C_1$ is the corresponding constant of Corollary \ref{cor:2fat-uc-disc}.} Suppose we are given $\MH \subset [-1,1]^\MX$, as well as $\ep, \delta, \bar\eta, \beta \in (0,1)$ and $\bar \ell \in \BN$. For
  $$
n = C \cdot \frac{\bar \ell \cdot \fat_{c_0 \bar \eta}(\MH) \cdot (2\cdot \sfat_{\bar \eta}(\MH)+6)^{\sfat_{\bar \eta}(\MH)+5} \log^3 \left( \frac{\sfat_{\bar \eta}(\MH) \cdot \bar \ell}{\ep \delta \beta \bar \eta}\right)}{\ep \bar \eta^4},
$$
if the algorithm \RegLearn (Algorithm \ref{alg:reglearn}) takes as input $n$ i.i.d. samples $(x_1, y_1), \ldots, (x_n, y_n)$ from any distribution $Q$ on $\MX \times [-1,1]$, then it is $(\ep, \delta)$-differentially private and its output hypothesis $\hat h$ satisfies
$$
\Pr_{(x_1, y_1), \ldots, (x_n, y_n)} \left[ \err{Q}{\hat h} \leq \inf_{h \in \MH} \left\{ \err{Q}{h} \right\} + 30 (\sfat_{\bar \eta}(\MH) +2) \cdot \bar \eta + 2C_1\bar \eta\right] \geq 1-\beta.
$$
Moreover, under the same $(1-\beta)$-probability event, the class $\hat \ML \subset [\lceil 2/\bar\eta\rceil]^{\MX}$ output by \RegLearn is $\bar \ell$-irreducible.
\end{theorem}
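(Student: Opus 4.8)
The plan is to establish the three assertions --- $(\ep,\delta)$-privacy, the accuracy bound, and $\bar\ell$-irreducibility of $\hat\ML$ --- with the last two riding on a single ``good event.'' For privacy I would use adaptive basic composition of two mechanisms. First, the estimate $\hat\eta$ released in Step~\ref{it:define-alpha1} is a Laplace-mechanism output: changing one example of $T_{n_1}$ changes $\inf_{f\in\MF}\err{\disc{\hat Q_{T_{n_1}}}{\bar\eta}}{f}$ by at most $\K/n_1$ (discretized labels lie in $[\K]$), so adding $\Lap(2\K/(\ep n_1))$ makes this step $(\ep/2)$-private. Second, everything after $\hat\eta$ is a deterministic function of $\hat\eta$ and the $m$ \emph{disjoint} blocks $S_{n_0}\^1,\dots,S_{n_0}\^m$, followed by the $(\ep/2,\delta)$-private $(m,s)$-sparse selection protocol of Proposition~\ref{prop:sparse-selection} on $\hat\CR\^1,\dots,\hat\CR\^m$, followed by post-processing into $\hat h$. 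The key observation is that once $\hat\eta$ (hence $\alpha_1$) is fixed, a one-example change to \RegLearn's input either lies in $T_{n_1}$ (changing none of the $\hat\CR\^j$) or in a single block $S_{n_0}\^j$ (changing only $\hat\CR\^j$); either way the two sparse-selection instances differ in at most one user's set, which is exactly the neighbouring relation for which Proposition~\ref{prop:sparse-selection} is $(\ep/2,\delta)$-private. Composition yields $(\ep,\delta)$.

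For accuracy, write $d:=\sfat_2(\MF)\le\sfat_{\bar\eta}(\MH)$ (Lemma~\ref{lem:fat-disc}) and $P:=\disc{Q}{\bar\eta}$, and condition on an event $\ME^\st$ of probability $\ge1-\beta$ on which: (i) uniform convergence on $T_{n_1}$ (Corollary~\ref{cor:fat-uc-disc}) together with a Laplace tail bound give $\inf_{f\in\MF}\err{P}{f}\le\hat\eta\le\inf_{f\in\MF}\err{P}{f}+O(1)$, so the hypotheses on $\alpha_1$ used by \ReduceTreeReg and Lemma~\ref{lem:approx-stability} hold; (ii) the event $E_{\good}$ of~(\ref{eq:def-egood}) holds for all $m$ blocks simultaneously (Corollary~\ref{cor:fat-uc-disc} per block at confidence $\beta/(4m)$, then a union bound); and (iii) the sparse selection protocol succeeds with additive error $O\!\big(\ep^{-1}\log(ms/(\ep\delta\beta))\big)$. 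Each of (i)--(iii) fails with probability $\le\beta/4$. The routine-but-lengthy part is checking that the settings of $n_0,n_1,m$ (in terms of $\fat_{c_0\bar\eta}(\MH)$), of the irreducibility parameter $\ell'$, and of the sparsity $s$ from Step~\ref{it:define-params} are large enough to meet every sample-complexity and irreducibility precondition invoked below, including the Corollary~\ref{cor:2fat-uc-disc} estimates inside the \ReduceTreeReg analysis.

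On $\ME^\st$, the core chain runs as follows. For each block $j$, Lemma~\ref{lem:weak-stability-informal} (i.e.\ Lemmas~\ref{lem:approx-stability} and~\ref{lem:s-size}) gives $\hat\CS\^j$ with $|\hat\CS\^j|\le\K^{\ell'2^{d+1}}$ so that (a) every $\hat g\in\hat\CS\^j$ has $\err{P}{\hat g}\le\alpha_1$, and (b) some $\hat g_j^\circ\in\hat\CS\^j$ and level $t_j\in[d+1]$ satisfy $\|\hat g_j^\circ-\soaf{\MG_{t_j}}\|_\infty\le5$, where $\MG_{t_j}:=\MF_{P,\alpha_{t_j}-\alpha_\Delta/2}|_{S^\st_{\alpha_{t_j}-\alpha_\Delta/2,t_j}}$ is well defined by Lemma~\ref{lem:m-nonempty}. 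By the definition of $\MM_{\cdot,\cdot}$ and Lemma~\ref{lem:irred-hg}, $\MG_{t_j}$ is $\ell'2^{t_j}$-irreducible, hence $(\bar\ell\cdot(d+3)^d)$-irreducible since $\ell'\ge\bar\ell\cdot(d+3)^d$. Applying Lemma~\ref{lem:soafilter-lstar} with $\MG=\MG_{t_j}$, $\hat g=\hat g_j^\circ$, $\chi=5$ and the parameters of Step~\ref{it:soafilter-rg-output} produces a $\bar\ell$-irreducible $\ML^\st_{t_j}\subset\MF$ \emph{depending only on $\MF,P,t_j$} with $\soaf{\ML^\st_{t_j}}\in\hat\CR\^j$ and $\|\soaf{\ML^\st_{t_j}}-\soaf{\MG_{t_j}}\|_\infty\le\taumax+1$; it also gives that every $\ML\in\CR_{\hat g}$ ($\hat g\in\hat\CS\^j$) is $\bar\ell$-irreducible with $\|\soaf{\ML}-\hat g\|_\infty\le\taumax$, so $\err{P}{\soaf{\ML}}\le\err{P}{\hat g}+\taumax\le\alpha_1+\taumax$. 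By pigeonhole over $j\in[m]$ there is $t^\st$ with $|\{j:t_j=t^\st\}|\ge m/(d+1)$; with $h^\st:=\soaf{\ML^\st_{t^\st}}\in\hat\CR\^j$ for all those users, and using Lemmas~\ref{lem:s-size} and~\ref{lem:rep-size} to bound $|\hat\CR\^j|\le\K^{\ell'2^{d+1}}\cdot\K^{\bar\ell(d+4)^d}\le s$, the choice of $m$ makes $m/(d+1)$ exceed the sparse-selection error, so the output $\soaf{\hat\ML}$ has positive count and hence occurs in some $\hat\CR\^j$. Thus $\hat\ML$ may be taken to be the corresponding $\bar\ell$-irreducible member of $\CR_{\hat g}$ (the last claim), and $\err{P}{\soaf{\hat\ML}}\le\alpha_1+\taumax$. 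Since $\disc{\hat h}{\bar\eta}=\soaf{\hat\ML}$, (\ref{eq:disc-cont-rel}) gives $\err{Q}{\hat h}\le\tfrac2\K(\err{P}{\soaf{\hat\ML}}+1)$; substituting $\alpha_1=\hat\eta+\alpha_\Delta/2+d\alpha_\Delta$, $\taumax=12(d+1)$, $\hat\eta\le\inf_f\err{P}{f}+O(1)$, $\inf_f\err{P}{f}\le\tfrac\K2\inf_{h\in\MH}\err{Q}{h}+1$, and $2/\K\le\bar\eta$, $d\le\sfat_{\bar\eta}(\MH)$, yields $\err{Q}{\hat h}\le\inf_{h\in\MH}\err{Q}{h}+30(\sfat_{\bar\eta}(\MH)+2)\bar\eta+2C_1\bar\eta$.

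The heart of the proof --- and the step I expect to be hardest to get right --- is the interaction of Lemma~\ref{lem:soafilter-lstar} with the pigeonhole argument: \ReduceTreeReg's weak stability is tied to a \emph{data-dependent} level $t_j$ and only yields per-block hypotheses $\hat g_j^\circ$ lying within $\ell_\infty$-distance $5$ of a common target, and it is precisely \soafilter's collapse of that neighbourhood to an \emph{exact} representative $\ML^\st_{t_j}$ that is invariant across blocks which allows one fixed $h^\st$ to recur in $\Omega(m/d)$ of the users' sets --- the input format Proposition~\ref{prop:sparse-selection} requires. Quantitatively, the obstacle is verifying $s\ge\max_j|\hat\CR\^j|$ and that $m/(d+1)$ dominates the sparse-selection error, which is what forces the doubly-exponential-in-$\sfat_2(\MF)$ dependence of $m$ and $s$.
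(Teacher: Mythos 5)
Your overall architecture matches the paper's: show privacy via the split between $T_{n_1}$ and the disjoint blocks, condition on a high-probability event where $E_{\good}$ holds for all blocks and $\alpha_1$ is accurate, run \ReduceTreeReg then \soafilter per block, pigeonhole on the level $t_j$ to find a common hypothesis in $\Omega(m/d)$ users' sets, and finish with sparse selection. Two issues, one minor and one substantive.

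\textbf{Minor (privacy).} You phrase the privacy proof as ``adaptive basic composition'' of an $(\ep/2)$-private Laplace step and an $(\ep/2,\delta)$-private sparse selection. But Algorithm~\ref{alg:reglearn} runs the sparse selection at full $(\ep,\delta)$, not $(\ep/2,\delta)$, so a composition argument on the algorithm as written would give $(3\ep/2,\delta)$. What actually saves the day is precisely the ``key observation'' you mention next: $T_{n_1}$ and $\bigcup_j S_{n_0}^{(j)}$ are disjoint, so a pair of neighboring datasets differs in an example touching only one of the two mechanisms, and you can case-split rather than compose. That case-split is the paper's argument and it avoids halving the budget; the composition framing is a red herring.

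\textbf{Substantive gap (accuracy).} You write that every $\hat g \in \hat\CS^{(j)}$ satisfies $\err{P}{\hat g} \leq \alpha_1$ (attributing it to Lemma~\ref{lem:weak-stability-informal}), and then chain $\err{P}{\soaf{\ML}} \leq \err{P}{\hat g} + \taumax \leq \alpha_1 + \taumax$. That first inequality is not established by the formal Lemmas~\ref{lem:approx-stability} or~\ref{lem:s-size}; it appears only in the \emph{informal} lemma statement. The obstacle is that $\hat g$ has the form $\soaf{\MJ}$ where $\MJ = \gRes{\alpha_t - 2\alpha_\Delta/3}{v}$ is a subclass of low-error functions, and the SOA hypothesis of such a class is generally \emph{not} a member of it, so pointwise error control on the members of $\MJ$ does not transfer to $\soaf{\MJ}$. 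The paper's proof of Claim~\ref{clm:soas-good-error} closes this gap with a genuinely separate argument: it introduces the class $\tilde\MF$ of SOA hypotheses of $(d+1)$-irreducible subclasses, invokes Lemma~\ref{lem:irred-sfat-bound} to show $\sfat_2(\tilde\MF)=d$, uses Corollary~\ref{cor:2fat-uc-disc} to obtain a good sample $S_{n_2}$, and then exploits the $\ell'$-irreducibility of $\MJ$ (with $\ell' \geq n_2 = C_0\K^2(d\log\K + 1)$, which is why Step~\ref{it:define-params} sets $\ell'$ that large) to find some $f\in\MJ$ agreeing with $\soaf{\MJ}$ on all of $S_{n_2}$, so the empirical error of $\soaf{\MJ}$ is controlled, hence the population error via uniform convergence on $\tilde\MF$. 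This is where the extra additive $2C_1$ in (\ref{eq:j-discq-ub}) and the corresponding $2C_1\bar\eta$ in the theorem's error bound actually come from; your derivation inherits the $2C_1\bar\eta$ term from the target statement without a step that produces it, which is a further sign that the $\err{P}{\hat g}\leq\alpha_1$ inequality is being assumed rather than proven. This uniform-convergence-for-SOA-hypotheses argument is essential and missing from your proposal.

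The remaining parts --- the pigeonhole over $t_j$, the role of \soafilter in collapsing the $\ell_\infty$-neighbourhood to an exact representative $\ML^\st_{t_j}$ invariant across blocks, the bound $|\hat\CR^{(j)}|\leq s$ via Lemmas~\ref{lem:s-size} and~\ref{lem:rep-size}, the $\bar\ell$-irreducibility of $\hat\ML$, and the final discretization arithmetic --- are all correct and faithful to the paper.
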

\begin{proof}
  In the proof we will often refer to the values $n_0, m, \bar \eta, \alpha_\Delta, \alpha_1, \K, \MF$ which are set in steps \ref{it:define-params} through \ref{it:define-alpha1} of \RegLearn. Throughout the proof we will write $d := \sfat_2(\MF) \leq \sfat_{\bar \eta}(\MH)$ (Lemma \ref{lem:fat-disc}). Since our choice of $n_0$ satisfies
  $$
n_0 \geq C_0 \cdot \frac{\fat_{c_0 \bar \eta}(\MH) \log(1/\bar \eta) + \log(4m / \beta)}{\bar \eta^2},
$$
where $c_0, C_0$ are the constants of Corollary \ref{cor:fat-uc-disc}, then by Corollary \ref{cor:fat-uc-disc} and the choice of $\alpha_\Delta = 18$, we have that
\begin{align*}
\Pr_{S_{n_0} \sim Q^{n_0}} \left[
  \substack{\text{$E_{\good}$ holds for the dataset $S_{n_0}$} \\ \text{and the distribution $\disc{Q}{\bar \eta}$}}
  \right] &= \Pr_{S_{n_0} \sim Q^{n_0}} \left[ \sup_{f \in \MF} \left| \err{\disc{Q}{\bar \eta}}{f} - \err{\disc{\hat Q_{S_{n_0}}}{\bar \eta}}{f} \right| \leq \alpha_\Delta/6 \right] \geq 1-\frac{\beta}{4m}.
\end{align*}
(Recall the definition of $E_{\good}$ in (\ref{eq:def-egood}).)

For $1 \leq j \leq m$, let $S_{n_0}\^j := \{(x_1\^j, y_1\^j), \ldots, (x_{n_0}\^j, y_{n_0}\^j) \}$ be the dataset of size $n_0$ drawn i.i.d.~from $Q$ in the $j$th iteration of step \ref{it:do-reducetree} of \RegLearn. For convenience of notation let $\hat Q\^j := \hat Q_{S_{n_0}\^j} = \frac{1}{n_0} \sum_{i=1}^{n_0} \delta_{(x_i\^j, y_i\^j)}$ denote the empirical measure over $S_{n_0}\^j$. Then by the union bound the probability that $E_{\good}$ holds for each of the datasets $S_{n_0}\^j$ is at least $1-\beta/4$, i.e.,
\begin{equation}
  \label{eq:allj-unif-conv}
\Pr \left[ \forall j \in [m] :  \sup_{f \in \MF} \left| \err{\disc{Q}{\bar \eta}}{f} - \err{\disc{\hat Q\^j}{\bar \eta}}{f} \right| \leq \alpha_\Delta/6 \right] \geq 1-\frac{\beta}{4}.
\end{equation}
Let $E_0$ be the event inside the probability above, namely that $E_{\good}$ holds for each $S_{n_0}\^j$.

The bulk of the proof of Theorem \ref{thm:reglearn} is to show the following claims:

The first, Claim \ref{clm:alpha}, shows that $\alpha_1$ in step \ref{it:define-alpha1} in of \RegLearn is differentially private and is with high probability an upper bound on the optimal error with respect to the true distribution $Q$:
\begin{claim}[Privacy and accuracy of $\alpha_1$]
  \label{clm:alpha}
  The value $\alpha_1$ produced in step \ref{it:define-alpha1} of \RegLearn is $(\ep, 0)$-differentially private as a function of the dataset $T_{n_1}$ (and thus the entire dataset of $n$ samples used by \RegLearn). Moreover, $\alpha_1$, satisfies the following:
  \begin{equation}
    \label{eq:alpha1-accuracy}
\Pr \left[ \inf_{f \in \MF} \left\{ \err{\disc{Q}{\bar \eta}}{f} \right\} + \alpha_\Delta \geq \alpha_1 - d \cdot \alpha_\Delta \geq \inf_{f \in \MF} \left\{ \err{\disc{Q}{\bar \eta}}{f} \right\}+ \alpha_\Delta / 6 \right] \geq 1 - \beta/4.
  \end{equation}
\end{claim}
\begin{claim}
  \label{clm:reduce-and-filter}
  There is an event $E_1$ that occurs with probability at least $1-\beta/2$ over the randomness of the dataset and the algorithm, so that under $E_0 \cap E_1$, \RegLearn outputs a class $\hat\ML \subset \MF$ which is $\bar \ell$-irreducible and satisfies $\hat \ML \in \CR\^j$ for some $1 \leq j \leq m$.
\end{claim}

\begin{claim}
  \label{clm:soas-good-error}
Let $C_0, C_1$ be the constants of Corollary \ref{cor:2fat-uc-disc}. Suppose $\ell' \geq C_0 \K^2 (d \log(\K) + 1)$. Under the event $E_1 \cap E_0$, the output $\hat h$ of \RegLearn satisfies
\begin{equation}
  \label{eq:hath-good-error}
\err{Q}{\hat h} \leq \inf_{h \in \MH} \left\{ \err{Q}{h} \right\} + 30 (d+2) \bar \eta + 2C_1\bar \eta.
  \end{equation}
\end{claim}

Assuming Claims \ref{clm:alpha}, \ref{clm:reduce-and-filter} and \ref{clm:soas-good-error}, we complete the proof of Theorem \ref{thm:reglearn}. By Claim \ref{clm:soas-good-error}, under the event $E_0 \cap E_1$ (which holds with probability at least $1-\beta$), we have that the output hypothesis $\hat h : \MX \ra [-1,1]$ of \RegLearn satisfies (\ref{eq:hath-good-error}). Moreover, by Claim \ref{clm:reduce-and-filter}, under $E_0 \cap E_1$, the class $\hat \ML$ output by \RegLearn is $\bar \ell$-irreducible.

Next we argue that the outputs $(\hat \ML, \hat h)$ of \RegLearn are $(\ep, \delta)$-differentially private as a function of its input dataset (which consists of the disjoint union of the datasets $T_{n_1}, S_{n_0}\^1 ,\ldots, S_{n_0}\^m$, which we denote as $R$). Let us consider two neighboring datasets $R, R'$. If they differ in a sample corresponding to $T_{n_1}$, then we have that for any event $E$, $\Pr_{R}[(\hat \ML, \hat h) \in E] \leq e^{\ep} \cdot \Pr_{R'} [(\hat \ML, \hat h) \in E]$ by the $(\ep, 0)$-differential privacy of $\alpha_1$ (Claim \ref{clm:alpha}) and the post-processing lemma for differential privacy \cite[Proposition 2.1]{dwork_algorithmic_2013} (since for fixed $S_{n_0}\^1, \ldots, S_{n_0}\^m$, $(\hat \ML, \hat h)$ are randomized functions of $\alpha_1$). Otherwise, $R, R'$ differ in a sample corresponding to one of $S_{n_0}\^1, \ldots, S_{n_0}\^m$. Then the $(\ep, \delta)$-differential privacy guarantee of the sparse selection protocol of Proposition \ref{prop:sparse-selection} guarantees that for any fixed $\alpha_1$, for any event $E$, $\Pr_{R}[(\hat \ML, \hat h) \in E] \leq e^{\ep} \cdot \Pr_{R'} [(\hat \ML, \hat h) \in E] + \delta$. This establishes that $(\hat \ML, \hat h)$ are differentially private as a function of $R$.

Summarizing, letting $d = \sfat_2(\MF) \leq \sfat_{\bar \eta}(\MH)$ and $d' := \fat_{c_0 \bar \eta}(\MH)$ (where $c_0$ is the constant of Corollary \ref{cor:fat-uc-disc}), the sample complexity of \RegLearn is
\begin{align*}
  n_0 \cdot m + n_1 & \leq C \cdot \frac{\bar \ell (2d+6)^{d+4} \log^2 \left( \frac{1}{\ep \delta \beta \bar \eta}\right) \cdot \left( d' \log(1/\bar \eta) + \log(m/\beta)\right)}{\ep \bar \eta^4} \\
                    & \leq C' \cdot \frac{\bar \ell  d' (2d+6)^{d+5} \log^3 \left( \frac{d \bar \ell}{\ep \delta \beta \bar \eta}\right)}{\ep \bar \eta^4},
\end{align*}
where $C, C'$ are sufficiently large constants.

It only remains to prove Claims \ref{clm:alpha}, \ref{clm:reduce-and-filter}, and \ref{clm:soas-good-error}, which we do so below.
\begin{proof}[Proof of Claim \ref{clm:alpha}]
Let $C_0 \geq 1, c_0 \leq 1$ be the constants of  Corollary \ref{cor:fat-uc-disc}; then by Corollary \ref{cor:fat-uc-disc}, as long as %
  \begin{equation}
    \label{eq:tn0-lb}
n_1 \geq C_0 \cdot \frac{\fat_{c_0 \bar \eta}(\MH) \log(1/\bar \eta) + \log(8/\beta)}{\bar \eta^2},
\end{equation}
we have
$$
\Pr_{T_{n_1} \sim Q^{n_1}} \left[ \sup_{f \in \MF} \left| \err{\disc{Q}{\bar \eta}}{f} - \err{\disc{\hat Q_{T_{n_1}}}{\bar \eta}}{f} \right| > \frac{\alpha_\Delta}{6} \right] \leq \beta/8. 
$$
Let $Y$ denote the random variable drawn according to $\Lap(2\K/(\ep n_1))$ in step \ref{it:define-alpha1} of \RegLearn. Then $\Pr[|Y| > 2\K t / (\ep n_1)] = \exp(-t)$ for all $t > 0$, and in particular, as long as
\begin{equation}
  \label{eq:laplace-nlb}
  n_1 \geq C_1 \cdot \frac{\log(1/\beta)}{\ep \bar \eta}
  \end{equation}
for a sufficiently large constant $C_1$, it holds that $\Pr\left[ |Y| > \frac{\alpha_\Delta}{6} \right] \leq \beta/8$. 
  
Under the event that both $\sup_{f \in \MF} \left| \err{\disc{Q}{\bar \eta}}{f} - \err{\disc{\hat Q_{T_{n_1}}}{\bar \eta}}{f} \right| \leq \alpha_\Delta / 6$ and $|Y| \leq \alpha_\Delta / 6$, which holds with probability at least $1-\beta/4$, we get that
$$
\inf_{f \in \MF} \left\{ \err{\disc{Q}{\bar \eta}}{f} \right\} + \frac{5\alpha_\Delta}{6}  \geq \inf_{f \in \MF} \left\{\err{\disc{\hat Q_{T_{n_1}}}{\bar \eta}}{f}\right\} + Y + \frac{\alpha_\Delta}{2} \geq \inf_{f \in \MF} \left\{ \err{\disc{Q}{\bar \eta}}{f} \right\} + \frac{\alpha_\Delta}{6} .
$$
Note that the choice of $n_1$ in step \ref{it:define-params} ensures that both (\ref{eq:tn0-lb}) and (\ref{eq:laplace-nlb}) hold (as long as the constant $C$ is sufficiently large). Recalling that $\hat \eta = \inf_{f \in \MF} \left\{\err{\disc{\hat Q_{T_{n_1}}}{\bar \eta}}{f}\right\} + Y$ and $\alpha_1 - d \cdot \alpha_\Delta = \hat \eta + \alpha_\Delta/2$, we get that (\ref{eq:alpha1-accuracy}) holds.

To see the differential privacy of $\alpha_1$, note that the function that maps $T_{n_1} = \{ (x_1, y_1), \ldots, (x_{n_1}, y_{n_1}) \}$ to $\inf_{f \in \MF}\left\{\err{\disc{\hat Q_{T_{n_1}}}{\bar \eta}}{f}\right\}$ has sensitivity at most $\K / n_1$, since $|f(x) - y| \leq \K$ for each $(x,y) \in \MX \times [\K]$ and $f \in \MF$. Since $Y \sim \Lap((\K / n_1) \cdot (2 / \ep))$, we get that $\alpha_1$ is $(\ep/2, 0)$-differentially private as a function of the dataset $T_{n_1}$. 
\end{proof}

\begin{proof}[Proof of Claim \ref{clm:reduce-and-filter}]
  Recall that $\MF = \disc{\MH}{\bar \eta}$ and $P = \disc{Q}{\bar \eta}$, as well as $d = \sfat_2(\MF) \leq \sfat_{\bar \eta}(\MH)$ (Lemma \ref{lem:fat-disc}). For $\alpha > 0$, $t \in [d+1]$, recall the definition of $\MM_{\alpha, t}$ in (\ref{eq:define-m}) (defined with respect to $\MF$ and $P$), and for those $\alpha, t$ for which $\MM_{\alpha, t}$ is nonempty, the definition of $\sigma_{\alpha, t}^\st$ in (\ref{eq:define-sigma-at}). By definition of $\hat \CS\^j$ (see (\ref{eq:redtree-output}) and step \ref{it:do-reducetree} of \RegLearn) and Lemma \ref{lem:approx-stability}, as long as $\MF_{\disc{\hat Q\^j}{\bar \eta}, \alpha_{d+1}}$ is nonempty, then under the event $E_0$, each $\CS\^j$ contains at least one hypothesis of the form $\soaf{\hat \MG}$, where $\| \sigma_{\alpha_t - \alpha_\Delta/2, t}^\st - \soaf{\hat \MG} \|_\infty \leq 5$. By the pigeonhole principle, some $t$ satisfies this property for at least $\lceil m / (d+1) \rceil$ sets $\CS\^j$; let us denote this $t$ by $t^\st$. We must also verify that $\MF_{\lceil \hat Q\^j \rceil_{\bar \eta}, \alpha_{d+1}}$ is nonempty; to do so, let $E_{1,0}$ be the event that
  \begin{equation}
    \label{eq:event-e10}
  \inf_{f \in \MF} \left\{ \err{\disc{Q}{\bar \eta}}{f} \right\} + \alpha_\Delta \geq \alpha_1 - d \cdot \alpha_\Delta \geq \inf_{f \in \MF} \left\{ \err{\disc{Q}{\bar \eta}}{f} \right\}+ \alpha_\Delta / 6.
  \end{equation}
  By Claim \ref{clm:alpha}, the probability that $E_{1,0}$ holds (over the choices of the algorithm \RegLearn) is at least $1 - \beta/4$. Then noting that $\alpha_{d+1} = \alpha_1 - d \cdot \alpha_\Delta$ and using (\ref{eq:allj-unif-conv}), we get that $\MF_{\disc{Q\^j}{\bar \eta}, \alpha_{d+1}}$ is nonempty under the event $E_0 \cap E_{1,0}$. 

  Since $\ell_t \geq \ell'$ for all $t \geq 1$ (step \ref{it:define-kt} of \ReduceTreeReg), it holds from (\ref{eq:define-sigma-at}) and (\ref{eq:define-m}) that $\sigma_{\alpha_{t^\st} - \alpha_\Delta/2, t^\st}^\st$ is of the form $\soaf{\MG}$ for some $\MG \subset \MF$ which is $\ell_{t^\st}$-irreducible, and thus $\ell'$-irreducible. By Lemma \ref{lem:soafilter-lstar} with $\chi = 5$, as long as $\ell' \geq \bar\ell \cdot (d+3)^d$, there is some $\ML^\st \subset \MF$ which is $\bar\ell$-irreducible, depending only on $\MG$, so that
  for any $\hat g$ satisfying $\| \hat g - \sigma_{\alpha_{t^\st} - \alpha_\Delta/2, t^\st}^\st \|_\infty \leq 5$, $\ML^\st \in \CR_{\hat g}$, where $\CR_{\hat g}$ is as in step \ref{it:soafilter-rg-output} of \RegLearn. Thus, among the sets $\CR\^1, \ldots, \CR\^m$, there are at least $\lceil m / (d+1) \rceil$ of them containing $\ML^\st$.

  By Lemma \ref{lem:s-size}, we have that for each $1 \leq j \leq m$, $| \hat \MS\^j | \leq \K^{\ell' \cdot 2^{d+1}}$. By Lemma \ref{lem:rep-size}, each element $\hat g \in \CS\^j$ gives rise to $| \CR_{\hat g} | \leq \K^{\bar\ell \cdot (d+4)^d}$ elements of $\CR_{\hat g}$, all of which are added to $\hat \CR\^j$. Thus, recalling the definition of $\ell'$ in step \ref{it:define-params} of \RegLearn, we have that
  $$
  | \hat \CR\^j| \leq \K^{\ell' \cdot 2^{d+1} + \bar\ell \cdot (d+4)^d} \leq \K^{\bar \ell \cdot (2d+6)^{d+2} + C \K^2 d \log \K} \leq \K^{C \bar \ell (2d+6)^{d+2} \K^2 d \log \K},
  $$
  where $C > 0$ is a sufficiently large constant.

Now choose $\nu > 0$ so that the $(m,  \K^{C \bar \ell (2d+6)^{d+2} \K^2 d \log \K})$-sparse selection protocol of Proposition \ref{prop:sparse-selection} (with universe $\MU$ given by the family of all $\soaf{\MG}$, where $\MG \subset \MF$ is a finite restriction subclass of $\MF$; this family must include all elements of $\CR\^j$, $1 \leq j \leq m$), has error at most $\nu$ on some event $E_{1,1}$ with probability at least $1 - \beta/4$. By \cite[Lemma 36]{ghazi_differentially_2020}, we may choose $\nu = \frac{C}{\ep} \log \left( \frac{m \K^{C \bar \ell (2d+6)^{d+2} \K^2 d \log \K}}{\ep \delta \beta} \right)$ for a sufficiently large constant $C$.

Now set $E_1 = E_{1,0} \cap E_{1,1}$. Then under the event $E_0 \cap E_1$, as long as $\nu < \lceil m / (d+1) \rceil$, the hypothesis $\hat \ML$ output by the sparse selection protocol belongs to $\hat \CR\^j$ for some $1 \leq j \leq m$. That $\hat \ML$ is $\bar \ell$-irreducible follows from the fact $\CR\^j$ is the union of output sets $\CR_{\hat g}$ of $\soafilter$, for various functions $g : \MX \ra [\K]$, and $\CR_{\hat g}$ consists of $\bar\ell$-irreducible classes (Lemma \ref{lem:soafilter-lstar}).

To ensure $\nu < \lceil m / (d+1) \rceil$, it suffices to have, for $C'$ a sufficiently large constant,
$$
m > \frac{C'd}{\ep} \cdot \left( \log(m) + \log\left( \frac{1}{\ep \delta \beta} \right)+ \bar \ell (2d+6)^{d+3} \K^2 \log^2 \K \right),
$$
for which it in turn suffices that
$$
m \geq \frac{C'' \bar \ell (2d+6)^{d+4} \log^2 \left(\frac{1}{\ep \delta \beta \bar \eta}\right)}{\ep \bar \eta^2},
$$
where we have used that $\K = \lceil 2/\bar \eta \rceil$, and $C''$ is a sufficiently large constant.

\end{proof}

\begin{proof}[Proof of Claim \ref{clm:soas-good-error}]
  By Claim \ref{clm:reduce-and-filter}, under the event $E_1 \cap E_0$, \RegLearn outputs a class $\hat \ML \in \CR\^j$ for some $1 \leq j \leq m$, which is $\bar \ell$-irreducible. For the remainder of the proof we assume that $E_1 \cap E_0$ holds and fix such a $j$. By Lemma \ref{lem:soafilter-lstar} (with $\chi = 5$), there is some $\hat g \in \CS\^j$ so that $\| \soaf{\hat \ML} - \hat g \|_\infty \leq 12(d+1)$. Set $\hat P\^j := \disc{\hat Q\^j}{\bar \eta}$. By definition, each element $\hat g \in \hat \CS\^j$ is of the form $\soaf{\MF_{\hat P\^j, \alpha_t - 2\alpha_\Delta/3}|_{\ba(v)}}$ for some $1 \leq t \leq d$ and some node $v$ of the tree $\hat \bx$ output by \ReduceTreeReg for which $\MF_{\hat P\^j, \alpha_t - 2\alpha_\Delta/3}|_{\ba(v)}$ is nonempty and $\ell'$-irreducible (see (\ref{eq:redtree-output})). Fix any such element, and write $\MJ := \MF_{\hat P\^j, \alpha_t - 2\alpha_\Delta/3}|_{\ba(v)}$. By definition we have that each $f \in \MJ \subset \MF_{\hat P\^j, \alpha_t - 2\alpha_\Delta/3}$ satisfies, under the event $E_1 \cap E_0$,
  \begin{equation}
    \label{eq:err-pj-ub}
\err{\disc{Q}{\bar \eta}}{f} \leq \err{\hat P\^j}{f} + \alpha_\Delta / 6 \leq \alpha_t - \alpha_\Delta / 2 \leq \alpha_1 - \alpha_\Delta / 2 \leq \inf_{f \in \MF} \left\{\err{\disc{Q}{\bar \eta}}{f}\right\} + (d+1) \alpha_\Delta,
\end{equation}
where the first inequality holds under $E_0$ (see (\ref{eq:allj-unif-conv})) and the final inequality follows from (\ref{eq:event-e10}), which holds under $E_1 \cap E_0$ (in particular, it holds under the event $E_{1,0}$ defined in the proof of Claim \ref{clm:alpha}, which is included in $E_1$).

Recall the definition of finite restriction subclasses of $\MF$ from Section \ref{sec:irreducibility}. Since $\MX$ is countable, the set of all finite restriction subclasses of $\MX$ is countable; thus the set of all finite unions of finite restriction subclasses of $\MF$ is countable as well. Define
$$
\tilde \MF = \MF \cup  \{ \soaf{\MG} :  \substack{\text{$\MG \subset \MF$, $\MG$ is nonempty, $(d+1)$-irreducible, } \\ \text{and a finite union of finite restriction subclasses of $\MF$}}\}.
$$
Then $\tilde \MF$ is countable, and Lemma \ref{lem:irred-sfat-bound} gives that $\fat_2(\tilde \MF) \leq \sfat_2(\tilde \MF) = d$. 

Let $C_0, C_1$ be the constants of Corollary \ref{cor:2fat-uc-disc}, and choose $n_2 \geq C_0 \K^2 \cdot (d \log(\K) + 1)$ (recall $\K = \lceil 2/\bar \eta \rceil$). By Corollary \ref{cor:2fat-uc-disc} applied to the class $\tilde \MF$, we have:
$$
\Pr_{S_{n_2} \sim Q^{n_2}} \left[ \sup_{\tilde f \in \tilde \MF} \left| \err{\disc{Q}{\bar \eta}}{\tilde f} - \err{\disc{\hat Q_{S_{n_2}}}{\bar \eta}}{\tilde f} \right| > C_1 \right] \leq 1/2.
$$
Choose some dataset 
$S_{n_2} \in (\MX \times [-1,1])^{n_2}$ so that $\sup_{\tilde f \in \tilde \MF} \left| \err{\disc{Q}{\bar \eta}}{\tilde f} - \err{\disc{\hat Q_{S_{n_2}}}{\bar \eta}}{\tilde f} \right| \leq C_1$ holds, and write $\hat P := \disc{\hat Q_{S_{n_2}}}{\bar \eta}$ as the discretization of the empirical distribution $\hat Q_{S_{n_2}}$. Then by (\ref{eq:err-pj-ub}), each $f \in \MJ$ satisfies
\begin{equation}
  \label{eq:err-hatp-ub}
\err{\hat P}{f} \leq \inf_{f \in \MF} \left\{ \err{\disc{Q}{\bar \eta}}{f} \right\} + (d+1)\alpha_\Delta + C_1.
\end{equation}

We next claim that $\err{\hat P}{\soaf{\MJ}} \leq  \inf_{f \in \MF} \left\{\err{\disc{Q}{\bar \eta}}{f}\right\} + (d+1) \alpha_\Delta + C_1$. Suppose  for the purpose of contradiction that this is not the case. Let us write $S_{n_2} = \{ (x_1, y_1), \ldots, (x_{n_2}, y_{n_2}) \}$. For $1 \leq i \leq n_2$, write $\tilde y_i := \soa{\MJ}{x_i}$. Since $\MJ$ is $\ell'$-irreducible and the definition of $\ell'$ in step \ref{it:define-params} of \RegLearn ensures $\ell' \geq n_2$, it holds that
$$
\sfat_2(\MJ|_{\{(x_1, \tilde y_1), \ldots, (x_{n_2}, \tilde y_{n_2})\}}) = \sfat_2(\MJ)  \geq 0.
$$
Thus there is some $f \in \MJ$ so that $f(x_i) = \tilde y_i = \soa{\MJ}{\tilde x_i}$ for $1 \leq i \leq n_2$. Thus $\err{\hat P}{\soaf{\MJ}} = \err{\hat P}{f} >  \inf_{f \in \MF} \left\{\err{\disc{Q}{\bar \eta}}{f}\right\} + (d+1) \alpha_\Delta + C_1$, which contradicts (\ref{eq:err-hatp-ub}). Since $\soaf{\MJ} \in \tilde \MF$ (as $\ell' \geq n_2 \geq d+1$), it follows from the choice of $S_{n_2}$ that
\begin{equation}
  \label{eq:j-discq-ub}
\err{\disc{Q}{\bar \eta}}{\soaf{\MJ}} \leq  \inf_{f \in \MF} \left\{\err{\disc{Q}{\bar \eta}}{f}\right\} + (d+1) \alpha_\Delta + 2C_1.
\end{equation}

Recalling that $\| \soaf{\hat \ML} - \soaf{\MJ}\|_{\infty} \leq 12 (d+1)$ and using (\ref{eq:j-discq-ub}), we get that
$$
\err{\disc{Q}{\bar \eta}}{\soaf{\hat \ML}} \leq  \err{\disc{Q}{\bar \eta}}{\soaf{\MJ}} + \| \soaf{\hat \ML} - \soaf{\MJ} \|_\infty \leq \inf_{f \in \MF} \left\{ \err{\disc{Q}{\bar \eta}}{f} \right\} + (d+1)(\alpha_\Delta + 12) + 2C_1.
$$
Finally, using (\ref{eq:disc-cont-rel}) with $\eta = \bar \eta$ and the definition of $\hat h : \MX \ra [-1,1]$ in step \ref{it:define-hath} of \RegLearn (which implies that $\soaf{\hat \ML} = \disc{\hat h}{\bar \eta}$), we get
\begin{align*}
  \err{Q}{\hat h}  & \leq \frac{2(1 + \err{\disc{Q}{\bar \eta}}{\soaf{\hat \ML}})}{\lceil 2/\bar \eta \rceil} \\
                   & \leq  \frac{2(1 + \inf_{f \in \MF} \left\{ \err{\disc{Q}{\bar \eta}}{f} \right\} + (d+1)(\alpha_\Delta + 12) + 2C_1)}{\lceil 2/\bar \eta \rceil} \\
                   & \leq \frac{2(1 + \frac{\lceil 2 / \bar \eta\rceil}{2} \cdot \inf_{h \in \MH} \left\{ \err{Q}{h} \right\} + 1 + (d+1)(\alpha_\Delta + 12) + 2C_1)}{\lceil 2/\bar \eta \rceil} \\
                   &\leq \inf_{h \in \MH} \left\{ \err{Q}{h} \right\} + \bar \eta \cdot (d+2)(\alpha_\Delta + 12) + 2C_1 \bar \eta \\
                   & = \inf_{h \in \MH} \left\{ \err{Q}{h} \right\} + 30 (d+2) \bar \eta + 2C_1 \bar \eta,
\end{align*}
where the last line follows from the choice of $\alpha_\Delta = 18$.
\end{proof}

\end{proof}

\end{document}